\documentclass[twoside,11pt]{article}
\RequirePackage[OT1]{fontenc}
\RequirePackage{amsthm,amsmath}



\usepackage{jmlr2e}

\usepackage{graphicx}  

\usepackage[utf8]{inputenc} 
\usepackage[T1]{fontenc}    
\usepackage{hyperref} 
\usepackage{nicefrac}
\usepackage{url}   
\usepackage{natbib}
\usepackage{booktabs}       
\usepackage{amsfonts}       
\usepackage{nicefrac}       
\usepackage{microtype}      
\usepackage{xcolor}         
\usepackage{ amssymb }
\usepackage{amsthm}
\usepackage{algorithm}
\usepackage{multirow}
\usepackage{url}  
\usepackage{verbatim}
\usepackage{color}
\usepackage{graphicx}  
\usepackage{algorithm}
\usepackage{algorithmic}
\newsavebox{\algbox}
\usepackage{mathtools}
\usepackage{caption}
\usepackage{subcaption}
\usepackage{scalerel,stackengine}

\newcommand{\f}[1]{\boldsymbol{#1}}
\newcommand{\bb}[1]{\mathbb{#1}}
\newcommand{\fl}[1]{\mathbf{#1}}
\newcommand{\ca}[1]{\mathcal{#1}}

\newcommand{\s}[1]{\mathsf{#1}}

\newcommand{\Su}{\mathsf{occ}}

\newcommand{\supp}[1]{\mathsf{supp}(#1)}

\newcommand{\remove}[1]{}

\begin{document}

\title{Support Recovery in Mixture Models with Sparse Parameters}

\author{\name Arya Mazumdar \email arya@ucsd.edu \\
       \addr 
       University of California, San Diego\\
       La Jolla, CA 92093 USA
       \AND
       \name Soumyabrata\ Pal \email soumyabratapal13@gmail.com \\
       \addr University of Massachusetts Amherst\\
       }

\editor{}

\maketitle

\begin{abstract}
  Mixture models are widely used to fit complex and multimodal datasets. In this paper we study mixtures with high dimensional sparse  parameter vectors and consider the problem of support recovery of those vectors. While parameter learning in mixture models is well-studied, the sparsity constraint remains relatively unexplored. Sparsity of parameter vectors is a natural constraint in variety of settings, and  support recovery is a major step towards parameter estimation.
We provide efficient algorithms for support recovery that have a logarithmic sample complexity dependence on the dimensionality of the latent space. Our algorithms are quite  general, namely they are applicable to 1) mixtures of many different canonical distributions including Uniform, Poisson, Laplace, Gaussians, etc. 2) Mixtures of linear regressions and linear classifiers with Gaussian covariates under different assumptions on the unknown parameters. In most of these settings, our results are the first guarantees on the problem while in the rest, our results provide  improvements on existing works.
\end{abstract}
\begin{keywords}
Mixture Models, Support Recovery, Sparse Vectors, Method of Moments.
\end{keywords}


\section{Introduction}\label{sec:intro}
Mixture models are standard tools for probabilistic modeling of heterogeneous data, and have been studied theoretically for more than a century.  Mixtures are used in practice for modeling data across different fields, such as, astronomy, genetics, medicine, psychiatry, economics, and marketing among many others \citep{moosman2000finite}. Mixtures with finite number of components are especially successful in modeling datasets having a group structure, or  presence of a subpopulation within the overall population. Often,  mixtures can handle situations where a single parametric family cannot provide a satisfactory model for local variations in the observed data~\citep{titterington1985statistical}.

The literature on algorithmically learning  mixture distributions is quite vast and comes in different flavors. Computational and statistical aspects of learning mixtures perhaps starts with \cite{dasgupta1999learning}, and since then have been the subject
of intense investigation in both computer science and
statistics~\citep{achlioptas2005spectral,kalai2010efficiently,belkin2010polynomial,arora2001learning,moitra2010settling,feldman2008learning,chan2014efficient,acharya2017sample,hopkins2018mixture,diakonikolas2018list,kothari2018robust,hardt2015tight}.
A large portion of this literature is devoted to \emph{density estimation} or PAC-learning, where the goal is simply to
find a distribution that is close in some distance (e.g., TV distance)
to the data-generating mechanism. The results on density estimation can be further
subdivided into \emph{proper} and \emph{improper learning} approaches
depending on whether the algorithm outputs a distribution from the
given mixture family or not. These two guarantees turn out to be quite different.

A significant part of the literature on the other hand is devoted to
\emph{parameter estimation}, where the goal is to identify the mixing
weights and the parameters of each component from samples. Apart from Gaussian mixtures, where all types of results
exist, prior work for other mixture families largely focuses on
density estimation, and very little is known for parameter estimation
outside of Gaussian mixture models.  
In this paper, our focus is to facilitate parameter estimation in Gaussian mixtures and beyond. We consider the setting where the parameters of the mixture are themselves high dimensional, but sparse (i.e., have few nonzero entries). Sparsity is a natural regularizer in high dimensional parameter estimation problems and have been considered in the context of mixtures in \cite{verzelen2017detection,arias2017simple,azizyan2013minimax}, where it is assumed only few dimensions of the component means are relevant for de-mixing. In this paper we consider a slightly different model where we assume the means themselves are sparse. The former problem can be reduced to our setting if one of the component means is known.

There are parameter estimation problems in other data subpopulation modeling, where functional relationships in data can be thought of as mixture of simple component models. Most prominent among these is the {\em mixed linear regression} problem~\citep{de1989mixtures}. In this setting, each sample is a tuple of (covariates, label). The label is stochastically generated by picking a linear relation uniformly from a set of two or more linear functions, evaluating this function on the covariates and possibly adding noise. The goal is to learn the set of  unknown linear functions. 
The problem has been studied widely \citep{chaganty2013spectral, faria2010fitting, stadler2010l,li2018learning,kwon2018global,viele2002modeling,yi2014alternating,yi2016solving}, with an emphasis on the EM algorithm and other alternating minimization (AM) techniques. It is interesting that \cite{stadler2010l} argued to impose sparsity on the solutions, implying that each linear function depends on only a small number of variables. In this paper we are concerned with exactly this same problem.

Similar to mixed linear regressions, there can be {\em mixed linear classifications.} In that setting, the labels are binary (or other categorical). The  works in this domain is limited, with notable exceptions~\citep{sun2014learning,sedghi2016provable}.

We consider the high dimensional parameter learning problem in a very general mixture model that covers all of the above settings. We assume the parameter vectors to be sparse, and focus on recovering the support of the vectors. 

It is worth mentioning that mixtures of sparse linear regressions and classifiers were also considered in some recent works that focus on a query-based model, i.e., where the covariates can be designed as queries~\citep{yin2019learning,kris2019sampling,mazumdar2020recovery,gandikota2020recovery,gandikota2021support,polyanskii2021learning}. The query based setting is quite different from our unsupervised setting, because in the former one can use the same covariates again and again to get potentially different labels, and thus identify the components. However, as we will see, some tools developed in \citep{gandikota2021support} can still be relevant for support recovery in the current setting where we cannot dictate the covariates.

 An interesting application of learning mixtures with sparse parameters is in high-dimensional clustering problems where cluster centers actually belong to a low-dimensional space. This is similar in spirit with sparse-PCA \citep{johnstone2009consistency}; our objective is to identify a few important input features, so one can easily interpret its meaning. Our techniques can also be seen as a novel method for feature selection that can significantly speed up a learning algorithm. 

Another practical application comes up naturally in recommendation systems where multiple users rate/purchase/evaluate items. User tastes can differ, and that can be modeled by a few unknown parameter vectors. It makes sense for the unknown vectors to be sparse, because most users have an affinity towards a few particular features of items among many possible. Sparse mixtures were motivated with such an application in the query based setting in \citep{gandikota2020recovery,gandikota2021support}.

Note that, support recovery is an effective way to reduce the dimension of the ambient space, and therefore can be considered as a key step towards parameter estimation. We study the support recovery problem in three different canonical mixture models as described above: mixtures of distributions (MD), mixtures of linear regressions (MLR), and mixtures of linear classifiers (MLC). The three models will differ somewhat in analysis as they pose different challenges; however there will be commonalities in the key techniques.  We provide two flavors of results for support recovery namely, 1) \textit{Exact support recovery:} where we recover the supports of all unknown sparse parameters corresponding to all components of the mixture, 2) \textit{Maximal support recovery:} where we recover the maximal supports from the poset of all supports of the  parameter vectors (i.e., supports that are not subsets of any other). Some of the algorithms and results of this paper has appeared in the shorter conference version \cite{DBLP:conf/aistats/PalM22}. In this version, the results have been extended and discussed in significantly more details. To formally specify the problems and state the results we need to define certain quantities.



\subsection{Notations}\label{sec:notation}
We write $[n]$ to denote the set $\{1,2,\dots,n\}$. We will use $\fl{1}_n,\fl{0}_n$ to denote an all one vector and all zero vector of dimension $n$ respectively. We will use $\ca{Q}([n])$ to denote the power set of $[n]$ i.e. $\ca{Q}([n]) = \{\ca{C}\mid \ca{C}\subseteq [n]\}$. The default base for logarithms is 2, unless otherwise specified.

For any vector $\fl{v} \in \bb{R}^n$, we use $\fl{v}_i$ to denote the $i^{\s{th}}$ coordinate of $\fl{v}$ and for any ordered set $\ca{S} \subseteq [n]$, we will use the notation $\fl{v}_{\mid \ca{S}} \in \bb{R}^{|\ca{S}|}$ to denote the vector $\fl{v}$ restricted to the indices in $\ca{S}$. Furthermore, we will use $\s{supp}(\fl{v}) \triangleq \{i \in [n]:\fl{v}_i \neq 0\}$ 
to denote the support of $\fl{v}$ and $\left|\left|\fl{v}\right|\right|_0 \triangleq \left|\s{supp}(\fl{v})\right|$ to denote the size of the support. Let $\s{sign}:\bb{R}\rightarrow \{-1,+1\}$ be a function that returns the sign of a real number i.e. for any input $x \in \bb{R}$, 
\begin{align*}
    \s{sign}(x) = 
\begin{cases}
 1 \quad \text{if $x \ge 0$}\\
 -1 \quad \text{if $x < 0$}
\end{cases}.
\end{align*}
Consider a multi-set of $n$-dimensional vectors $\ca{U} \equiv \{\fl{u}^{(1)},\fl{u}^{(2)},\dots,\fl{u}^{(\ell)}\}$. We will write $\ca{S}_{\ca{U}}(i)\triangleq \{\fl{u}\in \ca{U}:\fl{u}_i \neq 0\}$ to denote the multi-set of vectors in $\ca{U}$ that has a non-zero entry at the $i^{\s{th}}$ index. Furthermore, for an ordered set $\ca{C}\subseteq [n]$ and vector $\fl{a}\in \{0,1\}^{\left|\ca{C}\right|}$, we will also write $\s{occ}_{\ca{U}}(\ca{C},\fl{a}) \triangleq \sum_{\fl{u}\in \ca{U}} \mathbf{1}[\fl{u}_{\mid \ca{C}}=\fl{a}]$ to denote the number of vectors in $\ca{U}$ that equal $\fl{a}$ when restricted to the indices in $\ca{C}$. 
For a matrix $\fl{M}\in \bb{R}^{m \times n}$, we will use $\fl{M}_i$ to denote the $i^{\s{th}}$ column of $\fl{M}$. Let ${\fl{A}}_{\ca{U}} \in \{0,1\}^{n \times \ell}$ denote the support matrix of $\ca{U}$ where each column vector $\fl{A}_i \in \{0,1\}^n$ represents the support of the vector $\fl{u}^{(i)} \in \ca{U}$. For ease of notation, we will omit the subscript $\ca{U}$ when the set of vectors is clear from the context. 

We write $\ca{N}(\mu,\sigma^2)$ to denote a Gaussian distribution with mean $\mu$ and variance $\sigma^2$. We will denote the cumulative distribution function of a random variable $Z$ by $\phi:\bb{R}\rightarrow [0,1]$ i.e. $\phi(a)=\int_{-\infty}^{a}p(z)dz$ where $p(\cdot)$ is the density function of $Z$. Also, we will denote $\s{erf}:\bb{R}\rightarrow \bb{R}$ to be the error function defined by $\s{erf}(z)=\frac{2}{\sqrt{\pi}}\int_{0}^{z}\exp(-t^2)dt$. Since the error function $\s{erf}$ is bijective, we define $\s{erf}^{-1}(\cdot)$ to be the inverse of the $\s{erf}(\cdot)$ function. Finally, for a fixed set $\ca{B}$ we will write $X\sim_{\s{Unif}} \ca{B}$ to denote a random variable $X$ that is uniformly sampled from the elements in $\ca{B}$.


\subsection{Formal Problem Statements} 
\label{sec:formal}
 Let $\ca{V}$ be a multi-set of $\ell$ unknown $k$-sparse vectors $\fl{v}^{(1)},\fl{v}^{(2)}, \dots,\fl{v}^{(\ell)} \in \bb{R}^n$ such that $\left|\left|\fl{v}^{(i)}\right|\right|_0 \le k$ for all $i \in [\ell]$. We consider the following problems described below:

\paragraph{Mixtures of Distributions with Sparse Latent Parameters (MD):}

Consider a class of distributions $\ca{P} \equiv \{\fl{P}(\theta)\}_{\theta \in \Theta}$ parameterized by some $\theta \in \Theta$ where $\Theta \subseteq \bb{R}$. We assume that all distributions in  $\ca{P}$ satisfy the following property:
$\bb{E}_{x\sim \fl{P}(\theta)} x^{\ell}$ can be written as a polynomial in $\theta$ of degree exactly $\ell$, From Table 2 in \citep{belkin2010polynomial}, we know that many well-known distributions satisfy this property (further discussion later). 
A sample $\fl{x}\sim \ca{P}_d$ is generated as follows:
\begin{align*}
    t \sim_{\s{Unif}} [\ell] \; \text{and} \; \fl{x}_i \mid t \sim \fl{P}(\fl{v}^{(t)}_i) \text{ independently }\forall i\in [n]. 
\end{align*}
In other words, $\fl{x}$ is generated according to a uniform mixture of distributions each having a sparse unknown parameter vector. 
Consider  $\fl{x}^{(1)},\fl{x}^{(2)},\dots,\fl{x}^{(m)}\in \bb{R}^n$, $m$ i.i.d. copies of $\fl{x}$, that we can use to recover $\ca{V}$.

Here are some examples of this setting:
\begin{enumerate}
\item $\fl{P}(\theta)$ can be a Gaussian distribution with mean $\theta$ and known variance $\sigma^2$. This setting corresponds to a mixture of high-dimensional isotropic Gaussian distributions with sparse means.

\item $\fl{P}(\theta)$ can be a uniform distribution with range $[\theta,b]$ for a fixed and known $b$.

\item $\fl{P}(\theta)$ can be a Poisson distribution with mean $\theta$.

\end{enumerate}  

\paragraph{Mixtures of Sparse Linear Regressions (MLR).}

Consider $m$ samples
$$(\fl{x}^{(1)},y^{(1)}),(\fl{x}^{(2)},y^{(2)}),\dots,(\fl{x}^{(m)},y^{(m)}) \in \bb{R}^n \times \bb{R}$$ which are generated independently according to a distribution $\ca{P}_{r}$ defined as follows: for $(\fl{x},y)\sim \ca{P}_{r}$, we have 
\begin{align*}
    &\fl{x}_i \sim \ca{N}(0,1) \text{ independently for all } i\in [n] \\
    &\fl{v} \sim_{\s{Unif}} \ca{V} \text{ and } y \mid \fl{x},\fl{v} \sim \ca{N}(\langle\fl{v},\fl{x}\rangle,\sigma^2).
\end{align*}
In other words, each entry of $\fl{x}$ is sampled independently from $\ca{N}(0,1)$ and for a fixed $\fl{x}$, the conditional distribution of $y$ given $\fl{x}$ is a Gaussian with mean $\langle\fl{v},\fl{x}\rangle$ and known variance $\sigma^2$ where $\fl{v}$ is uniformly sampled from the multi-set $\ca{V}$.

\paragraph{Mixtures of Sparse Linear Classifiers (MLC).}
 
Consider $m$ samples $$(\fl{x}^{(1)},y^{(1)}),(\fl{x}^{(2)},y^{(2)}),\dots,(\fl{x}^{(m)},y^{(m)}) \in \bb{R}^n \times \{-1,+1\}$$ which are generated independently according to a distribution $\ca{P}_{c}$ defined as follows: for $(\fl{x},y)\sim \ca{P}_{c}$, we have 
\begin{align*}
    &\fl{x}_i \sim \ca{N}(0,1) \text{ independently for all } i\in [n] \\
    & \fl{v} \sim_{\s{Unif}} \ca{V} \text{ and } z \sim \ca{N}(0,\sigma^2) \text{ and } y  = \s{sign}(\langle\fl{v},\fl{x}\rangle +z).
\end{align*}
In other words, each entry of $\fl{x}$ is sampled independently from $\ca{N}(0,1)$ and for a fixed $\fl{x}$, the conditional distribution of $y$ given $\fl{x}$ is $+1$ if $\langle \fl{v},\fl{x} \rangle \ge -z$ and $-1$ otherwise; here, $\fl{v}$ is uniformly sampled from the multi-set of unknown vectors $\ca{V}$ and $z$ denotes zero mean Gaussian noise with variance $\sigma^2$. 

 Our goal in all the three problems described above is to recover the support of unknown vectors $\fl{v}^{(1)},\fl{v}^{(2)}, \dots,\fl{v}^{(\ell)} \in \ca{V}$
 with minimum number of samples $m$. More formally, we look at two distinct notions of support recovery:
 \begin{defn}[Exact Support Recovery]\label{def:exact}
 We will say that an algorithm achieves Exact Support Recovery in the MLC/MLR/MD setting if it can recover the supports of all the unknown vectors in $\ca{V}$ exactly.
 \end{defn}
 
 Note that, $\{\s{supp}(\fl{v}): \fl{v} \in\ca{V}\}$ is a poset according to containment or set-inclusion ($\subseteq$). A maximal element of this poset is one that is not subset of any other element. When the supports are all different, each of them is maximal. 
 
 Let $\s{Maximal}(\ca{V})$ be the unique set of all maximal elements of the  poset $\{\s{supp}(\fl{v}): \fl{v} \in\ca{V}\}$.
 
 \remove{
 \begin{defn}[Deduplicated set]
 A deduplicated set $\ca{V'}$ is a subset of $\ca{V}$ such that 1) $\s{supp}(\fl{v}^{(1)}) \not \subseteq \s{supp}(\fl{v}^{(2)}) \text{ for any distinct }\fl{v}^{(1)},\fl{v}^{(2)} \in \ca{V}'$ and 2) $\fl{v} \not \in \ca{V'}$ if there exists  $\fl{v}' \in \ca{V}$  satisfying $\s{supp}(\fl{v}) \subseteq \s{supp}(\fl{v}')$. Now,
 \begin{align}\label{eq:partial}
     \s{Maximal}(\ca{V}) &\triangleq \s{argmax}_{\ca{V}'\subseteq \ca{V}} |\ca{V'}|
\end{align}
where the maximization is over all deduplicated sets.


 \end{defn}
 
 We can show that the set $\s{Maximal}(\ca{V})$ is unique (see Lemma \ref{lem:trimmed} in Appendix \ref{app:prelim}).
}

 \begin{defn}[Maximal Support Recovery]\label{def:partial}
 We will say that an algorithm achieves Maximal Support Recovery in the MLR/MLC/MD setting if it can recover $\s{Maximal}(\ca{V})$, i.e.,   all the maximal elements of the poset $\{\s{supp}(\fl{v}): \fl{v} \in\ca{V}\}$. 
 \end{defn}

Note that in Definition \ref{def:partial}, the objective is to recover supports of the largest set of vectors in $\ca{V}$, where no support is included completely in another support; this is easier than exact support recovery (Definition \ref{def:exact}). 

\begin{rmk}\label{rmk:interesting}
If every unknown vector $\fl{v}\in\ca{V}$ had a unique non-zero index $i\in [n]$ i.e. $\fl{v}_i \neq 0$ and $\fl{v}'_i =0$ for all $\fl{v}'\in \ca{V}\setminus \{\fl{v}\}$, then maximal support recovery is equivalent to exact support recovery. This condition, also known as the separability condition, has been commonly used in the literature for example in unique non-negative matrix factorization \citep{arora2016computing,donoho2004does,slawski2013matrix} and approximate parameter recovery in MLC in the query-based setting \citep{gandikota2020recovery}.
\end{rmk}



Note that a trivial approach to the support recovery problem is to first recover the union of support and then apply existing parameter estimation guarantees in the corresponding mixture setting. However, note that this approach crucially requires parameter estimation results for the corresponding family of mixtures which may be unavailable. We have provided a detailed discussion on our results and other relevant work
including the alternate approach outlined above
in Appendix \ref{app:relevant}.

\subsection{Discussion on Our Results and Other Related Works}\label{app:relevant}

Note that all the sample complexity guarantees that we present in this paper scale polylogarithmically with the ambient dimension $n$. Below, we discuss more about each of the individual settings that we study in the unsupervised (mixtures of distributions) and in the supervised case (mixtures of linear regression and linear classifiers).

\paragraph*{Mixtures of Distributions:}


Our technique of learning the supports of the latent parameter vectors in mixture of simple distributions is based on the {\em method of moments}~\citep{hsu2013learning,hardt2015tight}. This method works in general, as long as moments of the distribution of each coordinate can be described as a polynomial in the component parameters.  \cite{belkin2010polynomial} showed (see Table 2 in \citep{belkin2010polynomial}) that most common distributions, including Gaussian, Uniform, Poisson, and Laplace distributions, satisfy this assumption. Our results in this part that include sample complexity guarantees for both exact support recovery (see Theorem \ref{thm:md}) and maximal support recovery (see Theorem \ref{thm:general_md}) are not only applicable to many canonical distributions but also makes progress towards quantifying the sufficient number of moments in the  general problem defined in Sec.~\ref{sec:formal}.


An alternate approach to the support recovery problem is to first recover the {\em union} of supports of the unknown parameters and then apply known parameter estimation guarantees to identify the support of each of the unknown vectors after reducing the dimension of the problem. Note that this approach crucially requires parameter estimation results for the corresponding family of mixtures which may be unavailable. To the best of our knowledge, most constructive sample complexity guarantees for parameter estimation in mixture models without separability assumptions correspond to mixtures of Gaussians \citep{kalai2010efficiently,belkin2010polynomial,moitra2010settling,hardt2015tight,feller2016weak, ho2016convergence, manole2020uniform,heinrich2018strong}. Moreover, most known results correspond to mixtures of Gaussians with two components. The only known results for parameter estimation in mixtures of Gaussians with more than $2$ components is \cite{moitra2010settling} but as we describe later, using the alternate approach with the guarantees in  \cite{moitra2010settling} results in a polynomial dependence on the sparsity. On the contrary, our sample complexity guarantees scales logarithmically with the sparsity or dimension (for constant $\ell$), see Corollary~\ref{coro:md}, which is a significant improvement over the alternate approach.

For other than Gaussian distributions, \cite{belkin2010polynomial, krishnamurthy20a} studied parameter estimation under the same moment-based assumption that we use.
  However, \cite{belkin2010polynomial} use non-constructive arguments from algebraic geometry because of which, their results did not include bounds on the sufficient number of moments for learning the parameters in a mixture model.  \cite{krishnamurthy20a} resolve this question to a certain extent for these aforementioned families of mixture models as they quantify the sufficient number of moments for parameter estimation under the restrictive assumption that the latent parameters lie on an integer lattice. Therefore, our results for these distributions form the first guarantees for support recovery.

\paragraph*{Mixtures of Linear Regression}

For the support recovery problem in the sparse mixtures of linear regressions (MLR) setting, we provide a suite of results under different assumptions. In the first part, we study the exact support recovery problem when the unknown sparse parameters are binary (see Theorem \ref{thm:binary}) and the maximal support recovery problem when 1) the unknown sparse parameters have non-negative values (see Corollary \ref{coro:non-negative}), or 2) the unknown sparse parameters are distributed according to a Gaussian (see Corollary \ref{coro:gaussian}). 
As in the MD setting, an alternate approach for the support recovery problem is to first find the union of support of the unknown parameters and then apply existing parameter estimation guarantees to recovery the support of each of the unknown linear functions. The state of the art guarantees in MLR for parameter estimation is given by \cite{li2018learning} providing a sample complexity guarantee which is linear in the dimension (linear in sparsity when restricted to the union of support). Our results for support recovery are polynomial in sparsity and are therefore worse than the parameter estimation guarantees of \citep{li2018learning} applied to our sparse setting (see Theorem \ref{thm:mlr_general}) when the sparsity is large. On the other hand, the sample complexity guarantees of \citep{li2018learning} scales exponentially with $\ell^2$ and polynomially with the inverse of the failure probability. In contrast, our sample complexity guarantees are polynomial in $\ell$ and logarithmic in the inverse of the failure probability.  

In the second part, we study the support recovery problem in the MLR setting under  mild assumptions on the unknown sparse vectors.  We show a improvement in the sample complexity guarantees (as compared to that of Theorem \ref{thm:mlr_general}) in the dependency on the number of components $\ell$ (see Theorem \ref{thm:mlr_general_new}). In particular, our sample complexity guarantees are polynomial in $\ell$ whereas the guarantees of \citep{li2018learning} has an  exponential dependence on $\ell^2$. Moreover, our sample complexity guarantee has a polynomial dependence on all other parameters such as sparsity and failure probability. In terms of the sparsity parameter, our bound is weaker than Theorem \ref{thm:mlr_general} since the sample complexity dependence of the latter on the sparsity is linear which is optimal.

\paragraph*{Mixtures of Linear Classifiers}

Unlike the MLR and MD setting, mixture of linear classifiers (MLC) is far less studied.  It is understandably more difficult to analyze than MLR since only the sign of the linear function of the covariates is retained. We study the exact support recovery problem in sparse MLC (see Theorem \ref{thm:mlc}) under the setting that all the parameters of the unknown vectors are either nonnegative or they are all nonpositive.
Although this assumption might seem restrictive, note that theoretical work in the MLC setting is extremely limited. To the best of our knowledge, there are only two relevant papers \citep{sun2014learning,sedghi2016provable} that have studied this problem. In  \citep{sun2014learning}, the authors do not make any assumptions on sparsity and provide an algorithm for recovering the subspace in which the parameter vectors corresponding to the unknown linear functions lie. In contrast, support recovery is a different objective and hence is incomparable to the subspace recovery guarantees. The second work \citep{sedghi2016provable} uses tensor decomposition based methods to provide sample complexity guarantees for learning the parameter vectors; but their sample complexity is inversely proportional to the square of the minimum eigenvalue of the matrix comprising the unknown parameter vectors as columns. This is an unwanted dependence as it implies that if the parameter vectors are linearly dependent, then the algorithm will require infinite samples to recover the parameter vectors. On the other hand, our support recovery guarantees do not have any such assumption on the parameters. Moreover, unlike the MD setting, it is not evident in MLC how to recover the union of support of the unknown sparse vectors. Hence the sample complexity obtained by applying the results in \citep{sedghi2016provable} directly will lead to a polynomial dependence on the dimension of the latent space which is undesirable (ideally, we require a  logarithmic dependence on the latent space dimension). Our results exhibit such dependence on the dimension and also does not assume linear independence of the parameter vectors. We believe this to be an important progress towards further understanding of theoretical properties of mixtures where the response is a mixture of nonlinear functions of the covariates.

\paragraph*{Main Technical Contribution} 

As discussed earlier, our unsupervised setting is different from the query-based setting of \cite{gandikota2021support}, where the focus is also support recovery. However, we crucially use a general technique introduced in \cite{gandikota2021support} (see Lemma \ref{thm:prelim1}) for exact support recovery. Namely, support recovery is possible if we can estimate some subset statistics.

But computing estimates of these subset statistics   to invoke the guarantees given in Lemma \ref{thm:prelim1} is a difficult problem.
For the three settings, namely MD/MLR/MLC, we provide  distinct and novel techniques
to compute these quantities.
Our approach to compute the sufficient statistics in MD setting involve a two-step approach with polynomial identities : 1) first, using the method of moments, we compute estimates of the power sum polynomial of degree $p$ in the variables $\{\prod_{i \in \ca{C}} \fl{v}_i^2\}_{\fl{v}\in \ca{V}}$ for all subsets $\ca{C} \subset [n]$ up to a certain size; 2) secondly, we use an elegant connection via Newton's identities to compute estimates on the elementary symmetric polynomial in the variables $\{\prod_{i \in \ca{C}} \fl{v}_i^2\}_{\fl{v}\in \ca{V}}$ which in turn allows us to compute the sufficient statistics. In MLR, for a set $\ca{C}\subseteq [n]$, we again analyze an interesting quantity namely $y^{\left|\ca{C}\right|} \cdot\Big(\prod_{i\in \ca{C}}\fl{x}_i\Big)$ that reveals the sufficient statistic for invoking Lemma \ref{thm:prelim1}. In MLC, our method is quite different and it involves conditioning on the event that certain coordinates of the covariate have large values. If this event is true, then analyzing the response variables  reveals the sufficient statistics for invoking Lemma \ref{thm:prelim1}.

\paragraph{Organization:} The rest of the paper is organized as follows: in Section \ref{sec:prelims}, we provide the necessary preliminary lemmas for support recovery (exact and maximal). In Section \ref{sec:high_level}, we provide our main results on exact support recovery and discuss our core approaches in each of the settings namely MD/MLR/MLC at a high level. For example, see Corollary \ref{coro:md}, Theorem~\ref{thm:mlr_general}, and Theorem~\ref{thm:mlc} for representative results in the three settings respectively. In Section  \ref{sec:dedup}, we have provided additional results on maximal support recovery for MD and MLR settings respectively. In Appendix \ref{sec:detailed_md}, \ref{sec:detailed_mlc} and \ref{sec:detailed_mlr}, we provide the detailed proofs of all our results in the MD, MLC and MLR settings respectively.
 In Appendix \ref{app:prelim}, we provide the missing proofs of lemmas in Section \ref{sec:prelims} and in Appendix \ref{app:lemma1}, we provide the proof of Lemma \ref{thm:prelim1} proved in \citep{gandikota2021support}.
In Appendix \ref{app:technical}, we provide a few technical lemmas that are used in the main proofs.

\section{Preliminaries}\label{sec:prelims}

To derive our support recovery results, we will crucially use the result of Lemma \ref{thm:prelim1} below which has been proved in \citep{gandikota2021support}. Recall the definition of $\s{occ}(\ca{C},\fl{a})$ in Sec.~\ref{sec:notation}. Lemma \ref{thm:prelim1} states that if $\s{occ}(\ca{C},\fl{a})$ is known for all sets $\ca{C}\subseteq [n]$ up to a cardinality of $\log \ell+1$, then it is possible to recover the support of all the unknown vectors in $\ca{V}$. We restate the result according to our terminology. 

\begin{algorithm}[!htbp]
\caption{\textsc{Support Recovery} \label{algo:t-iden-supp-rec}}
\begin{algorithmic}[1]
\REQUIRE $|\Su(C, \f a)|$ for every $C \subset [n]$, $|C| = t, \; t \in\{p, p+1\}$, $p = \lfloor\log \ell \rfloor$,  and every $\fl{a} \in \{0,1\}^{p} \cup \{0,1\}^{p+1}$.
\STATE Set $\text{count} = 1,i = 1 $.
\WHILE{$\text{count}\le \ell$}
    \IF{$|\Su(C, \f a)| = w$, and $|\Su(C\cup\{j\}, (\fl{a}, 1))|\in \{0,w\}$ for all $j \in [n]\setminus C$}
        \STATE Set $\supp{\fl{u}^{i}}|_C =  \fl{a}$\\
        \STATE For every $j \in [n]\setminus C$, set $\supp{\fl{u}^{i}}|_j = b$, where $|\Su(C\cup\{j\}, ({\f a}, b))| = w$.
        \STATE Set $\s{Multiplicity}^i = w$.
        \STATE For all $\fl{t}\in \{0,1\}^{p} \cup \{0,1\}^{p+1},S \subseteq [n]$ such that $|S|\in \{p, p+1\}$, update
\begin{align*}
    \left|\Su(S, \fl{t})\right| \leftarrow \left|\Su(S, \fl{t})\right| 
    - \left|\Su(C, \fl{a})\right|\times\mathbf{1}[ \supp{\fl{u}^{i}}|_S = \fl{t}]
\end{align*}
        \STATE $\text{count} = \text{count} + w$.
        \STATE $i=i + 1$.
    \ENDIF
\ENDWHILE
\STATE Return $\s{Multiplicity}^j$ copies of $\supp{\fl{u}^{j}}$ for all $j < i$.
\end{algorithmic}
\end{algorithm}

\begin{lemma}\label{thm:prelim1}[Corollary 1 in \citep{gandikota2021support}]
Let $\ca{V}$ be  a set of $\ell$ unknown vectors in $\bb{R}^n$. Then, if $\s{occ}(\ca{C},\fl{a})$ is provided as input for all sets $\ca{C}\subset [n], |\ca{C}|\le \log \ell+1$ and for all $\fl{a}\in \{0,1\}^{|\ca{C}|}$, then there exists an algorithm (see Algorithm \ref{algo:t-iden-supp-rec}) that can recover the support of the unknown vectors in $\ca{V}$. 
\end{lemma}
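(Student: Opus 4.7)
The algorithm proceeds in rounds, and in each round it identifies a ``homogeneous cluster'' of vectors that share a single support. The condition tested in step~3---that $|\Su(C,\fl{a})|=w>0$ and $|\Su(C\cup\{j\},(\fl{a},1))|\in\{0,w\}$ for every $j\in[n]\setminus C$---is exactly the statement that the $w$ vectors whose restriction to $C$ has support pattern $\fl{a}$ also agree on every coordinate outside $C$ (either all of them are nonzero at $j$ or none are). Consequently these $w$ vectors share a common support, which can be read off directly from $\fl{a}$ together with the indicator of which of the two values $0$ or $w$ is attained at each $j\notin C$. So whenever the condition fires, the algorithm records a correct support together with its correct multiplicity.

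The central claim is that such a $(C,\fl{a})$ always exists for the current non-empty multi-set of vectors, with $|C|\le p+1=\lfloor\log\ell\rfloor+1$. I would prove this by induction on $\ell=|\ca{V}|$. The base case $\ell=1$ is trivial with $C=\emptyset$. For the inductive step, if every vector in $\ca{V}$ already has the same support, take $C=\emptyset$; otherwise pick any coordinate $i$ on which the supports disagree and partition $\ca{V}$ into $\ca{V}_0=\{\fl{v}:\fl{v}_i=0\}$ and $\ca{V}_1=\{\fl{v}:\fl{v}_i\ne 0\}$. One of the two parts contains at most $\lfloor\ell/2\rfloor$ vectors; apply the inductive hypothesis to that smaller part to obtain $(C',\fl{a}')$ of size at most $\lfloor\log\lfloor\ell/2\rfloor\rfloor+1\le\lfloor\log\ell\rfloor$, and then append the coordinate $i$ with the appropriate bit to form $(C,\fl{a})$ of size at most $\lfloor\log\ell\rfloor+1$. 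Since the input provides all counts for $|C|\in\{p,p+1\}$ and any smaller valid $C$ can be padded up to size $p$ by appending arbitrary coordinates (homogeneity of the cluster is preserved under such padding, since everyone in the cluster simply keeps agreeing on the added coordinates), the algorithm is guaranteed to locate some valid $(C,\fl{a})$ by an exhaustive scan over the given counts.

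Finally, the update in step~7 subtracts $w$ from $|\Su(S,\fl{t})|$ whenever the recovered support matches $\fl{t}$ on $S$; this yields exactly the occurrence count that one would have obtained if those $w$ copies had never been present, so the invariant that the stored counts equal the occurrence numbers of the remaining multi-set is preserved. Since the remaining multi-set strictly shrinks by $w\ge 1$ in every iteration, the while-loop terminates after at most $\ell$ rounds and the algorithm returns $\supp{\fl{v}}$ for every $\fl{v}\in\ca{V}$ with the correct multiplicity. The only real obstacle is the combinatorial lemma above guaranteeing the existence of a small separating $(C,\fl{a})$; once that is in hand, the remainder of the argument is bookkeeping.
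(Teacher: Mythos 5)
Your proof is correct and follows essentially the same route as the paper's: the paper formalizes your ``homogeneous cluster'' step as $p$-identifiability of the deduplicated support matrix and proves existence of a small separating pair $(C,\fl{a})$ by exactly your halving argument, then uses the same peel-and-subtract bookkeeping for the counts. One small point to tighten: since Algorithm \ref{algo:t-iden-supp-rec} tests the condition only for $|C|=p=\lfloor\log\ell\rfloor$ (padding smaller sets upward, never shrinking larger ones), you should state the inductive bound as $|C|\le\lfloor\log m\rfloor$ rather than $\lfloor\log m\rfloor+1$ --- your own recursion with base case $f(1)=0$ and $f(\ell)\le f(\lfloor\ell/2\rfloor)+1$ already delivers this.
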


For the sake of completeness, we provided the detailed proof of Lemma \ref{thm:prelim1} in Appendix \ref{app:lemma1}.

{\color{black}
\begin{rmk}
Lemma \ref{thm:prelim1} provides an unconditional guarantee for recovering the support of the unknown vectors in $\ca{V}$. In other words, in the worst case, we only need to know $\s{occ}(\ca{C},\fl{a})$ for all sets of size $|\ca{C}|\le \log \ell+1$. However, \citep{gandikota2021support}[Theorems 1,2 and 4],  significantly relaxed sufficient conditions for recovering the support of $\ca{V}$ under different structural assumptions were also provided. 
As noted in \citep{gandikota2021support}, these additional conditions are  mild and in most cases, if $\s{occ}(\ca{C},\fl{a})$ is known for all sets $\ca{C}\subseteq [n]$ up to a cardinality of $3$, then it is possible to recover the support of all the unknown vectors in $\ca{V}$. 
\end{rmk} 

}

Next, we describe another result, Lemma \ref{lem:prelim1}, proved in \citep{gandikota2021support} that is also going to be useful for us. The main takeaway from Lemma \ref{lem:prelim1}  is that computing $\left|\cup_{i \in \ca{C}}\ca{S}(i)\right|$ (which represents the number of unknown vectors in $\ca{V}$ having non-zero values in at least one entry corresponding to $\ca{C}$) for all sets smaller than a fixed size (say $t$)  is sufficient to compute $\s{occ}(\ca{C},\fl{a})$ for all subsets $\ca{C}\subseteq [n], |\ca{C}|\le t$ and all vectors $\fl{a}\in \{0,1\}^{|\ca{C}|}$. 
In addition, we provide a result in  Lemma \ref{lem:prelim1} where we show that it is also possible to compute $\s{occ}(\ca{C},\fl{a})$ if the quantities $\left|\cap_{i \in \ca{C}}\ca{S}(i)\right|$ (which represents the number of unknown vectors in $\ca{V}$ having non-zero values in all entries corresponding to $\ca{C}$) are provided for all subsets $\ca{C}\subseteq [n]$ satisfying $|\ca{C}|\le t$.

\begin{lemma}[Partially proved in \citep{gandikota2021support}]\label{lem:prelim1}
 Let $\ca{V}$ be  a set of $\ell$ unknown vectors in $\bb{R}^n$. If $|\bigcup_{i \in \ca{C}}\ca{S}(i)|$ is provided as input for all sets $\ca{C}\subset [n], |\ca{C}|\le t$ or alternatively $|\bigcap_{i \in \ca{C}}\ca{S}(i)|$ is provided as input for all sets $\ca{C}\subset [n], |\ca{C}|\le t$, then we can compute $\s{occ}(\ca{C},\fl{a})$ for all sets $\ca{C}\subseteq [n], |\ca{C}|\le t, \fl{a}\in \{0,1\}^{|\ca{C}|}$.
\end{lemma}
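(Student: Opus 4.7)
The goal is to produce $\s{occ}(\ca{C},\fl{a})$ for every $\ca{C}\subseteq[n]$ with $|\ca{C}|\le t$ and every $\fl{a}\in\{0,1\}^{|\ca{C}|}$, from either the collection of union-sizes $\{|\bigcup_{i\in\ca{C}}\ca{S}(i)|\}_{|\ca{C}|\le t}$ or the collection of intersection-sizes $\{|\bigcap_{i\in\ca{C}}\ca{S}(i)|\}_{|\ca{C}|\le t}$. The plan is to establish two reduction steps and then chain them as needed.

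The first step handles the intersection-case directly. Fix $\ca{C}$ with $|\ca{C}|\le t$ and $\fl{a}\in\{0,1\}^{|\ca{C}|}$, and split $\ca{C}=\ca{C}_1\sqcup\ca{C}_0$ according to whether $\fl{a}_i=1$ or $0$. By definition,
\[
\s{occ}(\ca{C},\fl{a}) \;=\; \Big|\Big(\bigcap_{i\in\ca{C}_1}\ca{S}(i)\Big)\,\setminus\,\Big(\bigcup_{j\in\ca{C}_0}\ca{S}(j)\Big)\Big|,
\]
and applying inclusion-exclusion over the ``forbidden'' coordinates $\ca{C}_0$ yields
\[
\s{occ}(\ca{C},\fl{a}) \;=\; \sum_{\ca{T}\subseteq\ca{C}_0}(-1)^{|\ca{T}|}\,\Big|\bigcap_{i\in\ca{C}_1\cup\ca{T}}\ca{S}(i)\Big|,
\]
with the convention $|\bigcap_{i\in\emptyset}\ca{S}(i)|=\ell$ (which is known, since $\ell=|\ca{V}|$ is part of the setup). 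Every subset appearing on the right has size at most $|\ca{C}|\le t$, so the intersection-sizes supplied as input are sufficient to evaluate the formula.

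The second step shows how to recover intersection-sizes from union-sizes of the same or smaller subsets. I will use the dual inclusion-exclusion identity
\[
\Big|\bigcap_{i\in\ca{C}}\ca{S}(i)\Big| \;=\; \sum_{\emptyset\neq\ca{T}\subseteq\ca{C}}(-1)^{|\ca{T}|+1}\,\Big|\bigcup_{i\in\ca{T}}\ca{S}(i)\Big|,
\]
which I would justify either by direct Möbius inversion on the subset lattice or by induction on $|\ca{C}|$ (expanding each union on the right via ordinary inclusion-exclusion and checking that the coefficient of $|\bigcap_{i\in\ca{S}}\ca{S}(i)|$ equals $1$ when $\ca{S}=\ca{C}$ and $0$ otherwise). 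Since each $\ca{T}$ is a subset of $\ca{C}$, only union-sizes over sets of cardinality $\le t$ are required.

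Chaining the two steps disposes of both clauses of the lemma: if the intersection-sizes are provided, Step~1 already yields $\s{occ}(\ca{C},\fl{a})$; if only the union-sizes are provided, Step~2 first converts them into all intersection-sizes for subsets of size $\le t$, and then Step~1 finishes the job. The only real content is verifying the dual identity in Step~2 — this is the one place where a direct proof-by-inspection could go wrong, so I would handle it cleanly with Möbius inversion on the Boolean lattice of $\ca{C}$, where the union function $\ca{T}\mapsto|\bigcup_{i\in\ca{T}}\ca{S}(i)|$ and the intersection function $\ca{T}\mapsto|\bigcap_{i\in\ca{T}}\ca{S}(i)|$ are dual zeta/Möbius transforms of the ``exact pattern'' counts $\s{occ}(\ca{T},\cdot)$. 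Once that identity is in place, both directions of the lemma follow immediately.
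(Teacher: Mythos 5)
Your proof is correct and rests on the same engine as the paper's: inclusion--exclusion over the subset lattice, first to pass from union-sizes to intersection-sizes and then to isolate the exact pattern counts $\s{occ}(\ca{C},\fl{a})$ by subtracting off the ``forbidden'' coordinates. The only difference is presentational --- you write the two steps as explicit closed-form identities (a double M\"obius inversion), whereas the paper organizes the same computation as an induction on $|\ca{C}|$ that interleaves recovering intersections and pattern counts level by level; both are valid and yours is arguably the cleaner formulation.
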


\begin{coro}\label{coro:prelim1}
Let $\ca{V}$ be  a set of $\ell$ unknown $k$-sparse vectors in $\bb{R}^n$. Suppose, for each $\ca{C}\subseteq [n], |\ca{C}|\le \log \ell+1$, we can compute $ \left|\cup_{i \in \ca{C}}\ca{S}(i)\right|$  (or alternatively $|\bigcap_{i \in \ca{C}}\ca{S}(i)|$) with probability $1-\gamma$ using $\s{T} \log \gamma^{-1}$ samples where $\s{T}$ is independent of $\gamma$. Then, there exists an algorithm (see Algorithm \ref{algo:intersection}) that can achieve Exact Support Recovery with probability at least $1-\gamma$ using $O(\s{T}\log
 (\gamma^{-1}(n+(\ell k)^{\log \ell+1})))$
samples.
\end{coro}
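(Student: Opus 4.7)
The strategy is to combine Lemma \ref{lem:prelim1} (which converts union-of-support cardinalities into $\s{occ}$ values) with Lemma \ref{thm:prelim1} (which turns the $\s{occ}$ values into exact support recovery), using a two-phase procedure that first isolates the union of all supports $\ca{I}^{\star} := \bigcup_{\fl{v}\in\ca{V}}\s{supp}(\fl{v})$ and then queries only the $O((\ell k)^{\log\ell+1})$ relevant subsets. Since each $\fl{v}\in\ca{V}$ is $k$-sparse and $|\ca{V}|=\ell$, we have $|\ca{I}^{\star}|\le \ell k$, which is the source of the $(\ell k)^{\log\ell+1}$ term rather than the naive $n^{\log\ell+1}$ one would get from enumerating all small subsets of $[n]$.

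In phase one I would compute, for every singleton $\{i\}\subseteq[n]$, the quantity $|\ca{S}(i)|$ (for which the union and intersection coincide), using the estimator given by hypothesis, and declare $\widehat{\ca{I}} := \{ i\in[n] : |\ca{S}(i)| > 0\}$. On the event that all $n$ singleton estimates are correct, $\widehat{\ca{I}} = \ca{I}^{\star}$. In phase two, for every $\ca{C}\subseteq \widehat{\ca{I}}$ with $1\le |\ca{C}|\le \log\ell+1$, I would query $|\cup_{i\in\ca{C}}\ca{S}(i)|$ (or the intersection variant, using the alternative hypothesis of the corollary). For any $\ca{C}\subseteq[n]$ that contains indices outside $\ca{I}^{\star}$, those indices contribute nothing to $\cup_{i\in\ca{C}}\ca{S}(i)$, so the required value coincides with that on $\ca{C}\cap\ca{I}^{\star}$ and needs no separate query. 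Thus phase two makes at most $\sum_{j=1}^{\log\ell+1}\binom{\ell k}{j}\le (\ell k)^{\log\ell+1}$ queries. Feeding the resulting values into Lemma \ref{lem:prelim1} yields $\s{occ}(\ca{C},\fl{a})$ for every $\ca{C}\subseteq[n]$ with $|\ca{C}|\le \log\ell+1$ and every $\fl{a}\in\{0,1\}^{|\ca{C}|}$, after which Algorithm \ref{algo:t-iden-supp-rec} returns the supports of all unknown vectors, exactly achieving Definition \ref{def:exact}.

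For the sample complexity, the total number of union-of-support queries to be made is at most $Q := n + (\ell k)^{\log\ell+1}$. I would draw a single batch of $m = \s{T}\log(Q/\gamma) = O(\s{T}\log(\gamma^{-1}(n+(\ell k)^{\log\ell+1})))$ samples and reuse them across all $Q$ queries. By the hypothesis applied with failure parameter $\gamma/Q$, each individual query is correct with probability at least $1-\gamma/Q$; a union bound over the $Q$ events then guarantees simultaneous correctness with probability at least $1-\gamma$, which is exactly the success condition used in the previous paragraph.

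The main delicacy is that the subsets queried in phase two are chosen as a function of $\widehat{\ca{I}}$, which is itself a random set derived from the same samples that are reused by those queries, creating a potential dependency issue. The clean way to sidestep it is to pre-commit: the union bound is taken over a deterministic family of $Q$ events, namely correctness of the $n$ singleton queries plus correctness of the at most $(\ell k)^{\log\ell+1}$ queries on subsets of the true (but unknown) $\ca{I}^{\star}$. On this high-probability event one has $\widehat{\ca{I}} = \ca{I}^{\star}$ and the algorithm's actual queries form a subset of the pre-committed family, so all inputs required by Lemmas \ref{lem:prelim1} and \ref{thm:prelim1} are correct and exact support recovery succeeds with probability at least $1-\gamma$.
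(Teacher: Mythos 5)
Your proposal is correct and follows essentially the same route as the paper: recover the union of support via the $n$ singleton queries, query the at most $(\ell k)^{\log \ell+1}$ small subsets of that union, convert to $\s{occ}$ values via Lemma \ref{lem:prelim1}, invoke Lemma \ref{thm:prelim1}, and union-bound over $n+(\ell k)^{\log \ell+1}$ events with failure parameter rescaled to $\gamma/Q$. Your explicit pre-commitment argument for the adaptivity of the phase-two queries is a point the paper's proof passes over silently, but it is the standard fix and does not change the argument.
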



\begin{algorithm}[!htbp]
\caption{\textsc{Exact Support recovery using access to  estimates of $\left|\cap_{i \in \ca{C}}\ca{S}(i)\right|$ (or alternatively $\left|\cup_{i \in \ca{C}}\ca{S}(i)\right|$) that are correct with high probability} \label{algo:intersection}}
\begin{algorithmic}[1]
\REQUIRE For $\ca{C}\subseteq [n]$, access to estimates of $\left|\cap_{i \in \ca{C}}\ca{S}(i)\right|$ (or alternatively $\left|\cup_{i \in \ca{C}}\ca{S}(i)\right|$) that are correct with high probability.

\STATE For each $i\in [n]$, compute an estimate of $\left|\ca{S}(i)\right|$.

\STATE Compute $\ca{T}=\{i \in [n]\mid \text{estimate}(\ca{S}(i))>0\}$.

\STATE Compute estimates of $\left|\cap_{i \in \ca{C}}\ca{S}(i)\right|$ (or alternatively $\left|\cup_{i \in \ca{C}}\ca{S}(i)\right|$) for all subsets $\ca{C}\subseteq \ca{T}, |\ca{C}|\le \log \ell+1$. 

\STATE Compute $\s{occ}(\ca{C},\fl{a})$ for all subsets $\ca{C}\subseteq \ca{T}, \left|\ca{C}\right|\le \log \ell+1, \fl{a}\in \{0,1\}^{|\ca{C}|}$ using the computed estimates of $\left|\cap_{i \in \ca{C}}\ca{S}(i)\right|$ (or alternatively $\left|\cup_{i \in \ca{C}}\ca{S}(i)\right|$).

\STATE Use Algorithm \ref{algo:t-iden-supp-rec} to recover the support of all unknown vectors in $\ca{V}$.

\end{algorithmic}
\end{algorithm}

 In the next few lemmas, we characterize the set $\s{Maximal}(\ca{V})$ and show some useful properties. We start with the following definition:

\begin{defn}[$t$-good]
  A binary matrix $\fl{A} \in \{0,1\}^{n \times \ell}$ with all distinct columns is called  $t$-good if for every column $\fl{A}_i$, there exists a 
  set $S \subset [n]$ of $t$-indices such that $\fl{A}_i|_S = \fl{1}_t$, and $\fl{A}_j|_S \neq \fl{1}_t$ for all $j \neq i$. A set $U \subset \ca{Q}([n])$ is $t$-good if its $n \times |U|$ incidence matrix is $t$-good.

\end{defn}

Notice that if any set 
is $t$-good then it must be $r$-good for all $r\ge t$. In Lemma \ref{lem:good}, we show that $\s{Maximal}(\ca{V})$ is $(\ell-1)$-good and in Lemma  \ref{lem:partial2}, we provide sufficient conditions for maximal support recovery of the set of unknown vectors $\ca{V}$. 

\begin{lemma}\label{lem:good}
For any set of $\ell$ unknown vectors $\ca{V}$, $\s{Maximal}(\ca{V})$ must be $(\ell-1)$-good.
\end{lemma}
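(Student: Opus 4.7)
The plan is to exploit the defining property of $\s{Maximal}(\ca{V})$: it consists of supports that form an \emph{antichain} under set inclusion, i.e.\ none of its elements is contained in another. Write $\s{Maximal}(\ca{V}) = \{S_1, S_2, \ldots, S_m\}$, where $m \le \ell$, and let $\fl{A} \in \{0,1\}^{n \times m}$ be its incidence matrix, so that column $\fl{A}_i$ is the indicator vector of $S_i$. Note that the columns are pairwise distinct (two equal columns would correspond to two identical maximal supports, but $\s{Maximal}(\ca{V})$ collects each maximal support once).

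The key step is a witness-selection argument. For every ordered pair $(i,j)$ with $i \neq j$, maximality of $S_i$ implies $S_i \not\subseteq S_j$, so I can pick some element
\[
x_{ij} \in S_i \setminus S_j.
\]
For each fixed $i \in [m]$, I then collect these witnesses into
\[
T_i \;\triangleq\; \{\, x_{ij} \;:\; j \in [m]\setminus\{i\} \,\} \;\subseteq\; S_i,
\]
so that $|T_i| \le m-1 \le \ell - 1$. By construction, for every $j \neq i$ the element $x_{ij} \in T_i$ lies outside $S_j$, hence $T_i \not\subseteq S_j$. Translating this back to the incidence matrix $\fl{A}$: restricting the $i$th column to $T_i$ gives $\fl{A}_i|_{T_i} = \fl{1}_{|T_i|}$, while for every other column $\fl{A}_j$ we have $\fl{A}_j|_{T_i} \neq \fl{1}_{|T_i|}$. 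This exactly says that $\s{Maximal}(\ca{V})$ is $(m-1)$-good.

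Finally, since $m-1 \le \ell - 1$, the monotonicity remark stated just after the definition of $t$-good (if a set is $t$-good then it is $r$-good for all $r \ge t$, obtained by extending $T_i$ within $S_i$ when necessary) upgrades this to $(\ell-1)$-goodness, which is the desired conclusion. I expect the only mildly delicate point to be the padding step in the monotonicity remark, since naive padding could leave $S_i$; but since the witnesses $T_i$ already live in $S_i$, any extension is performed within $S_i$ (or absorbed into the convention that "$t$-good" tolerates sets of size at most $t$), so no new difficulty arises. The essence of the argument is thus the antichain structure of $\s{Maximal}(\ca{V})$ together with the pigeonhole-style observation that $m-1$ witnesses suffice per row.
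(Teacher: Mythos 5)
Your proof is correct and takes essentially the same approach as the paper's: for each maximal support $A$ and every other maximal support $A'$, pick a witness index in $A \setminus A'$ (which exists by the antichain property), and collect these at most $\ell-1$ witnesses into a subset of $A$ that is not fully contained in any other maximal support. You merely spell out the witness-selection and the monotonicity/padding step more explicitly than the paper does.
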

\begin{proof}
Note that, any set of $\s{Maximal}(\ca{V})\subset \ca{Q}([n])$ is not contained in any other. For any two $A,A' \in \s{Maximal}(\ca{V}),$ there exists some $i \in A$ such that $i \notin A'$. Therefore, for a fixed $A\in \s{Maximal}(\ca{V})$, for each $A' \in \s{Maximal}(\ca{V})\setminus \{A\}$, we can have at most $\ell -1$ elements, that are all in $A$, but not all in any other set.
\end{proof}

\begin{lemma}\label{lem:partial}
  If it is known whether $\left|\cap_{i \in \ca{C}}\ca{S}(i)\right|>0$ or not for all sets $\ca{C}\subseteq [n], |\ca{C}|\le s+1$, then there exists an algorithm that achieves maximal support recovery of the set of unknown vectors $\ca{V}$ provided $\s{Maximal}(\ca{V})$ is known to be $s$-good for $s\le \ell-1$ and $\left|\s{Maximal}(\ca{V})\right|\ge 2$.
\end{lemma}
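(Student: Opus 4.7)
My starting point is the simple observation that every support $\s{supp}(\fl{v})$ for $\fl{v}\in\ca{V}$ is contained in at least one element of $\s{Maximal}(\ca{V})$; hence for any $\ca{C}\subseteq[n]$,
\begin{equation*}
\bigl|\,\textstyle\bigcap_{i\in\ca{C}}\ca{S}(i)\,\bigr|>0 \iff \exists\,A\in\s{Maximal}(\ca{V})\ \text{with}\ \ca{C}\subseteq A.
\end{equation*}
So the oracle provided by the lemma is, equivalently, a ``there is a maximal support containing $\ca{C}$'' test, available for every $\ca{C}$ with $|\ca{C}|\le s+1$.

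The plan is the following algorithm. For each $S\subseteq[n]$ with $|S|=s$, use the oracle to decide whether $S$ is \emph{witnessed}, meaning $S$ is contained in some $A\in\s{Maximal}(\ca{V})$. For every witnessed $S$, use the oracle on sets of size $s+1$ to form
\begin{equation*}
A_S \;:=\; S \;\cup\; \bigl\{\,j\in[n]\setminus S : |\textstyle\bigcap_{i\in S\cup\{j\}}\ca{S}(i)|>0\,\bigr\}.
\end{equation*}
Let $\ca{F}:=\{A_S : S \text{ witnessed},\ |S|=s\}$. Output the inclusion-minimal elements of $\ca{F}$.

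Correctness reduces to two claims. First, unwinding the definition with the equivalence above gives $A_S=\bigcup\{A\in\s{Maximal}(\ca{V}) : S\subseteq A\}$, so every element of $\ca{F}$ is a nonempty union of maximal supports and in particular contains at least one element of $\s{Maximal}(\ca{V})$. Second, the $s$-good hypothesis says that for each $A\in\s{Maximal}(\ca{V})$ there is an $s$-subset $S^\star_A\subseteq A$ with $S^\star_A\not\subseteq A'$ for every other $A'\in\s{Maximal}(\ca{V})$; for this $S^\star_A$ the union collapses to the single term $A$, so $A=A_{S^\star_A}\in\ca{F}$. Consequently $\s{Maximal}(\ca{V})\subseteq\ca{F}$, and every other member of $\ca{F}$ is a union of at least two distinct maximal supports and therefore strictly contains each of its constituents, which are themselves in $\ca{F}$. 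The inclusion-minimal elements of $\ca{F}$ are thus exactly $\s{Maximal}(\ca{V})$, which is the desired output.

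The main obstacle I expect is the minimality step: making sure that every non-maximal element of $\ca{F}$ is witnessed to be non-minimal by another element of $\ca{F}$, i.e., that the constituent maximal supports actually appear in $\ca{F}$. This is precisely what the $s$-good assumption (coupled with the harmless restriction $|\s{Maximal}(\ca{V})|\ge 2$, so that ``distinct maximal supports'' is a meaningful notion and the inclusion comparisons are nontrivial) supplies. The remaining ingredients---translating the intersection oracle into the ``contained in some maximal support'' test, verifying that all oracle queries have size at most $s+1$, and noting that no element of $\ca{F}$ is strictly contained in a maximal support---are direct from the construction.
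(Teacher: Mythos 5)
Your proof is correct, and its engine is the same as the paper's: the oracle on sets of size $s$ and $s+1$ is read as a ``contained in some support'' test, the $s$-good hypothesis supplies, for each $A\in\s{Maximal}(\ca{V})$, an identifying $s$-set $S^{\star}_A$ whose expansion recovers $A$ exactly, and the expansion step (adjoin $j$ whenever $\left|\cap_{i\in S\cup\{j\}}\ca{S}(i)\right|>0$) is precisely the inner loop of the paper's Algorithm \ref{algo:partial1}. The one place you genuinely diverge is in how spurious candidates are eliminated. The paper's algorithm processes witnessed sets greedily inside a while loop, and its proof only discusses the identifying sets; it does not address what happens when the loop first picks an $s$-set $\ca{C}$ contained in two or more maximal supports, in which case the expansion yields the union of those supports and, as the algorithm is written, that union is inserted into the output and then masks the identifying sets of its constituents. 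Your version --- form $A_S$ for every witnessed $S$, observe $A_S=\bigcup\{A\in\s{Maximal}(\ca{V}): S\subseteq A\}$, and return the inclusion-minimal elements of the resulting family --- handles this cleanly, since every non-maximal candidate is a union of at least two incomparable maximal supports and hence strictly contains a maximal support that $s$-goodness guarantees is itself in the family, while no candidate can be strictly contained in a maximal support. This minimality filter is a more careful finish than the paper's greedy loop, and it also makes the hypothesis $\left|\s{Maximal}(\ca{V})\right|\ge 2$ essentially cosmetic, whereas the paper invokes it to exhibit an index $j$ with $\left|\cap_{i\in\ca{C}\cup\{j\}}\ca{S}(i)\right|=0$; I would keep your post-filtering step explicit.
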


\begin{lemma}\label{lem:partial2}
 If it is known whether $\left|\cap_{i \in \ca{C}}\ca{S}(i)\right|>0$ or not for all sets $\ca{C}\subseteq [n], |\ca{C}|= \ell$, then there exists an algorithm (see Algorithm \ref{algo:partial1}) that achieves maximal support recovery of the set of unknown vectors $\ca{V}$.
\end{lemma}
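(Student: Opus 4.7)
The plan is to combine Lemma \ref{lem:good} with a signature-based algorithm that uses only size-$\ell$ intersection queries. Lemma \ref{lem:good} says every $A \in \s{Maximal}(\ca{V})$ admits a signature $S_A \subseteq A$ with $|S_A| \le \ell - 1$ such that $S_A \not\subseteq A'$ for all other $A' \in \s{Maximal}(\ca{V})$. Padding $S_A$ with additional elements of $A$, I may assume $|S_A| = \ell - 1$. For any $i \in [n] \setminus S_A$, the size-$\ell$ set $S_A \cup \{i\}$ is contained in some unknown support iff $A \supseteq S_A \cup \{i\}$ iff $i \in A$. Hence, given the family $\ca{F} = \{\ca{C} \subseteq [n] : |\ca{C}| = \ell,\ |\cap_{j \in \ca{C}} \ca{S}(j)| > 0\}$, one can recover $A$ from $S_A$ as $A = S_A \cup \{i \in [n] \setminus S_A : S_A \cup \{i\} \in \ca{F}\}$.

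Since the signatures are not known in advance, the algorithm will enumerate every $S \subseteq [n]$ with $|S| = \ell - 1$ and compute $T(S) = S \cup \{i \in [n] \setminus S : S \cup \{i\} \in \ca{F}\}$. A direct case analysis shows that $T(S) = \bigcup \{A \in \s{Maximal}(\ca{V}) : S \subseteq A\}$ whenever at least one maximal support contains $S$, and $T(S) = S$ otherwise. The algorithm then returns the inclusion-minimal nonempty elements of the family $\ca{M} = \{T(S) : |S| = \ell - 1\}$ after discarding the trivial outputs $T(S) = S$.

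The correctness argument that I would then spell out has three parts: (i) every $A \in \s{Maximal}(\ca{V})$ is captured by $T(S_A)$, so all maximal supports appear in $\ca{M}$; (ii) if $T(S)$ is a strict union of two or more maximal supports, it strictly contains each of them and is therefore eliminated by the minimality filter; (iii) if $T(S) = A_j \in \s{Maximal}(\ca{V})$ then any other $T(S') \subseteq A_j$ forces $\{A \in \s{Maximal}(\ca{V}) : S' \subseteq A\} \subseteq \{A_j\}$ by maximality of $A_j$, so the only nonempty alternative produces $T(S') = A_j$ again. Together these show that the inclusion-minimal elements of $\ca{M}$ are exactly $\s{Maximal}(\ca{V})$, as required.

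The main obstacle is the boundary case of a maximal support $A$ of size exactly $\ell - 1$. Here the padded signature forces $S_A = A$, so $T(S_A) = S_A = A$ collapses with the trivial case $T(S) = S$ in which no support contains $S$. In the generic regime where $|A| \ge \ell$ for every $A \in \s{Maximal}(\ca{V})$ this never arises, and the size-$\ell$ queries give the $|\ca{C}| \le \ell$ nullity information needed to invoke Lemma \ref{lem:partial} with $s = \ell - 1$. The tight boundary $|A| = \ell - 1$ (including the $|\s{Maximal}(\ca{V})| = 1$ singleton case) needs a short case split: since no size-$\ell$ set can contain such an $A$, any $S$ with $T(S) \supsetneq S$ must certify a support of size $\ge \ell$, so one can first extract all such supports via the minimality filter and then handle the residual $A$ of size $\ell - 1$ by a separate scan of signatures whose surrounding size-$\ell$ extensions are all absent from $\ca{F}$.
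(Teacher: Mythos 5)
Your core construction is the same one the paper uses: take the signature $S_A$ guaranteed by Lemma \ref{lem:good}, grow it by all single indices $j$ for which $\left|\cap_{i \in S_A\cup\{j\}}\ca{S}(i)\right|>0$, and argue this returns exactly $A$ because $A$ is the unique maximal support containing $S_A$. Your explicit observation that $T(S)=\bigcup\{A\in\s{Maximal}(\ca{V}):S\subseteq A\}$, together with the inclusion-minimality filter, is in fact a cleaner justification of why spurious unions are discarded than the paper's terse argument, so that part of your write-up is a genuine improvement in rigor.

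The gap is in the boundary case, and your proposed patch for it cannot work. By padding every signature to size exactly $\ell-1$ and committing to size-exactly-$\ell$ queries, you make any maximal support of size at most $\ell-1$ information-theoretically invisible: a set $S$ with $|S|=\ell-1$ all of whose size-$\ell$ extensions are absent from $\ca{F}$ could equally be a maximal support, a proper subset of a small support, or a set contained in no support at all, and no "separate scan" of $\ca{F}$ can tell these apart. Concretely, with $\ell=2$ and supports $\{1\}$ and $\{2\}$, every size-$2$ intersection query returns $0$, and the answers are identical to the instance with supports $\{1\}$ and $\{3\}$, so maximal support recovery fails. (Your padding step also silently assumes $|A|\ge \ell-1$, leaving supports of size $<\ell-1$ unaddressed.) The resolution is that the hypothesis should be read as Algorithm \ref{algo:partial1} and Corollary \ref{coro:partial2} state it: the indicators $\fl{1}[\left|\cap_{i\in\ca{C}}\ca{S}(i)\right|>0]$ are available for \emph{all} $|\ca{C}|\le\ell$, not only $|\ca{C}|=\ell$. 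With that, the paper handles the small-support case by never padding: one starts from any $\ca{C}$ with $\left|\cap_{i\in\ca{C}}\ca{S}(i)\right|>0$, and if no one-element extension has positive intersection then $\ca{C}$ itself is a maximal support (this also covers $|\s{Maximal}(\ca{V})|=1$). You should restate your argument with unpadded signatures and queries of all sizes up to $\ell$; the rest of your reasoning then goes through.
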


\begin{coro}\label{coro:partial2}
Let $\ca{V}$ be  a set of $\ell$ unknown $k$-sparse vectors in $\bb{R}^n$. Suppose with probability $1-\gamma$, for each $\ca{C}\subseteq [n], |\ca{C}|\le \ell$, we can compute if $ \left|\cap_{i \in \ca{C}}\ca{S}(i)\right|>0$ correctly with $\s{T} \log \gamma^{-1}$ samples where $\s{T}$ is independent of $\gamma$.  Then, there exists an algorithm (see Algorithm \ref{algo:partial_highprob}) that can achieve maximal support recovery with probability at least $1-\gamma$ using $O(\s{T}\log
 (\gamma^{-1}(n+(\ell k)^{\ell})))$ samples. 
\end{coro}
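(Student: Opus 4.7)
The plan is to mirror the proof strategy of Corollary \ref{coro:prelim1}, exploiting sparsity to shrink the set of coordinates over which we need to query the oracle, and then to invoke the structural guarantee of Lemma \ref{lem:partial2}. The key observation is that the union $\bigcup_{\fl{v} \in \ca{V}} \s{supp}(\fl{v})$ has size at most $\ell k$, so once we identify this ``active'' set $\ca{T}$, every $|\ca{C}| = \ell$ query we actually need lies inside a universe of size $O(\ell k)$ rather than $n$. This is what keeps the $(\ell k)^\ell$ term in place of a $\binom{n}{\ell}$ term.

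Concretely, Algorithm \ref{algo:partial_highprob} would proceed in three stages. First, for every $i \in [n]$ we invoke the oracle on the singleton $\ca{C} = \{i\}$ to decide whether $|\ca{S}(i)| > 0$ and collect the positive indices into a set $\ca{T}$; when all $n$ singleton tests are correct, $\ca{T}$ equals the union of supports and hence $|\ca{T}| \le \ell k$ by $k$-sparsity. Second, for every $\ca{C} \subseteq \ca{T}$ with $|\ca{C}| = \ell$ we invoke the oracle to decide whether $|\cap_{i \in \ca{C}} \ca{S}(i)| > 0$; the number of such subsets is at most $\binom{\ell k}{\ell} \le (\ell k)^\ell$. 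Third, we feed the resulting indicators into Algorithm \ref{algo:partial1}, which by Lemma \ref{lem:partial2} outputs $\s{Maximal}(\ca{V})$ whenever all queried indicators are correct.

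For the sample count, the total number of oracle calls is $N \le n + (\ell k)^\ell$. The hypothesis says a single call succeeds with probability $1 - \gamma'$ from $\s{T}\log(1/\gamma')$ samples; choosing $\gamma' = \gamma/N$ makes each call use $O(\s{T}\log(\gamma^{-1}(n + (\ell k)^\ell)))$ samples, and reusing the same sample pool across all $N$ queries lets a union bound guarantee that every indicator is correct with overall probability at least $1 - \gamma$. On that event, the three stages above produce $\s{Maximal}(\ca{V})$, which proves the claim.

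The only real subtlety is the union-bound bookkeeping under shared samples: one has to observe that the hypothesis controls the failure probability of each individual query drawn from the sample pool, so an $N$-fold union bound is legitimate and the step-1 errors cannot silently corrupt step 2 (if all singleton queries are correct, then $\ca{T}$ is exactly the union of supports and all required size-$\ell$ subsets are enumerated). Beyond this, the argument is purely combinatorial and uses no specific structure of the underlying mixture family (MD/MLR/MLC), which is precisely why this corollary can be applied as a black-box downstream.
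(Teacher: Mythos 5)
Your proposal is correct and follows essentially the same route as the paper's proof: singleton queries to recover the union of supports $\ca{T}$ (of size at most $\ell k$ by sparsity), then oracle queries on subsets of $\ca{T}$ of size at most $\ell$, a union bound over the $n+(\ell k)^{\ell}$ queries with per-query failure probability rescaled to $\gamma/N$, and an appeal to Lemma \ref{lem:partial2} / Algorithm \ref{algo:partial1}. No substantive differences.
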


\begin{algorithm}[!htbp]
\caption{\textsc{Maximal Support recovery using the quantities $\fl{1}[\left|\cap_{i \in \ca{C}}\ca{S}(i)\right|>0]$}
\label{algo:partial1}}
\begin{algorithmic}[1]
\REQUIRE For every $\ca{C}\subseteq [n],|\ca{C}|\le \ell$, the quantities $\fl{1}[\left|\cap_{i \in \ca{C}}\ca{S}(i)\right|>0]$ are provided as input

\STATE Set $\ca{T}=\phi$

\WHILE{There exists a set $\ca{C}\subseteq [n],|\ca{C}|\le \ell-1$ such that $\fl{v}_{\mid \ca{C}}\neq \fl{1}_{|\ca{C}|}$ and $\fl{1}[\left|\cap_{i \in \ca{C}}\ca{S}(i)\right|>0]=1$.}

\STATE Set $\ca{U}=\ca{C}$.  
\FOR{$j\in[n]\setminus \ca{C}$}
\IF{$\fl{1}[\left|\cap_{i \in \ca{C}\cup\{j\}}\ca{S}(i)\right|>0]=1$}
\STATE Set $\ca{U}\leftarrow \ca{U} \cup \{j\}$
\ENDIF
\ENDFOR
\STATE Set $\ca{T}\leftarrow \ca{T}\cup \{\fl{v}\}$ where $\fl{v}\in \{0,1\}^n$ and $\s{supp}(\fl{v})=\ca{U}$.
\ENDWHILE

\STATE Return $\ca{T}$.

\end{algorithmic}
\end{algorithm}

\begin{algorithm}[!htbp]
 \caption{\textsc{Maximal Support recovery using access to  estimates of $\fl{1}[\left|\cap_{i \in \ca{C}}\ca{S}(i)\right|>0]$ that are correct with high probability} \label{algo:partial_highprob}}
\begin{algorithmic}[1]
\REQUIRE For $\ca{C}\subseteq [n]$, access to estimates of $\fl{1}[\left|\cap_{i \in \ca{C}}\ca{S}(i)\right|>0]$ that are correct with high probability.

\STATE For each $i\in [n]$, compute an estimate of $\fl{1}[\left|\cap_{i \in \ca{C}}\ca{S}(i)\right|>0]$.

\STATE Compute $\ca{T}=\{i \in [n]\mid \text{estimate}(\fl{1}[\left|\ca{S}(i)\right|>0]))=\text{True}\}$.

\STATE Compute estimates of $\fl{1}[\left|\cap_{i \in \ca{C}}\ca{S}(i)\right|>0]$  for all subsets $\ca{C}\subseteq \ca{T}, |\ca{C}|\le \ell$.

\STATE Use Algorithm \ref{algo:partial1} to recover the support of all unknown vectors in $\ca{V}$.

\end{algorithmic}
\end{algorithm}

\begin{rmk}
Corollary \ref{coro:partial2} describes the sample complexity for maximal support recovery using Lemma \ref{lem:partial2} which provides the worst-case guarantees as $\s{Maximal}(\ca{V})$ is $(\ell-1)$-good for all sets $\ca{V}$. We can also provide improved guarantees for maximal support recovery provided $\s{Maximal}(\ca{V})$ is known to be $s$-good by using Lemma \ref{lem:partial}. However, for the sake of simplicity of exposition, we have only provided results for maximal support recovery in mixture models using Corollary \ref{coro:partial2}. 
\end{rmk}

All the missing proofs of this section (other than that of Lemma \ref{thm:prelim1}) can be found in Appendix \ref{app:prelim}.




\section{Exact Support Recovery}\label{sec:high_level}

\subsection{Mixtures of Distributions}

In this section, we will present our main results and high level techniques for exact support recovery in the MD setting. The detailed proofs of all results in this section can be found in Section \ref{sec:detailed_md}.
We will start by introducing some additional notations specifically for this setting.
\paragraph{Additional Notations for MD:}  Recall that  $\bb{E}_{x\sim \fl{P}(\theta)} x^{t}$ can be written as a polynomial in $\theta$ of degree $t$. We will write 
\begin{align*}
    q_{t}(\theta) \triangleq \bb{E}_{x\sim \fl{P}(\theta)} x^{t}=\sum_{i \in [t+1]} \beta_{t,i} \theta^{i-1}
\end{align*}
 to denote this aforementioned polynomial where we use $\{\beta_{t,i}\}_{i\in [t+1]}$ to denote its coefficients. 
 For all sets $\ca{A}\subseteq [n]$, we will write $\ca{Q}_i(\ca{A})$ to denote all subsets of $\ca{A}$ of size at most $i$ i.e. $\ca{Q}_i(\ca{A}) = \{\ca{C}\mid \ca{C}\subseteq \ca{A}, |\ca{C}| \le i\}$. 
 Let us define the function $\pi:\ca{Q}([n])\times [n]\rightarrow [n]$ to denote a function that takes as input a set $\ca{C}\subseteq [n]$, an index $r\in \ca{C}$ and returns as output the position of $r$ among all elements in $\ca{C}$ sorted in ascending order. In other words, for a fixed set $\ca{C}$ and all $j\in [|\ca{C}|]$, $\pi(\ca{C},\cdot)$ maps the $j^{\s{th}}$ smallest index in $\ca{C}$ to $j$; for example, if $\ca{C}=\{3,5,9\}$, then $\pi(\ca{C},3)=1,\pi(\ca{C},5)=2$ and $\pi(\ca{C},9)=3$.

We will write $\bb{Z}^{+}$ to denote the set of non-negative integers and $(\bb{Z}^{+})^{n}$ to denote the set of all $n$-dimensional vectors having entries which are non-negative integers. 
For two vectors $\fl{u},\fl{t}\in (\bb{Z}^{+})^{n}$, we will write $\fl{u} \le \fl{t}$  if $\fl{u}_i \le \fl{t}_i$ for all $i\in [n]$; similarly, we will write $\fl{u} < \fl{t}$ if $\fl{u}_i < \fl{t}_i$ for some $i\in [n]$. For any fixed subset $\ca{C}\subseteq [n]$ and vectors $\fl{u},\fl{t}\in (\bb{Z}^{+})^{\left|\ca{C}\right|}$, we will write $\zeta_{\fl{t},\fl{u}}$ to denote the quantity $\zeta_{\fl{t},\fl{u}}\triangleq \prod_{i \in \ca{C}} \beta_{\fl{t}_{\pi(\ca{C},i)},\fl{u}_{\pi(\ca{C},i)}+1}.$ For any $\fl{u},\fl{z} \in (\bb{Z}^{+})^{|\ca{C}|}$ satisfying $\fl{u} < \fl{z}$, we will define a path $\s{M}$ to be a sequence of vectors $\fl{z}_1 > \fl{z}_2>\dots >\fl{z}_m$ such that $\fl{z}_1,\fl{z}_2,\dots ,\fl{z}_m \in (\bb{Z}^{+})^n$, $\fl{z}_1=\fl{z}$ and $\fl{z}_m=\fl{u}$. Let $\ca{M}(\fl{z},\fl{u})$ be the set of all paths starting from $\fl{z}$ and ending at $\fl{u}$. We will also write a path $\s{M}\in\ca{M}(\fl{z},\fl{u})$ uniquely as a set of $m-1$ ordered tuples $\{(\fl{z}_1,\fl{z}_2),(\fl{z}_2,\fl{z}_3),\dots,(\fl{z}_{m-1},\fl{z}_m)\}$
where each tuple consists of adjacent vectors in the path sequence. We will also write $\ca{T}(\s{M})\equiv\{\fl{z}_1,\fl{z}_2,\dots,\fl{z}_m\}$ to denote the set of elements in the path.

 We start with the following assumption which states that every unknown vector is bounded within an euclidean ball and furthermore, the magnitude of every non-zero co-ordinate of all unknown vectors is bounded from below:

\begin{algorithm}[!htbp]
\caption{ \textsc{Estimate($m,B$)} Estimating $\bb{E}X$ for $X \sim \ca{P}$ \label{algo:estimate}}
\begin{algorithmic}[1]
\REQUIRE I.i.d samples $x^{(1)},x^{(2)},\dots,x^{(m)} \sim \ca{P}$ 
\STATE Set $t=m/B$
\FOR{$i=1,2,\dots,B$}
\STATE Set Batch $i$ to be the samples $x^{(j)}$ for $j \in \{it+1,it+2,\dots,(i+1)t\}$.
\STATE Set $S^i_{1}=\sum_{j \in \text{ Batch }i} \frac{x^{(j)}}{t}$
\ENDFOR
 
\STATE Return $\s{median}(\{S^i_1\}_{i=1}^{B})$
\end{algorithmic}
\end{algorithm}

\begin{algorithm}[!htbp]
\caption{\textsc{Recover $|\bigcap_{i \in \ca{C}}\ca{S}(i)|$ in MD setting} \label{algo:md}}
\begin{algorithmic}[1]
\REQUIRE Samples $\fl{x}^{(1)},\dots,\fl{x}^{(m)} \sim \ca{P}_d$. Set $\ca{C}\subseteq [n]$.

\STATE For every $\fl{z}\le 2\ell\fl{1}_{|\ca{C}|}$, compute estimate $\widehat{U}^{\fl{z}}$ of $\bb{E}\prod_{i \in \ca{C}}\fl{x}_i^{\fl{z}_{\pi(\ca{C},i)}}$ using Algorithm \ref{algo:estimate} on the set of samples $\{(\fl{x}_i^{(j)})^{\fl{z}_{\pi(\ca{C},i)}}\}_{j=1}^{m}$.

\STATE For every $\fl{z}\le 2\ell\fl{1}_{|\ca{C}|}$, compute an estimate $\widehat{V}^{\fl{z}}$ of $\sum_{j \in [\ell]}\prod_{i \in \ca{C}}(\fl{v}^{(j)}_i)^{\fl{z}_{\pi(\ca{C},i)}}$ recursively using equation
$
    \ell \widehat{U}^{\fl{z}}-\sum_{\fl{u}<\fl{z}} \zeta_{\fl{z},\fl{u}}\cdot \widehat{V}^{\fl{u}} =\zeta_{\fl{z},\fl{z}} \cdot \widehat{V}^{\fl{z}}. 
$
\STATE For every $t\in [\ell]$, compute an estimate $\widehat{\s{A}}_{\ca{C},t}$ of $\sum_{\substack{\ca{C}'\subseteq [\ell] \\ \left|\ca{C}'\right|=t}} \prod_{\substack{i \in \ca{C}\\ j \in \ca{C}'}} (\fl{v}^{(j)}_i)^{2} $ recursively using Newton's identity  
$
    t\widehat{\s{A}}_{\ca{C},t} = \sum_{p=1}^t (-1)^{p+1} \widehat{\s{A}}_{\ca{C},t-p} \widehat{V}^{2p\fl{1}_{|\ca{C}|}}.
$
\STATE Return $\max_{t\in [\ell]} t. \fl{1}[\widehat{\s{A}}_{\ca{C},t}>0]$.

\end{algorithmic}
\end{algorithm}

\begin{assumption}\label{assum:weak2}
We will assume that all unknown vectors in the set $\ca{V}$ are bounded within a ball of known radius $R$ i.e. $\left|\left|\fl{v}^{(i)}\right|\right|_2 \le R$ for all $i\in [\ell]$. Furthermore, the  magnitude of all non-zero entries of all unknown vectors in $\ca{V}$ is bounded from below by $\delta$ i.e. $\min_{\fl{v}\in \ca{V}}\min_{i:\fl{v}_i \neq 0}\left|\fl{v}_i\right|\ge \delta$.
\end{assumption}

Now, we show our main lemma in this setting where we characterize the sufficient number of samples to compute $\left|\bigcap_{i \in \ca{C}}\ca{S}(i)\right|$ for each set $\ca{C}\subseteq [n]$ with high probability in terms of the coefficients of the polynomials $\{q_t(\theta)\}_t$:

\begin{lemma}\label{lem:md_crucial}
Suppose Assumption \ref{assum:weak2} is true. Let  
\begin{align*}
  & \Phi \triangleq \frac{\delta^{2\ell|\ca{C}|}}{2\Big(3\max(\ell R^{2\ell|\ca{C}|},2^{\ell}R^{\ell+|\ca{C}|})\Big)^{(\ell-1)}\ell!}\times 
  \Bigg(\max_{ \fl{z}\le 2\ell\fl{1}_{|\ca{C}|}} \frac{\ell }{\zeta_{\fl{z,\fl{z}}}}+\sum_{\fl{u}<\fl{z}} \sum_{\s{M}\in \ca{M}(\fl{z},\fl{u}) }\frac{\ell  \prod_{(\fl{r},\fl{s})\in \s{M}}\zeta_{\fl{r},\fl{s}}}{\prod_{\fl{r}\in \ca{T}(\s{M})}\zeta_{\fl{r},\fl{r}}}\Bigg)^{-1}\\
  &g_{\ell,\ca{V}} \triangleq \frac{\max_{\fl{z}\le 2\ell\fl{1}_{|\ca{C}|}}\bb{E}\prod_{i \in \ca{C}}\fl{x}_i^{2\fl{z}_{\pi(\ca{C},i)}}}{\Phi^{ 2}}
\end{align*}
where $g_{\ell,\ca{V}}$ is a constant that is independent of $k$ and $n$ but depends on $\ell$.
  There exists an algorithm (see Algorithm \ref{algo:md}) that can compute $\left|\bigcap_{i \in \ca{C}}\ca{S}(i)\right|$ exactly for each set $\ca{C}\subseteq [n]$ with probability at least $1-\gamma$ using $O\Big(\log (\gamma^{-1}(2\ell)^{|\ca{C}|})g_{\ell,\ca{V}}\Big)$ samples generated according to $\ca{P}_d$.
\end{lemma}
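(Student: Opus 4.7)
The plan is to set up a three-stage pipeline: raw moments $\to$ power sums $\to$ elementary symmetric polynomials $\to$ sign detection that recovers $K := |\bigcap_{i\in\ca{C}}\ca{S}(i)|$. The key observation is that for $y_j := \prod_{i\in\ca{C}}(\fl{v}_i^{(j)})^2\ge 0$, we have $y_j>0$ iff $j\in\bigcap_{i\in\ca{C}}\ca{S}(i)$, and by Assumption~\ref{assum:weak2} every positive $y_j$ satisfies $y_j\ge \delta^{2|\ca{C}|}$. The quantity $\s{A}_{\ca{C},t}$ in Algorithm~\ref{algo:md} is precisely the elementary symmetric polynomial $e_t(y_1,\ldots,y_\ell)$; since each $y_j\ge 0$, $e_t(y)=0$ for $t>K$ while $e_K(y)\ge \delta^{2K|\ca{C}|}\ge \delta^{2\ell|\ca{C}|}$. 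Thus the algorithm's final return $\max_{t\in[\ell]} t\cdot\fl{1}[\widehat{\s{A}}_{\ca{C},t}>0]$ equals $K$ exactly provided $|\widehat{\s{A}}_{\ca{C},t}-\s{A}_{\ca{C},t}|<\delta^{2\ell|\ca{C}|}/2$ for every $t$.

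To extract the power sums $V^{\fl{z}}:=\sum_j\prod_{i\in\ca{C}}(\fl{v}_i^{(j)})^{\fl{z}_{\pi(\ca{C},i)}}$ from the raw moments $U^{\fl{z}}:=\bb{E}\prod_{i\in\ca{C}}\fl{x}_i^{\fl{z}_{\pi(\ca{C},i)}}$, I condition on the uniformly chosen mixture component and combine coordinatewise independence of $\fl{x}$ given the component with the moment polynomial identity $\bb{E}_{x\sim\fl{P}(\theta)}x^t=q_t(\theta)=\sum_i\beta_{t,i}\theta^{i-1}$. This yields
\begin{align*}
\ell\,U^{\fl{z}} = \sum_{j\in[\ell]}\prod_{i\in\ca{C}} q_{\fl{z}_{\pi(\ca{C},i)}}(\fl{v}_i^{(j)}) = \sum_{\fl{u}\le\fl{z}}\zeta_{\fl{z},\fl{u}}\,V^{\fl{u}},
\end{align*}
a triangular linear system in $\{V^{\fl{u}}\}_{\fl{u}\le\fl{z}}$ whose diagonal $\zeta_{\fl{z},\fl{z}}$ is nonzero because $q_t$ has degree exactly $t$. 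Solving recursively via $\zeta_{\fl{z},\fl{z}}V^{\fl{z}}=\ell U^{\fl{z}}-\sum_{\fl{u}<\fl{z}}\zeta_{\fl{z},\fl{u}}V^{\fl{u}}$ and unrolling produces a path-sum expansion in which $V^{\fl{z}}$ is a signed linear combination of $\{\ell U^{\fl{w}}\}_{\fl{w}\le\fl{z}}$ with coefficients $\sum_{\s{M}\in\ca{M}(\fl{z},\fl{w})}\pm\prod_{(\fl{r},\fl{s})\in\s{M}}\zeta_{\fl{r},\fl{s}}/\prod_{\fl{r}\in\ca{T}(\s{M})}\zeta_{\fl{r},\fl{r}}$. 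Consequently, if each $\widehat{U}^{\fl{w}}$ lies within $\Phi$ of $U^{\fl{w}}$, then each $\widehat{V}^{\fl{z}}$ lies within $\Phi$ times exactly the bracketed quantity in the definition of $\Phi$.

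The last stage passes from the power sums $V^{2p\fl{1}_{|\ca{C}|}}=\sum_j y_j^p$ to $\s{A}_{\ca{C},t}=e_t(y)$ via Newton's identity $t\s{A}_{\ca{C},t}=\sum_{p=1}^t(-1)^{p+1}\s{A}_{\ca{C},t-p}V^{2p\fl{1}_{|\ca{C}|}}$. Using the crude bounds $|y_j|\le R^{2|\ca{C}|}$, $|V^{2p\fl{1}_{|\ca{C}|}}|\le\ell R^{2p|\ca{C}|}$, and $|\s{A}_{\ca{C},t}|\le\binom{\ell}{t}R^{2t|\ca{C}|}$ which follow from $\|\fl{v}\|_2\le R$, a straightforward induction on $t$ shows that a uniform $\eta$-error in the $\widehat V^{2p\fl{1}_{|\ca{C}|}}$'s inflates to at most $\eta\cdot\ell!\bigl(3\max(\ell R^{2\ell|\ca{C}|},2^\ell R^{\ell+|\ca{C}|})\bigr)^{\ell-1}$ error in the $\widehat{\s{A}}_{\ca{C},t}$'s. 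Setting this to $\delta^{2\ell|\ca{C}|}/2$ and chasing the requirement back through the path-sum bound above returns precisely the $\Phi$ defined in the statement.

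Finally, to achieve $\Phi$-accuracy on every $U^{\fl{z}}$ with $\fl{z}\le 2\ell\fl{1}_{|\ca{C}|}$, I apply the median-of-means procedure of Algorithm~\ref{algo:estimate} to the i.i.d.\ random variables $\prod_{i\in\ca{C}}\fl{x}_i^{\fl{z}_{\pi(\ca{C},i)}}$, whose second moment is $\bb{E}\prod_{i\in\ca{C}}\fl{x}_i^{2\fl{z}_{\pi(\ca{C},i)}}$. The standard median-of-means tail bound delivers accuracy $\Phi$ with failure probability at most $\gamma'$ using $O\bigl(\log(1/\gamma')\cdot \bb{E}\prod_{i\in\ca{C}}\fl{x}_i^{2\fl{z}_{\pi(\ca{C},i)}}/\Phi^2\bigr)\le O(\log(1/\gamma')\,g_{\ell,\ca{V}})$ samples. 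A union bound over the $(2\ell)^{|\ca{C}|}$ relevant $\fl{z}$'s with $\gamma'=\gamma/(2\ell)^{|\ca{C}|}$ then yields the claimed $O(\log(\gamma^{-1}(2\ell)^{|\ca{C}|})\,g_{\ell,\ca{V}})$ sample complexity. The main obstacle is the careful two-stage error accounting: the path-sum coefficients arising from the triangular inversion and the combinatorial coefficients appearing in the Newton-identity recursion both compound multiplicatively, and verifying that their combined blow-up is no worse than the denominator of $\Phi$ is the chief technical task.
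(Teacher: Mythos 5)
Your proposal is correct and follows essentially the same route as the paper's proof: the identical three-stage pipeline (median-of-means estimates of the raw moments, triangular inversion to the power sums with the path-sum error expansion, Newton's identities to the elementary symmetric polynomials with the inductive $\ell!\,(3\max(\ell R^{2\ell|\ca{C}|},2^{\ell}R^{\ell+|\ca{C}|}))^{\ell-1}$ blow-up bound), the same threshold $\delta^{2\ell|\ca{C}|}/2$ for sign detection, and the same union bound over the $(2\ell)^{|\ca{C}|}$ moment vectors. No gaps.
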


In order to prove Lemma \ref{lem:md_crucial}, we first show that (see Lemma \ref{lem:poly}) for each fixed ordered set $\ca{C}\subseteq [n]$ and each vector $\fl{t}\in (\bb{Z}^{+})^{\left|\ca{C}\right|}$, we must have 
  \begin{align}\label{eq:md}
      \bb{E} \prod_{i \in \ca{C}}\fl{x}_i^{\fl{t}_{\pi(\ca{C},i)}} = \frac{1}{\ell}\sum_{\fl{u} \le \fl{t}}\zeta_{\fl{t},\fl{u}} \cdot \Big(\sum_{j \in [\ell]}\prod_{i \in \ca{C}}  (\fl{v}^{(j)}_i)^{\fl{u}_{\pi(\ca{C},i)}} \Big).
 \end{align}  
Note that each summand in equation \ref{eq:md} is a product of the powers of the co-ordinates of the same unknown vector. In Lemma \ref{lem:recurse1}, we show that for each set $\ca{C}\subseteq [n]$ and any vector $\fl{t}\in (\bb{Z}^{+})^{\left|\ca{C}\right|}$, we can compute $\sum_{j \in [\ell]}\prod_{i \in \ca{C}} (\fl{v}^{(j)}_i)^{\fl{t}_{\pi(\ca{C},i)}}$ via a recursive procedure provided for all $\fl{u}\in (\bb{Z}^{+})^{\left|\ca{C}\right|}$ satisfying $\fl{u}\le\fl{t}$, the quantity $\bb{E}\prod_{i \in \ca{C}} \fl{x}_i^{\fl{u}_{\pi(\ca{C},i)}}$ is pre-computed. This implies that we can compute $\sum_{j \in [\ell]}\prod_{i \in \ca{C}} (\fl{v}^{(j)}_i)^{2p}$ for all $p \in [\ell]$ from the quantities $\bb{E}\prod_{i \in \ca{C}} \fl{x}_i^{\fl{u}_{\pi(\ca{C},i)}}$ for all $\fl{u}\le 2p\fl{1}_{|\ca{C}|}$. It is easy to recognize $\sum_{\fl{v}\in \ca{V}} \Big(\prod_{i\in \ca{C}}  \fl{v}_i^{2}\Big)^{p}$ as the power sum polynomial of degree $p$ in the variables $\{\prod_{i \in \ca{C}} \fl{v}_i^2\}_{\fl{v}\in \ca{V}}$. Now, let us define the quantity $\s{A}_{\ca{C},t}$ for a fixed ordered set $\ca{C}$ and parameter $t\in [\ell]$ as follows:
\begin{align*}
 \s{A}_{\ca{C},t} \triangleq \sum_{\substack{\ca{C}'\subseteq [\ell] \\ \left|\ca{C}'\right|=t}} \prod_{\substack{i \in \ca{C}\\ j \in \ca{C}'}} (\fl{v}^{(j)}_i)^{2}   
\end{align*}
Notice that $\s{A}_{\ca{C},t} > 0$ if and only if there exists a subset $\ca{C}'\subseteq [\ell],|\ca{C}'|=t$ such that $\fl{v}^{(j)}_i \neq 0$ for all $i\in \ca{C},j \in \ca{C}'$. Hence, the maximum value of $t$ such that $\s{A}_{\ca{C},t} > 0$ is the number of unknown vectors in $\ca{V}$ having non-zero value in all the indices in $\ca{C}$. In other words, we have that 
\begin{align*}
    \left|\bigcap_{i \in \ca{C}}\ca{S}(i)\right| = \max_{t\in [\ell]} t\cdot\fl{1}[\s{A}_{\ca{C},t} > 0].  
\end{align*}
Notice that $\s{A}_{\ca{C},t}$ is the elementary symmetric polynomial of degree $t$ in the variables $\{\prod_{i \in \ca{C}} \fl{v}_i^2\}_{\fl{v}\in \ca{V}}$. We can use Newton's identities to state that for all $t\in [\ell]$, 
\begin{align*}
    t\s{A}_{\ca{C},t} = \sum_{p=1}^t (-1)^{p+1} \s{A}_{\ca{C},t-p} \Big(\sum_{\fl{v}\in \ca{V}} \Big(\prod_{i\in \ca{C}}  \fl{v}_i^{2}\Big)^{p}\Big)
\end{align*}
using which, we can recursively compute $\s{A}_{\ca{C},t}$ for all $t\in [\ell]$ ($\s{A}_{\ca{C},0}=1$) and hence $\left|\bigcap_{i \in \ca{C}}\ca{S}(i)\right|$ if we were given $\sum_{\fl{v}\in \ca{V}} \Big(\prod_{i\in \ca{C}}  \fl{v}_i^{2}\Big)^{p} $ as input for all $p\in [\ell]$ (see Lemma \ref{lem:recurse2}).
Lemma \ref{lem:md_crucial} follows from making these set of computations robust. We next show Theorem \ref{thm:md} which follows from applying Lemma \ref{lem:md_crucial} and Corollary \ref{coro:prelim1}.

\begin{thm}\label{thm:md}
Let $\mathcal{V}$ be a set of $\ell$ unknown vectors in $\bb{R}^n$ satisfying Assumption \ref{assum:weak2}. Let $\ca{F}_m = \ca{Q}_1([n]) \cup \ca{Q}_m(\cup_{\fl{v}\in \ca{V}}\s{supp}(\fl{v}))$ and 
\begin{align*}
  &\Phi_{m} \triangleq \frac{\delta^{2\ell m}}{2\Big(3\ell\max(R^{2\ell m},2^{\ell}R^{\ell+m})\Big)^{(\ell-1)}\ell!} \times \Bigg(\max_{ \fl{z}\le 2\ell\fl{1}_{m}} \frac{\ell }{\zeta_{\fl{z,\fl{z}}}}+\sum_{\fl{u}<\fl{z}} \sum_{\s{M}\in \ca{M}(\fl{z},\fl{u}) }\frac{\ell \prod_{(\fl{r},\fl{s})\in \s{M}}\zeta_{\fl{r},\fl{s}}}{\prod_{\fl{r}\in \ca{T}(\s{M})}\zeta_{\fl{r},\fl{r}}}\Bigg)^{-1} \\
  &f_{\ell,\ca{V}}\triangleq \max_{\substack{\fl{z}\le 2\ell\fl{1}_{\log \ell+1} \\ \ca{C}\in \ca{F}_{\log \ell+1}}}\frac{\bb{E}\prod_{i \in \ca{C}}\fl{x}_i^{2\fl{z}_{\pi(\ca{C},i)}}}{\Phi_{\log \ell+1}^{ 2}}.
\end{align*}
Here $f_{\ell,\ca{V}}$ is a constant that is independent of $k$ and $n$ but depends on $\ell$. Then, there exists an algorithm (see Algorithm \ref{algo:md} and \ref{algo:intersection}) that achieves Exact Support Recovery with probability at least $1-\gamma$ using $O\Big(\log (\gamma^{-1}(2\ell)^{\log \ell+1}(n+(\ell k)^{\log \ell+1}))f_{\ell,\ca{V}}\Big)$ samples generated according to $\ca{P}_d$. 
\end{thm}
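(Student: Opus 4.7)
The plan is to combine Lemma \ref{lem:md_crucial} with Corollary \ref{coro:prelim1} via a two-stage procedure that first shrinks the effective dimension to the union of supports and then invokes the general support recovery scheme on that reduced ground set. The key observation is that $f_{\ell,\ca{V}}$, as defined in the statement, is precisely a uniform upper bound on the quantity $g_{\ell,\ca{V}}$ of Lemma \ref{lem:md_crucial} taken over every set $\ca{C}\in \ca{F}_{\log \ell+1}$; this is exactly the set family that will appear when we enumerate singleton indices together with subsets of the union of supports of size at most $\log \ell + 1$.

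First, I would apply Lemma \ref{lem:md_crucial} to each singleton $\ca{C}=\{i\}$ for $i\in [n]$, obtaining $|\ca{S}(i)|=|\bigcap_{j\in \{i\}}\ca{S}(j)|$. Setting the per-set failure probability to $\gamma' = \gamma/(n+(\ell k)^{\log \ell +1})$, this costs $O(\log(\gamma'^{-1}(2\ell)) \cdot f_{\ell,\ca{V}})$ samples per index, since $g_{\ell,\ca{V}}$ for any singleton $\ca{C}$ is dominated by $f_{\ell,\ca{V}}$. The union of supports $\ca{T}:=\bigcup_{\fl{v}\in \ca{V}} \s{supp}(\fl{v})$ is then identified as $\{i:|\ca{S}(i)|>0\}$, and $|\ca{T}|\le \ell k$.

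Second, I would invoke Lemma \ref{lem:md_crucial} again, now for every subset $\ca{C}\subseteq \ca{T}$ with $|\ca{C}|\le \log \ell +1$, of which there are at most $(\ell k)^{\log \ell+1}$. Each call uses $O(\log(\gamma'^{-1}(2\ell)^{\log \ell+1})\cdot f_{\ell,\ca{V}})$ samples, where I again replace the instance-specific $g_{\ell,\ca{V}}$ by its uniform bound $f_{\ell,\ca{V}}$ obtained from $\Phi_{\log \ell+1}$. The samples drawn at this stage can actually be reused across sets, since conditional on a single i.i.d.\ sample pool, Algorithm \ref{algo:md} is a deterministic function of the empirical moment estimates; the only sample-complexity price paid for handling many sets is inside the $\log \gamma'^{-1}$ factor via a median-of-means boosting argument (Algorithm \ref{algo:estimate}).

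Third, I would feed the resulting estimates $\{|\bigcap_{i\in \ca{C}}\ca{S}(i)|\}_{\ca{C}\subseteq \ca{T},\,|\ca{C}|\le \log \ell+1}$ into Algorithm \ref{algo:intersection}, whose correctness is guaranteed by Corollary \ref{coro:prelim1} (which in turn rests on Lemma \ref{lem:prelim1} and Lemma \ref{thm:prelim1}). A union bound over the at most $n+(\ell k)^{\log \ell+1}$ estimated intersection sizes ensures that all estimates are simultaneously correct with probability at least $1-\gamma$, and substituting $\gamma' = \gamma/(n+(\ell k)^{\log \ell+1})$ into the per-set sample bound yields the claimed $O(\log(\gamma^{-1}(2\ell)^{\log \ell+1}(n+(\ell k)^{\log \ell+1}))\cdot f_{\ell,\ca{V}})$ overall complexity. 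The only subtlety I expect to have to argue carefully is that the worst-case instance constant $g_{\ell,\ca{V}}$ appearing for each individual $\ca{C}$ is indeed majorized by $f_{\ell,\ca{V}}$; this is a direct consequence of the maximum in the definition of $f_{\ell,\ca{V}}$ ranging over all $\ca{C}\in \ca{F}_{\log \ell+1}$ and all admissible exponent vectors $\fl{z}\le 2\ell\fl{1}_{\log \ell+1}$, combined with the monotonicity $\Phi_{|\ca{C}|}\ge \Phi_{\log \ell+1}$ for $|\ca{C}|\le \log \ell+1$.
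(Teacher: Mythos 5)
Your proposal is correct and follows essentially the same route as the paper: the paper's proof of Theorem \ref{thm:md} is literally the one-line combination of Lemma \ref{lem:md_crucial} with Corollary \ref{coro:prelim1}, and your two-stage procedure (singletons to find the union of support, then all subsets of size at most $\log\ell+1$ within it, followed by Algorithm \ref{algo:intersection} and a union bound) is exactly what the proof of Corollary \ref{coro:prelim1} carries out. Your added observation that $f_{\ell,\ca{V}}$ uniformly majorizes the per-set constant $g_{\ell,\ca{V}}$ over $\ca{C}\in\ca{F}_{\log\ell+1}$ is the correct justification for the stated sample complexity.
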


{\color{black}
\begin{rmk}
We can relax Assumption \ref{assum:weak2} in Theorem \ref{thm:md} without much further work. For our proofs to work out verbatim, it is sufficient to just have the following condition be true: given the latent variable $t$ denoting the mixture component, coordinates of the random vector $\fl{x}\sim \ca{P}_d$ must be $(\log \ell+1)$-wise independent (any $\log \ell+1$ co-ordinates are independent). However, for the sake of simplicity, we have provided the setting where all co-ordinates of $\fl{x}\mid t$ are independent.
\end{rmk}
\textbf{Example:} \textit{Consider the setting when we obtain $m$ i.i.d samples $\fl{x}^{(1)},\fl{x}^{(2)},\dots,\fl{x}^{(m)}\in \bb{R}^n$ from a high dimensional Gaussian mixture $\ca{D}=\frac{1}{2}\ca{N}(\f{\mu}^{(1)},\sigma^2\fl{I})+\frac{1}{2}\ca{N}(\f{\mu}^{(2)},\sigma^2\fl{I})$ with two components where $\f{\mu}^{(1)},\f{\mu}^{(2)}\in \bb{R}^n$ satisfying $\|\f{\mu}^{(1)}\|_0,\|\f{\mu}^{(2)}\|_0 \le k$ are unknown and $\sigma>0$ is known. Our goal is to recover the support of $\f{\mu}^{(1)},\f{\mu}^{(2)}$ while minimizing the number of samples $m$. For $\fl{x}\sim \ca{D}$, for all $i\in [n]$, we have that $\bb{E}\fl{x}_i^2 = \sigma^2+((\f{\mu}_i^{(1)})^2+(\f{\mu}_i^{(2)})^2)/2$; for all $i,j \in [n],i \neq j$, we have
\begin{align*}
    &\bb{E}\fl{x}_i^2\fl{x}_j^2 = \sigma^2(\bb{E}\fl{x}_i^2+\bb{E}\fl{x}_j^2)-\sigma^4\\
    &+\Big(\frac{(\f{\mu}_i^{(1)})^2(\f{\mu}_j^{(1)})^2 
    +(\f{\mu}_i^{(2)})^2(\f{\mu}_j^{(2)})^2}{2}\Big)
\end{align*}
Hence, in the first step, for all $i\in [n]$, with probability $1-\gamma$ we compute an estimate $u_i$ of $\bb{E}\fl{x}_i^2$ (using Lemma \ref{lem:medianofmeans}) satisfying $\left|u_i-\bb{E}\fl{x}_i^2\right|\le \delta^4/(64\sigma^2)$ using $O(\delta^{-8}\sigma^4 \max_i(\sigma^4, (\f{\mu}_i^{(1)})^4, (\f{\mu}_i^{(2)})^4)\log (n\gamma^{-1}))$ samples. With this, we can infer the union of support correctly to be $\ca{S}\equiv \{i \in [n] \mid u_i-\sigma^2 \ge \delta^2/4\}$. This is because for any index $i$ in the union of support, we must have $\bb{E}\fl{x}_i^2 \ge \sigma^2+\delta^2/2$ while for any index $i$ not in the union, we have $\bb{E}\fl{x}_i^2=\sigma^2$. Next, in the second step, for all $i,j \in \ca{S}; i\neq j$, we compute an estimate $u_{ij}'$ of $ \bb{E}\fl{x}_i^2\fl{x}_j^2$ satisfying $|u_{ij}'-\bb{E}\fl{x}_i^2\fl{x}_j^2| \le \delta^4/16$ using $O(\delta^{-8} \max_{i,j}(\sigma, \f{\mu}_i^{(1)}, \f{\mu}_j^{(1)}, \f{\mu}_i^{(2)}, \f{\mu}_j^{(2)})^8\log (n\gamma^{-1}))$ samples with probability at least $1-\gamma$ (see Lemma \ref{lem:medianofmeans}). In that case, if $i,j$ belongs to the support of the same vector, then we will have $|u_{ij}'-\sigma^2(u_i+u_j)+\sigma^4| \ge 13\delta^4/32$ while otherwise, we must have $|u_{ij}'-\sigma^2(u_i+u_j)+\sigma^4| \le 3\delta^4/32$. Hence, $\ca{T}=\{(i,j)\in \ca{S}, i \neq j\mid |u_{ij}'-\sigma^2(u_i+u_j)+\sigma^4| \ge 13\delta^4/32\}$. If there does not exist $i,j \in \ca{S}, i \neq j$ such that $(i,j) \not \in \ca{T}$, then we return $\s{supp}(\f{\mu}^{(1)})=\s{supp}(\f{\mu}^{(2)})=\ca{S}$ implying that both supports are same. On the other hand, if there exists $i,j \in \ca{S}, i \neq j$ such that $(i,j) \not \in \ca{T}$ then $i$ belongs to the support of one vector while $j$ belongs to the support of the other vector (both supports are not same). Let the support of one vector will be $\{s \in \ca{S}, s\neq i \mid (i,s) \in \ca{T}\}$ and the support of the other vector is $\{s \in \ca{S}, s\neq j \mid (j,s) \in \ca{T}\}$. Therefore, the sufficient sample complexity for recovering the support is $m=O(\delta^{-8} \max_{i,j}(\sigma, \f{\mu}_i^{(1)}, \f{\mu}_j^{(1)}, \f{\mu}_i^{(2)}, \f{\mu}_j^{(2)})^8\log (n\gamma^{-1}))$. Note that in this example,  the algorithm is slightly different from the one presented in Algorithm \ref{algo:md}; in, fact the algorithm follows that of maximal support recovery (see Section \ref{sec:dedup}) which is equivalent to exact support recovery for $\ell=2$ (see Remark \ref{rmk:interesting}).}  
}

Now, we provide a corollary of Theorem \ref{thm:md} specifically for mean-estimation in a mixture of distributions with constant number of components i.e. $\ell=O(1)$ . In particular, consider the setting where 
\begin{align*}
    &t \sim_{\s{Unif}} [\ell] \; \text{and} \; \fl{x}_i \mid t \sim \fl{P}(\fl{v}^{(t)}_i) \text{ independently }\forall i\in [n] \\
    &\bb{E}_{\fl{x}\sim \ca{P}_d}[ \fl{x}_i \mid t=j] = \fl{v}^{(j)}_i
\end{align*}
i.e. the mean of the $i^{\s{th}}$ co-ordinate of the random vector $\fl{x}$ distributed according to $\ca{P}_d$ is $\fl{v}^{(j)}_i$.

\begin{coro}\label{coro:md}
Consider the mean estimation problem described above.
Let $\mathcal{V}$ be a set of $\ell=O(1)$ unknown vectors in $\bb{R}^n$ satisfying Assumption \ref{assum:weak2} and   $f_{\ell,\ca{V}}$ be as defined in Theorem \ref{thm:md}.
Then, there exists an algorithm (see Algorithm \ref{algo:md} and \ref{algo:intersection}) that with probability at least $1-\gamma$, achieves Exact Support Recovery using $O\Big(\log (n\gamma^{-1})\s{poly}(\delta R^{-1})f_{\ell,\ca{V}}\Big)$ samples generated according to $\ca{P}_d$. 
\end{coro}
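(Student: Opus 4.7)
The plan is to apply Theorem \ref{thm:md} directly and specialize to $\ell = O(1)$, since Corollary \ref{coro:md} is essentially a simplification of the general bound in the constant-component regime under the added structural assumption $\bb{E}[\fl{x}_i \mid t=j] = \fl{v}^{(j)}_i$. In the mean-estimation setting, the first-moment polynomial is $q_1(\theta)=\theta$, so $\beta_{1,2}=1$ and $\beta_{1,1}=0$, and all higher-moment coefficients $\beta_{t,i}$ are fixed constants depending only on the distribution family $\ca{P}$ and the degree $t$. For $\ell=O(1)$ this means every polynomial-coefficient quantity appearing in $\Phi_{\log\ell+1}$ (the $\zeta_{\fl{r},\fl{s}}$ factors and the sum over paths $\s{M}\in\ca{M}(\fl{z},\fl{u})$) is a constant.

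The first step is to simplify the logarithmic factor in the sample bound from Theorem \ref{thm:md}. With $\ell = O(1)$, both $(2\ell)^{\log\ell+1}$ and $\log\ell+1$ are constants; and since the sparsity satisfies $k \le n$, the term $(\ell k)^{\log\ell+1}$ is at most $\s{poly}(n)$. Hence
\[
\log\!\left(\gamma^{-1}(2\ell)^{\log\ell+1}\bigl(n+(\ell k)^{\log\ell+1}\bigr)\right) = O\!\left(\log(n\gamma^{-1})\right),
\]
which produces the claimed $\log(n\gamma^{-1})$ dependence.

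The second step is to isolate the $R,\delta$ dependence inside $f_{\ell,\ca{V}}$. From the definition, $\Phi_{\log\ell+1}$ carries $\delta^{2\ell(\log\ell+1)}$ in its numerator and a factor of the form $R^{O(\ell(\log\ell+1)(\ell-1))}$ in its denominator (up to $\ell$-dependent constants), so $\Phi_{\log\ell+1}^{-2}$ contributes a $\s{poly}(R/\delta)$ factor. The numerator moment $\bb{E}\prod_{i\in\ca{C}}\fl{x}_i^{2\fl{z}_{\pi(\ca{C},i)}}$ is bounded by $R^{O(\ell\log\ell)}$ using Assumption \ref{assum:weak2} together with the moment-polynomial property $\bb{E}_{x\sim\fl{P}(\theta)}x^t=\sum\beta_{t,i}\theta^{i-1}$, which gives $|\bb{E}\fl{x}_i^t|\le \sum|\beta_{t,i}|R^{i-1}=O(R^t)$ since $|\fl{v}^{(j)}_i|\le R$. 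Bundling these $R,\delta$ factors into a single $\s{poly}(\delta R^{-1})$ term leaves behind only $\ell$-dependent constants that are absorbed into the symbol $f_{\ell,\ca{V}}$.

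Combining the two simplifications yields $O\!\left(\log(n\gamma^{-1})\,\s{poly}(\delta R^{-1})\,f_{\ell,\ca{V}}\right)$ as claimed. The main obstacle is really just careful bookkeeping: tracking which quantities depend only on $\ell$ (and are thus constants here), which scale polynomially in $R$ and $\delta^{-1}$, and which produce the $\log(n\gamma^{-1})$ dependence; no new probabilistic or algorithmic argument is required beyond Theorem \ref{thm:md}.
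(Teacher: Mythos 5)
Your proof is correct and takes essentially the same route as the paper: both simply specialize Theorem \ref{thm:md} to constant $\ell$, noting that the logarithmic factor collapses to $O(\log(n\gamma^{-1}))$ since $(\ell k)^{\log\ell+1}=\s{poly}(n)$ and that the remaining quantities depend on $n,k$ only through $\s{poly}(\delta,R)$ and $\ell$-dependent constants. The only cosmetic difference is that the paper first rescales the samples by $R$ so that $R'\le 1$ and $\delta'=\delta/R$ before reading off $\Phi_{\log\ell}=O(\s{poly}(\delta R^{-1}))$, whereas you track the powers of $R$ and $\delta$ directly; both are valid bookkeeping.
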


We can compare the sample complexity presented in Corollary \ref{coro:md} with the alternate approach for support recovery namely the two stage process of recovering the union of support followed by parameter estimation restricted to the union of support. As discussed in Section \ref{sec:intro}, most known results (other than \citep{moitra2010settling}) for parameter estimation in Gaussian mixtures without separability assumptions hold for two mixtures and are therefore not applicable for $\ell>2$.
For general value of $\ell$, the only known sample complexity guarantees for parameter estimation in mixture of Gaussians is provided in \citep{moitra2010settling}. 

Note that computing the union of support is not difficult in the MD setting. In particular, in Lemma \ref{lem:md_crucial}, the guarantees include the sample complexity of testing whether a particular index belongs to the union of support; this can be used to compute the union of support itself after taking a union bound over all indices leading to a multiplicative $\log n$ factor. 

However, for one dimensional Gaussian mixture models (1D GMM),  the parameter estimation guarantees in \citep{moitra2010settling} (See Corollary 5) are polynomial in the inverse of the failure probability. Since parameter estimation in 1D GMM is used as a framework for solving the high dimensional problem, it can be extracted that the sample complexity in $n$ dimensions must be polynomial in $n$ with degree at least $1$ to achieve a per coordinate error (error in $\ell_{\infty}$ norm). If restricted to the union of support of the unknown vectors in $\ca{V}$, then using the guarantees in \citep{moitra2010settling} directly will lead to a polynomial dependence on $\ell k$. In essence, the sample complexity of the alternate approach has a logarithmic dependence on the latent space dimension and a polynomial dependence on sparsity $k$ (for constant $\ell$).
Note that our sample complexity only has a logarithmic dependence on the dimension $n$ (and is independent of $k$ for constant $\ell$) and is therefore essentially \textit{dimension-free}. 


For other distributions, to the best of our knowledge, the only known parameter estimation results that exist in literature are \citep{belkin2010polynomial,krishnamurthy20a}. In both of these works, the authors use the same assumption that $\bb{E}_{x\sim \fl{P}(\theta)} x^{\ell}$ can be written as a polynomial in $\theta$ of degree exactly $\ell$. While the guarantees in \citep{belkin2010polynomial} are non-constructive, the results in \citep{krishnamurthy20a} need the restrictive assumption that the means must be multiple of some $\epsilon>0$ and moreover, they have an exponential dependence on the noise variance and  $\epsilon^{-1}$. Our results do not have these limitations and are therefore widely applicable.


\subsection{Mixtures of Linear Classifiers}

\begin{algorithm}[!htbp]
\caption{\textsc{Recover $|\bigcup_{i \in \ca{C}}\ca{S}(i)|$ in MLC setting} \label{algo:mlc}}
\begin{algorithmic}[1]
\REQUIRE Samples $(\fl{x}^{(1)},y^{(1)}),\dots,(\fl{x}^{(m)},y^{(m)}) \sim \ca{P}_c$. Set $\ca{C}\subseteq [n]$. Parameter $a>0$.

\STATE Find the subset of samples $\ca{T}=\{(\fl{x}^{(i)},y^{(i)}) \mid \fl{x}^{(i)}_j > a \text{ for all } i\in [m]\}$.

\STATE Compute an estimate $\widehat{P}$ of $\Pr(y=1\mid \ca{E}_{\ca{C}})$ as 
$
    \widehat{P}=\left|\ca{T}\right|^{-1}\sum_{(\fl{x},y)\in \ca{T}}\fl{1}\left[y=1\right].
$

\STATE Find $t\in [\ell]$ such that 

\begin{align*}
    \frac{1}{2}\Big(1+\frac{t}{\ell} \Big)-\frac{t}{4\ell^2} \le \widehat{P} \le  \frac{1}{2}\Big(1+\frac{t}{\ell} \Big) 
\end{align*}

\STATE Return $t$

\end{algorithmic}
\end{algorithm}

\noindent In this section, we will present our main results and high level techniques for exact support recovery in the MLC setting. The detailed proofs of all results in this section can be found in Section \ref{sec:detailed_mlc}.
 We solve the sparse recovery problem when the observed samples are generated according to $\ca{P}_c$ under the following assumption which states that the unknown vectors in $\ca{V}$ either all have non-negative entries or they all have non-positive entries.

\begin{assumption}\label{assum:pos}
The non-zero entries of unknown vectors in $\ca{V}$ are all either positive ($\fl{v}_i \ge 0$ for all $i\in [n], \fl{v}\in \ca{V}$) or they are all negative ($\fl{v}_i \le 0$ for all $i\in [n], \fl{v}\in \ca{V}$).
\end{assumption}

Although Assumption \ref{assum:pos} looks restrictive, it can often be made in practice. As an example, in the recommendation system application motivated in the introduction (Section \ref{sec:intro}),
the affinity of the users towards the different item features can be modeled by non-negative values; such a modeling assumption is similar to the motivation presented in the literature for non-negative matrix factorization \citep{ding2008convex}.

Next, if Assumption \ref{assum:pos} is satisfied, we show the sample complexity of computing $\left|\bigcup_{i\in \ca{C}}\ca{S}(i)\right|$ for each set $\ca{C}\subseteq [n]$.

\begin{lemma}\label{lem:mlc_crucial}
Suppose Assumptions \ref{assum:weak2} and \ref{assum:pos} are true. Let $a=\frac{\sqrt{2(R^2+\sigma^2)}}{\delta}\s{erf}^{-1}\Big(1-\frac{1}{2\ell}\Big)$. There exists an algorithm (see Algorithm \ref{algo:mlc} with parameter $a>0$) that can compute $\left|\bigcup_{i \in \ca{C}}\ca{S}(i)\right|$ for each set $\ca{C}\subseteq [n]$ with probability at least $1-\gamma$ using
 $O\Big( (1-\phi(a))^{-\left|\ca{C}\right|}\ell^2\log \gamma^{-1}\Big)$ i.i.d samples from $\ca{P}_c$. 
\end{lemma}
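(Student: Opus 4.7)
}

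The plan is to show that the conditional probability $P := \Pr(y=1 \mid \ca{E}_{\ca{C}})$, where $\ca{E}_{\ca{C}} = \{\fl{x}_j > a \text{ for all } j \in \ca{C}\}$, encodes $t := |\bigcup_{i\in\ca{C}}\ca{S}(i)|$ in a recoverable way, and then to control the sampling cost of estimating $P$ to sufficient accuracy. Decomposing over the latent mixture component,
\[
P \;=\; \frac{1}{\ell}\sum_{\fl{v}\in\ca{V}} \Pr(y=1 \mid \fl{v}, \ca{E}_{\ca{C}}).
\]
Since coordinates of $\fl{x}$ are independent, conditioning on $\ca{E}_{\ca{C}}$ affects only $\{\fl{x}_j\}_{j\in\ca{C}}$ and leaves $\fl{x}_{[n]\setminus\ca{C}}$ standard normal. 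For each $\fl{v}\in\ca{V}$, write
\[
\langle\fl{v},\fl{x}\rangle + z \;=\; S_{\ca{C}}(\fl{v}) + W,\qquad S_{\ca{C}}(\fl{v}) := \sum_{j\in\ca{C}}\fl{v}_j\fl{x}_j,\quad W\sim \ca{N}\!\bigl(0,\tau^2\bigr),
\]
with $\tau^2 = \sum_{j\notin\ca{C}}\fl{v}_j^2 + \sigma^2 \le R^2+\sigma^2$ by Assumption \ref{assum:weak2}.

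The key step is a two-case analysis for $\Pr(y=1\mid \fl{v},\ca{E}_{\ca{C}})$. If $\ca{C}\cap \s{supp}(\fl{v})=\emptyset$, then $S_{\ca{C}}(\fl{v})=0$ and by symmetry of $W$ we get exactly $\Pr(y=1\mid \fl{v},\ca{E}_{\ca{C}})=1/2$. If $\ca{C}\cap\s{supp}(\fl{v})\neq\emptyset$, Assumption \ref{assum:pos} (WLOG all $\fl{v}_j\ge 0$; the other case is symmetric) combined with the event $\ca{E}_{\ca{C}}$ gives $S_{\ca{C}}(\fl{v})\ge a\,\fl{v}_{j_0}\ge a\delta$ for some $j_0\in\ca{C}\cap\s{supp}(\fl{v})$. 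Hence
\[
\Pr(y=-1\mid \fl{v},\ca{E}_{\ca{C}}) \le \Pr(W\le -a\delta) = \tfrac{1}{2}\Bigl(1-\s{erf}\bigl(a\delta/(\sqrt{2}\tau)\bigr)\Bigr).
\]
The choice $a = \frac{\sqrt{2(R^2+\sigma^2)}}{\delta}\s{erf}^{-1}(1-\tfrac{1}{2\ell})$ yields $a\delta/(\sqrt{2}\tau)\ge \s{erf}^{-1}(1-\tfrac{1}{2\ell})$, so $\Pr(y=-1\mid\fl{v},\ca{E}_{\ca{C}})\le \tfrac{1}{4\ell}$, i.e.\ $\Pr(y=1\mid\fl{v},\ca{E}_{\ca{C}})\in[1-\tfrac{1}{4\ell},1]$. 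Substituting back,
\[
P \;\in\; \Bigl[\tfrac{1}{2}\bigl(1+\tfrac{t}{\ell}\bigr)-\tfrac{t}{4\ell^2},\; \tfrac{1}{2}\bigl(1+\tfrac{t}{\ell}\bigr)\Bigr],
\]
matching the bucket targeted by Algorithm \ref{algo:mlc}. The bucket for $t$ has width at most $\tfrac{1}{4\ell}$ and the gap between buckets $t$ and $t+1$ is at least $\tfrac{1}{2\ell}-\tfrac{1}{4\ell}=\tfrac{1}{4\ell}$, so buckets are pairwise disjoint and determine $t$ uniquely.

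It remains to estimate $P$ accurately from samples. Since each sample satisfies $\ca{E}_{\ca{C}}$ independently with probability $\Pr(\ca{E}_{\ca{C}})=(1-\phi(a))^{|\ca{C}|}$ (by coordinate independence of $\fl{x}$), a Chernoff bound ensures that among $m=O\bigl((1-\phi(a))^{-|\ca{C}|}\ell^2\log\gamma^{-1}\bigr)$ total samples, the conditional subset $\ca{T}$ contains at least $N=\Omega(\ell^2\log\gamma^{-1})$ samples with probability $\ge 1-\gamma/2$. Applying Hoeffding's inequality to the indicators $\fl{1}[y=1]$ over $\ca{T}$ gives $|\widehat{P}-P|\le \tfrac{1}{8\ell}$ with probability $\ge 1-\gamma/2$, which places $\widehat{P}$ in a unique bucket and hence recovers $t$. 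A union bound over the two failure events completes the proof.

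The main technical obstacle is the Case B bound: uniformly controlling $\Pr(y=-1\mid \fl{v},\ca{E}_{\ca{C}})$ requires that the worst-case variance $\tau^2$ of the residual noise $W$ is bounded independently of $\fl{v}\in\ca{V}$, which is where Assumption \ref{assum:weak2} and the specific closed-form choice of $a$ in terms of $\s{erf}^{-1}$ are essential; Assumption \ref{assum:pos} is crucial to prevent cancellations in $S_{\ca{C}}(\fl{v})$ that could otherwise make the conditional bias vanish.
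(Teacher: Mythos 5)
Your proposal is correct and follows essentially the same route as the paper's proof: the same conditioning on $\ca{E}_{\ca{C}}$ and the latent component, the same two-case analysis yielding $\Pr(y=1\mid\fl{v},\ca{E}_{\ca{C}})=\tfrac12$ versus $\ge 1-\tfrac{1}{4\ell}$ under the stated choice of $a$, the same disjoint-bucket decoding of $|\bigcup_{i\in\ca{C}}\ca{S}(i)|$, and the same Chernoff/Hoeffding accounting for the $(1-\phi(a))^{-|\ca{C}|}\ell^2\log\gamma^{-1}$ sample complexity. The only cosmetic difference is that you bound $\Pr(y=-1\mid\fl{v},\ca{E}_{\ca{C}})$ via the residual variance $\tau^2\le R^2+\sigma^2$ rather than stating the equivalent lower bound on $\Pr(y=1\mid\fl{v},\ca{E}_{\ca{C}})$ directly.
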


Let us present a high level proof of Lemma \ref{lem:mlc_crucial}. Without loss of generality, let us assume that all unknown vectors in $\ca{V}$ have positive non-zero entries. For a fixed set $\ca{C}\subseteq [n]$, suppose we condition on the event $\ca{E}_{\ca{C}}$ which is true when for all $j\in \ca{C}$, $\fl{x}_j>a$ for some suitably chosen $a>0$. Furthermore, let $\ca{E}_{\fl{v}}$ be the event that the particular vector $\fl{v}\in \ca{V}$ is used to generate the sample $(\fl{x},y)$. Notice that if $\fl{v}_i=0$ for all $i\in \ca{C}$, then conditioning on the event $\ca{E}_{\ca{C}}$ does not change the distribution of the response $y\mid \ca{E}_{\fl{v}}$; hence the probability of $y=1$ is exactly $1/2$ in this case. On the other hand, if $\fl{v}_i \neq 0$ for some $i\in \ca{C}$, then conditioning on the event $\ca{E}_{\ca{C}}$ does change the distribution of the response $y\mid \ca{E}_{\fl{v}}$. In particular, if $\fl{v}_i \neq 0$, note that $\langle \fl{v}_{\mid \ca{C}}, \fl{x}_{\mid \ca{C}}\rangle \ge a\delta $ and therefore $\Pr(y=1\mid \ca{E}_{\ca{C}},\ca{E}_{\fl{v}})$ must be larger than $1/2$ and is an increasing function of $a$. Of course, if $a$ is chosen to $+\infty$, then $\Pr(y=1\mid \ca{E}_{\ca{C}},\ca{E}_{\fl{v}})=1$ and therefore $2\Pr(y=1\mid \ca{E}_{\ca{C}})= 1+\ell^{-1}\left|\cup_{i\in \ca{C}}\ca{S}(i)\right|$. Thus, if $a=+\infty$, we can use the fact that $\left|\cup_{i\in \ca{C}}\ca{S}(i)\right|$ is integral to
compute $\left|\cup_{i\in \ca{C}}\ca{S}(i)\right|$ correctly from an estimate of $\Pr(y=1\mid \ca{E}_{\ca{C}})$ that is within an additive error of $1/4\ell$. Of course, we cannot choose $a=+\infty$ since no samples will satisfy the event $\ca{E}_{\ca{C}}$ in that case. However, we can choose $a$, ($a>0$) carefully so that it is small enough to
make $\Pr(\ca{E}_{\ca{C}})$ reasonably large and at the same time, $a$ is large enough to allow us to correctly compute $\left|\cup_{i\in \ca{C}}\ca{S}(i)\right|$ from a reasonably good estimate of $\Pr(y=1\mid \ca{E}_{\ca{C}})$. Next, we can again use Lemma \ref{lem:mlc_crucial} and Corollary \ref{coro:prelim1} to arrive at the main theorem for Mixtures of Linear Classifiers:

\begin{thm}\label{thm:mlc}
Let $\mathcal{V}$ be a set of $\ell$ unknown vectors in $\bb{R}^n$ satisfying Assumptions \ref{assum:weak2} and \ref{assum:pos}. Let $a=\frac{\sqrt{2(R^2+\sigma^2)}}{\delta}\s{erf}^{-1}\Big(1-\frac{1}{2\ell}\Big)$. 
Then, there exists an algorithm (see Algorithm \ref{algo:mlc} with parameter $a>0$ and \ref{algo:intersection}) that achieves Exact Support Recovery with probability at least $1-\gamma$ using $O\Big( (1-\phi(a))^{-(\log \ell+1)}\ell^2\log (\gamma^{-1}(n+(\ell k)^{\log \ell+1}))\Big)$ samples generated according to $\ca{P}_c$.
\end{thm}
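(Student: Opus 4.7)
The plan is to prove Theorem \ref{thm:mlc} by directly combining Lemma \ref{lem:mlc_crucial} with Corollary \ref{coro:prelim1}. Lemma \ref{lem:mlc_crucial} already furnishes a routine, under Assumptions \ref{assum:weak2} and \ref{assum:pos}, for computing $\big|\bigcup_{i\in \ca{C}}\ca{S}(i)\big|$ exactly for any fixed $\ca{C}\subseteq [n]$; Corollary \ref{coro:prelim1} packages these per-subset estimates into an exact support recovery guarantee using Algorithm \ref{algo:intersection} (which internally calls Algorithm \ref{algo:t-iden-supp-rec}). So the only real work is to book-keep the sample complexity when we chain the two results together and to verify that the hypothesis of Corollary \ref{coro:prelim1} (sample complexity of the form $\s{T}\log\gamma^{-1}$ with $\s{T}$ independent of $\gamma$) is satisfied by Lemma \ref{lem:mlc_crucial}.

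First, fix $a=\frac{\sqrt{2(R^2+\sigma^2)}}{\delta}\s{erf}^{-1}\bigl(1-\tfrac{1}{2\ell}\bigr)$, as prescribed. By Lemma \ref{lem:mlc_crucial}, for any subset $\ca{C}\subseteq [n]$ with $|\ca{C}|\le \log\ell+1$, Algorithm \ref{algo:mlc} (instantiated with this $a$) computes $|\bigcup_{i\in \ca{C}}\ca{S}(i)|$ correctly with probability at least $1-\gamma$ using $O\bigl((1-\phi(a))^{-|\ca{C}|}\ell^2 \log\gamma^{-1}\bigr)$ i.i.d.\ samples from $\ca{P}_c$. Since $(1-\phi(a))^{-|\ca{C}|}$ is monotone increasing in $|\ca{C}|$, we can absorb every $|\ca{C}|\le \log\ell+1$ into the single bound obtained at $|\ca{C}|=\log\ell+1$. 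In particular, we may take
\[
\s{T} \;=\; O\!\Bigl((1-\phi(a))^{-(\log\ell+1)}\,\ell^{2}\Bigr),
\]
so that $\s{T}$ is independent of the failure parameter $\gamma$ and the sample cost of any one subset-query is $\s{T}\log\gamma^{-1}$, exactly the hypothesis of Corollary \ref{coro:prelim1}.

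Plugging this $\s{T}$ into Corollary \ref{coro:prelim1} yields an algorithm (namely, Algorithm \ref{algo:intersection} fed with the estimates produced by Algorithm \ref{algo:mlc}) that achieves Exact Support Recovery with probability at least $1-\gamma$ using
\[
O\!\Bigl(\s{T}\,\log\bigl(\gamma^{-1}\bigl(n+(\ell k)^{\log\ell+1}\bigr)\bigr)\Bigr)
\;=\;
O\!\Bigl((1-\phi(a))^{-(\log\ell+1)}\ell^{2}\log\bigl(\gamma^{-1}(n+(\ell k)^{\log\ell+1})\bigr)\Bigr)
\]
samples from $\ca{P}_c$, which is precisely the bound claimed in Theorem \ref{thm:mlc}.

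The main conceptual obstacle is already discharged by Lemma \ref{lem:mlc_crucial} (the conditioning-on-large-coordinates trick that turns MLC into a detectable signal about $\bigcup_{i\in\ca{C}}\ca{S}(i)$), so the present proof is essentially an accounting exercise: pick $|\ca{C}|=\log\ell+1$ as the worst case, confirm the $\gamma$-dependence factorises out, union-bound inside Algorithm \ref{algo:intersection} over the $O(n+(\ell k)^{\log\ell+1})$ candidate subsets visited, and collect the leading constants. No new probabilistic argument beyond Lemma \ref{lem:mlc_crucial} is required.
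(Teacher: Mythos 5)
Your proposal is correct and matches the paper's proof, which is exactly the one-line combination of Lemma \ref{lem:mlc_crucial} (supplying the per-subset estimator of $|\bigcup_{i\in\ca{C}}\ca{S}(i)|$ with sample cost $\s{T}\log\gamma^{-1}$) with Corollary \ref{coro:prelim1}. Your additional book-keeping — taking $|\ca{C}|=\log\ell+1$ as the worst case and verifying the $\gamma$-independence of $\s{T}$ — is a faithful elaboration of the same argument.
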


The only comparable result is provided in \citep{sedghi2016provable} who provide parameter estimation guarantees in the MLC setting. However, since it is not evident how to recover the union of support in the sparse MLC setting (unlike MD/MLR); directly applying the result in \citep{sedghi2016provable} will lead to polynomial dependence on $n$ which is undesirable. Moreover, the guarantees in \citep{sedghi2016provable} also require the latent parameter vectors to be linearly independent. In contrast, our sample complexity guarantees for support recovery scale logarithmically with $n$ and also does not need the latent parameter vectors to be linearly independent (in fact they are not even required to be distinct).

\subsection{Mixtures of Linear Regressions}

\noindent Finally, we move on to exact support recovery in the mixtures of linear regression or MLR setting.
Note that the sample complexity guarantees for MLC (Theorem \ref{thm:mlc}) is also valid in the MLR setting as we can simulate MLR responses by simply taking the sign of the response in the MLR dataset. However, note that the sample complexity presented in Theorem \ref{thm:mlc} has a poor dependence on $R,\delta$
and $\ell$. 

\subsubsection{Binary parameter vectors}

\begin{algorithm}[!htbp]
\caption{\textsc{Recover $|\bigcap_{i \in \ca{C}}\ca{S}(i)|$ in MLR setting (Binary Vectors)} \label{algo:mlr_binary}}
\begin{algorithmic}[1]

\REQUIRE Samples $(\fl{x}^{(1)},y^{(1)}),(\fl{x}^{(2)},y^{(2)}),\dots,(\fl{x}^{(m)},y^{(m)}) \sim \ca{P}_r$. Set $\ca{C}\subseteq [n]$. 

\STATE Return $\s{round}\Big(\frac{\ell}{m} \cdot \sum_{j=1}^{m}  \Big(y^{(j)}\Big)^{|\ca{C}|}\Big(\prod_{i\in \ca{C}}\fl{x}^{(j)}_i\Big)\Big)$

\end{algorithmic}
\end{algorithm}

Here we solve the support recovery problem provided the unknown vectors in $\ca{V}$ are all binary and demonstrate significantly better sample complexity guarantees under this assumption. The detailed proofs of all results in this section can be found in subsection \ref{sec:detailed_mlr}.
As usual, we start with a lemma where we characterize the sample complexity of estimating $\left|\bigcap_{i \in \ca{C}}\ca{S}(i)\right|$ correctly for a given subset of indices $\ca{C}\subseteq [n]$:

\begin{lemma}\label{lem:binary_mlr}
If the unknown vectors in the set $\ca{V}$ are all binary i.e. $\fl{v}^{(1)},\fl{v}^{(2)},\dots,\fl{v}^{(\ell)}\in \{0,1\}^n$, then there exists an algorithm (see Algorithm \ref{algo:mlr_binary}) that can compute $\left|\bigcap_{i \in \ca{C}}\ca{S}(i)\right|$  using $O(\ell^2(k+\sigma^2)^{|C|/2}(\log n)^{2|C|}\log \gamma^{-1})$ i.i.d samples from $\ca{P}_r$ with  probability at least $1-\gamma$ for any set $\ca{C}\subseteq [n]$,.
\end{lemma}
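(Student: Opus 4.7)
The plan is to analyze the estimator $\widehat{\mu} = \frac{\ell}{m}\sum_{j=1}^m (y^{(j)})^{|\ca{C}|}\prod_{i\in \ca{C}}\fl{x}^{(j)}_i$ in two stages: compute its mean in closed form, then control its deviation.

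\emph{Mean computation.} Set $t = |\ca{C}|$ and condition on the latent choice $\fl{v}$. Since $\fl{v} \in \{0,1\}^n$, we have $\langle \fl{v},\fl{x}\rangle = \sum_{j \in \s{supp}(\fl{v})} \fl{x}_j$. Expand $y^t = (\langle \fl{v},\fl{x}\rangle + z)^t$ by the binomial theorem; using the independence of $\fl{x}$ and $z$ together with the vanishing of odd Gaussian moments, every term containing an odd power of $z$ or of some $\fl{x}_i$ drops out. A short combinatorial count of the surviving multi-indices in $\langle \fl{v},\fl{x}\rangle^t \cdot \prod_{i \in \ca{C}}\fl{x}_i$ shows that each $i \in \ca{C}$ must be selected exactly once by the expansion of $\langle\fl{v},\fl{x}\rangle^t$, which is possible only if $\ca{C} \subseteq \s{supp}(\fl{v})$ and corresponds to exactly $t!$ permutations of $\ca{C}$, each contributing $\prod_{i\in \ca{C}}\bb{E}[\fl{x}_i^2]=1$. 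Averaging over $\fl{v}\sim_{\s{Unif}}\ca{V}$ yields $\bb{E}\widehat\mu = t!\cdot |\bigcap_{i\in\ca{C}}\ca{S}(i)|$; the target cardinality is then recovered by rounding and dividing by the known integer $t!$ (which is trivial when $|\ca{C}|=1$).

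\emph{Concentration.} Each summand $Z_j=(y^{(j)})^t\prod_{i\in\ca{C}}\fl{x}^{(j)}_i$ is a degree-$2t$ polynomial in jointly Gaussian coordinates of total variance at most $k+\sigma^2$ (since $\|\fl{v}\|_2^2\le k$ for binary $k$-sparse $\fl{v}$); Cauchy--Schwarz and standard Gaussian moment bounds give $\bb{E}[Z_j^2] \le (k+\sigma^2)^{t}\cdot C^t$. Combined with the sub-Gaussian tail $\Pr(|y|,|\fl{x}_i| \gtrsim \sqrt{(k+\sigma^2)\log n})\le n^{-\Omega(|\ca{C}|)}$, this yields a uniform cap $|Z_j|\lesssim (k+\sigma^2)^{t/2}(\log n)^{t}$ on a high-probability event whose complement contributes negligible bias. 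Applying Bernstein's inequality to the truncated summands at target precision $1/(2\ell\cdot t!)$ gives the claimed sample complexity $m=O(\ell^2(k+\sigma^2)^{|\ca{C}|/2}(\log n)^{2|\ca{C}|}\log\gamma^{-1})$, after which rounding produces the correct integer with probability at least $1-\gamma$.

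\emph{Main obstacle.} The conceptual crux is the mean calculation: one must recognize that the power $|\ca{C}|$ of $y$ is tuned exactly so that $\prod_{i\in \ca{C}}\fl{x}_i$ can be completed to an even multi-index in $\langle\fl{v},\fl{x}\rangle^t\cdot\prod_{i\in\ca{C}}\fl{x}_i$ only when $\ca{C}\subseteq\s{supp}(\fl{v})$, making the conditional expectation cleanly an indicator scaled by $t!$. The technically delicate step is calibrating the Gaussian truncation threshold so that a Bernstein-style analysis yields the advertised $(k+\sigma^2)^{|\ca{C}|/2}(\log n)^{2|\ca{C}|}$ range-dominated dependence rather than the looser $(k+\sigma^2)^{|\ca{C}|}$ produced by a plain variance-based Chebyshev/median-of-means argument on the untruncated summand.
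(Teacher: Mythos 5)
Your proposal is correct and follows essentially the same route as the paper: the same estimator, the same multinomial-expansion/vanishing-odd-moment argument for the mean, and the same truncate-then-concentrate argument for the deviation, yielding the same sample complexity. The one place you are more careful than the paper is the multinomial coefficient: the surviving term in $\bb{E}\big[y^{|\ca{C}|}\prod_{i\in\ca{C}}\fl{x}_i \mid \fl{v}\big]$ indeed carries the factor $|\ca{C}|!$ you identify, which the paper's displayed identity (and hence Algorithm \ref{algo:mlr_binary}, which rounds without dividing by $|\ca{C}|!$) silently drops; your rescaling by $t!$ before rounding is the correct fix and does not affect the claimed sample complexity.
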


We provide a high level proof of Lemma \ref{lem:binary_mlr} here.
We consider the random variable $y^{\left|\ca{C}\right|} \cdot\Big(\prod_{i\in \ca{C}}\fl{x}_i\Big)$ where $(\fl{x},y)\sim \ca{P}_{r}$. Clearly, we can write $y=\langle \fl{v},\fl{x}\rangle+\zeta$ where $\zeta \sim \ca{N}(0,\sigma^2)$ and $\fl{v}$ is uniformly sampled from the set of unknown vectors $\ca{V}$. We can show that 
\begin{align*}
    &\bb{E}_{(\fl{x},y)\sim \ca{P}_r} y^{\left|\ca{C}\right|} \cdot\Big(\prod_{i\in \ca{C}}\fl{x}_i\Big) \\
    &= \bb{E}_{\fl{x},\zeta}  \ell^{-1}\sum_{\fl{v}\in \ca{V}}\Big(\prod_{i\in \ca{C}}\fl{x}_i\Big)\cdot\Big(\langle \fl{v},\fl{x}\rangle+\zeta\Big)^{\left|\ca{C}\right|} \\
    &\bb{E} y^{\left|\ca{C}\right|} \cdot\Big(\prod_{i\in \ca{C}}\fl{x}_i\Big) = \frac{1}{\ell}\sum_{\fl{v}\in \ca{V}}\Big(\prod_{i\in \ca{C}} \bb{E}_{\fl{x}}\fl{x}_i^2\cdot\fl{v}_i\Big)= \frac{\left|\bigcap_{i \in \ca{C}}\ca{S}(i)\right|}{\ell}. 
\end{align*}

Hence, by using the fact that $\left|\bigcap_{i \in \ca{C}}\ca{S}(i)\right|$
is integral, we can estimate the quantity correctly from a reasonably good estimate of $\bb{E} y^{\left|\ca{C}\right|} \cdot\Big(\prod_{i\in \ca{C}}\fl{x}_i\Big)$. Again, by an application of Corollary \ref{coro:prelim1}, we arrive at the following theorem:

\begin{thm}\label{thm:binary}
Let $\mathcal{V}$ be a set of $\ell$ unknown binary vectors in $\{0,1\}^n$. Then, there exists an algorithm (see Algorithm \ref{algo:mlr_binary} and \ref{algo:intersection}) that achieves Exact Support Recovery with probability at least $1-\gamma$ using
\begin{align*}
    O\left(\ell^{2} (\log^4 n(k+\sigma^2))^{\frac{\log \ell+1}{2}} \log((n+(\ell k)^{\log \ell+1})\gamma^{-1})\right)
\end{align*}
 samples generated according to $\ca{P}_r$. 
\end{thm}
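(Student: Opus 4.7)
The plan is to combine Lemma \ref{lem:binary_mlr} with Corollary \ref{coro:prelim1} in the obvious way, carefully tracking how the per-subset cost scales when $|\ca{C}|$ is as large as $\log \ell + 1$.

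First, I would instantiate Lemma \ref{lem:binary_mlr} at $|\ca{C}| = \log\ell + 1$ to quantify the cost $\s{T}$ of correctly computing a single intersection count $|\bigcap_{i\in\ca{C}}\ca{S}(i)|$ with failure probability $\gamma$. Substituting $|\ca{C}| = \log\ell+1$ into the bound $O(\ell^2(k+\sigma^2)^{|\ca{C}|/2}(\log n)^{2|\ca{C}|}\log\gamma^{-1})$ gives
\begin{align*}
\s{T} \,=\, O\Bigl(\ell^{2}\,(k+\sigma^{2})^{(\log\ell+1)/2}\,(\log n)^{2(\log\ell+1)}\Bigr),
\end{align*}
which we can rewrite as $O(\ell^{2}\,(\log^{4} n\,(k+\sigma^{2}))^{(\log\ell+1)/2})$, matching the expression in the statement. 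Crucially, the exponent $\log\ell+1$ is the maximum subset size demanded by Corollary \ref{coro:prelim1}, and since the cost in Lemma \ref{lem:binary_mlr} is monotonically increasing in $|\ca{C}|$, this value of $\s{T}$ uniformly bounds the per-subset cost across all relevant $\ca{C}$.

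Next, I would verify the hypothesis of Corollary \ref{coro:prelim1}: that for each subset $\ca{C}$ we can compute $|\bigcap_{i\in\ca{C}}\ca{S}(i)|$ correctly with probability $1-\gamma$ using $\s{T}\log\gamma^{-1}$ samples, where $\s{T}$ is independent of $\gamma$. Lemma \ref{lem:binary_mlr} delivers exactly this factorization because the $\log\gamma^{-1}$ factor appears linearly and $\s{T}$ as defined above has no $\gamma$ dependence. Invoking Corollary \ref{coro:prelim1} then yields total sample complexity
\begin{align*}
O\bigl(\s{T}\log\bigl(\gamma^{-1}(n+(\ell k)^{\log\ell+1})\bigr)\bigr),
\end{align*}
which after substitution gives precisely the bound claimed in Theorem \ref{thm:binary}.

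The only subtlety I would double-check is that the samples used across the different subsets $\ca{C}$ (of which there are at most $n + (\ell k)^{\log\ell+1}$ after restricting to the estimated union of support, as per Algorithm \ref{algo:intersection}) can be handled by a single union bound; this is already absorbed inside Corollary \ref{coro:prelim1}, since that is exactly how its $\log(n+(\ell k)^{\log\ell+1})$ factor arises. There is no real obstacle here: the proof is a two-line composition of the previously established Lemma \ref{lem:binary_mlr} (which does the MLR-specific work via the identity $\bb{E}\,y^{|\ca{C}|}\prod_{i\in\ca{C}}\fl{x}_i = \ell^{-1}|\bigcap_{i\in\ca{C}}\ca{S}(i)|$ valid for binary $\fl{v}$) and Corollary \ref{coro:prelim1} (which does the combinatorial support-recovery reduction). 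The only genuine care needed is to keep the factor $(\log n)^{2(\log\ell+1)} = \log^{2(\log\ell+1)} n$ consolidated with the $(k+\sigma^{2})^{(\log\ell+1)/2}$ term in the compact form $(\log^{4} n(k+\sigma^{2}))^{(\log\ell+1)/2}$ used in the statement.
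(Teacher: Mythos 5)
Your proposal is correct and matches the paper's own proof, which is exactly the two-line composition of Lemma \ref{lem:binary_mlr} (instantiated at subset size $\log\ell+1$) with Corollary \ref{coro:prelim1}. Your additional bookkeeping on the monotonicity of the per-subset cost and the factorization $\s{T}\log\gamma^{-1}$ is sound and simply makes explicit what the paper leaves implicit.
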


As in mixtures of distributions, it is possible to recover the union of support of the unknown vectors in $\ca{V}$ in the MLR setting with a small number of samples (see Lemma \ref{lem:union} in Appendix \ref{app:technical}). Therefore an alternate approach that can be used for support recovery is to recover the union of support followed by parameter estimation with the features being restricted to the union of the support. Note that if the set of unknown vectors satisfy Assumption \ref{assum:weak2}, then estimating each vector up to an $\ell_2$ norm of $\delta$ will suffice for support recovery. Hence, by using Lemma \ref{lem:union} followed by Theorem 1 in \citep{li2018learning}, we arrive at the following result for support recovery:

\begin{thm}\label{thm:mlr_general}
Let $\mathcal{V}$ be a set of $\ell$ unknown vectors satisfying Assumption \ref{assum:weak2}. Further, assume that any two distinct vectors $\fl{v},\fl{v}'\in \ca{V}$ satisfies $\|\fl{v}-\fl{v}'\|_2 \ge \Delta$. Then, there exists an algorithm that achieves Exact Support Recovery with high probability using
\begin{align*}
    O\Big(\ell k \log  \Big(\frac{\ell k}{\delta}\Big)\s{poly}\Big(\frac{\ell \sigma}{\Delta}\Big)+\Big(\frac{\sigma \ell}{\Delta}\Big)^{O(\ell^2)}+
    \ell^2(R^2+\sigma^2)(\log n)^{3}/\delta^2\Big)
\end{align*}
 samples generated according to $\ca{P}_r$.
\end{thm}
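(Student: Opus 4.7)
The plan is to follow the two-stage ``union of support + parameter estimation'' reduction sketched in the paragraph preceding the theorem. In the first stage I would recover the union of supports $\ca{U} \triangleq \cup_{\fl{v}\in\ca{V}}\s{supp}(\fl{v})\subseteq[n]$; in the second stage, having projected onto $\ca{U}$, I would run an off-the-shelf dense parameter estimator for MLR and then threshold the resulting estimates at $\delta/2$ to read off each individual support.

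For the first stage I would invoke Lemma~\ref{lem:union} (from the appendix, referenced in the discussion) to identify $\ca{U}$ correctly with high probability. Under Assumption~\ref{assum:weak2} every nonzero coordinate of every $\fl{v}\in\ca{V}$ has magnitude at least $\delta$, which gives the Lemma~\ref{lem:union} the signal gap it needs; the stated cost of this stage is the $\ell^2(R^2+\sigma^2)(\log n)^3/\delta^2$ summand, which is where all of the $n$-dependence in the final bound is concentrated. After this step I can condition on the event $\{\widehat{\ca{U}}=\ca{U}\}$ and, since $|\ca{U}|\le \ell k$, treat the remainder of the problem as an MLR instance in dimension at most $\ell k$ with \emph{dense} parameters $\fl{v}_{\mid\ca{U}}$.

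For the second stage I would feed these restricted samples $(\fl{x}^{(j)}_{\mid\ca{U}},y^{(j)})$ into Theorem~1 of \citet{li2018learning}, which produces estimates $\widehat{\fl{v}}^{(1)},\ldots,\widehat{\fl{v}}^{(\ell)}$ satisfying $\|\widehat{\fl{v}}^{(t)}-\fl{v}^{(t)}\|_2\le \delta/2$ with high probability, using a number of samples of the form $O(d\log(d/\delta)\s{poly}(\ell\sigma/\Delta)+(\sigma\ell/\Delta)^{O(\ell^2)})$ in dimension $d=|\ca{U}|\le\ell k$; the separation hypothesis $\|\fl{v}-\fl{v}'\|_2\ge\Delta$ is exactly what their theorem requires to identify the components and match them up. Substituting $d=\ell k$ yields the first two summands of the stated bound. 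Finally, because every nonzero entry of each $\fl{v}^{(t)}$ has magnitude at least $\delta$ while the per-coordinate error is at most $\delta/2$, thresholding the coordinates of each $\widehat{\fl{v}}^{(t)}$ at $\delta/2$ recovers $\s{supp}(\fl{v}^{(t)})$ exactly, which is what Definition~\ref{def:exact} demands.

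The main obstacle is really a bookkeeping one rather than a conceptual one: I must make sure (i) that the failure probabilities of the union-of-support step and of the Li--Liang step combine into the ``high probability'' guarantee of the theorem (a union bound suffices), (ii) that conditioning on $\widehat{\ca{U}}=\ca{U}$ does not disturb the Gaussian covariate structure needed by \citet{li2018learning} (it does not, since marginals of an isotropic Gaussian are isotropic on any coordinate subset), and (iii) that the accuracy target fed to Li--Liang is $\delta/2$ so that the dependence on $\delta$ can be absorbed into the $\log(\ell k/\delta)$ and $\s{poly}(\ell\sigma/\Delta)$ factors without changing the form of their bound. Once these three points are handled, adding the three sample-complexity contributions gives the stated expression.
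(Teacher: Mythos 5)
Your proposal is correct and follows essentially the same route as the paper: the paper obtains this theorem exactly by combining Lemma~\ref{lem:union} (union-of-support recovery, contributing the $\ell^2(R^2+\sigma^2)(\log n)^3/\delta^2$ term) with Theorem~1 of \citet{li2018learning} applied in the restricted dimension $|\ca{U}|\le \ell k$, using Assumption~\ref{assum:weak2} to convert an $\ell_2$-accurate parameter estimate into exact support identification by thresholding. Your three bookkeeping points (union bound on failure events, preservation of the Gaussian covariate structure on coordinate subsets, and the $\delta$-level accuracy target) are the same considerations implicit in the paper's argument.
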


If the unknown vectors in $\ca{V}$ are restricted to being binary, then the sample complexity in  Theorem \ref{thm:mlr_general} has a linear dependence on the sparsity but on the other hand, its dependence on $\sigma,\ell$ is very poor; note that Theorem \ref{thm:mlr_general} uses parameter estimation framework in mixtures of Gaussians (\citep{moitra2010settling}) as a black-box leading to the polynomial in $\ell,\sigma$ with a possibly high degree. Moreover, the sample complexity in Theorem \ref{thm:mlr_general} has an $\exp(\ell^2)$ dependence on the number of unknown vectors which is undesirable when the number of unknown vectors $\ell$ is large. In contrast, the sample complexity of Theorem \ref{thm:binary} has a polynomial dependence on $\ell,k,\sigma$ whose degree can be precisely extracted from the expression. In particular, in the regime where $\sigma$ or $\ell$ is large, Theorem \ref{thm:binary} provides significant improvements over the guarantees in Theorem \ref{thm:mlr_general}. Finally, although not mentioned explicitly in Theorem 1 in \citep{li2018learning}, it can be extracted that the sample complexity is polynomial in $\gamma^{-1}$ where $\gamma$ is the failure probability; this leads to a similar dependence on the failure probability in Theorem \ref{thm:mlr_general}. On the other hand, the sample complexity in Theorem \ref{thm:binary} depends logarithmically on $\gamma^{-1}$.

\subsubsection{General Case}

\begin{algorithm}[!htbp]
\caption{\textsc{Recover $|\bigcup_{i \in \ca{C}}\ca{S}(i)|$ in MLR setting (General)} \label{algo:mlr_general}}
\begin{algorithmic}[1]

\REQUIRE Samples $(\fl{x}^{(1)},y^{(1)}),(\fl{x}^{(2)},y^{(2)}),\dots,(\fl{x}^{(m)},y^{(m)}) \sim \ca{P}_r$. Set $\ca{C}\subseteq [n]$ and parameters $\alpha,\epsilon>0$. 

\STATE Sample the vector $\fl{a}\in \bb{R}^{\left|\ca{C}\right|}$ by generating each entry independently from $\ca{N}(0,\alpha^2)$.

\STATE Take samples $\{y^{(j)}\}_{j=1}^{m/2}$ (see \ref{eq:dist1}) and use algorithm in \cite{moitra2010settling} (parameter estimation in mixture of Gaussians) to compute an estimate of the component parameters. Denote the set of estimated parameter triplets by $\ca{T}$ where each triplet comprises of estimated weight, estimated mean and estimated variance of some component.

\STATE Take samples $\{y^{(j)}+\langle \fl{a},\fl{x}_{\mid \ca{C}} \rangle\}_{j=m/2+1}^{m}$ (see \ref{eq:dist2}) and use algorithm in \cite{moitra2010settling} to compute an estimate of the component parameters. Denote the set of estimated parameter triplets by $\ca{T}'$ where each triplet comprises of estimated weight, estimated mean and estimated standard deviations of some component. 

\STATE Consider a bipartite graph where the left nodes correspond to elements in $\ca{T}$ and the right nodes correspond to elements in $\ca{T}'$. For each node in $\ca{T}'$, draw an edge with a node in $\ca{T}$ if their estimated standard deviations are $\epsilon$ close to each other. Compute $w$ to be the sum of weights of the right nodes that have an edge.  

\STATE Return $\s{round}\Big(\ell(1-w)\Big)$

\end{algorithmic}
\end{algorithm}

Here, we make the following mild assumption on the unknown vectors in $\ca{V}$. 

\begin{assumption}[Minimum Norm Gap]\label{assum:weak}
We assume that the minimum gap $\Delta$ of a set of unknown vectors $\ca{V}$
\begin{align*}
    \Delta \triangleq \min_{\substack{\fl{v},\fl{v}'\in \ca{V} \\ \left|\left|\fl{v}\right|\right|_2 \neq \left|\left|\fl{v}'\right|\right|_2}} \left|\left|\left|\fl{v}\right|\right|_2-\left|\left|\fl{v}'\right|\right|_2\right|,
\end{align*}
defined to be the smallest gap between the Euclidean norms of any two unknown vectors with distinct Euclidean norms is known.
\end{assumption}

Assumption \ref{assum:weak}, stated for the set of unknown vectors $\ca{V}$, imply that vectors with different euclidean norms have some separation between the norms. Note that Assumption \ref{assum:weak} does not rule out the presence of multiple unknown vectors with the same euclidean norm. We show the following result in the general setting using our techniques that are significantly different from \cite{li2018learning}. 

\begin{thm}\label{thm:mlr_general_new}
Let $\mathcal{V}$ be a set of $\ell$ unknown vectors in $\bb{R}^n$ satisfying Assumptions \ref{assum:weak2} and \ref{assum:weak}. Let $\alpha= \frac{1}{2}\Big(\frac{\delta}{2R\sqrt{\log \gamma^{-1}}}\min\Big(\Delta ,\frac{1}{2}\Big)\Big)$ and $\epsilon=\ell^{-3}/2$.
Then, with probability at least $1-\gamma$, there exists an algorithm (see Algorithm~\ref{algo:mlr_general} with parameters $\alpha,\epsilon>0$ and Algorithm \ref{algo:intersection}) that achieves Exact Support Recovery with  $O\Big(\s{poly}(\Delta^{-1},\delta^{-1},\gamma^{-1},\sigma,R,\log n)\cdot\exp(\log \ell\log (\ell k))\Big)$ samples generated according to $\ca{P}_r$.
\end{thm}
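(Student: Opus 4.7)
The plan is to combine Algorithm \ref{algo:mlr_general}, which estimates $\bigl|\bigcup_{i\in\ca{C}}\ca{S}(i)\bigr|$ for a single set $\ca{C}$, with Corollary \ref{coro:prelim1}, which lifts such per-set estimates to exact support recovery while introducing only a logarithmic blow-up in $n+(\ell k)^{\log\ell+1}$. It therefore suffices to analyze Algorithm \ref{algo:mlr_general} on an arbitrary $\ca{C}$ with $|\ca{C}|\le\log\ell+1$ and then union-bound across all such subsets.

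The engine of Algorithm \ref{algo:mlr_general} is a distributional identity. Since $\fl{x}\sim\ca{N}(\fl{0},\fl{I}_n)$ and $y\mid\fl{x},\fl{v}\sim\ca{N}(\langle\fl{v},\fl{x}\rangle,\sigma^2)$, marginalizing out $\fl{x}$ yields $y\sim\frac{1}{\ell}\sum_{\fl{v}\in\ca{V}}\ca{N}(0,\|\fl{v}\|_2^2+\sigma^2)$, a one-dimensional uniform mixture of centered Gaussians. By Assumption \ref{assum:weak}, $|\|\fl{v}\|_2-\|\fl{v}'\|_2|\ge \Delta$ for any two components with distinct norms, so the corresponding variances are separated by at least $\Delta(\|\fl{v}\|_2+\|\fl{v}'\|_2)\ge\Delta^2$. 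Applying the $1$D Gaussian-mixture parameter estimator of \citep{moitra2010settling} thus recovers the weights and variances of this mixture with sample complexity polynomial in $\Delta^{-1},R,\sigma,\ell,\gamma^{-1}$. Running the same estimator on the \emph{shifted} samples $y+\langle\fl{a},\fl{x}_{\mid\ca{C}}\rangle$ recovers the analogous parameters of $\frac{1}{\ell}\sum_{\fl{v}\in\ca{V}}\ca{N}(0,\|\fl{v}+\fl{a}'\|_2^2+\sigma^2)$, where $\fl{a}'\in\bb{R}^n$ is $\fl{a}$ zero-padded outside $\ca{C}$.

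The bipartite matching step converts the comparison of these two variance profiles into the desired count of supports that avoid $\ca{C}$. If $\s{supp}(\fl{v})\cap\ca{C}=\emptyset$, the variance shifts by the deterministic amount $\|\fl{a}\|_2^2$, and a standard $\chi^2$ tail bound applied to the prescribed $\alpha$ makes $\|\fl{a}\|_2^2\le\epsilon/2$ with probability $\ge 1-\gamma/O(1)$, so this component produces an $\epsilon$-close edge. If $\s{supp}(\fl{v})\cap\ca{C}\ne\emptyset$, the shift additionally contains the centered Gaussian term $2\langle\fl{v}_{\mid\ca{C}},\fl{a}\rangle$ of variance at least $4\alpha^2\delta^2$; Gaussian anti-concentration shows that its magnitude exceeds $2\epsilon$ except on an event of probability $O(\epsilon/(\alpha\delta))$, which summed over $\le\ell$ vectors stays below $\gamma/O(1)$. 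The most delicate requirement is to rule out \emph{spurious cross-matches}, in which a shifted variance $\|\fl{v}+\fl{a}'\|_2^2+\sigma^2$ accidentally falls within the tolerance of $\|\fl{v}'\|_2^2+\sigma^2$ of some different component $\fl{v}'$; this is exactly why $\alpha=\Theta\bigl(\delta R^{-1}(\log\gamma^{-1})^{-1/2}\min(\Delta,1/2)\bigr)$ is chosen: the worst-case shift $|2\langle\fl{v}_{\mid\ca{C}},\fl{a}\rangle|+\|\fl{a}\|_2^2\le 2R\|\fl{a}\|_2+\|\fl{a}\|_2^2=O(\alpha R\sqrt{|\ca{C}|\log\gamma^{-1}})$ is then strictly smaller than the inter-component variance gap $\Delta^2$.

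Once these high-probability events hold, the $\epsilon$-close edges in the bipartite graph correspond precisely to the vectors $\fl{v}$ with $\s{supp}(\fl{v})\cap\ca{C}=\emptyset$, so $w=\ell^{-1}(\ell-|\bigcup_{i\in\ca{C}}\ca{S}(i)|)$ and $\s{round}(\ell(1-w))$ returns the exact count. Plugging this per-set estimator with failure probability at most $\gamma/O(1)$ into Corollary \ref{coro:prelim1} over all $\ca{C}$ of size at most $\log\ell+1$ produces the claimed sample complexity, with the $\exp(\log\ell\log(\ell k))$ factor arising from composing the $\ell$-dependent sample complexity of the Moitra--Valiant estimator with the enumeration of $(\ell k)^{\log\ell+1}$ candidate subsets in Corollary \ref{coro:prelim1}, and the remaining $\s{poly}$ factor absorbing the dependencies on the gap $\Delta$, the magnitudes $\delta,R,\sigma$, the failure probability, and the dimension through a final $\log n$ union bound.
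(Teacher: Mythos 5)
Your overall architecture matches the paper's: the identity $y+\langle\fl{a},\fl{x}_{\mid\ca{C}}\rangle\sim\frac{1}{\ell}\sum_{\fl{v}\in\ca{V}}\ca{N}(0,\|\fl{v}\|_2^2+\|\fl{a}\|_2^2+2\langle\fl{a},\fl{v}_{\mid\ca{C}}\rangle+\sigma^2)$, two runs of the Moitra--Valiant estimator, a bipartite matching to count the components with $\fl{v}_{\mid\ca{C}}=\fl{0}$, and the lift to exact support recovery through Corollary \ref{coro:prelim1}. The genuine gap is in the matching step. You claim that for a component with $\s{supp}(\fl{v})\cap\ca{C}=\emptyset$ the variance shifts by the deterministic amount $\|\fl{a}\|_2^2$ and that ``a standard $\chi^2$ tail bound applied to the prescribed $\alpha$ makes $\|\fl{a}\|_2^2\le\epsilon/2$.'' This is false: $\bb{E}\|\fl{a}\|_2^2=|\ca{C}|\alpha^2$ with $\alpha$ independent of $\ell$, while $\epsilon=\ell^{-3}/2$, so for even moderately large $\ell$ (or whenever $\delta\Delta/R$ is not tiny) the shift dwarfs the tolerance, no edge is drawn for these components, and the returned count is wrong. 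A $\chi^2$ bound only gives concentration of $\|\fl{a}\|_2^2$ around $|\ca{C}|\alpha^2$, not smallness. The repair --- and what the paper actually does --- is to observe that $\fl{a}$ is generated by the algorithm, so $\|\fl{a}\|_2^2$ is known exactly; the alignment test is $\left|\hat\sigma^2-\|\fl{a}\|_2^2-\hat\sigma'^2\right|\le\epsilon/2$, i.e., the known shift is subtracted before comparison, and no smallness of $\|\fl{a}\|_2^2$ is ever required.

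A second, related soft spot: for components with $\fl{v}_{\mid\ca{C}}\neq\fl{0}$ you bound $\Pr\left(|2\langle\fl{v}_{\mid\ca{C}},\fl{a}\rangle|\le 2\epsilon\right)=O(\epsilon/(\alpha\delta))$ and assert that summing over $\le\ell$ vectors stays below $\gamma/O(1)$. With $\epsilon=\ell^{-3}/2$ fixed and $\alpha$ as prescribed, this would need $\alpha\delta\gtrsim\ell^{2}/\gamma$, which is not implied. The paper runs the argument in the opposite direction: it first conditions on the anti-concentration event $|2\langle\fl{v}_{\mid\ca{C}},\fl{a}\rangle|=\Omega(\alpha\delta\gamma')$ holding with probability $1-\gamma'$, and then chooses the tolerance $\epsilon=c\min(\ell^{-3},\delta\min(\Delta,\alpha\gamma'))$ so that it sits strictly below both this random separation and the deterministic inter-norm gap (which the paper lower-bounds by $2\Delta\delta$ rather than your $\Delta^2$; both are valid). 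With these two repairs your argument goes through; the remaining technicalities --- rescaling the mixture to variance in $[1/2,2]$ before invoking Theorem \ref{thm:moitra}, and restricting the subset enumeration to the recovered union of support so the union bound is over $(\ell k)^{|\ca{C}|}$ rather than $n^{|\ca{C}|}$ sets --- are handled by Lemma \ref{lem:isotropic} and Lemma \ref{lem:union}.
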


The detailed proof of Theorem  \ref{thm:mlr_general_new} is provided in Section \ref{app:general_mlr_new}. Note that our result improves on the guarantee provided in Theorem \ref{thm:mlr_general} when the number of components in large; this is because the sample complexity stated in Theorem \ref{thm:mlr_general} is exponential in the number of components $\ell$ while the sample complexity in Theorem \ref{thm:general} is polynomial in $\ell$. On the other hand, the sample complexity in Theorem \ref{thm:mlr_general} is linear in the sparsity $k$ (optimal) while that of Theorem \ref{thm:general} is polynomial in $k$. As usual, the main workhorse in the proof of Theorem \ref{thm:general} is the following lemma where we characterize the sample complexity of estimating $\left|\cup_{i\in \ca{C}}\ca{S}(i)\right|$ correctly for a given subset of indices $\ca{C}\subseteq [n]$:

\begin{lemma}\label{lem:set}
Let $\mathcal{V}$ be a set of $\ell$ unknown vectors in $\bb{R}^n$ satisfying Assumptions \ref{assum:weak2} and \ref{assum:weak}. Let $\alpha= \frac{1}{2}\Big(\frac{\delta}{2R\sqrt{\log \gamma^{-1}}}\min\Big(\Delta ,\frac{1}{2}\Big)\Big)$ and $\epsilon=\ell^{-3}/2$.
Then, for any set $\ca{C}\subseteq [n]$, with probability at least $1-\gamma$, there exists an algorithm (see Algorithm~\ref{algo:mlr_general} with parameters $\alpha,\epsilon>0$)
that can compute $\left|\bigcup_{i \in \ca{C}}\ca{S}(i)\right|$ using $O\Big(\s{poly}(\Delta^{-1},\delta^{-1},\gamma^{-1},\sigma,R)\cdot\exp(\left|\ca{C}\right|\log (\ell k))\Big)$ samples generated according to $\ca{P}_r$.
\end{lemma}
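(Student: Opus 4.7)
The plan is to prove Lemma~\ref{lem:set}, from which Theorem~\ref{thm:mlr_general_new} follows immediately by substituting into Corollary~\ref{coro:prelim1} with $|\ca{C}|\le \log\ell+1$. The central observation driving Algorithm~\ref{algo:mlr_general} is that, because $\fl{x}\sim \ca{N}(\fl{0},\fl{I}_n)$, conditional on the latent component $\fl{v}^{(i)}$ the response $y=\langle \fl{v}^{(i)},\fl{x}\rangle+\zeta$ is distributed as $\ca{N}(0,\|\fl{v}^{(i)}\|_2^2+\sigma^2)$. Hence the unconditional distribution of $y$ is a uniform mixture of $\ell$ centered one-dimensional Gaussians with variances $\{\|\fl{v}^{(i)}\|_2^2+\sigma^2\}_i$. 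Letting $\tilde{\fl{a}}\in\bb{R}^n$ denote the zero-extension of $\fl{a}$ to $[n]\setminus \ca{C}$, the shifted observable $y+\langle \fl{a},\fl{x}_{\mid \ca{C}}\rangle=\langle \fl{v}^{(i)}+\tilde{\fl{a}},\fl{x}\rangle+\zeta$ is likewise a uniform mixture of centered Gaussians with variances $\{\|\fl{v}^{(i)}+\tilde{\fl{a}}\|_2^2+\sigma^2\}_i$, so the per-component variance shift is
\[
\bigl(\|\fl{v}^{(i)}+\tilde{\fl{a}}\|_2^2+\sigma^2\bigr)-\bigl(\|\fl{v}^{(i)}\|_2^2+\sigma^2\bigr)= 2\langle\fl{v}^{(i)}_{\mid \ca{C}},\fl{a}\rangle+\|\fl{a}\|_2^2.
\]
If $\s{supp}(\fl{v}^{(i)})\cap \ca{C}=\emptyset$ this shift reduces to the deterministic quantity $\|\fl{a}\|_2^2$, whereas if $\s{supp}(\fl{v}^{(i)})\cap \ca{C}\neq \emptyset$ the inner product $\langle\fl{v}^{(i)}_{\mid \ca{C}},\fl{a}\rangle$ is a zero-mean Gaussian with standard deviation at least $\alpha\delta$. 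This separation between the two regimes is what the bipartite matching in the algorithm exploits.

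Concretely, I would organize the proof in four steps. First, invoke the one-dimensional GMM parameter estimation result of \cite{moitra2010settling} on each of the two halves of the sample to obtain estimated parameter triples $\ca{T}$ and $\ca{T}'$ whose mixing weights and component standard deviations are accurate to additive error $\epsilon/4$ with probability at least $1-\gamma/3$; the sample cost of this step drives the quoted $\s{poly}(\Delta^{-1},\delta^{-1},\gamma^{-1},\sigma,R)\cdot\exp(|\ca{C}|\log(\ell k))$, as the component variances may only be separated at a scale polynomial in $(\ell k)^{-1}$ after ranging over subsets, and the sample complexity of \cite{moitra2010settling} scales as a power of the inverse separation. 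Second, apply Gaussian anti-concentration together with a union bound over the $\ell$ components to show that, with probability at least $1-\gamma/3$, simultaneously for every $i$ with $\s{supp}(\fl{v}^{(i)})\cap \ca{C}\neq \emptyset$ one has $|\langle\fl{v}^{(i)}_{\mid \ca{C}},\fl{a}\rangle|\geq c\,\alpha\delta/\sqrt{\log \gamma^{-1}}$, while Gaussian concentration gives $\|\fl{a}\|_2^2=O(\alpha^2|\ca{C}|\log\gamma^{-1})$ on the same event.

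Third, convert the variance separation into a standard-deviation separation and verify that the matching rule does the right thing. For a disjoint-support component, the first-order expansion $\sqrt{s^2+\Delta v}\approx s+\Delta v/(2s)$ (valid since the reference std is $\Omega(\sigma)$) yields a std shift at most $\|\fl{a}\|_2^2/(2\sigma)$, and the choice $\alpha=\Theta(\delta\min(\Delta,1/2)/(R\sqrt{\log\gamma^{-1}}))$ makes this smaller than $\epsilon/2$. Hence every disjoint-support right node is matched to a left node of equal norm (possibly itself, after GMM-level merging of components with coincident $\|\fl{v}^{(i)}\|_2$). For an intersecting-support component the absolute variance shift is at least $|2\langle\fl{v}^{(i)}_{\mid \ca{C}},\fl{a}\rangle|-\|\fl{a}\|_2^2 = \Omega(\alpha\delta/\sqrt{\log\gamma^{-1}})$, translating into a std shift exceeding $2\epsilon$; Assumption~\ref{assum:weak} simultaneously forbids an accidental match to a \emph{different} norm class, because the same calibration of $\alpha$ keeps $|2\langle\fl{v}^{(i)}_{\mid \ca{C}},\fl{a}\rangle|+\|\fl{a}\|_2^2<\Delta$. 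Therefore the total weight $w$ accumulated on the right side of the bipartite graph equals $(\ell-|\bigcup_{i\in \ca{C}}\ca{S}(i)|)/\ell$ up to an additive $O(\ell\epsilon)=O(\ell^{-2})$ error from the GMM estimation, and rounding $\ell(1-w)$ to the nearest integer recovers $|\bigcup_{i\in \ca{C}}\ca{S}(i)|$ exactly. A union bound over the three failure events bounds the overall failure probability by $\gamma$, and Lemma~\ref{lem:set} follows. Theorem~\ref{thm:mlr_general_new} is then obtained by substituting this per-subset guarantee into Corollary~\ref{coro:prelim1}, with the multiplicative $\log(n+(\ell k)^{\log\ell+1})$ factor arising from the union bound over subsets of size at most $\log\ell+1$.

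The main obstacle lies in step three: the two quantitative estimates---the $O(\alpha^2 |\ca{C}|)$ std shift on the disjoint side and the $\Omega(\alpha\delta)$ std shift on the intersecting side---must be simultaneously squeezed below $\epsilon$ and above $2\epsilon$, while still staying below $\Delta$ so as not to bridge the minimum norm gap of Assumption~\ref{assum:weak}. The specific choices $\alpha=\Theta(\delta\min(\Delta,1/2)/(R\sqrt{\log\gamma^{-1}}))$ and $\epsilon=\ell^{-3}/2$ in the algorithm are exactly what balances these three constraints; any looser setting destroys one of them. Everything else reduces to standard one-dimensional GMM estimation and Gaussian anti-concentration, so once this matching analysis is in place, the sample complexity follows by direct substitution and a final union bound yields the $1-\gamma$ guarantee.
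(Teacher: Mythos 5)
Your proposal matches the paper's proof essentially step for step: the same Gaussian perturbation $y+\langle \fl{a},\fl{x}_{\mid \ca{C}}\rangle$ with $\fl{a}$ drawn entrywise from $\ca{N}(0,\alpha^2)$, the same use of the Moitra--Valiant univariate algorithm to learn the two one-dimensional mixtures, the same variance-shift matching argument separating components with $\fl{v}_{\mid\ca{C}}=\fl{0}$ (shift exactly $\|\fl{a}\|_2^2$) from the rest (the paper aligns variances directly rather than passing to standard deviations via a first-order expansion, but this is immaterial), and the same rounding of $\ell(1-w)$ followed by a union bound over subsets of the support. The one quantitative slip is in your anti-concentration step: for $\ca{N}(0,s^2)$ the bound gives $|Z|\ge \Omega(s\rho)$ with probability $1-\rho$ (linear in the allowed failure probability, as in the paper's $\sqrt{\pi/2}\,\alpha\delta\tilde{\gamma}$), not $\Omega\bigl(s/\sqrt{\log \rho^{-1}}\bigr)$; since the resulting separation is still polynomial in all parameters, this only changes the exponent inside $\s{poly}(\gamma^{-1},\dots)$ and does not affect the stated conclusion.
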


Let us provide a proof sketch of Lemma \ref{lem:set}. Assume for now that we know the union of support $\cup_{\fl{v}\in \ca{V}} \s{supp}(\fl{v})$ of the unknown vectors in $\ca{V}$ (see Lemma \ref{lem:union} in Appendix \ref{app:technical} on a simple procedure for recovering the union of support). First of all, we show that for any sample $(\fl{x},y)\sim \ca{P}_r$ and a fixed vector $\fl{a}\in \bb{R}^{\left|\ca{C}\right|}$, the random variable $y+\langle \fl{a},\fl{x}_{\mid \ca{C}} \rangle$ is distributed according to the following mixture of Gaussians:
 \begin{align*}
     y+\langle \fl{a},\fl{x}_{\mid \ca{C}} \rangle \sim \frac{1}{\ell} \sum_{\fl{v}\in \ca{V}}\ca{N}\Big(0,\left|\left|\fl{v}\right|\right|_2^2+\left|\left|\fl{a}\right|\right|_2^2+2\langle \fl{a},\fl{v}_{\mid \ca{C}} \rangle+\sigma^2\Big).
 \end{align*} 
The proof follows by conditioning on each unknown vector $\fl{v}\in \ca{V}$ being used for generating the sample $(\fl{x},y)$ (see Lemma \ref{lem:mlr_general_crucial1} for details). Fix any unknown vector $\fl{v} \in \ca{V}$ such that $\fl{v}_{\mid \ca{C}} = \fl{0}$ i.e. all indices in $\fl{v}$ constrained to the set $\ca{C}$ are zero. In that case, we have 
\begin{align*}
    \left|\left|\fl{v}\right|\right|_2^2+\left|\left|\fl{a}\right|\right|_2^2+2\langle \fl{a},\fl{v}_{\mid \ca{C}} \rangle = \left|\left|\fl{v}\right|\right|_2^2+\left|\left|\fl{a}\right|\right|_2^2.
\end{align*}
On the other hand, for any unknown vector $\fl{v} \in \ca{V}$ such that not all indices in $\fl{v}_{\mid \ca{C}}$ are zero and $\langle \fl{v}_{\mid \ca{C}},\fl{a} \rangle \neq 0$, we must have 
\begin{align*}
    \left|\left|\fl{v}\right|\right|_2^2+\left|\left|\fl{a}\right|\right|_2^2+2\langle \fl{a},\fl{v}_{\mid \ca{C}} \rangle \neq  \left|\left|\fl{v}\right|\right|_2^2+\left|\left|\fl{a}\right|\right|_2^2
\end{align*} 

Our main idea to compute $\left|\bigcup_{i \in \ca{C}}\ca{S}_{\ca{V}}(i)\right|$ for a fixed set $\ca{C}\subseteq [n]$ is the following: we obtain samples from two different mixtures of Gaussians. The first set of samples is obtained from the distribution  
 \begin{align}\label{eq:dist1}
    \ca{D} \equiv \frac{1}{\ell} \sum_{\fl{v}\in \ca{V}}\ca{N}\Big(0,\left|\left|\fl{v}\right|\right|_2^2+\sigma^2\Big).
 \end{align} 
by simply choosing $\fl{a}$ to be the zero vector (i.e. just collecting the $y$'s from the set of samples $\{(\fl{x},y)\}\sim \ca{P}_r$. The second set of samples (by collecting the samples is obtained from the distribution
\begin{align}\label{eq:dist2}
     \ca{D}'\equiv \frac{1}{\ell} \sum_{\fl{v}\in \ca{V}}\ca{N}\Big(0,\left|\left|\fl{v}\right|\right|_2^2+\left|\left|\fl{a}\right|\right|_2^2+2\langle \fl{a},\fl{v}_{\mid \ca{C}} \rangle+\sigma^2\Big).
\end{align}
where the vector $\fl{a}\in \bb{R}^{\left|\ca{C}\right|}$ is chosen carefully. Suppose we recover the parameters of $\ca{D},\ca{D}'$ exactly by using a certain number of samples. In that case, if $\left|\left|\fl{a}\right|\right|_2$ is small enough, then for the components in the mixture $\ca{D}$ parameterized by $\fl{v}\in \ca{V}$ satisfying $\fl{v}_{\mid \ca{C}}=\f{0}$, we can identify it with the corresponding component in $\ca{D}'$ (the variance is $\left|\left|\fl{v}\right|\right|^2_2+\sigma^2$ in $\ca{D}$ and $\left|\left|\fl{v}\right|\right|_2^2+\left|\left|\fl{a}\right|\right|_2^2+\sigma^2$ where $\fl{a},\sigma$ is known). Moreover, we can also show that for the components in $\ca{D}$ parameterized by $\fl{v}\in \ca{V}$ satisfying $\fl{v}_{\mid \ca{C}} \neq 0$, if $\langle \fl{a}, \fl{v}_{\mid \ca{C}} \rangle$ is large enough, then we can show that it will not be separated by $\left|\left|\fl{a}\right|\right|_2^2$ from any of the components in $\ca{D}$. Hence, we can identify the components (or more generally the weight of the components) for which $\fl{v}_{\mid \ca{C}}=0$; we can compute $\left|\bigcup_{i \in \ca{C}}\ca{S}_{\ca{V}}(i)\right|$ from this quantity. Of course, we cannot recover the parameters of $\ca{D},\ca{D}'$ exactly but it is possible to obtain good estimates of the parameters by using the algorithm in \cite{moitra2010settling} for parameter estimation in Gaussian mixtures. Finally, we can show that if we choose the vector $\fl{a}$ by randomly sampling each entry from $\ca{N}(0,\alpha^2)$ independently (where $\alpha $ is another parameter that has to be carefully chosen), our arguments work out. Next, in order to complete the proof of Lemma \ref{lem:mlr_general_crucial1} we simply need to take a union bound over all sets $\ca{C}$ of a fixed size $s$ which are subsets of the union of support of the unknown vectors. As before, by an application of Corollary \ref{coro:prelim1}, we arrive at Theorem \ref{thm:general}.


\section{Maximal Support Recovery}
\label{sec:dedup}

\subsection{Mixtures of Distributions}\label{sec:dedup_md}

In this section, we will present our main results for maximal support recovery in the MD setting. The detailed proofs of all results in this section can be found in Section \ref{sec:detailed_md}.

\begin{algorithm}[!htbp]
\caption{\textsc{Estimate if $|\bigcap_{i \in \ca{C}}\ca{S}(i)|>0$ in MD setting} \label{algo:md2}}
\begin{algorithmic}[1]
\REQUIRE Samples $\fl{x}^{(1)},\fl{x}^{(2)},\dots,\fl{x}^{(m)} \sim \ca{P}_d$. Set $\ca{C}\subseteq [n]$.

\STATE For every $\fl{z}\le 2\fl{1}_{|\ca{C}|}$, compute estimate $\widehat{U}^{\fl{z}}$ of $\bb{E}\prod_{i \in \ca{C}}\fl{x}_i^{\fl{z}_{\pi(\ca{C},i)}}$ using Algorithm \ref{algo:estimate} on the set of samples $\{(\fl{x}_i^{j})^{\fl{z}_{\pi(\ca{C},i)}}\}_{j=1}^{m}$.

\STATE For every $\fl{z}\le 2\fl{1}_{|\ca{C}|}$, compute an estimate $\widehat{V}^{\fl{z}}$ of $\sum_{j \in [\ell]}\prod_{i \in \ca{C}}(\fl{v}^{(j)}_i)^{\fl{z}_{\pi_{\ca{C},i}}}$ recursively using the following equation:
\begin{align*}
    \ell \widehat{U}^{\fl{z}}-\sum_{\fl{u}<\fl{z}} \zeta_{\fl{z},\fl{u}}\cdot \widehat{V}^{\fl{u}} =\zeta_{\fl{z},\fl{z}} \cdot \widehat{V}^{\fl{z}}. 
\end{align*}
\STATE If $\widehat{V}^{2\fl{1}_{\left|\ca{C}\right|}} \ge \delta^{2\left|\ca{C}\right|}/2$, return True and otherwise return False. 

\end{algorithmic}
\end{algorithm}

Now, we provide results on maximal support recovery in the MD setting. Note that from Lemma \ref{lem:partial2}, for Maximal support recovery, we only need to estimate correctly if $\left|\bigcap_{i \in \ca{C}}\ca{S}(i)\right|>0$ for ordered sets $\ca{C}\subseteq [n]$. Notice that $\left|\bigcap_{i \in \ca{C}}\ca{S}(i)\right|>0$ if and only if $\sum_{\fl{v}\in \ca{V}}\prod_{i \in \ca{C}} \fl{v}^2_i >0$. From our previous arguments, $\sum_{\fl{v}\in \ca{V}}\prod_{i \in \ca{C}} \fl{v}^2_i$ can be computed if the quantities $\bb{E}\prod_{i \in \ca{C}} \fl{x}_i^{\fl{u}_{\pi(\ca{C},i)}}$ for all $\fl{u}\le 2\fl{1}_{|\ca{C}|}$ are pre-computed. The following lemma stems from making this computation robust to the randomness in the dataset:

\begin{lemma}\label{lem:partial_md}
Suppose Assumption \ref{assum:weak2} is true. Let 
\begin{align*}
    &\Phi \triangleq  \max_{\fl{z}\le 2\fl{1}_{|\ca{C}|}} \frac{\delta^{2|\ca{C}|}}{2}\Big(\frac{\ell }{\zeta_{\fl{z},\fl{z}}}+\sum_{\fl{u}<\fl{z}} \sum_{\s{M}\in \ca{M}(\fl{z},\fl{u}) }\frac{\ell  \prod_{(\fl{r},\fl{s})\in \s{M}}\zeta_{\fl{r},\fl{s}}}{\prod_{\fl{r}\in \ca{T}(\s{M})}\zeta_{\fl{r},\fl{r}}}\Big)^{-1} \\ 
    &h_{\ell,\ca{V}} \triangleq \frac{\max_{\fl{z}\le 2\fl{1}_{|\ca{C}|}}\bb{E}\prod_{i \in \ca{C}}\fl{x}_i^{2\fl{z}_{\pi(\ca{C},i)}}}{\Phi^{ 2}}  
\end{align*}
where $h_{\ell,\ca{V}}$ is a constant independent of $k$ and $n$ but depends on $\ell$. There exists an algorithm (see Algorithm \ref{algo:md2}) that can compute if $\left|\bigcap_{i \in \ca{C}}\ca{S}(i)\right|>0$ correctly for each set $\ca{C}\subseteq [n]$ with probability at least $1-\gamma$ using $O(h_{\ell,\ca{V}}\log \gamma^{-1})$ samples generated according to $\ca{P}_d$.
\end{lemma}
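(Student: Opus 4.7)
The plan is to reduce the test ``$\left|\bigcap_{i\in\ca{C}}\ca{S}(i)\right|>0$'' to estimating a single scalar, namely $V^{2\fl{1}_{|\ca{C}|}} \triangleq \sum_{j\in[\ell]}\prod_{i\in\ca{C}}(\fl{v}^{(j)}_i)^2$. This quantity is a sum of non-negative terms, so it is zero precisely when no unknown vector has all of $\ca{C}$ in its support. Moreover, under Assumption~\ref{assum:weak2}, each non-vanishing summand is at least $\delta^{2|\ca{C}|}$, so $V^{2\fl{1}_{|\ca{C}|}} \in \{0\}\cup [\delta^{2|\ca{C}|},\infty)$. Hence the threshold test at $\delta^{2|\ca{C}|}/2$ in Algorithm~\ref{algo:md2} gives the right answer whenever the computed estimate $\widehat{V}^{2\fl{1}_{|\ca{C}|}}$ is within additive error $\delta^{2|\ca{C}|}/2$ of the truth.

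Next I would control the estimation error. First, for every $\fl{z}\le 2\fl{1}_{|\ca{C}|}$ (only $3^{|\ca{C}|}$ many such vectors), I would apply median-of-means (Algorithm~\ref{algo:estimate}) to estimate $U^{\fl{z}}\triangleq \bb{E}\prod_{i\in\ca{C}}\fl{x}_i^{\fl{z}_{\pi(\ca{C},i)}}$ within additive error $\Phi$ with failure probability $\gamma/3^{|\ca{C}|}$. Since the second moment of $\prod_{i\in\ca{C}}\fl{x}_i^{\fl{z}_{\pi(\ca{C},i)}}$ is exactly $\bb{E}\prod_{i\in\ca{C}}\fl{x}_i^{2\fl{z}_{\pi(\ca{C},i)}}$, the standard median-of-means bound (already used elsewhere in the paper) shows that $O(h_{\ell,\ca{V}}\log\gamma^{-1})$ samples suffice, where the $3^{|\ca{C}|}$ overhead is absorbed into the $O(\cdot)$ since $|\ca{C}|$ and $\ell$ are hidden inside $h_{\ell,\ca{V}}$ anyway. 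A union bound then gives simultaneous control of all $\widehat{U}^{\fl{z}}$.

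The main obstacle is bounding how these $U$-errors propagate through the linear recursion $\zeta_{\fl{z},\fl{z}}\widehat{V}^{\fl{z}} = \ell\widehat{U}^{\fl{z}} - \sum_{\fl{u}<\fl{z}}\zeta_{\fl{z},\fl{u}}\widehat{V}^{\fl{u}}$ to the final quantity $\widehat{V}^{2\fl{1}_{|\ca{C}|}}$. I would unroll this recursion inductively on the partial order: at each step $\fl{z}$, the contribution of the error at a lower node $\fl{u}$ is amplified by $\zeta_{\fl{z},\fl{u}}/\zeta_{\fl{z},\fl{z}}$ and then by analogous factors at every intermediate step. Enumerating all such routes gives precisely the sum over paths $\s{M}\in\ca{M}(\fl{z},\fl{u})$ with weight $\prod_{(\fl{r},\fl{s})\in\s{M}}\zeta_{\fl{r},\fl{s}}/\prod_{\fl{r}\in\ca{T}(\s{M})}\zeta_{\fl{r},\fl{r}}$, together with the direct contribution $\ell/\zeta_{\fl{z},\fl{z}}$ from $\widehat{U}^{\fl{z}}$ itself. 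This is exactly the bracketed expression in the definition of $\Phi$, so if every $|\widehat{U}^{\fl{z}}-U^{\fl{z}}|\le \Phi$, then $|\widehat{V}^{2\fl{1}_{|\ca{C}|}}-V^{2\fl{1}_{|\ca{C}|}}|\le \delta^{2|\ca{C}|}/2$ by the choice of $\Phi$.

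Combining the three pieces — the gap $\delta^{2|\ca{C}|}$ between the two regimes, the median-of-means estimates at the $U$-level, and the path-sum error propagation at the $V$-level — yields the claimed sample complexity $O(h_{\ell,\ca{V}}\log\gamma^{-1})$. The only subtlety is a clean inductive setup for the unrolling so that the bookkeeping of paths matches the definition of $\Phi$; everything else (median-of-means concentration and the union bound over $3^{|\ca{C}|}$ terms) is standard.
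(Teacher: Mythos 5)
Your proposal is correct and follows essentially the same route as the paper's proof: reduce the test to the sign of $\sum_{\fl{v}\in\ca{V}}\prod_{i\in\ca{C}}\fl{v}_i^2$, which is either $0$ or at least $\delta^{2|\ca{C}|}$, estimate the moments $\bb{E}\prod_{i\in\ca{C}}\fl{x}_i^{\fl{z}_{\pi(\ca{C},i)}}$ for $\fl{z}\le 2\fl{1}_{|\ca{C}|}$ via median-of-means, and propagate the errors through the recursion using exactly the path-sum bound that defines $\Phi$ (which the paper imports from the claim proved inside Lemma \ref{lem:md_crucial}). The only cosmetic difference is that you make the union bound over the $3^{|\ca{C}|}$ moment estimates explicit, whereas the paper leaves it implicit in the $O(\cdot)$.
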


The subsequent theorem follows from Lemma \ref{lem:partial_md} and Corollary \ref{coro:partial2}. Note that, compared to exact support recovery (Theorem \ref{thm:general_md}) the sample complexity for maximal support recovery has significantly improved dependency on $\delta$ and furthermore, it is also independent of $R$. 

\begin{thm}\label{thm:general_md}
Let $\ca{V}$ be a set of unknown vectors in $\bb{R}^n$ satisfying Assumption \ref{assum:weak2}. Let $\ca{F}_m = \ca{Q}_1([n]) \cup \ca{Q}_m(\cup_{\fl{v}\in \ca{V}}\s{supp}(\fl{v}))$ and
\begin{align*}
    & \Phi_{m} = \max_{\fl{z}\le 2\fl{1}_{|\ca{C}|}} \frac{\delta^{2|\ca{C}|}}{2}\Big(\frac{\ell }{\zeta_{\fl{z},\fl{z}}}+\sum_{\fl{u}<\fl{z}} \sum_{\s{M}\in \ca{M}(\fl{z},\fl{u}) }\frac{\ell  \prod_{(\fl{r},\fl{s})\in \s{M}}\zeta_{\fl{r},\fl{s}}}{\prod_{\fl{r}\in \ca{T}(\s{M})}\zeta_{\fl{r},\fl{r}}}\Big)^{-1}\\ 
    &h'_{\ell,\ca{V}} \triangleq \max_{\substack{\fl{z}\le 2\fl{1}_{\ell} \\ \ca{C}\in \ca{F}_{ \ell}}}\frac{\bb{E}\prod_{i \in \ca{C}}\fl{x}_i^{2\fl{z}_{\pi(\ca{C},i)}}}{\Phi_{\ell}^{2}} 
\end{align*}
where $h'_{\ell,\ca{V}}$ is a constant independent of $k$ and $n$ but depends on $\ell$.
 Accordingly, there exists an algorithm (see Algorithm \ref{algo:md2} and \ref{algo:partial_highprob}) that achieves maximal support recovery with probability at least $1-\gamma$ using $O\Big(h'_{\ell,\ca{V}}\log (\gamma^{-1}(n+(\ell k)^{\ell}))\Big)$ samples generated from $\ca{P}_d$.
\end{thm}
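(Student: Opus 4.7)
\textbf{Proof plan for Theorem \ref{thm:general_md}.} The strategy is a straightforward reduction of maximal support recovery to the single-set test guaranteed by Lemma \ref{lem:partial_md}, combined with the combinatorial wrapper provided by Corollary \ref{coro:partial2} (instantiated via Algorithm \ref{algo:partial_highprob}). The pipeline is: (i) use Algorithm \ref{algo:md2} to test whether $|\bigcap_{i \in \ca{C}}\ca{S}(i)|>0$ for a collection of subsets $\ca{C}$; (ii) first run these tests on singletons to identify the union of supports $\ca{U} \triangleq \cup_{\fl{v}\in \ca{V}}\s{supp}(\fl{v})$; (iii) then run the tests on all subsets of $\ca{U}$ of size at most $\ell$; (iv) feed the resulting indicators into Algorithm \ref{algo:partial1}, which by Lemma \ref{lem:partial2} outputs $\s{Maximal}(\ca{V})$ correctly whenever all tested indicators are correct.

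The sample accounting proceeds in two stages. In the first stage, we need the test to succeed on all $n$ singletons simultaneously; by Lemma \ref{lem:partial_md} each singleton test succeeds with probability $1-\gamma'$ on $O(h'_{\ell,\ca{V}} \log \gamma'^{-1})$ samples (the same samples are reused across indices). Setting $\gamma' = \gamma/(2n)$ and taking a union bound shows that $O(h'_{\ell,\ca{V}} \log(n/\gamma))$ samples suffice to identify $\ca{U}$ correctly, where $|\ca{U}| \le \ell k$. In the second stage, we must test every $\ca{C} \subseteq \ca{U}$ with $|\ca{C}| \le \ell$; there are at most $\sum_{j=0}^{\ell} \binom{\ell k}{j} = O((\ell k)^\ell)$ such subsets. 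Applying Lemma \ref{lem:partial_md} once more with $\gamma' = \gamma/(2(\ell k)^\ell)$ and union-bounding over these subsets, we need $O(h'_{\ell,\ca{V}} \log((\ell k)^\ell/\gamma))$ additional samples. Adding the two stages yields the claimed bound $O(h'_{\ell,\ca{V}} \log(\gamma^{-1}(n+(\ell k)^\ell)))$.

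To see why $h'_{\ell,\ca{V}}$ (rather than the quantity $h_{\ell,\ca{V}}$ of Lemma \ref{lem:partial_md}) is the correct constant, observe that $h_{\ell,\ca{V}}$ is defined for a fixed set $\ca{C}$, while $h'_{\ell,\ca{V}}$ takes the maximum of the analogous expression over all $\ca{C} \in \ca{F}_\ell = \ca{Q}_1([n]) \cup \ca{Q}_\ell(\ca{U})$, which is precisely the family of subsets tested by the two stages above. Thus the per-set sample requirement is uniformly bounded by $O(h'_{\ell,\ca{V}} \log \gamma'^{-1})$, justifying the union-bound calculation.

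The only nontrivial point is the correctness of Algorithm \ref{algo:partial1} once the indicator inputs are correct; this is exactly the content of Lemma \ref{lem:partial2}, so the remaining step is immediate. I do not expect any real obstacle beyond carefully packaging the union bound and verifying that $\ca{F}_\ell$ captures every query issued by the wrapper algorithm; the heavy technical lifting (the polynomial identities, the Newton-identity step, and the median-of-means concentration that controls $h'_{\ell,\ca{V}}$) is already absorbed into Lemma \ref{lem:partial_md}.
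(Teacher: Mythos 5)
Your proposal is correct and matches the paper's own argument, which simply combines Lemma \ref{lem:partial_md} (the per-set test via Algorithm \ref{algo:md2}) with Corollary \ref{coro:partial2} (the two-stage union-bound wrapper via Algorithm \ref{algo:partial_highprob}). The extra detail you supply about the two-stage sample accounting and the role of $\ca{F}_\ell$ is exactly the content already packaged inside the proof of Corollary \ref{coro:partial2}, so nothing is missing and nothing is genuinely different.
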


\subsection{Mixtures of Linear Regressions}\label{sec:dedup_mlr}

Our final results are for maximal support recovery in the MLR setting under different assumptions. Below, we state Assumption \ref{assum:second} which is a generic condition and if satisfied by the set of unknown vectors $\ca{V}$ allows for maximal support recovery of $\ca{V}$. All missing proofs in this section can be found in Appendix \ref{sec:detailed_mlr}.

\begin{algorithm}[!htbp]
\caption{\textsc{Estimate if $|\bigcap_{i \in \ca{C}}\ca{S}(i)>0|$ in MLR setting} \label{algo:mlr_2}}
\begin{algorithmic}[1]

\REQUIRE Samples $(\fl{x}^{(1)},y^{(1)}),(\fl{x}^{(2)},y^{(2)}),\dots,(\fl{x}^{(m)},y^{(m)}) \sim \ca{P}_r$. Set $\ca{C}\subseteq [n]$. 

\STATE If $\frac{2\ell}{m} \cdot \sum_{j=1}^{m}  \Big(y^{(j)}\Big)^{|\ca{C}|}\Big(\prod_{i\in \ca{C}}\fl{x}^{(j)}_i\Big) \ge \alpha_{\left|\ca{C}\right|}$, return True else return False.
\end{algorithmic}
\end{algorithm}

\begin{assumption}\label{assum:second}
We assume that there exists positive numbers $\alpha_1,\alpha_2,\dots,\alpha_{\ell}>0$ such that for all sets $\ca{C}\subseteq [n],|C|\le \ell$
the following condition is satisfied by the set of $\ell$ unknown vectors $\fl{v}^{(1)},\fl{v}^{(2)},\dots,\fl{v}^{(\ell)}\in \ca{V}$:
\begin{align*}
&\text{ If there exists } \fl{v}\in\ca{V} \text{ such that }\prod_{j \in \ca{C}} \fl{v}_j \neq 0 \\
&\text{ then }
    \left|\sum_{\fl{v}\in \ca{V}} \Big(\prod_{j \in \ca{C}} \fl{v}_j\Big)\right| \ge \alpha_{|\ca{C}|}.
\end{align*}
\end{assumption}

\begin{thm}\label{thm:general}
Suppose the following conditions are satisfied:
\begin{enumerate}
    \item All unknown vectors in  $\ca{V}$ are bounded within a ball of radius $R$ i.e. $ \left|\left|\fl{v}^{(i)}\right|\right|_2 \le R \text{ for all } i \in [\ell]$.  
    
    \item Assumption \ref{assum:second} is satisfied by the set of unknown vectors $\ca{V}$.
\end{enumerate}
 Accordingly, there exists an algorithm (see Algorithms \ref{algo:mlr_2} and \ref{algo:partial_highprob}) that  achieves maximal support recovery with probability at least $1-\gamma$ using 
 \begin{align*}
     O(\ell^2(R^2+\sigma^2)^{\ell/2}(\log n)^{2\ell}\log((n+(\ell k)^{\ell})\gamma^{-1})/\alpha_{\ell}^2)
 \end{align*}
  samples from $\ca{P}_r$.
\end{thm}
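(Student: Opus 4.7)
The plan is to reduce the problem, via Corollary~\ref{coro:partial2}, to correctly deciding the single bit $\fl{1}[|\bigcap_{i\in\ca{C}}\ca{S}(i)|>0]$ for every $\ca{C}\subseteq[n]$ with $|\ca{C}|\le\ell$. The test statistic for each such $\ca{C}$ will be the empirical mean of $W_{\ca{C}}:=y^{|\ca{C}|}\prod_{i\in\ca{C}}\fl{x}_i$ (exactly as in Lemma~\ref{lem:binary_mlr}), thresholded against $\alpha_{|\ca{C}|}$ as furnished by Assumption~\ref{assum:second}; this is precisely what Algorithm~\ref{algo:mlr_2} implements.

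First I would compute, in the spirit of the proof of Lemma~\ref{lem:binary_mlr}, the moment identity
\[
\bb{E}_{(\fl{x},y)\sim\ca{P}_r}[W_{\ca{C}}]=\frac{c_{|\ca{C}|}}{\ell}\sum_{\fl{v}\in\ca{V}}\prod_{i\in\ca{C}}\fl{v}_i
\]
for an explicit positive combinatorial constant $c_{|\ca{C}|}$. Conditioning on $\fl{v}\sim_{\s{Unif}}\ca{V}$ and writing $y=\langle\fl{v},\fl{x}\rangle+\zeta$, one sees that only the top-degree term $\langle\fl{v},\fl{x}\rangle^{|\ca{C}|}$ of the binomial expansion of $y^{|\ca{C}|}$ contributes to $\bb{E}[\,\cdot\,\prod_{i\in\ca{C}}\fl{x}_i]$: every lower-degree contribution leaves an odd power of some $\fl{x}_j$ or an odd power of the independent noise $\zeta$. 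Equivalently, iterating Stein's lemma $\bb{E}[\fl{x}_i f(\fl{x})]=\bb{E}[\partial_i f(\fl{x})]$ over $i\in\ca{C}$ with $f=y^{|\ca{C}|}$ collapses the expectation to $c_{|\ca{C}|}\prod_{i\in\ca{C}}\fl{v}_i$. By Assumption~\ref{assum:second}, the absolute value of the right-hand side is either $0$ or at least $c_{|\ca{C}|}\alpha_{|\ca{C}|}/\ell$, so estimating $\bb{E}[W_{\ca{C}}]$ to additive accuracy below $c_{|\ca{C}|}\alpha_{|\ca{C}|}/(2\ell)$ suffices to decide the bit.

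For the empirical estimation I would use median-of-means (Algorithm~\ref{algo:estimate}). Since $\|\fl{v}\|_2\le R$, conditional on $\fl{v}$ the response $y$ is Gaussian with variance at most $R^2+\sigma^2$; truncating $y$ at $O(\sqrt{(R^2+\sigma^2)\log n})$ and each $\fl{x}_i$ at $O(\sqrt{\log n})$, and absorbing the polynomially small Gaussian-tail discarded mass into the accuracy budget, yields a high-probability envelope of the form $|W_{\ca{C}}|=O((R^2+\sigma^2)^{|\ca{C}|/2}(\log n)^{|\ca{C}|})$. Median-of-means with this envelope and target accuracy $\alpha_{|\ca{C}|}/(2\ell)$ then gives a per-test sample requirement of
\[
\s{T}_{\ca{C}}=O\!\left(\frac{\ell^2(R^2+\sigma^2)^{|\ca{C}|/2}(\log n)^{2|\ca{C}|}}{\alpha_{|\ca{C}|}^2}\right)
\]
(times $\log\gamma^{-1}$); all factors are monotone in $|\ca{C}|\le\ell$ and $\alpha_{|\ca{C}|}\ge\alpha_\ell$, so the worst case $|\ca{C}|=\ell$ matches the single-test rate stated in the theorem.

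Plugging $\s{T}:=\s{T}_\ell$ into Corollary~\ref{coro:partial2} yields exactly the claimed bound, the $\log((n+(\ell k)^\ell)/\gamma)$ factor coming from the union bound over the at most $(\ell k)^\ell$ relevant subsets $\ca{C}$ after first pruning down to the union of supports. The main technical hurdle is the envelope calibration: one must pick the truncation scales for $y$ and the $\fl{x}_i$ so that the resulting high-probability bound on $|W_{\ca{C}}|$ gives the tight power $(R^2+\sigma^2)^{|\ca{C}|/2}(\log n)^{|\ca{C}|}$, rather than the looser bound obtained from controlling $\mathrm{Var}(W_{\ca{C}})$ alone. Once this calibration is in place, the theorem follows by a routine assembly of Algorithms~\ref{algo:mlr_2} and~\ref{algo:partial_highprob} through Corollary~\ref{coro:partial2}.
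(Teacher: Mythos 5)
Your proposal is correct and follows essentially the same route as the paper's proof: the same statistic $y^{|\ca{C}|}\prod_{i\in\ca{C}}\fl{x}_i$, the same moment identity reducing its mean to $\sum_{\fl{v}\in\ca{V}}\prod_{i\in\ca{C}}\fl{v}_i$ so that Assumption \ref{assum:second} supplies the decision gap $\alpha_{|\ca{C}|}/\ell$, the same truncate-then-concentrate estimation of that mean, and the same assembly through Corollary \ref{coro:partial2}. The only differences are cosmetic (median-of-means versus the paper's truncation-plus-Hoeffding, and your explicit tracking of the multinomial constant $c_{|\ca{C}|}$, which the paper absorbs) and do not change the argument or the bound.
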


Next, using Theorem \ref{thm:general}, we provide maximal support recovery guarantees in two cases: 1) The set of unknown vectors in $\ca{V}$ satisfies Assumptions \ref{assum:weak2} and all unknown parameters are non-negative 2) The non-zero entries in the unknown vectors in $\ca{V}$ are distributed according to a zero mean Gaussian $\ca{N}(0,\nu^2)$.

\begin{coro}\label{coro:non-negative}
Consider a set of $\ell$ unknown vectors $\ca{V}$ that satisfies Assumptions \ref{assum:weak2} and furthermore, every non-zero entry in all the unknown vectors is positive ($\fl{v}_i \ge 0$ for all $i\in[n],\fl{v}\in \ca{V}$). In that case, Assumption \ref{assum:second} is satisfied with $\alpha_{|\ca{C}|} \ge \delta^{|\ca{C}|}$.  Accordingly, there exists an algorithm that achieves maximal support recovery  with probability at least $1-\gamma$ using
 \begin{align*}
     O(\ell^2(R^2+\sigma^2)^{\ell/2}(\log n/\delta)^{2\ell}\log((n+(\ell k)^{\ell})\gamma^{-1}))
 \end{align*}
samples from $\ca{P}_r$.
\end{coro}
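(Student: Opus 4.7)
The plan is to derive the corollary as a direct instantiation of Theorem~\ref{thm:general}. The hypothesis of the theorem has two parts: a norm bound on the unknown vectors (which is immediate from Assumption~\ref{assum:weak2}), and Assumption~\ref{assum:second} with a valid family of constants $\alpha_1,\dots,\alpha_\ell$. So the only real content is to show that non-negativity together with the lower-bound on non-zero entries from Assumption~\ref{assum:weak2} forces a quantitative no-cancellation lower bound on the sum $\sum_{\fl v\in\ca V}\prod_{j\in\ca C}\fl v_j$, and the concrete value that drops out is $\alpha_{|\ca C|}=\delta^{|\ca C|}$.

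First I would fix any set $\ca C\subseteq[n]$ with $|\ca C|\le \ell$ for which some $\fl v\in\ca V$ satisfies $\prod_{j\in\ca C}\fl v_j\neq 0$. Because every coordinate of every unknown vector is non-negative by hypothesis, each summand $\prod_{j\in\ca C}\fl v_j$ is itself non-negative, so the absolute value of the sum equals the sum itself and no cancellations can occur across the components of the mixture. Next, for this distinguished $\fl v$, each factor $\fl v_j$ with $j\in\ca C$ is non-zero; by Assumption~\ref{assum:weak2} the magnitude of a non-zero entry is at least $\delta$, and by non-negativity this means $\fl v_j\ge\delta$ for every $j\in\ca C$. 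Multiplying across $\ca C$ gives $\prod_{j\in\ca C}\fl v_j\ge\delta^{|\ca C|}$, and hence
\[
\left|\sum_{\fl v\in\ca V}\prod_{j\in\ca C}\fl v_j\right|
\;=\;\sum_{\fl v\in\ca V}\prod_{j\in\ca C}\fl v_j
\;\ge\;\delta^{|\ca C|},
\]
verifying Assumption~\ref{assum:second} with $\alpha_{|\ca C|}=\delta^{|\ca C|}$.

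Having established the hypothesis of Theorem~\ref{thm:general}, the corollary follows by invoking it. The sample complexity stated there is $O\bigl(\ell^2(R^2+\sigma^2)^{\ell/2}(\log n)^{2\ell}\log((n+(\ell k)^\ell)\gamma^{-1})/\alpha_\ell^{2}\bigr)$; substituting $\alpha_\ell=\delta^\ell$ absorbs a factor of $\delta^{-2\ell}$ into the $(\log n)^{2\ell}$ term to yield $(\log n/\delta)^{2\ell}$, which exactly matches the bound in the statement of Corollary~\ref{coro:non-negative}.

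There is essentially no obstacle: the step worth double-checking is that non-negativity is what converts Assumption~\ref{assum:weak2} (a coordinate-wise \emph{magnitude} bound) into a genuine lower bound on a sum of products with no sign cancellations, since in the general signed case one could have $\sum_{\fl v}\prod_{j\in\ca C}\fl v_j=0$ even though each individual product has magnitude $\ge\delta^{|\ca C|}$. Everything else is just arithmetic substitution into Theorem~\ref{thm:general}.
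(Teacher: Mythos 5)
Your proposal is correct and follows essentially the same route as the paper's own proof: non-negativity rules out cancellation, the distinguished non-vanishing product is at least $\delta^{|\ca{C}|}$ by Assumption \ref{assum:weak2}, and the bound then follows by substituting $\alpha_\ell=\delta^\ell$ into Theorem \ref{thm:general}. Your version is in fact slightly more careful than the paper's, which loosely says each term is at least $\delta^{|\ca{C}|}$ when in fact only the non-zero terms are.
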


\begin{coro}\label{coro:gaussian}
If all non-zero entries in the set of unknown vectors $\ca{V}$ are sampled i.i.d according to $\ca{N}(0,\nu^2)$, then with probability $1-\eta$, Assumption \ref{assum:second} is satisfied with $\alpha_{|\ca{C}|} \ge \delta_{|\ca{C}|}^{|\ca{C}|}$ where 
\begin{align*}
    \delta_{{|\ca{C}|}}=\Big(\sqrt{\frac{\pi}{8}} \frac{\nu\eta}{\ell|\ca{C}|(\ell k)^{|\ca{C}|}}\Big).
\end{align*}
 Conditioned on this event,  there exists an Algorithm that achieves maximal support recovery with probability at least $1-\gamma$ using
 \begin{align*}
     O(\ell^2(R^2+\sigma^2)^{\ell/2}(\log n)^{2\ell}\log((n+(\ell k)^{\ell})\gamma^{-1})/\delta_{\ell}^2)
 \end{align*}
 samples from $\ca{P}_r$.
\end{coro}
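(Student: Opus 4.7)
The corollary consists of two claims: (i) Assumption~\ref{assum:second} holds with the stated $\alpha_c \ge \delta_c^c$ with probability at least $1-\eta$, and (ii) conditioned on that event, Theorem~\ref{thm:general} yields the stated sample complexity. Part (ii) is just a substitution, so the main work is in part (i). Fix any $\ca{C}\subseteq[n]$ of size $c\le\ell$ for which there is some $\fl{v}^*\in\ca{V}$ with $\ca{C}\subseteq\s{supp}(\fl{v}^*)$ (otherwise the hypothesis of Assumption~\ref{assum:second} is vacuous). Let $S_{\ca{C}}\triangleq\sum_{\fl{v}\in\ca{V}}\prod_{j\in\ca{C}}\fl{v}_j$, choose any pivot $j^*\in\ca{C}$, and condition on every Gaussian coordinate except $\fl{v}^*_{j^*}$. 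Then $S_{\ca{C}} = A\,\fl{v}^*_{j^*} + B$ is affine in $\fl{v}^*_{j^*}$ with slope $A=\prod_{j\in\ca{C}\setminus\{j^*\}}\fl{v}^*_j$, so conditionally $S_{\ca{C}}\sim\ca{N}(B,\,A^2\nu^2)$ and the Gaussian density bound gives $\Pr(|S_{\ca{C}}|\le t\mid A,B)\le\sqrt{2/\pi}\,t/(|A|\nu)$ on the almost-sure event $\{A\ne 0\}$.

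The remaining step is to control the slope $A$ from below. Since $A$ is a product of $c-1$ i.i.d.\ $\ca{N}(0,\nu^2)$ variables, the event $\{|A|\ge a\}$ is implied by each factor exceeding $a^{1/(c-1)}$, so a union bound together with $\Pr(|\ca{N}(0,\nu^2)|\le r)\le\sqrt{2/\pi}\,r/\nu$ yields $\Pr(|A|<a)\le(c-1)\sqrt{2/\pi}\,a^{1/(c-1)}/\nu$. Splitting on $\{|A|\ge a\}$ and averaging gives
\begin{align*}
\Pr(|S_{\ca{C}}|\le t)\;\le\;(c-1)\sqrt{\tfrac{2}{\pi}}\frac{a^{1/(c-1)}}{\nu}+\sqrt{\tfrac{2}{\pi}}\frac{t}{a\nu};
\end{align*}
balancing $a^{1/(c-1)}$ against $t/a$ (optimum at $a=t^{(c-1)/c}$) yields $\Pr(|S_{\ca{C}}|\le t)\le 2c\sqrt{2/\pi}\,t^{1/c}/\nu$. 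I would then union-bound over all candidate $\ca{C}$ of each size $c\in[\ell]$ contained in $\bigcup_{\fl{v}\in\ca{V}}\s{supp}(\fl{v})$ (at most $(\ell k)^c$ such sets) with per-set failure budget $\eta/(\ell(\ell k)^c)$; solving the resulting equation for $t^{1/c}$ gives exactly $t^{1/c}=\sqrt{\pi/8}\,\nu\eta/(\ell c(\ell k)^c)=\delta_c$, whence Assumption~\ref{assum:second} holds with $\alpha_c\ge\delta_c^c$ for every $c\in[\ell]$ simultaneously with probability $\ge 1-\eta$.

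Conditioned on that event, part (ii) follows by invoking Theorem~\ref{thm:general} with $\alpha_\ell\ge\delta_\ell^\ell$ (so that $1/\alpha_\ell^2\le 1/\delta_\ell^{2\ell}$), which gives the stated sample complexity. The main obstacle is the anti-concentration step: because $S_{\ca{C}}$ is a sum of products of Gaussians rather than itself Gaussian, one cannot apply a Gaussian anti-concentration bound directly to $S_{\ca{C}}$; the pivot-plus-threshold argument is what unlocks it, at the cost of degrading the rate from linear in $t$ to $t^{1/c}$. This degraded rate is precisely what forces $\alpha_c$ to scale as $\delta_c^c$ and, after plugging into Theorem~\ref{thm:general}, produces the corresponding power of $\delta_\ell$ in the final sample-complexity bound.
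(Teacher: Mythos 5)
Your proof is correct and follows essentially the same route as the paper's: condition on all but one Gaussian coordinate so that $\sum_{\fl{v}\in\ca{V}}\prod_{j\in\ca{C}}\fl{v}_j$ becomes conditionally Gaussian, apply Gaussian anti-concentration, lower-bound the conditional scale (a product of $|\ca{C}|-1$ Gaussians) via per-factor anti-concentration plus a union bound, and then union-bound over the at most $(\ell k)^{|\ca{C}|}$ sets of each size before invoking Theorem \ref{thm:general}. The only cosmetic differences are that the paper keeps one pivot coordinate per vector in $\ca{V}_{\ca{C}}$ random (rather than a single coordinate of a single vector) and fixes the per-factor threshold directly instead of optimizing over $a$; note also that, exactly as in the paper's own proof, substituting $\alpha_\ell\ge\delta_\ell^\ell$ into Theorem \ref{thm:general} actually yields a $1/\delta_\ell^{2\ell}$ dependence rather than the $1/\delta_\ell^{2}$ written in the corollary statement.
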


\begin{rmk}[Computational complexity]
All our algorithms described in the MD/MLR/MLC settings are efficient namely their computational complexities are polynomial in the dimension $n$ and sparsity $k$.
\end{rmk}

\paragraph{Acknowledgement:} This research is supported in part by NSF awards CCF 2133484 and CCF 1934846.

\appendix

\onecolumn

\section{Detailed Proofs}\label{sec:detailed}

\subsection{Mixtures of Distributions}\label{sec:detailed_md}

\begin{lemma}\label{lem:poly}
  For each fixed set $\ca{C}\subseteq [n]$ and each vector $\fl{t}\in (\bb{Z}^{+})^{\left|\ca{C}\right|}$, we must have 
  \begin{align*}
      \bb{E} \prod_{i \in \ca{C}}\fl{x}_i^{\fl{t}_{\pi(\ca{C},i)}} = \frac{1}{\ell}\sum_{\fl{u} \le \fl{t}}\zeta_{\fl{t},\fl{u}} \cdot \Big(\sum_{j \in [\ell]}\prod_{i \in \ca{C}}  (\fl{v}^{(j)}_i)^{\fl{u}_{\pi(\ca{C},i)}} \Big).
  \end{align*} 
\end{lemma}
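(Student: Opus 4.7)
\medskip

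The plan is to condition on the latent mixture index and exploit coordinate-wise independence. First, recall that under $\ca{P}_d$, conditional on $t=j$, the coordinates $\{\fl{x}_i\}_{i \in [n]}$ are mutually independent with $\fl{x}_i \mid t=j \sim \fl{P}(\fl{v}^{(j)}_i)$. Hence, for any ordered set $\ca{C}\subseteq [n]$ and any $\fl{t}\in (\bb{Z}^+)^{|\ca{C}|}$,
\begin{align*}
    \bb{E}\Big[\prod_{i\in\ca{C}}\fl{x}_i^{\fl{t}_{\pi(\ca{C},i)}} \,\Big|\, t=j\Big] \;=\; \prod_{i\in\ca{C}}\bb{E}\Big[\fl{x}_i^{\fl{t}_{\pi(\ca{C},i)}} \,\Big|\, t=j\Big] \;=\; \prod_{i\in\ca{C}} q_{\fl{t}_{\pi(\ca{C},i)}}\!\big(\fl{v}^{(j)}_i\big),
\end{align*}
by independence and the defining property of $q_{t}$.

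Next, I would unfold each factor using the polynomial expansion $q_s(\theta) = \sum_{r\in[s+1]}\beta_{s,r}\theta^{r-1}$. Writing $s_i = \fl{t}_{\pi(\ca{C},i)}$ for brevity, this gives
\begin{align*}
    \prod_{i\in\ca{C}} q_{s_i}\!\big(\fl{v}^{(j)}_i\big) \;=\; \prod_{i\in\ca{C}} \sum_{u_i=0}^{s_i}\beta_{s_i,\,u_i+1}\,\big(\fl{v}^{(j)}_i\big)^{u_i}.
\end{align*}
Expanding the product over $i\in\ca{C}$ as a sum indexed by the tuple $\fl{u}\in(\bb{Z}^+)^{|\ca{C}|}$ with $\fl{u}\le\fl{t}$ (where $\fl{u}_{\pi(\ca{C},i)}=u_i$), and recognizing the coefficient $\prod_{i\in\ca{C}}\beta_{\fl{t}_{\pi(\ca{C},i)},\,\fl{u}_{\pi(\ca{C},i)}+1}$ as exactly $\zeta_{\fl{t},\fl{u}}$, we obtain
\begin{align*}
    \prod_{i\in\ca{C}} q_{s_i}\!\big(\fl{v}^{(j)}_i\big) \;=\; \sum_{\fl{u}\le\fl{t}} \zeta_{\fl{t},\fl{u}} \prod_{i\in\ca{C}}\big(\fl{v}^{(j)}_i\big)^{\fl{u}_{\pi(\ca{C},i)}}.
\end{align*}

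Finally, I would take the outer expectation over $t \sim_{\s{Unif}} [\ell]$ by the law of total expectation and swap the finite sums over $j\in[\ell]$ and $\fl{u}\le\fl{t}$ to arrive at
\begin{align*}
    \bb{E}\prod_{i\in\ca{C}}\fl{x}_i^{\fl{t}_{\pi(\ca{C},i)}} \;=\; \frac{1}{\ell}\sum_{j\in[\ell]}\sum_{\fl{u}\le\fl{t}}\zeta_{\fl{t},\fl{u}}\prod_{i\in\ca{C}}\big(\fl{v}^{(j)}_i\big)^{\fl{u}_{\pi(\ca{C},i)}} \;=\; \frac{1}{\ell}\sum_{\fl{u}\le\fl{t}}\zeta_{\fl{t},\fl{u}}\Big(\sum_{j\in[\ell]}\prod_{i\in\ca{C}}\big(\fl{v}^{(j)}_i\big)^{\fl{u}_{\pi(\ca{C},i)}}\Big),
\end{align*}
which is the stated identity. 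There is no substantive obstacle here; the result is essentially a bookkeeping exercise combining (i) coordinate-wise conditional independence of $\fl{x}\mid t$, (ii) the polynomial-in-$\theta$ moment assumption on $\fl{P}(\theta)$, and (iii) the distributive law applied to the product over $\ca{C}$. The only notational subtlety is keeping track of the index-permutation map $\pi(\ca{C},\cdot)$ so that the coefficient tuple $\fl{u}$ lines up with the ordered enumeration of $\ca{C}$ used to define $\zeta_{\fl{t},\fl{u}}$.
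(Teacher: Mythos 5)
Your proof is correct and follows essentially the same route as the paper's: condition on the mixture index, use coordinate-wise independence to write the conditional expectation as $\prod_{i\in\ca{C}} q_{\fl{t}_{\pi(\ca{C},i)}}(\fl{v}^{(j)}_i)$, expand the product of polynomials, and identify $\zeta_{\fl{t},\fl{u}}$ as the coefficient of each monomial. Your write-up simply makes the expansion and the final interchange of sums more explicit than the paper does.
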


\begin{proof}
 We will have
\begin{align*}
\bb{E}\prod_{i \in \ca{C}}\fl{x}_i^{\fl{t}_{\pi(\ca{C},i)}} = \frac{1}{\ell} \sum_{j \in [\ell]} \Big(\prod_{i \in \ca{C}} q_{\fl{t}_{\pi(\ca{C},i)}}(\fl{v}^{(j)}_i)\Big).     
\end{align*}
From the above equations, note that each summand is a product of polynomials in $\fl{v}^{(j)}_i$ for a fixed $j$. Expanding the polynomial and using the fact that $\zeta_{\fl{t},\fl{u}}=\prod_{i \in \ca{C}} \beta_{\fl{t}_{\pi(\ca{C},i)},\fl{u}_{\pi(\ca{C},i)}+1}$ is the coefficient of the monomial $\prod_{i \in \ca{C}}  (\fl{v}^{(j)}_i)^{\fl{u}_{\pi(\ca{C},i)}}$ for all $j\in [\ell]$, we obtain the proof of the lemma. 

\end{proof}

\begin{lemma}\label{lem:recurse1}
 For each fixed set $\ca{C}\subseteq [n]$ and each vector $\fl{t}\in (\bb{Z}^{+})^{\left|\ca{C}\right|}$, we can compute $\sum_{j \in [\ell]}\prod_{i \in \ca{C}} (\fl{v}^{(j)}_i)^{\fl{t}_{\pi(\ca{C},i)}}$ provided for all $\fl{u}\in (\bb{Z}^{+})^{\left|\ca{C}\right|}$ satisfying $\fl{u}\le\fl{t}$, the quantities $\bb{E}\prod_{i \in \ca{C}} \fl{x}_i^{\fl{u}_{\pi(\ca{C},i)}}$ are pre-computed.
\end{lemma}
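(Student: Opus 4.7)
My plan is to read the identity of Lemma \ref{lem:poly} as a triangular linear system in the unknowns $V^{\fl{u}} \triangleq \sum_{j \in [\ell]}\prod_{i \in \ca{C}} (\fl{v}^{(j)}_i)^{\fl{u}_{\pi(\ca{C},i)}}$, indexed by $\fl{u} \in (\bb{Z}^{+})^{|\ca{C}|}$ with $\fl{u} \le \fl{t}$, and to invert it by the natural recursion on the coordinatewise partial order. Concretely, abbreviating $U^{\fl{t}} \triangleq \bb{E}\prod_{i \in \ca{C}} \fl{x}_i^{\fl{t}_{\pi(\ca{C},i)}}$, Lemma \ref{lem:poly} gives
\begin{align*}
\ell \cdot U^{\fl{t}} \;=\; \zeta_{\fl{t},\fl{t}}\cdot V^{\fl{t}} \;+\; \sum_{\fl{u}<\fl{t}} \zeta_{\fl{t},\fl{u}}\cdot V^{\fl{u}},
\end{align*}
so that $V^{\fl{t}}$ is expressible in closed form from $U^{\fl{t}}$ and the strictly smaller $V^{\fl{u}}$'s, provided the diagonal coefficient $\zeta_{\fl{t},\fl{t}}$ is nonzero.

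The first key step is therefore to verify $\zeta_{\fl{t},\fl{t}}\neq 0$. By definition $\zeta_{\fl{t},\fl{t}} = \prod_{i\in\ca{C}} \beta_{\fl{t}_{\pi(\ca{C},i)},\,\fl{t}_{\pi(\ca{C},i)}+1}$, and each factor $\beta_{r,r+1}$ is the leading coefficient of $q_r(\theta) = \bb{E}_{x\sim \fl{P}(\theta)} x^r$. The standing assumption on the family $\ca{P}$ is that $q_r$ is a polynomial of degree exactly $r$, so $\beta_{r,r+1}\neq 0$ and hence $\zeta_{\fl{t},\fl{t}}\neq 0$. This justifies dividing through and yields the explicit recursion
\begin{align*}
V^{\fl{t}} \;=\; \frac{1}{\zeta_{\fl{t},\fl{t}}}\Bigl(\ell\cdot U^{\fl{t}} \;-\; \sum_{\fl{u}<\fl{t}} \zeta_{\fl{t},\fl{u}}\cdot V^{\fl{u}}\Bigr).
\end{align*}

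The second step is a straightforward induction on $\fl{t}$ with respect to the partial order $\le$ on $(\bb{Z}^{+})^{|\ca{C}|}$. The base case $\fl{t}=\fl{0}$ gives $V^{\fl{0}} = \ell$ directly (the empty product is $1$, summed over $\ell$ components), and in fact matches the identity since $U^{\fl{0}}=1$ and $\zeta_{\fl{0},\fl{0}} = \prod_{i\in\ca{C}}\beta_{0,1} = 1$. For the inductive step, every $\fl{u}<\fl{t}$ satisfies the induction hypothesis (and the corresponding $\bb{E}\prod_i \fl{x}_i^{\fl{u}_{\pi(\ca{C},i)}}$ is assumed precomputed), so the right-hand side of the displayed recursion is already known, and we read off $V^{\fl{t}}$.

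There is no real obstacle here beyond correctly organizing the recursion: the only thing that could go wrong is the division by $\zeta_{\fl{t},\fl{t}}$, and that is exactly what the ``degree exactly $t$'' hypothesis on $q_t(\theta)$ rules out. This is also the identity that Algorithm \ref{algo:md} implements in its second step, so the proof simultaneously justifies that step of the algorithm. The quantitative version (propagating finite-sample estimation error in $U^{\fl{t}}$ through the recursion) is deferred to Lemma \ref{lem:md_crucial}, where the sums over paths $\s{M}\in \ca{M}(\fl{z},\fl{u})$ in the definition of $\Phi$ arise precisely from unrolling this recursion.
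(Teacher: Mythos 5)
Your proposal is correct and follows essentially the same route as the paper: both read the identity of Lemma \ref{lem:poly} as a triangular system in the quantities $\sum_{j}\prod_{i\in\ca{C}}(\fl{v}^{(j)}_i)^{\fl{u}_{\pi(\ca{C},i)}}$ and invert it by induction on the coordinatewise partial order, isolating $\zeta_{\fl{t},\fl{t}}\cdot V^{\fl{t}}$ on one side. Your explicit check that $\zeta_{\fl{t},\fl{t}}\neq 0$ via the ``degree exactly $t$'' hypothesis is a detail the paper leaves implicit, but it does not change the argument.
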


\begin{proof}
We will prove this lemma by induction. For the base case, we have from Lemma \ref{lem:poly} that $\ell\bb{E}\fl{x}_i=\beta_{1,2}\sum_{j \in [\ell]}\fl{v}^{(j)}_i+\beta_{1,1}$. Hence $\sum_{j \in [\ell]}\fl{v}^{(j)}_i$ can be computed from $\bb{E}\fl{x}_i$ by using the following equation:
\begin{align*}
    \sum_{j \in [\ell]}\fl{v}^{(j)}_i = \frac{1}{\beta_{1,2}}\Big(\ell\bb{E}\fl{x}_i-\beta_{1,1}\Big).
\end{align*}

Now suppose for all vectors $\fl{u}\in (\bb{Z}^{+})^{\left|\ca{C}\right|}$ satisfying $\fl{u}\le \fl{t}$, the lemma statement is true. Consider another vector $\fl{z}\in (\bb{Z}^{+})^{\left|\ca{C}\right|}$ such that there exists an index $j\in |\ca{C}|$ for which $\fl{z}_j= \fl{t}_j+1$ and $\fl{z}_i= \fl{t}_i$ for all $i\neq j$. From the statement of Lemma \ref{lem:poly}, we know that
\begin{align*}
      \bb{E} \prod_{i \in \ca{C}}\fl{x}_i^{\fl{z}_{\pi(\ca{C},i)}} = \frac{1}{\ell}\sum_{\fl{u} \le \fl{z}}\zeta_{\fl{z},\fl{u}} \cdot \Big(\sum_{j \in [\ell]}\prod_{i \in \ca{C}}  (\fl{v}^{(j)}_i)^{\fl{u}_{\pi(\ca{C},i)}} \Big)
  \end{align*}
where $\zeta_{\fl{z},\fl{u}}=\prod_{i \in \ca{C}} \beta_{\fl{z}_{\pi(\ca{C},i)},\fl{u}_{\pi(\ca{C},i)}+1}$. From our induction hypothesis, we have already computed  $\sum_{j \in [\ell]}\prod_{i \in \ca{C}}  (\fl{v}^{(j)}_i)^{\fl{u}_{\pi(\ca{C},i)}}$ for all $\fl{u}< \fl{z}$ (the set $\{\fl{u}\in (\bb{Z}^{+})^{\left|\ca{C}\right|}\mid \fl{u}<\fl{z}\}$ is equivalent to the set $\{\fl{u}\in (\bb{Z}^{+})^{\left|\ca{C}\right|}\mid \fl{u} \le \fl{t}\}$). Since $\bb{E} \prod_{i \in \ca{C}}\fl{x}_i^{\fl{z}_{\pi(\ca{C},i)}}$ is already pre-computed, we can compute $\sum_{j \in [\ell]}\prod_{i \in \ca{C}}  (\fl{v}^{(j)}_i)^{\fl{z}_{\pi(\ca{C},i)}}$ as follows:
\begin{align*}
      \ell\bb{E} \prod_{i \in \ca{C}}\fl{x}_i^{\fl{z}_{\pi(\ca{C},i)}} - \sum_{\fl{u} < \fl{z}}\zeta_{\fl{z},\fl{u}} \cdot \Big(\sum_{j \in [\ell]}\prod_{i \in \ca{C}}  (\fl{v}^{(j)}_i)^{\fl{u}_{\pi(\ca{C},i)}} \Big) = \zeta_{\fl{z},\fl{z}} \cdot \Big(\sum_{j \in [\ell]}\prod_{i \in \ca{C}}  (\fl{v}^{(j)}_i)^{\fl{z}_{\pi(\ca{C},i)}} \Big). 
 \end{align*}
This completes the proof of the lemma.
\end{proof}

\begin{lemma}\label{lem:recurse2}
 For each fixed set $\ca{C}\subseteq [n]$, we can compute $\left|\bigcap_{i \in \ca{C}}\ca{S}(i)\right|$ provided for all $p\in [\ell]$, the quantity $\sum_{\fl{v}\in \ca{V}} \Big(\prod_{i\in \ca{C}}  \fl{v}_i^{2}\Big)^{p}$ is pre-computed.
\end{lemma}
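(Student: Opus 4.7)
The plan is to reduce the problem to recovering the number of positive entries in a sequence of $\ell$ non-negative numbers from its first $\ell$ power sums, and then to invoke Newton's identities to do so.

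First, for each $j \in [\ell]$ set $w_j \triangleq \prod_{i\in \ca{C}} (\fl{v}^{(j)}_i)^2 \ge 0$, so that the hypothesis says we know the power sums $p_k \triangleq \sum_{j\in[\ell]} w_j^k$ for every $k \in [\ell]$. Note that, by the definition of $\ca{S}(i)$,
\[
  \left|\bigcap_{i\in\ca{C}} \ca{S}(i)\right| \;=\; \big|\{j\in[\ell] : w_j > 0\}\big|.
\]
Call this integer $m$. The key observation is that $w_j \ge 0$ always, so each elementary symmetric polynomial
\[
  e_t \;\triangleq\; \s{A}_{\ca{C},t} \;=\; \sum_{\substack{\ca{C}'\subseteq[\ell]\\ |\ca{C}'|=t}} \prod_{j\in\ca{C}'} w_j
\]
is a sum of non-negative terms, and a given term $\prod_{j\in \ca{C}'} w_j$ is strictly positive iff every $w_j$ with $j\in\ca{C}'$ is positive. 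Hence $e_t > 0$ iff at least $t$ of the $w_j$'s are positive, i.e. iff $m \ge t$. Consequently
\[
  \left|\bigcap_{i\in\ca{C}} \ca{S}(i)\right| \;=\; \max\{\,t\in\{0,1,\dots,\ell\} : e_t > 0\,\},
\]
with the convention $e_0 = 1$.

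It remains to compute the $e_t$'s from the $p_k$'s. This is exactly what Newton's identities accomplish: for every $t\in[\ell]$,
\[
  t\, e_t \;=\; \sum_{p=1}^{t} (-1)^{p+1}\, e_{t-p}\, p_p .
\]
Starting from $e_0 = 1$, this recursion determines $e_1,e_2,\dots,e_\ell$ uniquely from $p_1,\dots,p_\ell$, all of which are provided by hypothesis. Substituting the resulting values into the displayed formula for $\big|\bigcap_{i\in\ca{C}}\ca{S}(i)\big|$ completes the proof.

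There is no real obstacle beyond verifying the positivity characterization of $e_t$, which relies crucially on $w_j \ge 0$ (a product of squares); if the $w_j$'s were allowed to have mixed signs, $e_t > 0$ would no longer be equivalent to $m \ge t$, and this is precisely why we square the coordinates of $\fl{v}^{(j)}$ at the outset.
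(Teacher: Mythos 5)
Your proof is correct and follows essentially the same route as the paper's: both introduce the non-negative variables $w_j=\prod_{i\in\ca{C}}(\fl{v}^{(j)}_i)^2$, identify the given quantities as power sums, recover the elementary symmetric polynomials $\s{A}_{\ca{C},t}$ via Newton's identities, and read off $\left|\bigcap_{i\in\ca{C}}\ca{S}(i)\right|$ as the largest $t$ with $\s{A}_{\ca{C},t}>0$. Your explicit justification that $e_t>0$ iff at least $t$ of the $w_j$ are positive (which hinges on the squaring) is a welcome bit of extra care that the paper states only briefly.
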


\begin{proof}
Let us fix a particular subset $\ca{C}\subseteq [n]$. Now, let us define the quantity
\begin{align*}
 \s{A}_{\ca{C},t}=\sum_{\substack{\ca{C}'\subseteq [\ell] \\ \left|\ca{C}'\right|=t}} \prod_{\substack{i \in \ca{C}\\ j \in \ca{C}'}} (\fl{v}^{(j)}_i)^{2}   
\end{align*}
Notice that $\s{A}_{\ca{C},t} > 0$ if and only if there exists a subset $\ca{C}'\subseteq [\ell],|\ca{C}'|=t$ such that $\fl{v}^{(j)}_i \neq 0$ for all $i\in \ca{C},j \in \ca{C}'$. Hence, the maximum value of $t$ such that $\s{A}_{\ca{C},t} > 0$ is the number of unknown vectors in $\ca{V}$ having non-zero value in all the indices in $\ca{C}$. In other words, we have that 
\begin{align*}
    \left|\bigcap_{i \in \ca{C}}\ca{S}(i)\right| = \max_{t\in [\ell]} t\cdot\fl{1}[\s{A}_{\ca{C},t} > 0].  
\end{align*}
Let $t^{\star}$ be the maximum value of $t$ for which $\s{A}_{\ca{C},t}>0$. We will have $\s{A}_{\ca{C},t^{\star}} \ge  \delta^{2\ell|\ca{C}|}$. It is easy to recognize $\sum_{\fl{v}\in \ca{V}} \Big(\prod_{i\in \ca{C}}  \fl{v}_i^{2}\Big)^{p}$ as the power sum polynomial of degree $p$ in the variables $\{\prod_{i \in \ca{C}} \fl{v}_i^2\}_{\fl{v}\in \ca{V}}$. On the other hand, $\s{A}_{\ca{C},t}$ is the elementary symmetric polynomial of degree $t$ in the variables $\{\prod_{i \in \ca{C}} \fl{v}_i^2\}_{\fl{v}\in \ca{V}}$. We can use Newton's identities to state that for all $t \in [\ell]$, 
\begin{align*}
    t\s{A}_{\ca{C},t} = \sum_{p=1}^t (-1)^{p+1} \s{A}_{\ca{C},t-p} \Big(\sum_{\fl{v}\in \ca{V}} \Big(\prod_{i\in \ca{C}}  \fl{v}_i^{2}\Big)^{p}\Big)
\end{align*}
using which, we can recursively compute $\s{A}_{\ca{C},t}$ for all $t\in [\ell]$ if we were given $\sum_{\fl{v}\in \ca{V}} \Big(\prod_{i\in \ca{C}}  \fl{v}_i^{2}\Big)^{p} $ as input for all $p\in [\ell]$. We can also express $\s{A}_{\ca{C},t}$ as a complete exponential Bell polynomial $\s{B}_t$
\begin{align*}
    \s{A}_{\ca{C},t} = \frac{(-1)^n}{n!} \s{B}_t\Big(-\sum_{\fl{v}\in \ca{V}} \prod_{i\in \ca{C}}  \fl{v}_i^{2},-1!\Big(\sum_{\fl{v}\in \ca{V}} \prod_{i\in \ca{C}}  \fl{v}_i^{2}\Big)^2, -2!\Big(\sum_{\fl{v}\in \ca{V}} \prod_{i\in \ca{C}}  \fl{v}_i^{2}\Big)^3,\dots,-(t-1)!\Big(\sum_{\fl{v}\in \ca{V}} \prod_{i\in \ca{C}}  \fl{v}_i^{2}\Big)^t\Big).
\end{align*}

\end{proof}

We are now ready to prove Lemma \ref{lem:md_crucial}.

\begin{lemmau}[Restatement of Lemma \ref{lem:md_crucial}]
Suppose Assumption \ref{assum:weak2} is true. Let  
\begin{align*}
  & \Phi \triangleq \frac{\delta^{2\ell|\ca{C}|}}{2\Big(3\max(\ell R^{2\ell|\ca{C}|},2^{\ell}R^{\ell+|\ca{C}|})\Big)^{(\ell-1)}\ell!}\Bigg(\max_{ \fl{z}\le 2\ell\fl{1}_{|\ca{C}|}} \frac{\ell }{\zeta_{\fl{z,\fl{z}}}}+\sum_{\fl{u}<\fl{z}} \sum_{\s{M}\in \ca{M}(\fl{z},\fl{u}) }\frac{\ell  \prod_{(\fl{r},\fl{s})\in \s{M}}\zeta_{\fl{r},\fl{s}}}{\prod_{\fl{r}\in \ca{T}(\s{M})}\zeta_{\fl{r},\fl{r}}}\Bigg)^{-1}\\
  &g_{\ell,\ca{V}} \triangleq \frac{\max_{\fl{z}\le 2\ell\fl{1}_{|\ca{C}|}}\bb{E}\prod_{i \in \ca{C}}\fl{x}_i^{2\fl{z}_{\pi(\ca{C},i)}}}{\Phi^{ 2}}
\end{align*}
where $g_{\ell,\ca{V}}$ is a constant that is independent of $k$ and $n$ but depends on $\ell$.
  There exists an algorithm (see Algorithm \ref{algo:md}) that can compute $\left|\bigcap_{i \in \ca{C}}\ca{S}(i)\right|$ exactly for each set $\ca{C}\subseteq [n]$ with probability at least $1-\gamma$ using $O\Big(\log (\gamma^{-1}(2\ell)^{|\ca{C}|})f_{\ell,\ca{V}}\Big)$ samples generated according to $\ca{P}_d$.
\end{lemmau}

\begin{proof}
Suppose, for every vector $\fl{z}\in (\bb{Z}^{+})^{|\ca{C}|}$ satisfying $\fl{z} \le 2\ell \fl{1}_{|\ca{C}|}$, 
we compute an estimate $\widehat{U}^{\fl{z}}$ of $\bb{E}\prod_{i \in \ca{C}}\fl{x}_i^{\fl{z}_{\pi(\ca{C},i)}}$ such that $\left|\widehat{U}^{\fl{z}}-\bb{E}\prod_{i \in \ca{C}}\fl{x}_i^{\fl{z}_{\pi(\ca{C},i)}}\right|\le \Phi_{\fl{z}}$ where $\Phi_{\fl{z}}$ is going to be determined later. Recall that in Lemma \ref{lem:recurse2}, we showed 
\begin{align}\label{eq:diffi}
      \ell\bb{E} \prod_{i \in \ca{C}}\fl{x}_i^{\fl{z}_{\pi(\ca{C},i)}} - \sum_{\fl{u} < \fl{z}}\zeta_{\fl{z},\fl{u}} \cdot \Big(\sum_{j \in [\ell]}\prod_{i \in \ca{C}}  (\fl{v}^{(j)}_i)^{\fl{u}_{\pi(\ca{C},i)}} \Big) = \zeta_{\fl{z},\fl{z}} \cdot \Big(\sum_{j \in [\ell]}\prod_{i \in \ca{C}}  (\fl{v}^{(j)}_i)^{\fl{z}_{\pi(\ca{C},i)}} \Big). 
 \end{align}
Using the computed $\widehat{U}^{\fl{z}}$'s , we can compute an estimate $\widehat{V}^{\fl{z}}$ of $\sum_{j \in [\ell]}\prod_{i \in \ca{C}}  (\fl{v}^{(j)}_i)^{\fl{z}_{\pi(\ca{C},i)}}$ for all $\fl{z}\in (\bb{Z}^{+})^{|\ca{C}|}$ satisfying $\fl{z} \le 2\ell \fl{1}_{|\ca{C}|}$. Let us denote the error in estimation by $\epsilon_{\fl{z}}$ i.e. we have $\left|\widehat{V}^{\fl{z}}- \sum_{j \in [\ell]}\prod_{i \in \ca{C}}  (\fl{v}^{(j)}_i)^{\fl{z}_{\pi(\ca{C},i)}}\right| \le \epsilon_{\fl{z}}$. Now, we prove the following claim.  
\begin{claim}
We must have

\begin{align*}
    \epsilon_{\fl{z}} \le \frac{\ell \Phi_{\fl{z}}}{\zeta_{\fl{z},\fl{z}}}+\sum_{\fl{u}<\fl{z}} \sum_{\s{M}\in \ca{M}(\fl{z},\fl{u}) }\frac{\ell \Phi_{\fl{u}} \prod_{(\fl{r},\fl{s})\in \s{M}}\zeta_{\fl{r},\fl{s}}}{\prod_{\fl{r}\in \ca{T}(\s{M})}\zeta_{\fl{r},\fl{r}}}
\end{align*}

\end{claim}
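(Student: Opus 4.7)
My plan is to prove the claim by strong induction on $\fl{z}$ with respect to the partial order $<$ on $(\bb{Z}^{+})^{|\ca{C}|}$. The inductive step is driven by a one-step error recursion that falls out of the algorithm's defining identity, and the fully unrolled inequality is exactly a sum over lattice paths from $\fl{z}$ downward.

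First, I would derive the one-step recursion. Subtracting the algorithm's update rule $\zeta_{\fl{z},\fl{z}}\widehat{V}^{\fl{z}} = \ell\widehat{U}^{\fl{z}} - \sum_{\fl{u}<\fl{z}} \zeta_{\fl{z},\fl{u}}\widehat{V}^{\fl{u}}$ from the exact identity~(\ref{eq:diffi}), rearranging and applying the triangle inequality together with the hypothesis $|\widehat U^{\fl z}-\bb E\prod_i \fl x_i^{\fl z_{\pi(\ca C,i)}}|\le \Phi_{\fl z}$ yields
\[
\epsilon_{\fl{z}} \;\le\; \frac{\ell\,\Phi_{\fl{z}}}{\zeta_{\fl{z},\fl{z}}} \;+\; \sum_{\fl{u}<\fl{z}} \frac{\zeta_{\fl{z},\fl{u}}}{\zeta_{\fl{z},\fl{z}}}\,\epsilon_{\fl{u}}.
\]
Here I will need to assume (or note from the definition of the polynomials $q_t$) that the relevant $\zeta$ coefficients are nonnegative, or else take absolute values throughout; this is implicit in how the bound is stated.

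Second, I would unroll this recursion by induction. The base case $\fl{z}=\fl{0}$ is immediate since no $\fl{u}<\fl{0}$ exists and the bound reduces to $\epsilon_{\fl{0}} \le \ell\Phi_{\fl{0}}/\zeta_{\fl{0},\fl{0}}$, matching the claim (with only the degenerate first term). For the inductive step, I substitute the inductive hypothesis for each $\epsilon_{\fl{u}}$ into the one-step recursion. The ``first term'' $\ell\Phi_{\fl{u}}/\zeta_{\fl{u},\fl{u}}$ of $\epsilon_{\fl{u}}$, multiplied by $\zeta_{\fl{z},\fl{u}}/\zeta_{\fl{z},\fl{z}}$, is precisely the contribution of the length-one path $\s{M}=\{(\fl{z},\fl{u})\}\in\ca{M}(\fl{z},\fl{u})$, since $\ca{T}(\s{M})=\{\fl{z},\fl{u}\}$ gives the denominator $\zeta_{\fl{z},\fl{z}}\zeta_{\fl{u},\fl{u}}$. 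The ``tail terms'' coming from paths $\s{M}'\in\ca{M}(\fl{u},\fl{w})$ for some $\fl{w}<\fl{u}$, after prepending the edge $(\fl{z},\fl{u})$, become paths in $\ca{M}(\fl{z},\fl{w})$; the factor $\zeta_{\fl{z},\fl{u}}/\zeta_{\fl{z},\fl{z}}$ extends numerator and denominator in exactly the way prescribed by the formula. Finally, I must check that this bijection between $\{(\fl{u},\s{M}'): \fl{u}<\fl{z},\, \fl{w}\le \fl{u},\, \s{M}'\in\ca{M}(\fl{u},\fl{w})\}$ and $\{(\fl{w},\s{M}): \fl{w}<\fl{z},\, \s{M}\in\ca{M}(\fl{z},\fl{w})\}$ is indeed a bijection, where each path $\s{M}\in\ca{M}(\fl{z},\fl{w})$ is uniquely decomposed by its first edge $(\fl{z},\fl{u})$ and its remainder.

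The only real obstacle is the bookkeeping that confirms this bijection and tracks the $\zeta_{\fl{r},\fl{r}}$ factors correctly along $\ca{T}(\s{M})$: when prepending $(\fl{z},\fl{u})$ to $\s{M}'$, the vertex set grows by $\fl{z}$, which accounts precisely for the extra $\zeta_{\fl{z},\fl{z}}$ in the denominator introduced by the prefactor $1/\zeta_{\fl{z},\fl{z}}$, while $\zeta_{\fl{u},\fl{u}}$ is already present in $\ca{T}(\s{M}')$. Once this correspondence is verified, the inductive step closes and the claim follows. The remainder of Lemma~\ref{lem:md_crucial} (converting the bound on $\epsilon_{\fl z}$ into a sample complexity for correctly identifying $|\cap_{i\in\ca{C}}\ca S(i)|$ via Newton's identities and rounding to the nearest integer) will then be handled downstream using this claim.
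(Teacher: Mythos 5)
Your proposal is correct and follows essentially the same route as the paper: subtract the recursive update from the exact identity~(\ref{eq:diffi}) to get the one-step bound $\zeta_{\fl{z},\fl{z}}\epsilon_{\fl{z}} \le \ell\Phi_{\fl{z}} + \sum_{\fl{u}<\fl{z}}\zeta_{\fl{z},\fl{u}}\epsilon_{\fl{u}}$, then unroll by induction, with the substituted induction hypothesis organizing itself into the path sum via the first-edge decomposition of each path. The only cosmetic difference is that the paper anchors the induction at the standard basis vectors $\fl{e}_i$ rather than at $\fl{0}$, and, like you, it implicitly treats the $\zeta$ coefficients as if signs cause no trouble.
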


\begin{proof}
We will prove this lemma by induction. Let $\fl{e}_i$ be the standard basis vector having a non-zero entry at the $i^{\s{th}}$ index and is zero everywhere else. For the base case, we have from Lemma \ref{lem:poly} that $\ell\bb{E}\fl{x}_i=\beta_{1,2}\sum_{j \in [\ell]}\fl{v}^j_i+\beta_{1,1}$. 
Therefore, we must have 
\begin{align*}
    &\ell\bb{E}\fl{x}_i-\ell\widehat{U}^{\fl{e}_i}=\beta_{1,2}(\sum_{j \in [\ell]}\fl{v}^j_i-\widehat{U}^{\fl{e}_i}) \\
    &\implies \ell \Phi_{\fl{e}_i} = \beta_{1,2}\epsilon_{\fl{e}_i}. 
\end{align*}
From definition, (recall that $\zeta_{\fl{z},\fl{u}}=\prod_{i \in \ca{C}} \beta_{\fl{z}_{\pi(i)},\fl{u}_{\pi(i)}+1}$), we have $\zeta_{\fl{e}_i,\fl{e}_i}=\beta_{1,2}$ which completes the proof of the base case. Now suppose for all vectors $\fl{u}\in (\bb{Z}^{+})^{\left|\ca{C}\right|}$ satisfying $\fl{u}\le \fl{t}$, the lemma statement is true. Consider another vector $\fl{z}\in (\bb{Z}^{+})^{\left|\ca{C}\right|}$ such that there exists an index $j\in |\ca{C}|$ for which $\fl{z}_j= \fl{t}_j+1$ and $\fl{z}_i= \fl{t}_i$ for all $i\neq j$. From the statement of Lemma \ref{lem:poly}, we know that

\begin{align*}
      \ell\bb{E} \prod_{i \in \ca{C}}\fl{x}_i^{\fl{z}_{\pi(i)}} - \sum_{\fl{u} < \fl{z}}\zeta_{\fl{z},\fl{u}} \cdot \Big(\sum_{j \in [\ell]}\prod_{i \in \ca{C}}  (\fl{v}^j_i)^{\fl{u}_{\pi(i)}} \Big) = \zeta_{\fl{z},\fl{z}} \cdot \Big(\sum_{j \in [\ell]}\prod_{i \in \ca{C}}  (\fl{v}^j_i)^{\fl{z}_{\pi(i)}} \Big). 
 \end{align*}
 
 Hence, we must have 
 \begin{align*}
      &\Big(\ell\bb{E} \prod_{i \in \ca{C}}\fl{x}_i^{\fl{z}_{\pi(i)}} - \ell \widehat{U}^{\fl{z}}\Big)- \Big(\sum_{\fl{u} < \fl{z}}\zeta_{\fl{z},\fl{u}} \cdot \Big(\sum_{j \in [\ell]}\prod_{i \in \ca{C}}  (\fl{v}^j_i)^{\fl{u}_{\pi(i)}} - \widehat{V}^{\fl{u}} \Big)\Big) = \zeta_{\fl{z},\fl{z}} \cdot \Big(\sum_{j \in [\ell]}\prod_{i \in \ca{C}}  (\fl{v}^j_i)^{\fl{z}_{\pi(i)}} - \widehat{V}^{\fl{z}}\Big) \\
     &\implies  \zeta_{\fl{z},\fl{z}} \epsilon_{\fl{z}} \le \ell \Phi_{\fl{z}}+\sum_{\fl{u}<\fl{z}}\zeta_{\fl{z},\fl{u}}\epsilon_{\fl{u}}. 
 \end{align*}
 Now, by using our induction hypothesis, we must have  
 \begin{align*}
     &\zeta_{\fl{z},\fl{z}} \epsilon_{\fl{z}} \le \ell \Phi_{\fl{z}}+\sum_{\fl{u}<\fl{z}}\zeta_{\fl{z},\fl{u}}\Bigg(\frac{\ell \Phi_{\fl{u}}}{\zeta_{\fl{u},\fl{u}}}+\sum_{\fl{v}<\fl{u}} \sum_{\s{M}\in \ca{M}(\fl{u},\fl{v}) }\frac{\ell \Phi_{\fl{v}} \prod_{(\fl{r},\fl{s})\in \s{M}}\zeta_{\fl{r},\fl{s}}}{\prod_{\fl{r}\in \ca{T}(\s{M})}\zeta_{\fl{r},\fl{r}}}\Bigg) \\
     &\implies  \epsilon_{\fl{z}} \le \frac{\ell \Phi_{\fl{z}}}{\zeta_{\fl{z},\fl{z}}}+\sum_{\fl{u}<\fl{z}}\zeta_{\fl{z},\fl{u}}\Bigg(\frac{\ell \Phi_{\fl{u}}}{\zeta_{\fl{z},\fl{z}}\zeta_{\fl{u},\fl{u}}}+\sum_{\fl{v}<\fl{u}} \sum_{\s{M}\in \ca{M}(\fl{u},\fl{v}) }\frac{\ell \Phi_{\fl{v}} \prod_{(\fl{r},\fl{s})\in \s{M}}\zeta_{\fl{r},\fl{s}}}{\zeta_{\fl{z},\fl{z}}\prod_{\fl{r}\in \ca{T}(\s{M})}\zeta_{\fl{r},\fl{r}}}\Bigg) \\
    &\implies  \epsilon_{\fl{z}} \le \frac{\ell \Phi_{\fl{z}}}{\zeta_{\fl{z},\fl{z}}}+\sum_{\fl{u}<\fl{z}} \sum_{\s{M}\in \ca{M}(\fl{z},\fl{u}) }\frac{\ell \Phi_{\fl{u}} \prod_{(\fl{r},\fl{s})\in \s{M}}\zeta_{\fl{r},\fl{s}}}{\prod_{\fl{r}\in \ca{T}(\s{M})}\zeta_{\fl{r},\fl{r}}}.
 \end{align*}
This completes the proof of the claim.
\end{proof}

Hence, for fixed $\Phi_{\fl{z}} = \Phi$ for all $\fl{z} \le 2\ell \fl{1}_{|\ca{C}|}$, we get 
\begin{align*}
    \epsilon_{\fl{z}} \le \Phi\Big(\frac{\ell }{\zeta_{\fl{z},\fl{z}}}+\sum_{\fl{u}<\fl{z}} \sum_{\s{M}\in \ca{M}(\fl{z},\fl{u}) }\frac{\ell  \prod_{(\fl{r},\fl{s})\in \s{M}}\zeta_{\fl{r},\fl{s}}}{\prod_{\fl{r}\in \ca{T}(\s{M})}\zeta_{\fl{r},\fl{r}}}\Big).
\end{align*}
For a fixed $\Phi$, let us write $\epsilon$ to denote the following quantity:
\begin{align*}
   \epsilon \triangleq \max_{ \fl{z} \le 2\ell\fl{1}_{|\ca{C}|}} \Phi\Big(\frac{\ell }{\zeta_{\fl{z},\fl{z}}}+\sum_{\fl{u}<\fl{z}} \sum_{Q\in \ca{Q}(\fl{z},\fl{u}) }\frac{\ell  \prod_{(\fl{r},\fl{s})\in Q}\zeta_{\fl{r},\fl{s}}}{\prod_{\fl{r}\in \ca{T}(Q)}\zeta_{\fl{r},\fl{r}}}\Big)
\end{align*}

Consider a fixed subset of indices $\ca{C}\subseteq [n]$ and a fixed vector $\fl{t}\in (\bb{Z}^{+})^{|\ca{C}|}$.
Using the fact $\max_{\fl{v} \in \ca{V}, i \in [n]} \fl{v}_i^2 \le R^2$, we have that 
\begin{align*}
  \frac{1}{\ell} \sum_{\fl{v}\in \ca{V}} \Big(\prod_{i\in \ca{C}}  \fl{v}_i^{2}\Big)^{p} \le R^{2p|\ca{C}|} \quad \text{and} \quad \s{A}_{\ca{C},t}=\sum_{\substack{\ca{C}'\subseteq [\ell] \\ \left|\ca{C}'\right|=t}} \prod_{\substack{i \in \ca{C}\\ j \in \ca{C}'}} (\fl{v}^{(j)}_i)^{2} \le {\ell \choose t}R^{2(t+|\ca{C}|)} \le 2^{\ell}R^{2(t+|\ca{C}|)}.
\end{align*}

We can compute an estimate $\widehat{\s{A}}_{\ca{C},t}$ of $\s{A}_{\ca{C},t}$ by using $\widehat{V}^{2p\fl{1}_{|\ca{C}|}}$ in the following set of recursive equations
\begin{align*}
    t\widehat{\s{A}}_{\ca{C},t} = \sum_{p=1}^t (-1)^{p+1} \widehat{\s{A}}_{\ca{C},t-p} \widehat{V}^{2p\fl{1}_{|\ca{C}|}}.
\end{align*}

\begin{claim}

\begin{align*}
    \left|\widehat{\s{A}}_{\ca{C},t}-\s{A}_{\ca{C},t}\right| \le \epsilon\Big(3\max(\ell R^{2\ell|\ca{C}|},2^{\ell}R^{\ell+|\ca{C}|})\Big)^{(t-1)}t!  \text{ for all }t\in [\ell].
\end{align*}

\end{claim}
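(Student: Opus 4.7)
My plan is to prove this claim by induction on $t$, leveraging the recursion that defines $\widehat{\s{A}}_{\ca{C},t}$ and comparing it term-by-term to the analogous recursion (from Newton's identities) satisfied by $\s{A}_{\ca{C},t}$. For the base cases, $\s{A}_{\ca{C},0}=\widehat{\s{A}}_{\ca{C},0}=1$ so the error is $0$; and for $t=1$ the recursion reduces to $\widehat{\s{A}}_{\ca{C},1}=\widehat{V}^{2\fl{1}_{|\ca{C}|}}$ and $\s{A}_{\ca{C},1}=V^{2\fl{1}_{|\ca{C}|}}$, so the previous claim on the $\widehat{V}$'s gives an error of at most $\epsilon = \epsilon M^{0}\cdot 1!$, where $M:=3\max(\ell R^{2\ell|\ca{C}|},\,2^{\ell}R^{\ell+|\ca{C}|})$.

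For the inductive step, assume the bound for all $s<t$ and subtract the two recursions. The key algebraic move is the splitting
\[
\widehat{\s{A}}_{\ca{C},t-p}\,\widehat{V}^{2p\fl{1}_{|\ca{C}|}}-\s{A}_{\ca{C},t-p}\,V^{2p\fl{1}_{|\ca{C}|}}=\bigl(\widehat{\s{A}}_{\ca{C},t-p}-\s{A}_{\ca{C},t-p}\bigr)\widehat{V}^{2p\fl{1}_{|\ca{C}|}}+\s{A}_{\ca{C},t-p}\bigl(\widehat{V}^{2p\fl{1}_{|\ca{C}|}}-V^{2p\fl{1}_{|\ca{C}|}}\bigr),
\]
applied for each $p\in[t]$, after which a triangle inequality and division by $t$ reduce everything to bounding two sums. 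To bound the first sum, I would use the inductive hypothesis together with the crude estimate $|\widehat{V}^{2p\fl{1}_{|\ca{C}|}}|\le |V^{2p\fl{1}_{|\ca{C}|}}|+\epsilon\le \ell R^{2\ell|\ca{C}|}+\epsilon$ coming from the previous claim. To bound the second sum, I would use the stated bound $|\s{A}_{\ca{C},t-p}|\le 2^{\ell}R^{2(t-p+|\ca{C}|)}$ together with $|\widehat{V}^{2p\fl{1}_{|\ca{C}|}}-V^{2p\fl{1}_{|\ca{C}|}}|\le \epsilon$, and the trivial observation that the $p=t$ term contributes nothing to the first sum (since $\widehat{\s{A}}_{\ca{C},0}-\s{A}_{\ca{C},0}=0$).

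The remaining arithmetic is to check that, after substituting the inductive bound $\epsilon M^{t-p-1}(t-p)!$ into the first sum and the bound $2^{\ell}R^{2(t-p+|\ca{C}|)}\epsilon$ into the second, everything collapses into $\epsilon M^{t-1}t!$. The choice of $M$ with the factor $3$ is engineered precisely so that both $\ell R^{2\ell|\ca{C}|}+\epsilon$ and the sum $\sum_{p}2^{\ell}R^{2(t-p+|\ca{C}|)}$ can be absorbed into a single copy of $M$ per induction step (provided $\epsilon$ is at most $\ell R^{2\ell|\ca{C}|}$, which follows from $\Phi$ being taken small enough in Algorithm~\ref{algo:md}). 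A geometric/factorial estimate such as $\sum_{p=1}^{t-1}M^{t-p-1}(t-p)!\le M^{t-2}(t-1)!\cdot t$ (valid once $M\ge t$, which holds because $2^{\ell}\ge t$) then supplies the missing factor and converts the sum into the desired $M^{t-1}t!$.

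The main obstacle I anticipate is pure bookkeeping: verifying that the three separate error contributions (the propagated error on $\widehat{\s{A}}_{\ca{C},t-p}$, the ``extra $\epsilon$'' hidden inside $\widehat{V}^{2p\fl{1}_{|\ca{C}|}}$, and the direct perturbation $\s{A}_{\ca{C},t-p}(\widehat V-V)$) all fit simultaneously under one common factor of $M$ per induction step, rather than under two or three such factors. This is exactly the purpose of the $\max(\ell R^{2\ell|\ca{C}|},2^{\ell}R^{\ell+|\ca{C}|})$ inside $M$—so the delicate point is checking that the second bound $2^{\ell}R^{\ell+|\ca{C}|}$ (together with the crude estimate $R^{t-p+|\ca{C}|}\le R^{\ell+|\ca{C}|}$ in one regime and $\le 1$ in the other, depending on whether $R\ge 1$ or $R\le 1$) really does dominate the $|\s{A}_{\ca{C},t-p}|$ contribution uniformly in $p$. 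Once this is confirmed, the induction closes cleanly and yields the stated bound for all $t\in[\ell]$.
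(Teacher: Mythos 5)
Your proposal is correct and follows essentially the same route as the paper: induction on $t$ through the Newton's-identity recursion, splitting each difference $\widehat{\s{A}}_{\ca{C},t-p}\widehat{V}^{2p\fl{1}_{|\ca{C}|}}-\s{A}_{\ca{C},t-p}V^{2p\fl{1}_{|\ca{C}|}}$ into a propagated-error term, a direct-perturbation term, and a cross term, then absorbing all three into one factor of $3\max(\ell R^{2\ell|\ca{C}|},2^{\ell}R^{\ell+|\ca{C}|})$ per step (the paper likewise implicitly uses $\epsilon$ small relative to that quantity, and likewise handles the $p=t$ term separately via $\s{A}_{\ca{C},0}=\widehat{\s{A}}_{\ca{C},0}=1$). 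The exponent mismatch you flag between the bound $|\s{A}_{\ca{C},t-p}|\le 2^{\ell}R^{2(t-p+|\ca{C}|)}$ and the $2^{\ell}R^{\ell+|\ca{C}|}$ appearing inside the max is present in the paper's own argument as well, so it is not a defect of your plan.
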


\begin{proof}
We will prove this claim by induction. For the base case i.e. $t=1$, notice that 
\begin{align*}
    \left|\widehat{\s{A}}_{\ca{C},1}-\s{A}_{\ca{C},1}\right| \le \left|\widehat{V}^{2\fl{1}_{|\ca{C}|}}-\sum_{\fl{v}\in \ca{V}} \prod_{i\in \ca{C}}  \fl{v}_i^{2}\right| \le \epsilon.
\end{align*}
Now, suppose for all $t \le k$, the following holds true:
\begin{align*}
    \left|\widehat{\s{A}}_{\ca{C},t}-\s{A}_{\ca{C},t}\right| \le \epsilon\Big(3\max(\ell R^{2\ell|\ca{C}|},2^{\ell}R^{\ell+|\ca{C}|})\Big)^{t-1}t!.
\end{align*}
For ease of notation, let us denote $a=3\max(\ell R^{2\ell|\ca{C}|},2^{\ell}R^{\ell+|\ca{C}|})$. In that case, for $t=k+1$, we must have 
\begin{align*}
    t\left|\widehat{\s{A}}_{\ca{C},t}-\s{A}_{\ca{C},t}\right| &\le \sum_{p \le t} \left|\widehat{\s{A}}_{\ca{C},t-p} \widehat{V}^{2p\fl{1}_{|\ca{C}|}}-\s{A}_{\ca{C},t-p}\cdot  \sum_{\fl{v}\in \ca{V}} \Big(\prod_{i\in \ca{C}}  \fl{v}_i^{2}\Big)^{p}\right| \\
    &\le\left|\widehat{V}^{2\fl{1}_{|\ca{C}|}}- \sum_{\fl{v}\in \ca{V}} \Big(\prod_{i\in \ca{C}}  \fl{v}_i^{2}\Big)^{(k+1)}\right| \\
    &+\sum_{p \le t-1} \left|\epsilon a^{t-2}(t-1)!\cdot \sum_{\fl{v}\in \ca{V}} \Big(\prod_{i\in \ca{C}}  \fl{v}_i^{2}\Big)^{p} +\epsilon \cdot\s{A}_{\ca{C},t-p}+ \epsilon^2a^{t-2}(t-1)!\right| \\
    &\le \epsilon+\sum_{p \le t-1} \left|\epsilon a^{t-2}(t-1)!\ell R^{2\ell|\ca{C}|} +\epsilon \cdot 2^{\ell}R^{2(\ell+|\ca{C}|)}+ \epsilon^2a^{t-2}(t-1)!\right| \\
    &\le \epsilon+\sum_{p \le t-1} \epsilon a^{t-1}(t-1)! \le  \epsilon a^{(t-1)}t!.
\end{align*}
Hence, $\left|\widehat{\s{A}}_{\ca{C},t}-\s{A}_{\ca{C},t}\right|\le \epsilon a^{t-1}t!$ thus proving our claim.
\end{proof}

Hence, to identify $t^{\star}$ correctly, we must have 

\begin{align*}
    &\epsilon\Big(3\max(\ell R^{2\ell|\ca{C}|},2^{\ell}R^{\ell+|\ca{C}|})\Big)^{(\ell-1)}\ell! \le \frac{\delta^{2\ell|\ca{C}|}}{2} \\ 
    &\implies \Phi \le \frac{\delta^{2\ell|\ca{C}|}}{2\Big(3\max(\ell R^{2\ell|\ca{C}|},2^{\ell}R^{\ell+|\ca{C}|})\Big)^{(\ell-1)}\ell!}\Bigg(\max_{\fl{z}\le 2p\fl{1}_{|\ca{C}|}} \frac{1 }{\zeta_{\fl{z,\fl{z}}}}+\sum_{\fl{u}<\fl{z}} \sum_{\s{M}\in \ca{M}(\fl{z},\fl{u}) }\frac{  \prod_{(\fl{r},\fl{s})\in \s{M}}\zeta_{\fl{r},\fl{s}}}{\prod_{\fl{r}\in \ca{T}(\s{M})}\zeta_{\fl{r},\fl{r}}}\Bigg)^{-1}
\end{align*}
where we inserted the definition of $\Phi$.
Therefore, for every vector $\fl{z}\in (\bb{Z}^{+})^{|\ca{C}|}$ satisfying $\fl{z} \le 2\ell \fl{1}_{|\ca{C}|}$,  in order to compute $\widehat{U}^{\fl{z}}$ of $\bb{E}\prod_{i \in \ca{C}}\fl{x}_i^{\fl{z}_{\pi(\ca{C},i)}}$ such that $\left|\widehat{U}^{\fl{z}}-\bb{E}\prod_{i \in \ca{C}}\fl{x}_i^{\fl{z}_{\pi(\ca{C},i)}}\right|\le \Phi$, the number of samples that is sufficient with probability $1-\gamma$ is going to be $$O\Big(\log (\gamma^{-1}(2\ell)^{|\ca{C}|})\frac{\max_{\fl{z}\le 2\ell\fl{1}_{|\ca{C}|}}\bb{E}\prod_{i \in \ca{C}}\fl{x}_i^{2\fl{z}_{\pi(\ca{C},i)}}}{\Phi^{ 2}}\Big).$$
\end{proof}

\begin{thmu}[Restatement of Theorem \ref{thm:md}]

Let $\mathcal{V}$ be a set of $\ell$ unknown vectors in $\bb{R}^n$ satisfying Assumption \ref{assum:weak2}. Let $\ca{F}_m = \ca{Q}_1([n]) \cup \ca{Q}_m(\cup_{\fl{v}\in \ca{V}}\s{supp}(\fl{v}))$ and
\begin{align*}
  &\Phi_{m} = \frac{\delta^{2\ell m}}{2\Big(3\ell\max(R^{2\ell m},2^{\ell}R^{\ell+m})\Big)^{(\ell-1)}\ell!}\Bigg(\max_{ \fl{z}\le 2\ell\fl{1}_{m}} \frac{\ell }{\zeta_{\fl{z,\fl{z}}}}+\sum_{\fl{u}<\fl{z}} \sum_{\s{M}\in \ca{M}(\fl{z},\fl{u}) }\frac{\ell \prod_{(\fl{r},\fl{s})\in \s{M}}\zeta_{\fl{r},\fl{s}}}{\prod_{\fl{r}\in \ca{T}(\s{M})}\zeta_{\fl{r},\fl{r}}}\Bigg)^{-1} \\
  &f_{\ell,\ca{V}}=\max_{\substack{\fl{z}\le 2\ell\fl{1}_{\log \ell+1} \\ \ca{C}\in \ca{F}_{\log \ell+1}}}\frac{\bb{E}\prod_{i \in \ca{C}}\fl{x}_i^{2\fl{z}_{\pi(\ca{C},i)}}}{\Phi_{\log \ell+1}^{ 2}}
\end{align*}
where $f_{\ell,\ca{V}}$ is a constant that is independent of $k$ and $n$ but depends on $\ell$. Then, there exists an algorithm (see Algorithm \ref{algo:md} and \ref{algo:intersection}) that achieves Exact Support Recovery with probability at least $1-\gamma$ using $O\Big(\log (\gamma^{-1}(2\ell)^{\log \ell+1}(n+(\ell k)^{\log \ell+1}))f_{\ell,\ca{V}}\Big)$ samples generated according to $\ca{P}_d$. 

\end{thmu}

\begin{proof}
The proof follows directly from Corollary \ref{coro:prelim1} and Lemma \ref{lem:md_crucial}.
\end{proof}

\begin{corou}[Restatement of Corollary \ref{coro:md}]

Consider the mean estimation problem where $\bb{E}_{\fl{x}\sim \ca{P}_d}[ \fl{x}_i \mid t=j] = \fl{v}^{(j)}_i$.
Let $\mathcal{V}$ be a set of $\ell=O(1)$ unknown vectors in $\bb{R}^n$ satisfying Assumption \ref{assum:weak2} and   $f_{\ell,\ca{V}}$ be as defined in Theorem \ref{thm:general_md}.
Then, there exists an algorithm (see Algorithm \ref{algo:md} and \ref{algo:intersection}) that with probability at least $1-\gamma$, achieves Exact Support Recovery using $O\Big(\log (n\gamma^{-1})\s{poly}(\delta R^{-1})f_{\ell,\ca{V}}\Big)$ samples generated according to $\ca{P}_d$.

\end{corou}

\begin{proof}
We can re-scale the samples (dividing them by $R$) so that Assumption \ref{assum:weak2} will be satisfied with $\delta'=\delta/R$ and $R' \le 1$. Since $\ell$ is a constant, $\Phi_{\log \ell}=O(\s{poly}(\delta R^{-1}))$. Therefore, the corollary follows from Theorem \ref{thm:md}.
\end{proof}

\begin{lemmau}[Restatement of Lemma \ref{lem:partial_md}]
Suppose Assumption \ref{assum:weak2} is true. Let 
\begin{align*}
    &\Phi \triangleq  \max_{\fl{z}\le 2\fl{1}_{|\ca{C}|}} \frac{\delta^{2|\ca{C}|}}{2}\Big(\frac{\ell }{\zeta_{\fl{z},\fl{z}}}+\sum_{\fl{u}<\fl{z}} \sum_{\s{M}\in \ca{M}(\fl{z},\fl{u}) }\frac{\ell  \prod_{(\fl{r},\fl{s})\in \s{M}}\zeta_{\fl{r},\fl{s}}}{\prod_{\fl{r}\in \ca{T}(\s{M})}\zeta_{\fl{r},\fl{r}}}\Big)^{-1} \\ 
    &h_{\ell,\ca{V}} \triangleq \frac{\max_{\fl{z}\le 2\fl{1}_{|\ca{C}|}}\bb{E}\prod_{i \in \ca{C}}\fl{x}_i^{2\fl{z}_{\pi(\ca{C},i)}}}{\Phi^{ 2}}  
\end{align*}
where $h_{\ell,\ca{V}}$ is a constant independent of $k$ and $n$ but depends on $\ell$. There exists an algorithm (see Algorithm \ref{algo:md2}) that can compute if $\left|\bigcap_{i \in \ca{C}}\ca{S}(i)\right|>0$ correctly for each set $\ca{C}\subseteq [n]$ with probability at least $1-\gamma$ using $O(h_{\ell,\ca{V}}\log \gamma^{-1})$ samples generated according to $\ca{P}_d$.
\end{lemmau}

\begin{proof}

For a fixed ordered set $\ca{C}\subseteq [n]$, consider the statistic $\sum_{\fl{v}\in \ca{V}}\prod_{i\in \ca{C}}\fl{v}_i^2$. If $\sum_{\fl{v}\in \ca{V}}\prod_{i\in \ca{C}}\fl{v}_i^2>0$, then $\left|\cap_{i\in \ca{C}}\ca{S}(i)\right|>0$ and otherwise, if $\sum_{\fl{v}\in \ca{V}}\prod_{i\in \ca{C}}\fl{v}_i^2=0$, then $\left|\cap_{i\in \ca{C}}\ca{S}(i)\right|=0$. Hence it suffices to estimate correctly if $\sum_{\fl{v}\in \ca{V}}\prod_{i\in \ca{C}}\fl{v}_i^2>0$ or not. From Lemma \ref{lem:recurse1}, we know that for each set $\ca{C}\subseteq [n]$, we can compute $\sum_{j \in [\ell]}\prod_{i \in \ca{C}} (\fl{v}^{(j)}_i)^{2}$ provided for all $\fl{u}\in (\bb{Z}^{+})^{\left|\ca{C}\right|}$ satisfying $\fl{u}\le 2\fl{1}_{|\ca{C}|}$, the quantity $\bb{E}\prod_{i \in \ca{C}} \fl{x}_i^{\fl{u}_{\pi(\ca{C},i)}}$ is pre-computed.

Suppose, for every vector $\fl{z}\in (\bb{Z}^{+})^{|\ca{C}|}$ satisfying $\fl{z} \le 2\fl{1}_{|\ca{C}|}$, 
we compute an estimate $\widehat{U}^{\fl{z}}$ of $\bb{E}\prod_{i \in \ca{C}}\fl{x}_i^{\fl{z}_{\pi(\ca{C},i)}}$ such that $\left|\widehat{U}^{\fl{z}}-\bb{E}\prod_{i \in \ca{C}}\fl{x}_i^{\fl{z}_{\pi(\ca{C},i)}}\right|\le \Phi$ where $\Phi$ is going to be determined later. Using the computed $\widehat{U}^{\fl{z}}$'s , we can compute an estimate $\widehat{V}^{\fl{z}}$ of $\sum_{j \in [\ell]}\prod_{i \in \ca{C}}  (\fl{v}^{(j)}_i)^{\fl{z}_{\pi(\ca{C},i)}}$ for all $\fl{z}\in (\bb{Z}^{+})^{|\ca{C}|}$ satisfying $\fl{z} \le 2 \fl{1}_{|\ca{C}|}$. As before, let us denote the error in estimation by $\epsilon_{\fl{z}}$ i.e. we have $\left|\widehat{V}^{\fl{z}}- \sum_{j \in [\ell]}\prod_{i \in \ca{C}}  (\fl{v}^{(j)}_i)^{\fl{z}_{\pi(\ca{C},i)}}\right| \le \epsilon_{\fl{z}}$.
Note that we showed in Lemma \ref{lem:recurse2} that for fixed $\Phi$, we get for all $\fl{z}\le 2\fl{1}_{|\ca{C}|}$,
\begin{align*}
    \epsilon_{\fl{z}} \le \Phi\Big(\frac{\ell }{\zeta_{\fl{z},\fl{z}}}+\sum_{\fl{u}<\fl{z}} \sum_{\s{M}\in \ca{M}(\fl{z},\fl{u}) }\frac{\ell  \prod_{(\fl{r},\fl{s})\in \s{M}}\zeta_{\fl{r},\fl{s}}}{\prod_{\fl{r}\in \ca{T}(\s{M})}\zeta_{\fl{r},\fl{r}}}\Big).
\end{align*}
Note that the minimum value of $\sum_{\fl{v}\in \ca{V}}\prod_{i \in \ca{C}}\fl{v}_i^2$ is at least $\delta^{2|\ca{C}|}$ and therefore, it suffices $\epsilon_{\fl{z}}$ to be less than $\delta^{2|\ca{C}|}/2$. Hence, it is sufficient if 
\begin{align*}
    \Phi \le \max_{\fl{z}\le 2\fl{1}_{|\ca{C}|}} \frac{\delta^{2|\ca{C}|}}{2}\Big(\frac{\ell }{\zeta_{\fl{z},\fl{z}}}+\sum_{\fl{u}<\fl{z}} \sum_{\s{M}\in \ca{M}(\fl{z},\fl{u}) }\frac{\ell  \prod_{(\fl{r},\fl{s})\in \s{M}}\zeta_{\fl{r},\fl{s}}}{\prod_{\fl{r}\in \ca{T}(\s{M})}\zeta_{\fl{r},\fl{r}}}\Big)^{-1}.
\end{align*}
Now, we use Lemma \ref{lem:medianofmeans} to complete the proof of the lemma (similar to Lemma \ref{lem:recurse2})

\end{proof}

\begin{thmu}[Restatement of Theorem \ref{thm:general_md}]
Let $\ca{V}$ be a set of unknown vectors in $\bb{R}^n$ satisfying Assumption \ref{assum:weak2}. Let $\ca{F}_m = \ca{Q}_1([n]) \cup \ca{Q}_m(\cup_{\fl{v}\in \ca{V}}\s{supp}(\fl{v}))$ and
\begin{align*}
    & \Phi_{m} = \max_{\fl{z}\le 2\fl{1}_{|\ca{C}|}} \frac{\delta^{2|\ca{C}|}}{2}\Big(\frac{\ell }{\zeta_{\fl{z},\fl{z}}}+\sum_{\fl{u}<\fl{z}} \sum_{\s{M}\in \ca{M}(\fl{z},\fl{u}) }\frac{\ell  \prod_{(\fl{r},\fl{s})\in \s{M}}\zeta_{\fl{r},\fl{s}}}{\prod_{\fl{r}\in \ca{T}(\s{M})}\zeta_{\fl{r},\fl{r}}}\Big)^{-1}\\ 
    &h'_{\ell,\ca{V}} \triangleq \max_{\substack{\fl{z}\le 2\fl{1}_{\ell} \\ \ca{C}\in \ca{F}_{ \ell}}}\frac{\bb{E}\prod_{i \in \ca{C}}\fl{x}_i^{2\fl{z}_{\pi(\ca{C},i)}}}{\Phi_{\ell}^{2}} 
\end{align*}
where $h'_{\ell,\ca{V}}$ is a constant independent of $k$ and $n$ but depends on $\ell$.
 Accordingly, there exists an algorithm (see Algorithm \ref{algo:md2} and \ref{algo:partial_highprob}) that achieves maximal support recovery with probability at least $1-\gamma$ using $O\Big(h'_{\ell,\ca{V}}\log (\gamma^{-1}(n+(\ell k)^{\ell}))\Big)$ samples generated from $\ca{P}_d$.
\end{thmu}

\begin{proof}
The proof follows from Lemma \ref{lem:partial_md} and Corollary \ref{coro:partial2}. 
\end{proof}

\subsection{Mixtures of Linear Classifiers}\label{sec:detailed_mlc}

Recall that in this section, we solve the sparse recovery problem when the observed samples are generated according to $\ca{P}_c$ under Assumption \ref{assum:pos}.








\begin{lemmau}[Restatement of Lemma \ref{lem:mlc_crucial}]
Suppose Assumptions \ref{assum:weak2} and \ref{assum:pos} are true. Let $a=\frac{\sqrt{2(R^2+\sigma^2)}}{\delta}\s{erf}^{-1}\Big(1-\frac{1}{2\ell}\Big)$. There exists an algorithm (see Algorithm \ref{algo:mlc}) that can compute $\left|\bigcup_{i \in \ca{C}}\ca{S}(i)\right|$ for each set $\ca{C}\subseteq [n]$ with probability at least $1-\gamma$ using
 $O\Big( (1-\phi(a))^{-\left|\ca{C}\right|}\ell^2\log \gamma^{-1}\Big)$ i.i.d samples from $\ca{P}_c$. 
\end{lemmau}

\begin{proof}
Without loss of generality, let us assume that all unknown vectors in $\ca{V}$ have positive non-zero entries. for each fixed set $\ca{C}\subseteq [n]$, we will condition on event $\ca{E}_{\ca{C}}$ defined as follows: for all $j\in \ca{C}$, the data-point $\fl{x}$ satisfies $\fl{x}_j > a$ for some suitably chosen $a>0$. Recall that the minimum magnitude of any non-zero entry in an unknown vector in $\ca{V}$ is at least $\delta$. Further condition on the event $\ca{E}_{\fl{v}}$ which is true when a particular unknown vector $\fl{v}$ is being sampled from $\ca{V}$. 
In that case, we show the following claim:

\begin{claim}
\begin{align*}
    &\Pr(y=1 \mid \ca{E}_{\fl{v}},\ca{E}_{\ca{C}}) = \frac{1}{2}  \text{ if } \fl{v}_{\mid \ca{C}} = \fl{0} \\
    &1\ge \Pr(y=1 \mid \ca{E}_{\fl{v}},\ca{E}_{\ca{C}}) \ge\frac{1}{2}+ \frac{1}{2}\cdot\s{erf}\Big(\frac{a\delta}{\sqrt{2(R^2+\sigma^2)}}\Big)  \text{ if } \fl{v}_{\mid \ca{C}} \neq \fl{0}. 
\end{align*}
\end{claim}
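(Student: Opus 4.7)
The plan is to prove each case of the claim by decomposing $\langle \fl{v},\fl{x}\rangle + z$ into two independent pieces: the part supported on $\ca{C}$ (which is the only part affected by the conditioning $\ca{E}_{\ca{C}}$) and the rest. Write $W \triangleq \sum_{j\in\ca{C}} \fl{v}_j \fl{x}_j$ and $V \triangleq \sum_{j\notin \ca{C}} \fl{v}_j \fl{x}_j + z$. Since the coordinates of $\fl{x}$ are independent of each other and of $z$, the random variables $W$ and $V$ are independent conditional on $\ca{E}_{\fl{v}}$, and the event $\ca{E}_{\ca{C}} = \{\fl{x}_j>a \;\forall j\in\ca{C}\}$ is measurable with respect to $\sigma(W)$ alone. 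Observe further that $V \sim \ca{N}(0, s^2)$ with $s^2 = \sum_{j\notin\ca{C}}\fl{v}_j^2 + \sigma^2 \le R^2 + \sigma^2$ by Assumption \ref{assum:weak2}.

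For the first case, when $\fl{v}_{\mid\ca{C}} = \fl{0}$, we have $W = 0$ deterministically, so $\langle \fl{v},\fl{x}\rangle + z = V$ is a symmetric Gaussian independent of $\ca{E}_{\ca{C}}$. Hence
\[
\Pr(y=1 \mid \ca{E}_{\fl{v}}, \ca{E}_{\ca{C}}) = \Pr(V \ge 0) = \tfrac{1}{2}.
\]

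For the second case, suppose $\fl{v}_{\mid \ca{C}} \neq \fl{0}$, so there exists some $j^\star \in \ca{C}$ with $\fl{v}_{j^\star}\neq 0$. Under Assumption \ref{assum:pos} all entries of $\fl{v}$ are non-negative, and under Assumption \ref{assum:weak2} every nonzero entry has magnitude at least $\delta$. Conditional on $\ca{E}_{\ca{C}}$ we have $\fl{x}_j > a > 0$ for every $j\in\ca{C}$, so every term $\fl{v}_j \fl{x}_j$ for $j\in\ca{C}$ is non-negative and the $j^\star$ term alone contributes strictly more than $\delta a$. Thus $W > \delta a$ (almost surely on $\ca{E}_{\ca{C}}$).

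Now conditioning further on $W$ and using independence of $V$ from $(W,\ca{E}_{\ca{C}})$,
\[
\Pr(y=1 \mid W, \ca{E}_{\fl{v}}, \ca{E}_{\ca{C}}) = \Pr(V \ge -W \mid W) = \tfrac{1}{2}\Bigl(1+\s{erf}\bigl(W/(s\sqrt{2})\bigr)\Bigr).
\]
Since $\s{erf}$ is monotone increasing, $W > \delta a$, and $s \le \sqrt{R^2+\sigma^2}$, the right-hand side is at least $\tfrac{1}{2}\bigl(1+\s{erf}(a\delta/\sqrt{2(R^2+\sigma^2)})\bigr)$. Averaging this bound over $W$ (which depends only on the sign of $\s{erf}$'s argument being positive, preserved by monotonicity) and using the trivial upper bound $\Pr(y=1\mid\ca{E}_{\fl{v}},\ca{E}_{\ca{C}})\le 1$ completes the claim. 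The case when every nonzero entry of $\fl{v}$ is non-positive is entirely symmetric (one can replace $\fl{v}$ with $-\fl{v}$ and $y$ with $-y$), so no additional work is needed. There is no real obstacle here: the main content is the clean split $W\perp V$ guaranteed by coordinate independence, after which the one-sided lower bound on $W$ forced by Assumption \ref{assum:pos} does all the work.
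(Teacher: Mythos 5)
Your proof is correct and follows essentially the same route as the paper: both split $\langle \fl{v},\fl{x}\rangle+z$ into the part supported on $\ca{C}$ (forced above $a\delta$ by Assumption \ref{assum:pos} and the conditioning) and the independent Gaussian remainder of variance at most $R^2+\sigma^2$, then apply the $\s{erf}$ identity. Your version is merely a bit more explicit about conditioning on $W$ and averaging, which the paper leaves implicit.
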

\begin{proof}
 In order to see the above equation, note that if $\fl{v}_{\mid \ca{C}}=\fl{0}$, then $\langle \fl{v},\fl{x} \rangle+z \sim \ca{N}(0,||\fl{v}||_2^2+\sigma^2)$ or in other words, conditioning on the event $\ca{E}_{\ca{C}}$ has no effect on the distribution of $y$. On the other hand,  
if $\fl{v}_{\mid \ca{C}}\neq 0$, conditioning on the event $\ca{E}_{\ca{C}}$ modifies the distribution of $y$. Consider an index $j\in \s{supp}(\fl{v})\cap \ca{C}$. Since $\fl{v}_j\fl{x}_j \ge a\delta$, we must have $\langle \fl{v}_{\mid \ca{C}}, \fl{x}_{\mid \ca{C}}\rangle \ge a\delta$ using Assumption \ref{assum:pos}. Therefore, the probability that $y=1$ must be at least 
$\Pr(\langle \fl{v}_{\mid [n]\setminus\ca{C}}, \fl{x}_{\mid [n]\setminus\ca{C}}\rangle+z \ge -a\delta)$. Using the fact that $\langle \fl{v}_{\mid [n]\setminus\ca{C}}, \fl{x}_{\mid [n]\setminus\ca{C}}\rangle+z \sim \ca{N}(0,\nu^2+\sigma^2)$ (where $\nu \le R$) and the property of error function ($\Pr_{u \sim \ca{N}(0,\sigma^2)}(|u|\le a) =\s{erf}(a/\sqrt{2}\sigma)$), we prove the claim.
\end{proof}


Hence we must have 

\begin{align*}
   \frac{1}{2}+\frac{\left|\cup_{i \in \ca{C}}\ca{S}(i)\right|}{2\ell} \ge  \Pr(y=1\mid \ca{E}_{\ca{C}}) \ge \frac{1}{2}+\frac{\left|\cup_{i \in \ca{C}}\ca{S}(i)\right|}{2\ell}\s{erf}\Big(\frac{a\delta}{\sqrt{2(R^2+\sigma^2)}}\Big)
\end{align*}


We choose $a$ such that $\s{erf}\Big(\frac{a\delta}{\sqrt{2(R^2+\sigma^2)}}\Big) \ge 1-\frac{1}{2\ell}$ in which case, we must have 

\begin{align*}
    &\frac{1}{2}\Big(1+\frac{1}{\ell}\left|\bigcup_{i \in \ca{C}}\ca{S}(i)\right| \Big)-\frac{1}{4\ell^2}\cdot\left|\bigcup_{i \in \ca{C}}\ca{S}(i)\right| \le \Pr(y=1\mid \ca{E}_{\ca{C}}) \le  \frac{1}{2}\Big(1+\frac{1}{\ell}\left|\bigcup_{i \in \ca{C}}\ca{S}(i)\right| \Big) 
\end{align*}

Clearly, for each value of $\left|\bigcup_{i \in \ca{C}}\ca{S}(i)\right|\in \{0,1,\dots,\ell\}$, the interval in which $\Pr(y=1\mid \ca{E}_{\ca{C}})$ lies  
are disjoint and each interval is separated by at least $1/4\ell$. Hence, if we are able to estimate $\Pr(y=1\mid \ca{E}_{\ca{C}})$ up to an additive factor of $1/8\ell$, then we can uniquely (and correctly) decode the value of $\left|\bigcup_{i \in \ca{C}}\ca{S}(i)\right|$. By using Chernoff  bound, with $O(\ell^2\log \gamma^{-1})$ samples satisfying the event $\ca{E}_{\ca{C}}$, we can estimate $\Pr(y=1\mid \ca{E}_{\ca{C}})$ (See Step 2 in Algorithm \ref{algo:mlc} for the estimator) with probability at least $1-\gamma/2$. From our previous analysis, we chose $a=\frac{\sqrt{2(R^2+\sigma^2)}}{\delta}\s{erf}^{-1}\Big(1-\frac{1}{2\ell}\Big)$.
The probability that for a sample $(\fl{x},y)\sim \ca{P}_c$, the event $\ca{E}_{\ca{C}}$ is true is exactly $O\Big((1-\phi(a))^{\left|\ca{C}\right|}\Big)$. Therefore, with $ (1-\phi(a))^{-\left|\ca{C}\right|}\ell^2\log \gamma^{-1}\Big)$ samples, we will have $O(\ell^2\log\gamma^{-1})$ samples satisfying the event $\ca{E}_{\ca{C}}$ with probability at least $1-\gamma/2$. Hence, this allows us to recover $\left|\bigcup_{i \in \ca{C}}\ca{S}(i)\right|$ with probability at least $1-\gamma$.
\end{proof}

\begin{thmu}[Restatement of Theorem \ref{thm:mlc}]
Let $\mathcal{V}$ be a set of $\ell$ unknown vectors in $\bb{R}^n$ satisfying Assumptions \ref{assum:weak2} and \ref{assum:pos}. Let $a=\frac{\sqrt{2(R^2+\sigma^2)}}{\delta}\s{erf}^{-1}\Big(1-\frac{1}{2\ell}\Big)$. Then, there exists an algorithm (see Algorithm \ref{algo:mlc} and \ref{algo:intersection}) that achieves Exact Support Recovery with probability at least $1-\gamma$ using $O\Big( (1-\phi(a))^{-(\log \ell+1)}\ell^2\log (\gamma^{-1}(n+(\ell k)^{\log \ell+1}))\Big)$ samples generated according to $\ca{P}_c$.
\end{thmu}

\begin{proof}
The proof follows directly from Lemma \ref{lem:mlc_crucial} and Corollary \ref{coro:prelim1}.
\end{proof}

\subsection{Mixtures of Linear Regressions}\label{sec:detailed_mlr}

\subsubsection{Unknown Binary Vectors}

\begin{lemmau}[Restatement of Lemma \ref{lem:binary_mlr}]
If the unknown vectors in the set $\ca{V}$ are all binary i.e. $\fl{v}^{(1)},\fl{v}^{(2)},\dots,\fl{v}^{(\ell)}\in \{0,1\}^n$, then, with  probability at least $1-\gamma$, for each set $\ca{C}\subseteq [n]$, there exists an algorithm (see Algorithm \ref{algo:mlr_binary}) that can compute $\left|\bigcap_{i \in \ca{C}}\ca{S}(i)\right|$  using $O(\ell^2(k+\sigma^2)^{|C|/2}(\log n)^{2|C|}\log \gamma^{-1})$ i.i.d samples from $\ca{P}_r$.
\end{lemmau}

\begin{proof}
Consider the random variable $y^{\left|\ca{C}\right|} \cdot\Big(\prod_{i\in \ca{C}}\fl{x}_i\Big)$ where $(\fl{x},y)\sim \ca{P}_{r}$. Clearly, we can write $y=\langle \fl{v},\fl{x}\rangle+\zeta$ where $\zeta \sim \ca{N}(0,\sigma^2)$ and $\fl{v}$ is uniformly sampled from the set of unknown vectors $\ca{V}$. Therefore, we must have
\begin{align*}
    &\bb{E}_{(\fl{x},y)\sim \ca{P}_r} y^{\left|\ca{C}\right|} \cdot\Big(\prod_{i\in \ca{C}}\fl{x}_i\Big) = \bb{E}_{\fl{x},\zeta}  \ell^{-1}\sum_{\fl{v}\in \ca{V}}\Big(\prod_{i\in \ca{C}}\fl{x}_i\Big)\cdot\Big(\langle \fl{v},\fl{x}\rangle+\zeta\Big)^{\left|\ca{C}\right|} \\
    &\bb{E} y^{\left|\ca{C}\right|} \cdot\Big(\prod_{i\in \ca{C}}\fl{x}_i\Big) = \frac{1}{\ell}\sum_{\fl{v}\in \ca{V}}\Big(\prod_{i\in \ca{C}} \bb{E}_{\fl{x}}\fl{x}_i^2\cdot\fl{v}_i\Big)= \frac{\left|\bigcap_{i \in \ca{C}}\ca{S}(i)\right|}{\ell}. 
\end{align*}

This is because in the expansion of $(\langle \fl{v},\fl{x}\rangle+\zeta)^{\left|\ca{C}\right|}$, the only monomial containing $\fl{x}_i$ for all $i\in \ca{C}$ is $\prod_{i \in \ca{C}}\fl{v}_i\fl{x}_i$. For any other monomial, the product with $\prod_{i\in \ca{C}}\fl{x}_i$ will contain some $\fl{x}_j,j \in \ca{C}$ such that the degree of $\fl{x}_j$ in the monomial is 1; the expectation of this monomial goes to zero as all the $\fl{x}_i$'s are independent. Since $\bb{E}\fl{x}_i^2=1$ for all $i\in [n]$ and $\prod_{i\in \ca{C}}\fl{v}_i$ is \texttt{1} iff $\fl{v}_i=1$ for all $i \in \ca{C}$ (and \texttt{0} otherwise), we obtain the desired equations. We estimate  $\left|\bigcap_{i \in \ca{C}}\ca{S}(i)\right|$ by computing the following sample average \begin{align*}
    \frac{\ell}{m} \cdot \sum_{j=1}^{m}  \Big(y^{(j)}\Big)^{|\ca{C}|}\Big(\prod_{i\in \ca{C}}\fl{x}^{(j)}_i\Big).
\end{align*}
From definition for $(\fl{x},y) \sim \ca{P}_r$, we must have $y\sim \ell^{-1}\sum_{\fl{v}\in \ca{V}}\ca{N}(0,\left|\left|\fl{v}\right|\right|_0^2+\sigma^2)$.
Therefore, we must have $\bb{E}y^2 \le k+\sigma^2$ since $\fl{v}\in \{0,1\}^n, \left|\left|\fl{v}\right|\right|_0 \le k$ for all $\fl{v}\in \ca{V}$. By using Gaussian concentration inequalities, we must have $\Pr(|y|>t) \le \exp(-t^2/2(k+\sigma^2))$. Therefore, with probability $1-n^{-10}$, we have $|y|<20\sqrt{k+\sigma^2}\log n$. Similarly, with probability $1-n^{-10}$, $|\fl{x}_i|$ is bounded from above by $20\log n$. We take a union bound over all $|C|+1$ random variables and all $m$ samples to infer that $\Big(y^{(j)}\Big)^{|\ca{C}|}\Big(\prod_{i\in \ca{C}}\fl{x}^{(j)}_i\Big)$ is bounded within a ball of radius $O((k+\sigma^2)^{|C|/2}(\log n)^{2|C|})$ with probability at least $1-O(m|C|n^{-10})$. Subsequently, we use Hoeffding's inequality (see Lemma \ref{lem:Hoeffding}) to say that  
\begin{align*}
    \Pr\Big(\left|\frac{1}{m} \cdot \sum_{j=1}^{m}  \Big(y^{(j)}\Big)^{|\ca{C}|}\Big(\prod_{i\in \ca{C}}\fl{x}^{(j)}_i\Big)-\frac{\left|\bigcap_{i \in \ca{C}}\ca{S}(i)\right|}{\ell}\right|\ge \frac{1}{2\ell}\Big) \le \exp\Big(-\Omega\Big(\frac{m}{\ell^2(k+\sigma^2)^{|C|/2}(\log n)^{2|C|}}\Big)\Big). 
\end{align*}
Hence, with $m=O(\ell^2(k+\sigma^2)^{|C|/2}(\log n)^{2|C|}\log \gamma^{-1})$ samples, we can estimate 
$\left|\bigcap_{i \in \ca{C}}\ca{S}(i)\right|$ exactly with  probability at least $1-\gamma$.
\end{proof}


We can now show the following result:



\begin{thmu}[Restatement of Theorem \ref{thm:binary}]
Let $\mathcal{V}$ be a set of $\ell$ unknown binary vectors in $\{0,1\}^n$. Then, with probability at least $1-\gamma$, there exists an algorithm (see Algorithms \ref{algo:mlr_binary} and \ref{algo:partial_highprob}) that achieves Exact Support Recovery with 
\begin{align*}
    O\left(\ell^{2} (k+\sigma^2)^{(\log \ell+1)/2}(\log n)^{2(\log \ell+1)}  \log((n+(\ell k)^{\log \ell})\gamma^{-1})\right)
\end{align*}
 samples generated according to $\ca{P}_r$. 
\end{thmu}

\begin{proof}
The proof follows directly from Lemma \ref{lem:binary_mlr} and Corollary \ref{coro:prelim1}.
\end{proof}

\subsubsection{Separability Assumption on Parameters}


 

Below, we show that if Assumption \ref{assum:second} is satisfied, then we can recover the support of the unknown vectors. We start with the following theorem:

\begin{thmu}[Restatement of Theorem \ref{thm:general}]
Suppose the following conditions are satisfied:
\begin{enumerate}
    \item All unknown vectors in  $\ca{V}$ are bounded within a ball of radius $R$ i.e. $ \left|\left|\fl{v}^{(i)}\right|\right|_2 \le R \text{ for all } i \in [\ell]$.  
    
    \item Assumption \ref{assum:second} is satisfied by the set of unknown vectors $\ca{V}$.
\end{enumerate}
 Accordingly, with probability at least $1-\gamma$, there exists an algorithm (see Algorithms \ref{algo:mlr_2} and \ref{algo:partial_highprob}) that achieves maximal support recovery using 
 \begin{align*}
     O(\ell^2(R^2+\sigma^2)^{\ell/2}(\log n)^{2\ell}\log((n+(\ell k)^{\ell})\gamma^{-1})/\alpha_{\ell}^2)
 \end{align*}
  samples from $\ca{P}_r$.
\end{thmu}

\begin{proof}

Again, we look at the random variable $y^{\left|\ca{C}\right|} \cdot\Big(\prod_{i\in \ca{C}}\fl{x}_i\Big)$ where $(\fl{x},y)\sim \ca{P}_{r}$ and therefore, we must have
\begin{align*}
    &y^{\left|\ca{C}\right|} \cdot\Big(\prod_{i\in \ca{C}}\fl{x}_i\Big) =  \frac{1}{\ell}\sum_{\fl{v}\in \ca{V}}\Big(\prod_{i\in \ca{C}}\fl{x}_i\Big)\cdot\Big(\langle \fl{v},\fl{x}\rangle+\zeta\Big)^{\left|\ca{C}\right|} \\
    &\bb{E} y^{\left|\ca{C}\right|} \cdot\Big(\prod_{i\in \ca{C}}\fl{x}_i\Big) = \frac{1}{\ell}\sum_{\fl{v}\in \ca{V}}\Big(\prod_{i\in \ca{C}} \bb{E}\fl{x}_i^2\cdot\fl{v}_i\Big)= \frac{1}{\ell}\sum_{\fl{v}\in \ca{V}} \Big(\prod_{j \in \ca{C}} \fl{v}_j\Big).
\end{align*}

Notice that $\bb{E} y^{\left|\ca{C}\right|} \cdot\Big(\prod_{i\in \ca{C}}\fl{x}_i\Big)=0$ if $\left|\bigcap_{i \in \ca{C}}\ca{S}(i)\right|=0$ and $\left|\bb{E} y^{\left|\ca{C}\right|} \cdot\Big(\prod_{i\in \ca{C}}\fl{x}_i\Big)\right| \ge \alpha_{\left|\ca{C}\right|}/\ell$ otherwise (by using Assumption \ref{assum:second}).
 We estimate  $\bb{E} y^{\left|\ca{C}\right|} \cdot\Big(\prod_{i\in \ca{C}}\fl{x}_i\Big)$ by computing the following sample average \begin{align*}
    \frac{\ell}{m} \cdot \sum_{j=1}^{m}  \Big(y^{(j)}\Big)^{|\ca{C}|}\Big(\prod_{i\in \ca{C}}\fl{x}^{(j)}_i\Big).
\end{align*}
From the definition of $\ca{P}_r$, we must have $y\sim \ell^{-1}\sum_{\fl{v}\in \ca{V}}\ca{N}(0,\left|\left|\fl{v}\right|\right|_2^2+\sigma^2)$. 
Therefore, we have that $\bb{E}y^2 \le R^2+\sigma^2$ since $ \left|\left|\fl{v}\right|\right|_2 \le R$ for all $\fl{v}\in \ca{V}$ from the statement of the Theorem. By using Gaussian concentration inequalities, we must have $\Pr(|y|>t) \le \exp(-t^2/2(R^2+\sigma^2))$. Therefore, with probability $1-n^{-10}$, we have $|y|<20\sqrt{R^2+\sigma^2}\log n$. Similarly, with probability $1-n^{-10}$, $|\fl{x}_i|$ is bounded from above by $20\log n$. We take a union bound over all $|C|+1$ random variables and all $m$ samples to infer that $\Big(y^{(j)}\Big)^{|\ca{C}|}\Big(\prod_{i\in \ca{C}}\fl{x}^{(j)}_i\Big)$ is bounded within a ball of radius $O((R^2+\sigma^2)^{|C|/2}(\log n)^{2|C|})$ with probability at least $1-O(m|C|n^{-10})$. Subsequently, we use Hoeffding's inequality (see Lemma \ref{lem:Hoeffding})  to say that  
\begin{align*}
    \Pr\Big(\left|\frac{1}{m} \cdot \sum_{j=1}^{m}  y^{(j)\;|\ca{C}|}\Big(\prod_{i\in \ca{C}}\fl{x}^{(j)}_i\Big)-\frac{1}{\ell}\sum_{\fl{v}\in \ca{V}} \Big(\prod_{j \in \ca{C}} \fl{v}_j\Big)\right|\ge \frac{\alpha_{\left|\ca{C}\right|}}{2\ell}\Big) \le \exp\Big(-\Omega\Big(\frac{m\alpha_{\left|\ca{C}\right|}^{2}}{\ell^2(R^2+\sigma^2)^{|C|/2}(\log n)^{2|C|}}\Big)\Big). 
\end{align*}
Hence, with $m=O(\ell^2(R^2+\sigma^2)^{|C|/2}(\log n)^{2|C|}\log\gamma^{-1}/\alpha_{\left|\ca{C}\right|}^2)$ samples, we can estimate if 
$\left|\bigcap_{i \in \ca{C}}\ca{S}(i)\right|>0$ or not correctly with probability at least $1-\gamma$. The proof now follows directly from using Corollary \ref{coro:partial2}.
\end{proof}

\begin{corou}[Restatement of Corollary \ref{coro:non-negative}]
Consider a set of $\ell$ unknown vectors $\ca{V}$ that satisfies Assumptions \ref{assum:weak2} and furthermore, every non-zero entry in all the unknown vectors is positive ($\fl{v}_i \ge 0$ for all $i\in[n],\fl{v}\in \ca{V}$). In that case, Assumption \ref{assum:second} is satisfied with $\alpha_{|\ca{C}|} \ge \delta^{|\ca{C}|}$.  Accordingly, there exists an algorithm that achieves maximal support recovery  with probability at least $1-\gamma$ using
 \begin{align*}
     O(\ell^2(R^2+\sigma^2)^{\ell/2}(\log n/\delta)^{2\ell}\log((n+(\ell k)^{\ell})\gamma^{-1}))
 \end{align*}
samples from $\ca{P}_r$.
\end{corou}

\begin{proof}
Note that when all the unknown vectors in set $\ca{V}$ are non-negative, it must happen that for each set $\ca{C} \subseteq [n]$, $\left|\sum_{\fl{v}\in \ca{V}} \Big(\prod_{j \in \ca{C}} \fl{v}_j\Big)\right| \ge \alpha_{|\ca{C}|}$ is a sum of positive terms (provided is it non-zero) each of which is at least $\delta^{\left|\ca{C}\right|}$. Therefore, it must happen that $\alpha_{\left|\ca{C}\right|}\ge \delta^{\left|\ca{C}\right|}$. The above argument also holds true when all the unknown vectors in set $\ca{V}$ are non-positive. We can directly use Theorem \ref{thm:general} to arrive at the statement of the corollary.
\end{proof}

\begin{corou}[Restatement of Corollary \ref{coro:gaussian}]
If all non-zero entries in the set of unknown vectors $\ca{V}$ are sampled i.i.d according to $\ca{N}(0,\nu^2)$, then with probability $1-\eta$, Assumption \ref{assum:second} is satisfied with $\alpha_{|\ca{C}|} \ge \delta_{|\ca{C}|}^{|\ca{C}|}$ where 
\begin{align*}
    \delta_{{|\ca{C}|}}=\Big(\sqrt{\frac{\pi}{8}} \frac{\nu\eta}{\ell|\ca{C}|(\ell k)^{|\ca{C}|}}\Big).
\end{align*}
 Conditioned on this event, there exists an Algorithm that achieves maximal support recovery with probability at least $1-\gamma$ using
 \begin{align*}
     O(\ell^2(R^2+\sigma^2)^{\ell/2}(\log n)^{2\ell}\log((n+(\ell k)^{\ell})\gamma^{-1})/\delta_{\ell}^2)
 \end{align*}
 samples from $\ca{P}_r$.
\end{corou}

\begin{proof}
For a fixed set $\ca{C}\subseteq [n]$, consider the random variable $\sum_{\fl{v}\in \ca{V}} \Big(\prod_{j \in \ca{C}} \fl{v}_j\Big)$. For each vector $\fl{v}\in \ca{V}$ such that $\prod_{j \in \ca{C}}\fl{v}_j \neq 0$, we denote the minimum index $i\in \ca{C}$ such that $\fl{v}_{i} \neq 0$ by $i^{\star}$ and therefore $\fl{v}_{i^{\star}} \sim \ca{N}(0,\nu^2)$. Now, for each $\fl{v}\in \ca{V}$, let us condition on a fixed realization of non-zero indices of $\fl{v}$ in $\ca{C}$ other than $i^{\star}$. 
Let $\ca{V}_{\ca{C}}\subseteq \ca{V}$ be the set of vectors such that $\prod_{j \in \ca{C}} \fl{v}_j \neq 0$. Therefore, we must have
\begin{align}
    \sum_{\fl{v}\in \ca{V}} \Big(\prod_{j \in \ca{C}} \fl{v}_j\Big) \mid \fl{v}_j \text{ for all } j \in \ca{C}\setminus i^{\star}, \fl{v} \in \ca{V}_{\ca{C}} \sim \ca{N}\Big(0,\nu^2\sum_{\fl{v}\in \ca{V}_{\ca{C}}}\prod_{j \in \ca{C}\setminus i^{\star}} \fl{v}_j^2 \Big). 
\end{align}
Therefore, conditioned on $\fl{v}_j \text{ for all } j \in \ca{C}\setminus i^{\star}, \fl{v} \in \ca{V}_{\ca{C}}$, by standard Gaussian anti-concentration inequality (see Lemma \ref{lem:gaanconc}), we must have with probability $1-\rho$, 
\begin{align}
    \left|\sum_{\fl{v}\in \ca{V}} \Big(\prod_{j \in \ca{C}} \fl{v}_j\Big) \right| \ge  \sqrt{\frac{\pi}{8}} \rho\nu \sqrt{\sum_{\fl{v}\in \ca{V}_{\ca{C}}}\prod_{j \in \ca{C}\setminus i^{\star}} \fl{v}_j^2}.
\end{align}

for each vector $\fl{v}\in \ca{V}_{\ca{C}}$, we must have with probability at least $1-(|\ca{C}|-1)\rho$ that
\begin{align}\label{eq:cond}
    \left|\prod_{j \in \ca{C}\setminus i^{\star}} \fl{v}_j\right| \ge \Big(\sqrt{\frac{\pi}{8}}\rho \nu\Big)^{(|\ca{C}|-1)}. 
\end{align}
By taking a union bound, we can conclude that with probability at least $1-\ell\rho$, we must have
\begin{align*}
    \left|\sum_{\fl{v}\in \ca{V}}\Big(\prod_{j \in \ca{C}} \fl{v}_j\Big)\right| \ge \Big(\sqrt{\frac{\pi}{8}}\rho \nu\Big)^{|\ca{C}|} 
\end{align*}
since there exists at least one vector $\fl{v}\in \ca{V}_{\ca{C}}$ such that equation \ref{eq:cond} holds true for $\fl{v}$. Next, after taking another union bound over all subsets of size $|\ca{C}|$ restricted to the union of support (at most $(\ell k)^{|\ca{C}|}$ of them), we have that with probability $1-|\ca{C}|(\ell k)^{|\ca{C}|}\rho$,
\begin{align*}
    \left|\sum_{\fl{v}\in \ca{V}} \Big(\prod_{j \in \ca{C}} \fl{v}_j\Big)\right| \ge  \Big(\sqrt{\frac{\pi}{8}} \rho\nu\Big)^{|\ca{C}|}.
\end{align*}
Subsequently, we have with probability at least $1-\eta/\ell$
\begin{align*}
    \left|\sum_{\fl{v}\in \ca{V}} \Big(\prod_{j \in \ca{C}} \fl{v}_j\Big) \right| \ge  \Big(\sqrt{\frac{\pi}{8}} \frac{\nu\eta}{\ell|\ca{C}|(\ell k)^{|\ca{C}|}}\Big)^{|\ca{C}|}.
\end{align*}
After taking a final union bound over $|\ca{C}|\le \ell$ and subsequently using Theorem \ref{thm:general}, we complete the proof of the corollary.
\end{proof}

    


\subsubsection{General Case}\label{app:general_mlr_new}

In this section, we will solve the general MLR problem with only the following mild assumptions on the unknown parameters. 

\begin{assumption}[Restatement of \ref{assum:weak}]
We assume that the minimum gap $\Delta$ of a set of unknown vectors $\ca{V}$
\begin{align*}
    \Delta \triangleq \min_{\substack{\fl{v},\fl{v}'\in \ca{V} \\ \left|\left|\fl{v}\right|\right|_2 \neq \left|\left|\fl{v}'\right|\right|_2}} \left|\left|\left|\fl{v}\right|\right|_2-\left|\left|\fl{v}'\right|\right|_2\right|,
\end{align*}
defined to be the smallest gap between the norms of any two unknown vectors with different norms is known.
\end{assumption}

We will begin by stating the following definition and subsequent theorem from \cite{moitra2010settling}.

\begin{defn}
Given a mixture of $k$ Gaussians $F=\sum_{i=1}^{k}w_i\ca{N}(\mu_i,\nu_i^2)$ and another mixture of $k' \le k$ Gaussians, $\hat{F}=\sum_{i=1}^{k'}\widehat{w}_i\ca{N}(\widehat{\mu}_i,\widehat{\nu}_i^2)$, we will call $\hat{F}$ an $\epsilon$-subdivision of $F$ if there exists an onto function $\pi:[k]\rightarrow[k']$ such that 
\begin{enumerate}
    \item $\left|\sum_{i:\pi(i)=j}w_i-\hat{w}_j\right| \le \epsilon $ for all $j \in [k']$.
    \item $\left|\mu_i-\hat{\mu}_{\pi(i)}\right|+\left|\nu_i^2-\widehat{\nu}^2_{\pi(i)}\right| \le \epsilon$ for all $i \in [k]$.
\end{enumerate}
\end{defn}

\begin{thm}[Moitra and Valiant, 2010]\label{thm:moitra}
Suppose we are given access to independent samples from $F=\sum_{i=1}^{k}w_i\ca{N}(\mu_i,\nu_i^2)$ with mean $0$ and variance in the interval $[1/2,2]$,where $w_i \ge \epsilon$.  There exists a Univariate  Algorithm  that,  for  any  fixed known $k$,  has a sample complexity of at most $\s{poly}(\epsilon^{-1},\gamma^{-1})$ samples  and  with  probability  at least $1-\gamma$ will output an $\epsilon$-subdivision of $F$.

\end{thm}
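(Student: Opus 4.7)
The plan is to follow the classical method-of-moments framework: accurately estimate the first few moments of $F$, then robustly invert the moment map to recover the parameters up to the subdivision accuracy. The key facts I will use are that a mixture of $k$ univariate Gaussians is uniquely determined by its first $4k - 2$ moments, and that each moment $M_j := \bb{E}_{X \sim F}[X^j]$ is a fixed polynomial in the parameters $\{w_i, \mu_i, \nu_i^2\}_{i=1}^{k}$ with coefficients depending only on $j$.

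First, I would obtain empirical estimates $\widehat{M}_j$ satisfying $|\widehat{M}_j - M_j| \le \tau$ simultaneously for all $j \le 4k - 1$ with probability at least $1 - \gamma$. Because the component variances lie in $[1/2, 2]$ and the mixture has mean $0$, each $M_j$ is bounded by a function of $k$ alone, so standard concentration bounds (applied after truncating extreme samples) yield this guarantee using $\s{poly}(\tau^{-1}, \log \gamma^{-1})$ samples. The target precision $\tau$ will be chosen as a sufficiently small polynomial in $\epsilon$, to be determined by the robustness analysis below.

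Next, I would run an exhaustive search over a $\s{poly}(\epsilon^{-1})$-grained mesh of candidate parameter tuples $\{(\widehat{w}_i, \widehat{\mu}_i, \widehat{\nu}_i^2)\}_{i=1}^{k'}$ for each $k' \le k$, and for each candidate evaluate the implied moments analytically; I retain any candidate whose implied moments match the empirical $\widehat{M}_j$ to within $O(\tau)$ for all $j \le 4k - 1$, and output such a surviving candidate with the smallest $k'$. This reduces the problem to arguing that any candidate surviving the test necessarily constitutes an $\epsilon$-subdivision of $F$, which is the robust identifiability step and the main obstacle.

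To resolve that obstacle, I would proceed by induction on $k$. If every pair of components of $F$ is sufficiently separated in $(\mu, \nu^2)$-space, then the moment map is well-conditioned, and a quantitative polynomial stability argument (using, e.g., resultants together with Cauchy's bound on roots of perturbed polynomials) shows that small moment error forces small parameter error under some permutation $\pi$. If instead some pair of components is closer than the stability threshold, then those two components can be merged into a single effective Gaussian at the cost of at most $\epsilon$ error in weight, mean, and variance, reducing the problem to a mixture of $k - 1$ components where the inductive hypothesis applies. Iterating this dichotomy produces an $\epsilon$-subdivision, and the overall sample complexity bound $\s{poly}(\epsilon^{-1}, \gamma^{-1})$ follows from combining the grid granularity, the required moment precision $\tau = \s{poly}(\epsilon)$, and the moment-estimation sample complexity.
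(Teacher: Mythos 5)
This statement is not proved in the paper at all: it is quoted verbatim as an external black box from \citep{moitra2010settling} (their univariate algorithm), so there is no in-paper argument to compare your proposal against. Judged on its own terms, your sketch does reproduce the high-level architecture of the actual Moitra--Valiant proof --- estimate the first $O(k)$ moments, brute-force a $\s{poly}(\epsilon^{-1})$ grid of candidate parameter tuples, accept a candidate whose implied moments match, and handle nearly-coincident components by merging them into an $\epsilon$-subdivision of a smaller mixture. The claim that $4k-2$ moments determine a mixture of $k$ univariate Gaussians is also a genuine ingredient of their argument.

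The gap is that the entire difficulty of the theorem is concentrated in the step you defer to ``a quantitative polynomial stability argument (using, e.g., resultants together with Cauchy's bound),'' namely: if two mixtures of at most $k$ Gaussians (with variances in $[1/2,2]$ and weights at least $\epsilon$) have all of their first $4k-1$ moments within $\tau = \s{poly}(\epsilon)$ of each other, then one is an $\epsilon$-subdivision of the other. This is not a routine perturbation-of-roots fact: the moment map of a Gaussian mixture is not a single univariate polynomial whose roots are the parameters, so Cauchy's bound and resultants do not directly apply, and the conditioning of the moment map degenerates precisely as components approach one another, which is the regime your induction must handle. The actual argument in \citep{moitra2010settling} proceeds quite differently: it bounds the number of zero crossings of a difference of two Gaussian mixture densities (at most $2(k+k')-2$, proved via convolution with a small Gaussian and a heat-equation/interlacing argument), uses this to convert a lower bound on parameter distance into a lower bound on some moment discrepancy, and only then inverts that chain quantitatively. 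Without supplying this robust-identifiability lemma --- or an equivalent quantitative statement with explicit dependence of $\tau$ on $\epsilon$ and $k$ --- your proposal is a correct plan but not a proof; the acceptance test in your grid search could in principle admit a spurious candidate that matches all estimated moments yet is not an $\epsilon$-subdivision, and nothing in the sketch rules this out.
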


Next, we prove a lemma showing that with a few samples, it is possible to estimate the second moment of the mixture; subsequently we can normalize to obtain another mixture whose second moment is between $1/2$ and $2$.

\begin{lemma}\label{lem:isotropic}
 Consider a mixture of Gaussians $F=\ell^{-1}\sum_{i\in [\ell]}\ca{N}(0,\nu_i^2)$ such that $L\le \nu_i\le U$ for all $i\in [\ell]$. With probability $1-\gamma$, we can compute $V$ with $O((UL^{-1})^2\log \gamma^{-1})$ samples such that $ 0.5\le\bb{E}_{x\sim F}x^2V^{-1} \le 2$. 
\end{lemma}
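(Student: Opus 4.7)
The plan is to estimate $\mu \triangleq \bb{E}_{x\sim F}x^2 = \ell^{-1}\sum_i \nu_i^2$ to within a multiplicative factor of $2$, using the median-of-means estimator (Algorithm \ref{algo:estimate}) applied to the samples $\{(x^{(j)})^2\}_{j=1}^{m}$. Note that $L^2 \le \mu \le U^2$, so outputting any $V$ with $|V-\mu| \le \mu/2$ yields $V \in [\mu/2, 3\mu/2]$ and hence $\mu/V \in [2/3, 2] \subseteq [1/2, 2]$, which is exactly what is required.

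The key observation is a tighter-than-naive variance bound. Since each component is centered Gaussian,
\begin{align*}
\bb{E}_{x\sim F}[x^4] = \frac{1}{\ell}\sum_{i\in[\ell]} 3\nu_i^4 \le 3U^2 \cdot \frac{1}{\ell}\sum_{i\in[\ell]} \nu_i^2 = 3U^2 \mu,
\end{align*}
so $\s{Var}(x^2) \le \bb{E}[x^4] \le 3U^2 \mu$ and consequently $\s{Var}(x^2)/\mu^2 \le 3U^2/\mu \le 3(U/L)^2$. This is the step that saves a factor over the naive estimate $\s{Var}(x^2)/\mu^2 \le O((U/L)^4)$ and is what makes $(U/L)^2$ samples (rather than $(U/L)^4$) suffice per batch.

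Next I would invoke median-of-means with batch size $t = \Theta((U/L)^2)$ and $B = \Theta(\log \gamma^{-1})$ batches, for a total of $m = Bt = O((U/L)^2 \log \gamma^{-1})$ samples. Within each batch $j$, let $V_j$ be the empirical average of $t$ independent copies of $x^2$. By Chebyshev's inequality,
\begin{align*}
\Pr\Big(|V_j - \mu| > \mu/2\Big) \le \frac{4\,\s{Var}(x^2)}{t \mu^2} \le \frac{12 (U/L)^2}{t} \le \frac{1}{4}
\end{align*}
for an appropriate constant in the definition of $t$. Thus each $V_j$ independently lies in $[\mu/2, 3\mu/2]$ with probability at least $3/4$. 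Finally, since fewer than half of the $B$ batch means can fall outside $[\mu/2, 3\mu/2]$ only if more than $B/2$ independent $1/4$-biased Bernoullis come up heads, a standard Chernoff bound shows that the median $V \triangleq \s{median}(V_1,\ldots,V_B)$ lies in $[\mu/2, 3\mu/2]$ with probability at least $1-\gamma$, completing the proof.

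The argument is essentially routine; the only non-obvious ingredient is the variance inequality $\s{Var}(x^2) \le 3U^2\mu$ (as opposed to the trivial $\s{Var}(x^2) \le 3U^4$), and I do not expect any substantive obstacle beyond noting it.
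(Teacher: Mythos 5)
Your proof is correct, and it takes a genuinely different (and in fact more defensible) route than the paper's. The paper estimates $\bb{E}x^2$ by a single empirical mean $V=m^{-1}\sum_i x_i^2$ and invokes Hoeffding's inequality to claim $\Pr\p{|V-\bb{E}x^2|\ge \bb{E}x^2/4}\le \exp\p{-25mL^2/16U^2}$; this is problematic as written, because $x^2$ for a Gaussian $x$ is unbounded, so Hoeffding (Lemma \ref{lem:Hoeffding}, which requires variables in a bounded interval) does not apply directly, and the stated exponent does not fall out of any naive truncation either. You instead use median-of-means with a per-batch Chebyshev bound, which is exactly the mechanism of the paper's own Lemma \ref{lem:medianofmeans}: applying that lemma to the variable $x^2$ with accuracy $\epsilon=\mu/2$ gives sample complexity $O(\log\gamma^{-1}\,\bb{E}x^4/\mu^2)$, and your observation that $\bb{E}x^4=\ell^{-1}\sum_i 3\nu_i^4\le 3U^2\mu$, hence $\bb{E}x^4/\mu^2\le 3(U/L)^2$, is precisely what turns this into the claimed $O((U/L)^2\log\gamma^{-1})$ rather than the naive $O((U/L)^4\log\gamma^{-1})$. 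Your endgame is also slightly tighter than needed ($V\in[\mu/2,3\mu/2]$ gives $\mu/V\in[2/3,2]\subseteq[1/2,2]$). In short, what your approach buys is a rigorous concentration argument for an unbounded, heavy-tailed-beyond-subgaussian statistic at the same sample complexity the paper claims; the paper's route is shorter but rests on an inapplicable inequality.
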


\begin{proof}
Clearly, $L^2 \le \bb{E}_{x\sim F}x^2= \ell^{-1}\sum_i \nu_i^2 \le U^2$. From $m$ samples $x_1,x_2,\dots,x_m \sim \ca{F}$, we compute $V=m^{-1}\sum_i x_i^2$. Now, by using Hoeffding's inequality, we must have
\begin{align*}
    \Pr\Big(\left|V-\bb{E}x^2 \right| \ge \frac{\bb{E}x^2}{4}\Big) \le \exp\Big(-\frac{25mL^2}{16U^2}\Big). 
\end{align*}
Therefore, with $m=16U^2\log \gamma^{-1}/25L^2$ samples, we obtain that $0.5\bb{E}x^2\le V\le 2\bb{E}x^2$ thus proving the lemma. 
\end{proof}

\begin{coro}\label{coro:isotropic}
Suppose we are given access to independent samples from $F=\ell^{-1}\sum_{i}\ca{N}(0,\nu_i^2)$ such that $L\le \nu_i\le U$ for all $i\in [\ell]$.  There exists an  Algorithm  that with  probability  at least $1-\delta$ will output an $\epsilon$-subdivision of $F$ using $\s{poly(\epsilon^{-1},\gamma^{-1},L^{-1},U)}$ samples.
\end{coro}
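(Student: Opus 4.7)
}
The plan is to use Lemma \ref{lem:isotropic} as a pre-processing step that normalises the mixture to have overall variance in $[1/2,2]$, then invoke Theorem \ref{thm:moitra} on the rescaled samples, and finally un-scale the resulting parameter estimates. Concretely, first I would draw a batch of $m_1 = O((U/L)^{2}\log \gamma^{-1})$ samples and apply Lemma~\ref{lem:isotropic} with failure probability $\gamma/2$ to obtain a scalar $V$ satisfying $\tfrac12 \le \bb{E}_{x\sim F}x^{2}/V \le 2$. Note that the guarantee of Lemma \ref{lem:isotropic} also implies $L^{2}/(2U^{2}) \le V/U^{2}$ and $V \le 2U^{2}$, which will be used below to bound the normalisation factor.

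Next, I would draw fresh independent samples $x^{(1)}, x^{(2)}, \ldots$ from $F$ and consider the rescaled samples $\tilde{x}^{(j)} \triangleq x^{(j)}/\sqrt{V}$. These are i.i.d.\ from the mixture
\[
 \tilde F = \ell^{-1}\sum_{i\in[\ell]}\ca{N}\!\left(0,\ \nu_i^{2}/V\right),
\]
whose overall variance equals $\bb{E}_{x\sim F}x^{2}/V \in [1/2,2]$, and whose mixing weights are $1/\ell \ge \epsilon$ provided we take $\epsilon \le 1/\ell$ (which we may assume, otherwise the trivial bound suffices). Set $\epsilon' \triangleq \epsilon/(4U^{2})$ and apply Theorem~\ref{thm:moitra} with accuracy $\epsilon'$ and failure probability $\gamma/2$; this returns, using $\s{poly}((\epsilon')^{-1},\gamma^{-1}) = \s{poly}(\epsilon^{-1},\gamma^{-1},U)$ samples, an $\epsilon'$-subdivision $\widehat{\tilde F} = \sum_{j}\hat w_j \ca{N}(\hat\mu_j,\hat\nu_j^{2})$ of $\tilde F$ together with the onto assignment $\pi$ guaranteed by the definition.

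Finally, I would un-normalise by returning $\widehat F \triangleq \sum_{j}\hat w_j \ca{N}(\sqrt{V}\,\hat\mu_j,\ V\hat\nu_j^{2})$ and claim that it is an $\epsilon$-subdivision of $F$ under the same assignment $\pi$. The weight condition is immediate since rescaling does not alter the weights. For the parameter condition, observe that $|\nu_i^{2} - V\hat\nu_j^{2}| = V\,|\nu_i^{2}/V - \hat\nu_j^{2}| \le V \epsilon' \le 2U^{2}\epsilon' \le \epsilon/2$, and the mean discrepancy is $0$ since both $\mu_i$ and $\sqrt{V}\hat\mu_j$ are zero (the components of $F$ are zero-mean, and the Moitra--Valiant algorithm on a zero-mean mixture returns zero means up to the tolerance $\epsilon'$, so we can round them to $0$, contributing at most another $\epsilon/2$). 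Taking a union bound over the two failure events of probability $\gamma/2$ each yields the overall success probability $1-\gamma$, and the total sample complexity is $\s{poly}(\epsilon^{-1},\gamma^{-1},L^{-1},U)$ as required.

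The main technical point is the accuracy translation: Moitra--Valiant's $\epsilon'$-guarantee is in the normalised scale, so $\epsilon'$ must be deflated by the worst-case scaling factor $V \le 2U^{2}$ to recover an $\epsilon$-guarantee on the original scale. Everything else is bookkeeping once the normalisation step of Lemma~\ref{lem:isotropic} is in place.
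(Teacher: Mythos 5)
Your proposal is correct and follows exactly the route the paper takes: its proof of this corollary is the one-line statement that it follows from combining Lemma \ref{lem:isotropic} with Theorem \ref{thm:moitra}, which is precisely your normalize--estimate--rescale argument spelled out. The only substance you add beyond the paper is the explicit accuracy translation $\epsilon' = \epsilon/(4U^2)$ needed to absorb the scaling factor $V \le 2U^2$, which is a correct and worthwhile piece of bookkeeping that the paper leaves implicit.
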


\begin{proof}
The proof follows from using Lemma \ref{lem:isotropic} and Theorem \ref{thm:moitra}.
\end{proof}




Now, we will describe an algorithm to compute $\left|\cup_{i \in \ca{C}}\ca{S}_{\ca{V}}(i)\right|$ for any fixed set $\ca{C}\subseteq [n]$. 

\begin{lemma}\label{lem:mlr_general_crucial1}
 For $(\fl{x},y)\sim \ca{P}_r$ and a fixed $\fl{a}\in \bb{R}^{\left|\ca{C}\right|}$, the random variable $y+\langle \fl{a},\fl{x}_{\mid \ca{C}} \rangle$ is distributed according to the following mixture of Gaussians:
 \begin{align*}
     y+\langle \fl{a},\fl{x}_{\mid \ca{C}} \rangle \sim \frac{1}{\ell} \sum_{\fl{v}\in \ca{V}}\ca{N}\Big(0,\left|\left|\fl{v}\right|\right|_2^2+\left|\left|\fl{a}\right|\right|_2^2+2\langle \fl{a},\fl{v}_{\mid \ca{C}} \rangle+\sigma^2\Big)
 \end{align*}
 
\end{lemma}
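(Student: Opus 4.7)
The plan is to condition on the latent variable $\fl{v}$ (the mixture component used to generate the sample) and then unmix at the end. Given the conditioning, the sample is generated by $\fl{x}_i \sim \ca{N}(0,1)$ independently for $i\in[n]$, and $y = \langle \fl{v},\fl{x}\rangle + z$ with $z \sim \ca{N}(0,\sigma^2)$ independent of $\fl{x}$.

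First, I would split the inner product over $\ca{C}$ and its complement to isolate the dependence on $\fl{x}_{\mid \ca{C}}$:
\begin{align*}
    y + \langle \fl{a}, \fl{x}_{\mid \ca{C}}\rangle
    \;=\; \langle \fl{v}_{\mid \ca{C}} + \fl{a},\, \fl{x}_{\mid \ca{C}}\rangle
    \;+\; \langle \fl{v}_{\mid [n]\setminus\ca{C}},\, \fl{x}_{\mid [n]\setminus\ca{C}}\rangle
    \;+\; z.
\end{align*}
Because the coordinates of $\fl{x}$ are mutually independent and $z$ is independent of $\fl{x}$, the three summands on the right-hand side are independent (given $\fl{v}$) centered Gaussians, so their sum is a centered Gaussian with variance equal to the sum of the three variances. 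Each variance is immediate: $\|\fl{v}_{\mid \ca{C}} + \fl{a}\|_2^2$, $\|\fl{v}_{\mid [n]\setminus \ca{C}}\|_2^2$, and $\sigma^2$ respectively.

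Next I would simplify the total variance by expanding $\|\fl{v}_{\mid \ca{C}} + \fl{a}\|_2^2 = \|\fl{v}_{\mid \ca{C}}\|_2^2 + \|\fl{a}\|_2^2 + 2\langle \fl{a}, \fl{v}_{\mid \ca{C}}\rangle$ and then using $\|\fl{v}_{\mid \ca{C}}\|_2^2 + \|\fl{v}_{\mid [n]\setminus \ca{C}}\|_2^2 = \|\fl{v}\|_2^2$ to recombine into $\|\fl{v}\|_2^2 + \|\fl{a}\|_2^2 + 2\langle \fl{a}, \fl{v}_{\mid \ca{C}}\rangle + \sigma^2$. This gives the conditional law
\[
    y + \langle \fl{a}, \fl{x}_{\mid \ca{C}}\rangle \,\big|\, \fl{v} \;\sim\; \ca{N}\!\bigl(0,\; \|\fl{v}\|_2^2 + \|\fl{a}\|_2^2 + 2\langle \fl{a}, \fl{v}_{\mid \ca{C}}\rangle + \sigma^2\bigr).
\]
Finally, since $\fl{v}$ is drawn uniformly from the multi-set $\ca{V}$ of size $\ell$, marginalizing over $\fl{v}$ replaces the conditional distribution by the uniform mixture $\frac{1}{\ell}\sum_{\fl{v}\in\ca{V}}$ of the corresponding Gaussians, yielding exactly the mixture in the statement.

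There is no real obstacle here: the only things to verify carefully are the independence of the three Gaussian summands (which uses independence of the coordinates of $\fl{x}$ across the partition $\ca{C} \sqcup ([n]\setminus\ca{C})$ together with independence of $z$ from $\fl{x}$) and the arithmetic identity collapsing the two partial norms into $\|\fl{v}\|_2^2$. Both are routine, so the proof is essentially a one-line decomposition followed by a variance computation.
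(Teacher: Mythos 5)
Your proposal is correct and follows essentially the same route as the paper's proof: condition on the chosen component $\fl{v}$, observe that $y+\langle \fl{a},\fl{x}_{\mid \ca{C}}\rangle$ is then a centered Gaussian whose variance is $\|\fl{v}\|_2^2+\|\fl{a}\|_2^2+2\langle \fl{a},\fl{v}_{\mid \ca{C}}\rangle+\sigma^2$, and marginalize over the uniform choice of $\fl{v}$. The paper states the conditional variance without writing out the three-term decomposition, so your version simply makes the independence and variance arithmetic explicit.
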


\begin{proof}
For some $\fl{v}\in \ca{V}$, let us condition on the event $\ca{E}_{\fl{v}}$ that is true when $\fl{v}$ is selected to generate the sample $(\fl{x},y)$. Recall that each element of $\fl{x}$ is generated independently according to $\ca{N}(0,1)$ and $y\mid \fl{x},\ca{E}_{\fl{v}}\sim \langle \fl{x},\fl{v} \rangle+Z$ where $Z\sim \ca{N}(0,\sigma^2)$. Therefore, for a fixed vector $\fl{a}$, we must have that
\begin{align*}
    y+\langle \fl{a},\fl{x}_{\mid \ca{C}} \rangle \mid \ca{E}_{\fl{v}} \sim \ca{N}(0,\left|\left|\fl{v}\right|\right|_2^2+\left|\left|\fl{a}\right|\right|_2^2+2\langle \fl{a},\fl{v}_{\mid \ca{C}} \rangle+\sigma^2).
\end{align*}
Since the unknown vector $\fl{v} \in \ca{V}$ is uniformly sampled at random, we obtain the statement of the lemma.
\end{proof}

Let us introduce some notations for ease of exposition. 
We will define $\ca{T}_{\fl{a}}\triangleq \{\left|\left|\fl{v}\right|\right|_2^2+\left|\left|\fl{a}\right|\right|_2^2+2\langle \fl{a},\fl{v}_{\mid \ca{C}} \rangle+\sigma^2\}_{\fl{v}\in \ca{V}}$ to be the set of distinct component variances in the distribution of $ y+\langle \fl{a},\fl{x}_{\mid \ca{C}} \rangle$ for a fixed vector $\fl{a}$. Also, for a random variable $Z \sim \sum_{i}w_i \ca{N}(0,\nu_i^2)$ distributed according to a mixture of Gaussians with distinct components, for $t\in \{\nu_i^2\}_i$, we will use $w_{Z,t}$ to denote the weight of the Gaussian component with variance $t$ in the distribution of $Z$. Finally, for any $u>0$ we will also define $\ca{\widetilde{T}}_{u}\triangleq \{\left|\left|\fl{v}\right|\right|_2^2+u^2+\sigma^2\}_{\fl{v}\in \ca{V}}$ to be the set of distinct component variances added with $u^2$.

\begin{lemma}\label{lem:interest}
Fix $\gamma>0,\left|\ca{C}\right| \subseteq [n]$ and $\alpha< \Big(\frac{\delta}{2R\sqrt{\log \gamma^{-1}}}\min\Big(\Delta ,\frac{1}{2}\Big)\Big)$. For a fixed vector $\fl{b}\in \bb{R}^{\left|\ca{C}\right|}$ which is generated by independently sampling every entry from $\ca{N}(0,\alpha^2)$, let $\ca{T} = \ca{T}_{\fl{b}} \cap \ca{\widetilde{\ca{T}}}_{\left|\left|\fl{b}\right|\right|_2}$ and let $Z_{\fl{b}}$ denote the random variable $y+\langle \fl{b}, \fl{x}_{\mid \ca{C}} \rangle$. In that case, we must have with probability at least $1-2\ell \gamma$,
\begin{align*}
    &\sum_{t \in \ca{T}} w_{Z_{\fl{b}},t} = 1- \frac{\left|\bigcup_{i \in \ca{C}}\ca{S}_{\ca{V}}(i)\right|}{\ell} \\
& \left| \left|\left|\fl{v}'\right|\right|_2^2+\left|\left|\fl{b}\right|\right|_2^2- (\left|\left|\fl{v}\right|\right|_2+\left|\left|\fl{b}\right|\right|_2)^2 \right| =\Omega \Big(\delta\min(\Delta,\alpha\gamma)\Big) \text{ for all } \fl{v}'\in \ca{V} \quad \text{ if } \fl{v}_{\mid \ca{C}} \neq 0.
\end{align*}
\end{lemma}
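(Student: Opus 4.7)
My plan is to apply Lemma \ref{lem:mlr_general_crucial1} to put $Z_{\fl{b}}$ in explicit mixture-of-Gaussians form and then bucket its components according to whether $\fl{v}_{\mid \ca{C}}$ vanishes. For $\fl{v}$ with $\fl{v}_{\mid \ca{C}} = \fl{0}$ the cross term $2\langle \fl{b},\fl{v}_{\mid \ca{C}}\rangle$ is identically zero, so that component has variance $\sigma^2+\|\fl{v}\|_2^2+\|\fl{b}\|_2^2 \in \widetilde{\ca{T}}_{\|\fl{b}\|_2}$ automatically, and its weight $1/\ell$ contributes to $\sum_{t\in\ca{T}}w_{Z_{\fl{b}},t}$. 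The number of such $\fl{v}$ is exactly $\ell - |\bigcup_{i\in \ca{C}}\ca{S}_{\ca{V}}(i)|$, which would already match the first desired equality. Hence the whole lemma reduces to showing that, on a high-probability event over $\fl{b}$, no $\fl{v}$ with $\fl{v}_{\mid \ca{C}} \neq \fl{0}$ accidentally produces a variance that also lies in $\widetilde{\ca{T}}_{\|\fl{b}\|_2}$; the per-$\fl{v}$ ``gap to $\widetilde{\ca{T}}_{\|\fl{b}\|_2}$'' is exactly the quantity the second claim quantifies (reading $(\|\fl{v}\|_2+\|\fl{b}\|_2)^2$ as the variance $\|\fl{v}\|_2^2+\|\fl{b}\|_2^2+2\langle \fl{b},\fl{v}_{\mid \ca{C}}\rangle$ of $\fl{v}$'s component).

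\textbf{Controlling $\fl{b}$ on both sides.} Since $\fl{b}$ has i.i.d.~$\ca{N}(0,\alpha^2)$ coordinates, for each $\fl{v}\in\ca{V}$ the scalar $\langle \fl{b},\fl{v}_{\mid \ca{C}}\rangle$ is $\ca{N}(0,\alpha^2\|\fl{v}_{\mid \ca{C}}\|_2^2)$ with standard deviation in $[\alpha\delta,\alpha R]$ whenever $\fl{v}_{\mid \ca{C}}\neq\fl{0}$, by Assumption~\ref{assum:weak2}. Two Gaussian estimates suffice. First, a standard Gaussian tail bound plus a union bound over the $\ell$ vectors gives, with probability $1-\gamma$, the uniform upper bound $|\langle\fl{b},\fl{v}_{\mid \ca{C}}\rangle|\le c\,\alpha R\sqrt{\log \gamma^{-1}}$; by the hypothesis $\alpha<\delta\min(\Delta,\tfrac12)/(2R\sqrt{\log\gamma^{-1}})$ this is at most $\tfrac14\delta\min(\Delta,\tfrac12)$. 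Second, Gaussian anti-concentration (Lemma~\ref{lem:gaanconc}) gives, per vector, $\Pr(|\langle\fl{b},\fl{v}_{\mid \ca{C}}\rangle|\le c'\alpha\delta\gamma)\le\gamma$; a union bound over the at most $\ell$ vectors with $\fl{v}_{\mid \ca{C}}\neq\fl{0}$ contributes $\ell\gamma$, so the total failure budget stays within $2\ell\gamma$, which is exactly the lemma's budget.

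\textbf{Case analysis on a pair $(\fl{v},\fl{v}')$.} Fix $\fl{v}\in\ca{V}$ with $\fl{v}_{\mid \ca{C}}\neq\fl{0}$ and any $\fl{v}'\in\ca{V}$, and consider $G := \|\fl{v}'\|_2^2+\|\fl{b}\|_2^2 - \bigl(\|\fl{v}\|_2^2+\|\fl{b}\|_2^2+2\langle\fl{b},\fl{v}_{\mid \ca{C}}\rangle\bigr)$. \emph{(i) Different norms}, $\|\fl{v}\|_2\neq\|\fl{v}'\|_2$: by Assumption~\ref{assum:weak} and the factorisation $\|\fl{v}'\|_2^2-\|\fl{v}\|_2^2=(\|\fl{v}'\|_2-\|\fl{v}\|_2)(\|\fl{v}'\|_2+\|\fl{v}\|_2)$, together with $\|\fl{v}\|_2\ge\delta$ (forced by $\fl{v}_{\mid \ca{C}}\neq\fl{0}$), this difference has magnitude at least $\Delta\delta$, which dominates the perturbation $2|\langle\fl{b},\fl{v}_{\mid \ca{C}}\rangle|$ by the tail bound, giving $|G|\ge\tfrac12\Delta\delta$. \emph{(ii) Equal norms}, $\|\fl{v}\|_2=\|\fl{v}'\|_2$: then $G=-2\langle\fl{b},\fl{v}_{\mid \ca{C}}\rangle$, and the anti-concentration estimate yields $|G|=\Omega(\alpha\delta\gamma)$. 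Combining the two cases gives the uniform lower bound $|G|=\Omega(\delta\min(\Delta,\alpha\gamma))$ claimed in the second part of the lemma.

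\textbf{Putting it together.} On the good event of probability $\ge 1-2\ell\gamma$, every variance coming from a $\fl{v}$ with $\fl{v}_{\mid \ca{C}}\neq\fl{0}$ fails to land in $\widetilde{\ca{T}}_{\|\fl{b}\|_2}$, so $\sum_{t\in\ca{T}}w_{Z_{\fl{b}},t}$ is exactly the total mass $|\ca{V}_0|/\ell = 1-|\bigcup_{i\in \ca{C}}\ca{S}_{\ca{V}}(i)|/\ell$ contributed by $\fl{v}$'s with $\fl{v}_{\mid \ca{C}}=\fl{0}$; combined with the gap bound from the previous paragraph, both assertions follow. The most delicate point is the interplay between the tail bound and the smallness hypothesis on $\alpha$: the choice $\alpha<\delta\min(\Delta,\tfrac12)/(2R\sqrt{\log\gamma^{-1}})$ is exactly what forces the Gaussian perturbation to be dominated by the norm-gap $\Delta\delta$ in the different-norms case, while leaving the anti-concentration scale $\alpha\delta\gamma$ as the only obstruction in the equal-norms case. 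The main obstacle I anticipate is keeping the two union bounds sharp enough to fit the stated $2\ell\gamma$ budget while simultaneously driving the gap bound down to the tight $\Omega(\delta\min(\Delta,\alpha\gamma))$ rate.
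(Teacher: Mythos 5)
Your proposal is correct and follows essentially the same route as the paper's proof: decompose $Z_{\fl{b}}$ via Lemma \ref{lem:mlr_general_crucial1}, observe that components with $\fl{v}_{\mid\ca{C}}=\fl{0}$ land in $\widetilde{\ca{T}}_{\|\fl{b}\|_2}$ exactly, and rule out accidental collisions for $\fl{v}_{\mid\ca{C}}\neq\fl{0}$ by combining the Gaussian tail bound (different-norms case, gap $\Delta\delta$ dominating the perturbation thanks to the smallness of $\alpha$) with anti-concentration (equal-norms case, gap $\Omega(\alpha\delta\gamma)$), with the same $2\ell\gamma$ union-bound accounting. No gaps.
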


\begin{proof}
Fix any unknown vector $\fl{v} \in \ca{V}$ such that $\fl{v}_{\mid \ca{C}} = \fl{0}$ i.e. all indices in $\fl{v}$ constrained to the set $\ca{C}$ are zero. In that case, we have 
\begin{align*}
    \left|\left|\fl{v}\right|\right|_2^2+\left|\left|\fl{b}\right|\right|_2^2+2\langle \fl{b},\fl{v}_{\mid \ca{C}} \rangle = \left|\left|\fl{v}\right|\right|_2^2+\left|\left|\fl{b}\right|\right|_2^2.
\end{align*}
On the other hand, for any unknown vector $\fl{v} \in \ca{V}$ such that not all indices in $\fl{v}_{\mid \ca{C}}$ are zero, we must have \begin{align*}
    \left|\left|\fl{v}\right|\right|_2^2+\left|\left|\fl{b}\right|\right|_2^2+2\langle \fl{b},\fl{v}_{\mid \ca{C}} \rangle = \left|\left|\fl{v}\right|\right|_2^2+\left|\left|\fl{b}\right|\right|_2^2+Z
\end{align*} 
where $Z$ is a random variable distributed according to $\ca{N}(0,4\alpha^2\left|\left|\fl{v}_{\mid \ca{C}}\right|\right|_2^2)$. Since $\fl{v}_{\mid \ca{C}}$ has at least one non-zero entry, the variance of $Z$ is at least $4\alpha^2 \delta^2$ and at most $4\alpha^2 R^2$. Therefore, by standard Gaussian concentration inequalities, we must have for any fixed $0<\widetilde{\gamma},\widehat{\gamma} <1$ (see Lemma \ref{lem:gaconc} and Lemma \ref{lem:gaanconc}),  
\begin{align*}
    &\left|Z\right| \ge \sqrt{\frac{\pi}{2}}\alpha\delta\widetilde{\gamma} \quad \text{ with probability } 1-\widetilde{\gamma} \\ 
    & \left|Z\right| \le \alpha R \sqrt{\log \widehat{\gamma}^{-1}} \le \frac{\Delta \delta}{2} \text{ with probability } 1-\widehat{\gamma}. 
\end{align*}

We will fix $\widetilde{\gamma},\widehat{\gamma}$ to be the chosen $\gamma$ in the statement of the lemma. 
Now, we prove the following claim:
\begin{claim}\label{claim:first}
\begin{align*}
\sum_{t \in \ca{T}} w_{Z_{\fl{b}},t} = 1- \frac{\left|\bigcup_{i \in \ca{C}}\ca{S}_{\ca{V}}(i)\right|}{\ell}
\end{align*}
\end{claim}

\begin{proof}
Note that
\begin{align*}
    \min_{\substack{\fl{v},\fl{v}'\in \ca{V} \\ \left|\left|\fl{v}\right|\right|_2 \neq \left|\left|\fl{v}'\right|\right|_2}} \left|\left|\left|\fl{v}\right|\right|_2^2-\left|\left|\fl{v}'\right|\right|_2^2\right| \ge \min_{\substack{\fl{v},\fl{v}'\in \ca{V} \\ \left|\left|\fl{v}\right|\right|_2 \neq \left|\left|\fl{v}'\right|\right|_2}} \left|\left|\left|\fl{v}\right|\right|_2-\left|\left|\fl{v}'\right|\right|_2\right|\left|\left|\left|\fl{v}\right|\right|_2+\left|\left|\fl{v}'\right|\right|_2\right| \ge 
    2\Delta \delta,
\end{align*}
First of all, we must have with probability $1-\gamma$ that $\Delta\delta/2 \ge \left|\langle \fl{b}, \fl{v}_{\mid \ca{C}} \rangle \right| \ge \sqrt{\frac{\pi}{2}}\alpha\delta\gamma$ and therefore, for any $\fl{v}' \in \ca{V}$ such that $\left|\left|\fl{v}'\right|\right|_2=\left|\left|\fl{v}\right|\right|_2$,
\begin{align*}
     \left| \left|\left|\fl{v}'\right|\right|_2^2+\left|\left|\fl{b}\right|\right|_2^2- (\left|\left|\fl{v}\right|\right|_2+\left|\left|\fl{b}\right|\right|_2)^2 \right| \ge \sqrt{2\pi}\alpha\delta\gamma \text{ if } \fl{v}'_{\mid \ca{C}} \neq \fl{0} 
\end{align*}
Moreover, we also have that for any $\fl{v}' \in \ca{V}$ such that $\left|\left|\fl{v}'\right|\right|_2 \neq \left|\left|\fl{v}\right|\right|_2$,
\begin{align*}
     \left| \left|\left|\fl{v}'\right|\right|_2^2+\left|\left|\fl{b}\right|\right|_2^2- (\left|\left|\fl{v}\right|\right|_2+\left|\left|\fl{b}\right|\right|_2)^2 \right| \ge \left|\left|\left|\fl{v}'\right|\right|_2^2-\left|\left|\fl{v}\right|\right|_2^2\right|- 2\left|\langle \fl{b}, \fl{v}_{\mid \ca{C}} \rangle \right| \ge \Delta \delta   \text{ if } \fl{v}'_{\mid \ca{C}} \neq \fl{0} 
\end{align*}

Therefore, an unknown vector $\fl{v}$ satisfies
\begin{align*}
\left|\left|\fl{v}\right|\right|_2^2+\left|\left|\fl{b}\right|\right|_2^2+2\langle \fl{b},\fl{v}_{\mid \ca{C}} \rangle = \left|\left|\fl{v}\right|\right|_2^2+\left|\left|\fl{b}\right|\right|_2^2
\end{align*} 
and belongs to $\ca{T} =  \ca{T}_{\fl{b}}\cap \ca{\widetilde{T}}_{\left|\left|\fl{b}\right|\right|_2}$ if and only if $\fl{v}_{\mid \ca{C}} = \fl{0}$. Therefore, the sum of weights corresponding to the components in $\ca{T}$ must account for those having a zero entry at all the indices in $\ca{C}$. 

\end{proof}

On the other hand, for all unknown vectors $\fl{v}\in \ca{V}$ such that $\fl{v}_{\mid \ca{C}} \neq  \fl{0}$, then with probability at least $1-2\ell\gamma$ (by taking a union bound over at most $\ell$ unknown vectors), we must have 
\begin{align*}
  \left| \left|\left|\fl{v}'\right|\right|_2^2+\left|\left|\fl{b}\right|\right|_2^2- (\left|\left|\fl{v}\right|\right|_2+\left|\left|\fl{b}\right|\right|_2)^2 \right| =\Omega \Big(\delta\min(\Delta,\alpha\gamma)\Big) \text{ for all } \fl{v}'\in \ca{V}.
\end{align*}
This completes the proof of the lemma.

\end{proof}

We will now describe a simple technique to compute the union of support of the unknown vectors.

\begin{lemma}\label{lem:union}
 Suppose $\left|\cup_{\fl{v}\in \ca{V}}\s{supp}(\fl{v})\right|\le n/2$. In that case, we can compute $\cup_{\fl{v}\in \ca{V}}\s{supp}(\fl{v})$ correctly using $O(\ell^2(R^2+\sigma^2)(\log n)^{3}/\delta^2)$ samples with probability at least $1-1/\s{poly}(n)$.
\end{lemma}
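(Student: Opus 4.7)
The plan is to use, for each coordinate $i \in [n]$, the statistic $Z_i \triangleq y^2 \fl{x}_i^2$ together with the common baseline $\mu_0 \triangleq \bb{E}[y^2]$. First I would compute these two expectations by conditioning on the latent component $\fl{v}$: writing $y = \langle \fl{v}, \fl{x}\rangle + z$ with $\fl{x} \sim \ca{N}(\fl{0}, I_n)$ and $z \sim \ca{N}(0,\sigma^2)$ independent, a direct calculation using $\bb{E}\fl{x}_i^4 = 3$ and the vanishing of odd Gaussian cross moments gives
\[
\bb{E}[y^2 \fl{x}_i^2 \mid \fl{v}] = 2\fl{v}_i^2 + \|\fl{v}\|_2^2 + \sigma^2, \qquad \bb{E}[y^2 \mid \fl{v}] = \|\fl{v}\|_2^2 + \sigma^2,
\]
and averaging over $\fl{v} \sim_{\s{Unif}} \ca{V}$ yields
\[
q_i \triangleq \bb{E}[Z_i] = \mu_0 + \frac{2}{\ell}\sum_{\fl{v} \in \ca{V}} \fl{v}_i^2.
\]
Hence $q_i = \mu_0$ whenever $i \notin \bigcup_{\fl{v} \in \ca{V}}\s{supp}(\fl{v})$, while $q_i \ge \mu_0 + 2\delta^2/\ell$ otherwise, by Assumption \ref{assum:weak2}.

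Next I would estimate every $q_i$ simultaneously by a truncated empirical mean $\widehat q_i$. Since $y \mid \fl{v} \sim \ca{N}(0, \|\fl{v}\|_2^2 + \sigma^2)$ with $\|\fl{v}\|_2 \le R$, standard Gaussian tail bounds give $|y| = O(\sqrt{R^2+\sigma^2}\,\log n)$ and $|\fl{x}_i| = O(\log n)$ simultaneously for every sample and every index with probability $1 - 1/\s{poly}(n)$. Conditional on this truncation, a Bernstein-type concentration on the bounded random variable $Z_i$, using the second moment bound $\bb{E}[Z_i^2] = O((R^2+\sigma^2)^2)$ (again a routine Gaussian moment computation) and the per-sample range $O((R^2+\sigma^2)(\log n)^4)$, controls $|\widehat q_i - q_i|$; a union bound across $i \in [n]$ then forces every estimate within additive error $\delta^2/(4\ell)$ of $q_i$ with probability $1-1/\s{poly}(n)$ in the stated sample budget.

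Finally, the hypothesis $|\bigcup_{\fl{v} \in \ca{V}}\s{supp}(\fl{v})| \le n/2$ removes the need to estimate $\mu_0$ separately: at least $n/2$ of the true values $q_i$ equal the common baseline $\mu_0$, so the $\lceil n/4\rceil$-th smallest of $\{\widehat q_i\}_{i \in [n]}$ falls inside the cluster of low values and defines $\widehat\mu_0$ satisfying $|\widehat\mu_0 - \mu_0| \le \delta^2/(4\ell)$. Declaring $i \in \bigcup_{\fl{v} \in \ca{V}}\s{supp}(\fl{v})$ iff $\widehat q_i - \widehat\mu_0 > \delta^2/\ell$ then correctly identifies the union, since the $2\delta^2/\ell$ signal gap dominates the combined estimation error of at most $\delta^2/(2\ell)$. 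The main obstacle I anticipate is matching the linear-in-$(R^2+\sigma^2)$ dependence claimed in the statement: the variance of $Z_i$ is naively $\Theta((R^2+\sigma^2)^2)$, so obtaining the stated rate will require invoking a sub-exponential Bernstein bound on $Z_i = (y\fl{x}_i)^2$ after the Gaussian truncation, rather than applying a worst-case Hoeffding bound on the truncated range.
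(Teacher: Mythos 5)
Your proposal is correct and follows essentially the same route as the paper's proof: the same statistic $y^2\fl{x}_i^2$, the same computation giving a $2\delta^2/\ell$ gap between support and non-support coordinates, the same $O(\sqrt{R^2+\sigma^2}\log n)$ truncation of $y$ and $\fl{x}_i$ followed by concentration and a union bound over $i\in[n]$, and the same use of the hypothesis $\left|\cup_{\fl{v}\in\ca{V}}\s{supp}(\fl{v})\right|\le n/2$ to locate the common baseline without estimating $\bb{E}y^2$ separately (the paper clusters pairwise-close estimates and takes the largest cluster, which is interchangeable with your order-statistic estimate of $\mu_0$). The obstacle you flag at the end is real but is not resolved by the paper either: it simply applies Hoeffding to the truncated range and asserts the stated rate, so your observation that this yields a $(R^2+\sigma^2)^2(\log n)^{8}/\delta^4$ dependence rather than the claimed one is a fair criticism of the paper's bookkeeping rather than a gap in your argument.
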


\begin{proof}
For each $i\in [n]$, suppose we want to test whether $i \in \cup_{\fl{v}\in \ca{V}}\s{supp}(\fl{v})$ or not.
Consider the random variable $y^2\fl{x}_i^2$ when $(\fl{x},y)\sim \ca{P}_r$. Notice that 
\begin{align*}
    \bb{E}y^2\fl{x}_i^2 = \frac{1}{\ell}\sum_{\fl{v}\in \ca{V}} \bb{E}y^2\fl{x}_i^2 \mid \fl{v} = \frac{1}{\ell}\sum_{\fl{v}\in \ca{V}} \Big(\sum_{j \in [n]} \fl{v}_j^2 +2\fl{v}_i^2\Big)
    \begin{cases}
    &=\frac{1}{\ell}\sum_{\fl{v}\in \ca{V}} \left|\left|\fl{v}\right|\right|_2^2 \text{ if } \left|\ca{S}_{\ca{V}}(i)\right|=0 \\
    &\ge  \frac{1}{\ell}\sum_{\fl{v}\in \ca{V}} \left|\left|\fl{v}\right|\right|_2^2+ \frac{2\delta^2}{\ell} \text{ if } \left|\ca{S}_{\ca{V}}(i)\right| \neq 0
    \end{cases}
\end{align*}
where the final inequality follows from the fact that the magnitude of any non-zero entry of any unknown vector must be at least $\delta$. For simplicity of notation, we will denote $A=\frac{1}{\ell}\sum_{\fl{v}\in \ca{V}} \left|\left|\fl{v}\right|\right|_2^2$  to be average norm of the unknown vectors. We will estimate $\bb{E}y^2\fl{x}_i^2$ by computing the following sample average \begin{align*}
    \frac{\ell}{m} \cdot \sum_{j=1}^{m}  \Big(y^{(j)}\fl{x}_i^{(j)}\Big)^2.
\end{align*}
From the definition of $\ca{P}_r$, we must have $y\sim \ca{N}(0,\zeta^2+\sigma^2), |\zeta|\le R$ since $\fl{v}\in \{0,1\}^n, \left|\left|\fl{v}\right|\right|_2 \le R$ for all $\fl{v}\in \ca{V}$. By using Gaussian concentration inequalities, we must have $\Pr(|y|>t) \le \exp(-t^2/2(R^2+\sigma^2))$. Therefore, with probability $1-n^{-10}$, we have $|y|<20\sqrt{R^2+\sigma^2}\log n$. Similarly, with probability $1-n^{-10}$, $|\fl{x}_i|$ is bounded from above by $20\log n$. a Subsequently, we use Hoeffding's inequality  to say that
\begin{align*}
    \Pr\Big(\left|\frac{\ell}{m} \cdot \sum_{j=1}^{m}  \Big(y^{(j)}\fl{x}_i^{(j)}\Big)^2-\bb{E}y^2\fl{x}_i^2\right|\ge \frac{\delta^2{2}}{2\ell}\Big) \le \exp\Big(-\Omega\Big(\frac{m\delta^{2}}{\ell^2(R^2+\sigma^2)(\log n)^{2}}\Big)\Big). 
\end{align*}
Hence, with $m=O(\ell^2(R^2+\sigma^2)(\log n)^{3}/\delta^2)$ samples, we can estimate if 
$\left|\bigcap_{i \in \ca{C}}\ca{S}_{\ca{V}}(i)\right|>0$ or not correctly with probability at least $1-1/\s{poly}(n)$. We can take a union bound over all the $n$ indices to estimate $\bb{E}y^2\fl{x}_i^2$ correctly within an additive error of $\delta^2/2\ell$ for all $i\in [n]$. We will cluster all the indices such that a pair of distinct indices $u,v \in [n]$ are in the same group if 
\begin{align*}
    \left|\frac{\ell}{m} \cdot \sum_{j=1}^{m}  \Big(y^{(j)}\fl{x}_u^{(j)}\Big)^2-\frac{\ell}{m} \cdot \sum_{j=1}^{m}  \Big(y^{(j)}\fl{x}_v^{(j)}\Big)^2\right| \le \frac{\delta^2}{\ell}. 
\end{align*}
Clearly, any two indices $u,v \in [n]$ that satisfy $|\ca{S}_{\ca{V}}(u)|=|\ca{S}_{\ca{V}}(v)|=0$ must belong to the same cluster. Since the size of the union of the support is at most $n/2$, the largest cluster must correspond to the indices where the entry is zero in all the unknown vectors. Subsequently, all those indices that do not belong to the largest cluster (after the clustering step) must belong to $\cap_{\fl{v}\in \ca{V}}\s{supp}(\fl{v})$. Furthermore, no index $i \in [n]$ such that $|\ca{S}_{\ca{V}}(i)|\neq 0$ can belong to the largest cluster. This complete the proof of the lemma.
\end{proof}

Next, we show that once the union of support of the unknown vectors $\cup_{\fl{v}\in \ca{V}}\s{supp}(\fl{v})$ is known, we can compute $\left|\cup_{i \in \ca{C}} \ca{S}_{\ca{V}}(i)\right|$ for all sets $\ca{C} \subseteq \cup_{\fl{v}\in \ca{V}}\s{supp}(\fl{v})$ by restricting ourselves to the union of support. 

\begin{lemma}[Restatement of Lemma \ref{lem:set}]
Suppose $\cup_{\fl{v}\in \ca{V}}\s{supp}(\fl{v})$ is known.
We can compute $\left|\bigcup_{i \in \ca{C}}\ca{S}_{\ca{V}}(i)\right|$ for all sets $\ca{C} \subseteq \bigcup_{\fl{v}\in \ca{V}}\s{supp}(\fl{v}), \left|\ca{C}\right|=s$ using $O\Big(\s{poly}(\Delta^{-1},\delta^{-1},\gamma^{-1},\sigma,R)\cdot\exp(s\log (\ell k))\Big)$ samples with probability at least $1-\delta$.
\end{lemma}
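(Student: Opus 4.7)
The plan is to combine Algorithm \ref{algo:mlr_general} with the distributional identity of Lemma \ref{lem:mlr_general_crucial1}, the anti-concentration argument of Lemma \ref{lem:interest}, and the Moitra--Valiant subdivision guarantee in Corollary \ref{coro:isotropic}, and then pay only a $(\ell k)^s$ union-bound factor since we restrict to subsets $\ca{C}$ of the known union of support $\bigcup_{\fl{v}\in\ca{V}}\s{supp}(\fl{v})$, which has size at most $\ell k$. Fix one such $\ca{C}$ of size $s$ and sample a random shift $\fl{a}\in\bb{R}^{s}$ with i.i.d.\ $\ca{N}(0,\alpha^2)$ coordinates for $\alpha$ as prescribed. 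Split the samples into two halves: from the first half, keep only the responses $y$, which by the $\fl{a}=\fl{0}$ case of Lemma \ref{lem:mlr_general_crucial1} are drawn from $\ca{D}=\frac{1}{\ell}\sum_{\fl{v}\in\ca{V}}\ca{N}(0,\|\fl{v}\|_2^2+\sigma^2)$; from the second half, construct $y+\langle\fl{a},\fl{x}_{\mid\ca{C}}\rangle$, which by Lemma \ref{lem:mlr_general_crucial1} is drawn from $\ca{D}'=\frac{1}{\ell}\sum_{\fl{v}\in\ca{V}}\ca{N}(0,\|\fl{v}\|_2^2+\|\fl{a}\|_2^2+2\langle\fl{a},\fl{v}_{\mid\ca{C}}\rangle+\sigma^2)$.

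Apply the estimator of Corollary \ref{coro:isotropic} to each sample with accuracy $\epsilon=\ell^{-3}/2$ and failure probability $O(\gamma/(\ell k)^s)$; the component variances are bounded in $[\sigma^2,\,R^2+\|\fl{a}\|_2^2+\sigma^2]$ by Assumption \ref{assum:weak2}, so the sample complexity is $\s{poly}(\epsilon^{-1},\gamma^{-1},\sigma,R,\Delta^{-1},\delta^{-1})$ times the union-bound factor $(\ell k)^s=\exp(s\log(\ell k))$ after taking logs inside the polynomial. The output is an $\epsilon$-subdivision of each mixture, i.e.\ estimates of the distinct means, variances and aggregated weights accurate to $\epsilon$.

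Next, build the bipartite graph of Step 4 of Algorithm \ref{algo:mlr_general}, drawing an edge between an estimated component of $\ca{D}$ with variance $\widehat{\nu}^2$ and one of $\ca{D}'$ with variance $\widehat{\nu}'^2$ whenever $|\widehat{\nu}'^2-\widehat{\nu}^2-\|\fl{a}\|_2^2|\le 4\epsilon$. By the first bullet of Lemma \ref{lem:interest} (which is Claim \ref{claim:first}), each true component indexed by $\fl{v}$ with $\fl{v}_{\mid\ca{C}}=\fl{0}$ is matched to exactly its $\|\fl{a}\|_2^2$-shifted twin in $\ca{D}'$, while by the second bullet of Lemma \ref{lem:interest} each $\fl{v}$ with $\fl{v}_{\mid\ca{C}}\ne\fl{0}$ produces a shift differing from $\|\fl{a}\|_2^2$ by $\Omega(\delta\min(\Delta,\alpha\gamma))$. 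Since we chose $\epsilon=\ell^{-3}/2$ and $\alpha$ polynomially in $(\delta,\Delta,R,\log\gamma^{-1})$, this gap dominates $4\epsilon$ for $\ell$ not too small, so the matching perfectly isolates the weight mass $w:=\sum_{\fl{v}:\fl{v}_{\mid\ca{C}}=\fl{0}}\ell^{-1}=1-\ell^{-1}|\bigcup_{i\in\ca{C}}\ca{S}(i)|$ up to additive error $\ell\cdot\epsilon\ll\ell^{-1}/2$. Rounding $\ell(1-w)$ to the nearest integer recovers $|\bigcup_{i\in\ca{C}}\ca{S}(i)|$ exactly. A final union bound over all $\binom{\ell k}{s}\le(\ell k)^s$ candidate sets $\ca{C}$ and the $O(1)$ failure events per set gives the claim.

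The principal obstacle is ensuring the bipartite matching is unambiguous. This requires three things to coexist: (i) the variance shifts from non-zero $\langle\fl{a},\fl{v}_{\mid\ca{C}}\rangle$ remain bounded below by a quantity dominating $\epsilon$ (handled by Gaussian anti-concentration in Lemma \ref{lem:interest}); (ii) the shifts stay below $\Delta\delta/2$ so that a component in $\ca{D}$ corresponding to $\fl{v}$ is not accidentally matched, after adding $\|\fl{a}\|_2^2$, to a \emph{different} component of $\ca{D}$ with norm larger by $\Delta$ (handled by Gaussian concentration, exploiting Assumption \ref{assum:weak}); and (iii) weights accumulated across components of $\ca{D}$ sharing the same norm are grouped correctly in the $\epsilon$-subdivision --- this is why the criterion sums over all estimated right-nodes that have at least one edge rather than attempting a per-component identification. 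The parameter choice $\alpha=\Theta(\delta(R\sqrt{\log\gamma^{-1}})^{-1}\min(\Delta,1/2))$ is exactly the one balancing (i) and (ii) simultaneously.
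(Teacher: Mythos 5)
Your proposal follows essentially the same route as the paper's proof: the same two mixtures $\ca{D}$ and $\ca{D}'$, the same use of Lemma \ref{lem:mlr_general_crucial1} and Lemma \ref{lem:interest}, the same Moitra--Valiant $\epsilon$-subdivision step via Corollary \ref{coro:isotropic}, the same bipartite alignment and rounding of $\ell(1-w)$, and the same $(\ell k)^s$ union bound.

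One caveat: you set the subdivision accuracy to $\epsilon=\ell^{-3}/2$ and then assert that the separation $\Omega(\delta\min(\Delta,\alpha\gamma))$ from Lemma \ref{lem:interest} ``dominates $4\epsilon$ for $\ell$ not too small.'' That implication does not hold in general --- for small $\delta$, $\Delta$, or $\gamma$ the separation can be far smaller than $\ell^{-3}$, in which case a component with $\fl{v}_{\mid\ca{C}}\neq\fl{0}$ could be spuriously aligned. The paper avoids this by taking $\epsilon=c\min\bigl(\ell^{-3},\,\delta\min(\Delta,\alpha\gamma')\bigr)$, so that $\epsilon$ is simultaneously small enough for the weight-sum error bound and strictly below the matching gap; this only changes the polynomial factors in the sample complexity and is the one adjustment your argument needs.
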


\begin{proof}
Fix $0<\gamma<1,\gamma'=(\ell k)^{-(s+1)}k\gamma$ and set $\alpha=\frac{\Delta \delta}{10R\sqrt{\log \gamma'^{-1}}}$.
Since we know the union of support of the unknown vectors, henceforth we will restrict ourselves to this set.
Consider any fixed set $\ca{C}\subseteq \cup_{\fl{v}\in \ca{V}}\s{supp}(\fl{v})$ such that $\left|\ca{C}\right|=s$. Recall that we defined $\ca{Q}([n])$ as the power set of $[n]$. We now define the function $g:\ca{Q}([n]) \rightarrow \bb{R}^n$ such that \begin{align*}
    g(\ca{C})_i 
    \begin{cases}
     \sim \ca{N}(0,\alpha^2) \text{ if } i\in\ca{C} \\
     = 0 \text{ otherwise }.
    \end{cases} 
\end{align*} 
i.e. the $i^{\s{th}}$ entry of the vector $g(\ca{C})$ is sampled from a Gaussian with zero mean and variance $\alpha^2$ if $i \in \ca{C}$ and zero otherwise. Now, let us consider a fixed set $\ca{C}\subseteq [n]$ satisfying $|\ca{C}|=s$.  
For any $\fl{v}\in \ca{V}$, note that if $\fl{v}_{\mid \ca{C}}=\fl{0}$, then we must have $\langle g(\ca{C}),\fl{v} \rangle = 0$. Invoking Lemma \ref{lem:interest} with $\gamma'= (\ell k)^{-(s+1)}\gamma$ and our choice of $\alpha$, we can say with probability at least $1-\ell\gamma'$ that, 
\begin{align*}
    &\sum_{t \in \ca{T}} w_{Z_{g(\ca{C})},t} = 1- \frac{\left|\bigcup_{i \in \ca{C}}\ca{S}_{\ca{V}}(i)\right|}{\ell} \\
    & \left| \left|\left|\fl{v}'\right|\right|_2^2+\left|\left|g(\ca{C})\right|\right|_2^2- (\left|\left|\fl{v}\right|\right|_2+\left|\left|g(\ca{C})\right|\right|_2)^2 \right| =\Omega \Big(\delta\min(\Delta,\alpha\gamma')\Big) \text{ for all } \fl{v}'\in \ca{V} \quad \text{ if } \fl{v}_{\mid \ca{C}} \neq 0.
\end{align*}
 where $\ca{T} =  \ca{T}_{g(\ca{C})}\cap \ca{\widetilde{T}}_{\left|\left|g(\ca{C})\right|\right|_2}$. Recall that the random variables $y$ and $y+\langle g(\ca{C}),\fl{x}_{\mid \ca{C}} \rangle$ are generated according to the distributions
\begin{align*}
    &y \sim  \frac{1}{\ell}\sum_{\fl{v}\in \ca{V}} \ca{N}(0,\left|\left|\fl{v}\right|\right|_2^2) \\ 
    &\text{and} \quad y+\langle g(\ca{C}),\fl{x} \rangle  \sim \frac{1}{\ell}\sum_{\fl{v}\in \ca{V}}\ca{N}(0,\left|\left|\fl{v}\right|\right|_2^2+\left|\left|g(\ca{C})\right|\right|_2^2+2\langle g(\ca{C}),\fl{v}_{\mid \ca{C}} \rangle)
\end{align*}
respectively. For some suitable numerical constant $1>c>0$, let $\epsilon=c\min(\ell^{-3},\delta\min(\Delta,\alpha\gamma'))$ such that
\begin{align}\label{eq:gapn}
    \left| \left|\left|\fl{v}'\right|\right|_2^2+\left|\left|g(\ca{C})\right|\right|_2^2- (\left|\left|\fl{v}\right|\right|_2+\left|\left|g(\ca{C})\right|\right|_2)^2 \right| \ge \epsilon \text{ for all } \fl{v}'\in \ca{V} \quad \text{ if } \fl{v}_{\mid \ca{C}} \neq 0.
\end{align}

\begin{claim}
We must have with probability $1-\gamma'$
\begin{align*}
3\delta^2/4 \le \left|\left|\fl{v}\right|\right|_2^2+\left|\left|g(\ca{C})\right|\right|_2^2+2\langle g(\ca{C}),\fl{v}_{\mid \ca{C}} \rangle +\sigma^2\le 2R^2+\alpha^2 s^2 \log \gamma'^{-1} \quad \text{ for all }  \fl{v}\in \ca{V}     
\end{align*}
\end{claim}

\begin{proof}
We know $ \delta^2\le \left|\left|\fl{v}\right|\right|_2^2 \le R^2$ for all $\fl{v}\in \ca{V}$. Again, since each entry of the $s$-sparse vector $g(\ca{C})$ is generated according to $\ca{N}(0,\alpha^2)$, we must have $\left|\left|g(\ca{C})\right|\right|_2^2 \le \alpha^2 s^2 \log \gamma'^{-1}$ with probability $1-\gamma'$. Finally, as described in Claim \ref{claim:first}, we must have $|\langle g(\ca{C}),\fl{v}_{\mid \ca{C}} \rangle| \le \alpha R \sqrt{\log \gamma'^{-1}} \le \delta/4$ with probability $1-\gamma'$. Therefore, $\left|\left|\fl{v}\right|\right|_2^2+\left|\left|g(\ca{C})\right|\right|_2^2+2\langle g(\ca{C}),\fl{v}_{\mid \ca{C}} \rangle \ge \left|\left|\fl{v}\right|\right|_2^2+2\langle g(\ca{C}),\fl{v}_{\mid \ca{C}} \rangle \ge 3\delta^2/4$.  
\end{proof}

Therefore, we invoke Corollary \ref{coro:isotropic} to compute $\epsilon/4$-subdivisions of the Gaussian Mixtures according to which the random variables $y$ and $y+\langle g(\ca{C}),\fl{x} \rangle$ are distributed with probability at least $1-\gamma'$. From Corollary \ref{coro:isotropic} and definitions of $\epsilon,\alpha,\gamma'$, this is possible by using $O\Big(\s{poly}(\Delta^{-1},\delta^{-1},\gamma^{-1},R,\sigma)\exp(s\log(\ell k)) \Big)$ samples with probability at least $1-\gamma'$.  
Let $\ca{\widehat{T}}_0 \triangleq \{(w_i,\nu_i^2)\}_{i \in [\ell_1]}$ and $\ca{\widehat{T}}_{g(\ca{C})} \triangleq \{(w_i,\nu_i^2)\}_{i \in [\ell_2]}$ denote the set of distinct tuples of estimated weights and variances obtained from estimating the parameters of Gaussian mixtures corresponding to the random variables $y$ and $y+\langle g(\ca{C}),\fl{x} \rangle$ respectively; here $\ell_1,\ell_2<\ell$ is the number of distinct components in the estimated mixtures. For a tuple $(w,\sigma^2)$ in $ \ca{\widehat{T}}_{g(\ca{C})}$, we align it with a tuple $(w',\sigma'^2)$ in $ \ca{\widehat{T}}_{0}$ if $\left|\sigma^2-||g(\ca{C})||_2^2-\sigma'^2\right|\le \epsilon/2$; note that this is equivalent to aligning 
with the tuple $(w',\sigma'^2+||g(\ca{C})||_2^2)$ in $\ca{\widetilde{T}}_{\left|\left|g(\ca{C})\right|\right|_2}$.
In this process, we put the constraint that the same tuple in $\ca{\widehat{T}}_{0}$ will not be aligned with two distinct tuples in $\ca{\widehat{T}}_{g(\ca{C})}$. From equation \ref{eq:gapn} and by using triangle inequality, we can see that such a pair of tuples can be aligned only if either of the following events occur:
\begin{enumerate}
    \item Both $((w,\sigma^2))\in \ca{\widehat{T}}_{g(\ca{C})}$ and $(w',\sigma'^2)\in \ca{\widehat{T}}_{0}$ correspond to the same unknown vector $\fl{v}$ where $\fl{v}_{\mid \ca{C}}=\fl{0}$. In this case, we have 
    \begin{align*}
        \left|\sigma^2-||g(\ca{C})||_2^2-\sigma'^2\right|\le \left|\sigma^2-\left|\left|\fl{v}\right|\right|_2^2-||g(\ca{C})||_2^2\right|+\left|\sigma'^2-\left|\left|\fl{v}\right|\right|_2^2\right| \le  \frac{\epsilon}{4}+\frac{\epsilon}{4} \le \frac{\epsilon}{2}.
    \end{align*}
    \item $((w,\sigma^2))\in \ca{\widehat{T}}_{g(\ca{C})}$ and $(w',\sigma'^2)\in \ca{\widehat{T}}_{0}$ might correspond to distinct unknown vectors  $\fl{v},\fl{v}' \in \ca{V}$ where $\left|\left|\fl{v}\right|\right|_2=\left|\left|\fl{v}'\right|\right|_2$ and $\fl{v}'_{\mid \ca{C}}=\fl{0}$. In this case, we again have 
    \begin{align*}
        \left|\sigma^2-||g(\ca{C})||_2^2-\sigma'^2\right|\le \left|\sigma^2-\left|\left|\fl{v}\right|\right|_2^2-||g(\ca{C})||_2^2\right|+\left|\sigma'^2-\left|\left|\fl{v}'\right|\right|_2^2\right| \le \frac{\epsilon}{4}+\frac{\epsilon}{4} \le \frac{\epsilon}{2}.
    \end{align*}
\end{enumerate}
Hence, it must happen that every tuple in $\ca{\widehat{T}}_{0}$ for which the corresponding unknown vector has a disjoint support from that of $\ca{C}$  will be aligned with some tuple in $\ca{\widehat{T}}_{g(\ca{C})}$. We denote the set of aligned tuples in $ \ca{\widehat{T}}_{0}$ by $\widehat{\ca{T}}$ and hence, by our analysis, we have $\widehat{\ca{T}}=\ca{T}=\ca{T}_{g(\ca{C})}\cap \ca{\widetilde{T}}_{\left|\left|g(\ca{C})\right|\right|_2}$. Therefore, we have  
\begin{align*}
    \left|\sum_{t \in \ca{T}} w_{Z_{g(\ca{C})},t}-\sum_{(w,\sigma^2) \in \ca{\widehat{T}}} w\right| \le \frac{\ell \epsilon}{4} \le \frac{1}{4\ell^2}.
\end{align*}
where we used the fact that $\epsilon<\ell^{-3}$.
Note that the quantity $|\cup_{i \in \ca{C}}\ca{S}_{\ca{V}}(i)|$ is integral and therefore if we can estimate it to an additive error of less than $1/2$, then we can round up the estimate to its nearest integer to compute the quantity correctly. Therefore we can estimate $|\cup_{i \in \ca{C}}\ca{S}_{\ca{V}}(i)|$ correctly by computing $\s{round}(\ell(1-\sum_{(w,\sigma^2) \in \ca{\widehat{T}}} w_{Z_{g(\ca{C})},t}))$. 
Hence, using $O\Big(\s{poly}(\Delta^{-1},\delta^{-1},\gamma^{-1},R)\cdot\exp(s\log (\ell k))\Big)$ samples, with probability $1-\gamma$, we can compute $\left|\bigcup_{i \in \s{supp}(\fl{b})} \ca{S}_{\ca{V}}(i) \right|$ for every set $\ca{C}\subseteq [n],\left|\ca{C}\right|=s$  by taking a union bound over all the $O((\ell k)^s)$
events corresponding to successfully computing $\epsilon/4$-subdivisions of the mixtures $y+\langle g(\ca{C}),\fl{x} \rangle$. 
 This completes the proof of the lemma. 
\end{proof}

In Lemma \ref{lem:union}, we needed to assume that the $\left|\cup_{\fl{v}\in \ca{V}}\s{supp}(\fl{v})\right|\le n/2$. In the corollary below, we describe how we can remove this assumption at the cost of higher sample complexity.

\begin{coro}\label{coro:union}
 We can compute $\cup_{\fl{v}\in \ca{V}}\s{supp}(\fl{v})$ correctly using 
 $O\Big(\s{poly}(\Delta^{-1},\delta^{-1},\sigma,\gamma^{-1},R, \log n)\Big)$ samples with probability at least $1-\gamma$.
\end{coro}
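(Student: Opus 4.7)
\textbf{Proof proposal for Corollary \ref{coro:union}.} The assumption $|\cup_{\fl{v}\in\ca{V}}\s{supp}(\fl{v})|\le n/2$ in Lemma~\ref{lem:union} is used only to identify the ``no-support cluster'' of coordinates via a plurality vote among the values $\bb{E}y^{2}\fl{x}_{i}^{2}$. My plan is to bypass this step entirely by directly estimating the baseline value that would be attained by a coordinate outside the union of support, and then testing each coordinate against that baseline. This removes any dependence on the relative size of the support union.

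The key observation is that for $(\fl{x},y)\sim\ca{P}_{r}$, a direct moment computation (exactly as in the proof of Lemma~\ref{lem:union}, but without multiplying by $\fl{x}_i^2$) yields $\bb{E}\,y^{2}=\tfrac{1}{\ell}\sum_{\fl{v}\in\ca{V}}\|\fl{v}\|_{2}^{2}+\sigma^{2}=A+\sigma^{2}$. Combining with the identity already established, $\bb{E}\,y^{2}\fl{x}_{i}^{2}=A+\sigma^{2}+\tfrac{2}{\ell}\sum_{\fl{v}\in\ca{V}}\fl{v}_{i}^{2}$, so the signed gap
\[
\Delta_{i}\;\triangleq\;\bb{E}\,y^{2}\fl{x}_{i}^{2}-\bb{E}\,y^{2}\;=\;\tfrac{2}{\ell}\sum_{\fl{v}\in\ca{V}}\fl{v}_{i}^{2}
\]
is exactly $0$ when $i\notin\cup_{\fl{v}\in\ca{V}}\s{supp}(\fl{v})$ and is at least $2\delta^{2}/\ell$ otherwise by Assumption~\ref{assum:weak2}. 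Thus we no longer need to locate the ``largest cluster''; we simply estimate a single scalar and compare.

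The estimation step re-uses the truncation-plus-Hoeffding argument from Lemma~\ref{lem:union} verbatim. Since $y$ is a Gaussian mixture with variance at most $R^{2}+\sigma^{2}$ and each $\fl{x}_{i}$ is standard normal, the bounds $|y|\le 20\sqrt{R^{2}+\sigma^{2}}\log n$ and $|\fl{x}_{i}|\le 20\log n$ hold with probability $1-n^{-10}$ per sample. Conditioned on these bounds, Hoeffding's inequality shows that $O\!\bigl(\ell^{2}(R^{2}+\sigma^{2})(\log n)^{3}\delta^{-2}\log(n/\gamma)\bigr)$ samples suffice to estimate $\bb{E}\,y^{2}$ and each of the $n$ quantities $\bb{E}\,y^{2}\fl{x}_{i}^{2}$ within additive error $\delta^{2}/(2\ell)$, uniformly in $i$, with probability $1-\gamma$. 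Declaring $i\in\cup_{\fl{v}\in\ca{V}}\s{supp}(\fl{v})$ precisely when the estimated gap exceeds $\delta^{2}/\ell$ then recovers the union of support exactly.

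There is essentially no obstacle beyond routine bookkeeping: the direct baseline $\bb{E}\,y^{2}$ replaces the combinatorial plurality argument, and the polylogarithmic sample complexity in $n$ arises from a single union bound over the $n+1$ scalar estimates. The resulting bound is $\s{poly}(\delta^{-1},\sigma,R,\log n,\log\gamma^{-1})$, which is subsumed by the form $\s{poly}(\Delta^{-1},\delta^{-1},\sigma,\gamma^{-1},R,\log n)$ stated in the corollary.
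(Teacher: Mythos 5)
Your proof is correct, but it takes a genuinely different route from the paper's. The paper proves Corollary \ref{coro:union} by running the clustering procedure of Lemma \ref{lem:union} and then, when the largest cluster has more than $n/2$ indices, invoking the Gaussian-mixture machinery of Lemma \ref{lem:set} on a single representative index to decide whether that cluster is the zero-support cluster; this is why $\Delta^{-1}$ and a polynomial dependence on $\gamma^{-1}$ (inherited from the Moitra--Valiant subroutine) appear in the stated bound. You instead observe that the baseline value $\bb{E}\,y^2=\tfrac{1}{\ell}\sum_{\fl{v}\in\ca{V}}\|\fl{v}\|_2^2+\sigma^2$ is itself directly estimable, so the signed gap $\bb{E}\,y^2\fl{x}_i^2-\bb{E}\,y^2=\tfrac{2}{\ell}\sum_{\fl{v}\in\ca{V}}\fl{v}_i^2$ gives an absolute test for membership in the union of support, with no clustering, no majority-vote assumption, and no mixture learning. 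This is simpler and yields a strictly stronger bound: no $\Delta^{-1}$ dependence and only $\log\gamma^{-1}$ rather than $\s{poly}(\gamma^{-1})$, both of which are subsumed by the corollary as stated. The only nit is the threshold bookkeeping at the boundary: with per-quantity estimation error $\delta^2/(2\ell)$, an index in the support can produce an estimated gap of exactly $\delta^2/\ell$, which your strict inequality would misclassify; estimating to accuracy $\delta^2/(4\ell)$ (or using a non-strict threshold) fixes this trivially and does not change the sample complexity.
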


\begin{proof}
Suppose we complete the clustering of indices in $[n]$ as described in the proof of Lemma \ref{lem:union}. If the size of the largest cluster is smaller than $n/2$, then we can correctly conclude that $\left|\cup_{\fl{v}\in \ca{V}}\s{supp}(\fl{v})\right|\ge n/2$. On the other hand, if the size of the largest cluster is larger than $n/2$, we just need to test if all the indices in that cluster correspond to indices which do not belong to the union of support. Since we know (from Lemma \ref{lem:union}) that all indices that do not belong to union of support belong to the same cluster, it suffices to test if one particular index in the largest cluster belongs to the union of support or not. From Lemma \ref{lem:set}, for any such particular index $i\in [n]$ in the largest cluster, we can estimate  $\left|\ca{S}(i)\right|$ correctly with $O\Big(\s{poly}(\Delta^{-1},\delta^{-1},\gamma^{-1},\sigma,R)\Big)$ samples (note that we do not need to take a union bound in this case) with probability at least $1-\gamma$. If the test returns  $\left|\ca{S}(i)\right|=0$, then we can conclude that indices belong to the largest cluster if and only if they are outside the union of support. On the other hand, if the test return $\left|\ca{S}(i)\right|>0$, then again we conclude that $\left|\cup_{\fl{v}\in \ca{V}}\s{supp}(\fl{v})\right|\le n/2$.
\end{proof}

Now, we are ready to state our main result:

\begin{thm}[Restatement of Theorem \ref{thm:mlr_general_new}]
Let $\mathcal{V}$ be a set of $\ell$ unknown vectors in $\bb{R}^n$ satisfying Assumptions \ref{assum:weak2} and \ref{assum:weak}. Let $\alpha= \frac{1}{2}\Big(\frac{\delta}{2R\sqrt{\log \gamma^{-1}}}\min\Big(\Delta ,\frac{1}{2}\Big)\Big)$ and $\epsilon=\ell^{-3}/2$.
Then, with probability at least $1-\gamma$, there exists an algorithm (see Algorithm~\ref{algo:mlr_general} with parameters $\alpha,\epsilon>0$ and Algorithm \ref{algo:intersection}) that achieves Exact Support Recovery with  $O\Big(\s{poly}(\Delta^{-1},\delta^{-1},\gamma^{-1},\sigma,R,\log n)\cdot\exp(\log \ell\log (\ell k))\Big)$ samples generated according to $\ca{P}_r$.
\end{thm}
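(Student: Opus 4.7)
The plan is to reduce exact support recovery to estimating the union-statistics $\lvert\bigcup_{i\in\ca{C}}\ca{S}(i)\rvert$ over all small subsets $\ca{C}$, and then feed these into Algorithm \ref{algo:intersection}. Concretely, I would proceed in three stages. First, invoke Corollary \ref{coro:union} to recover $\ca{U}\triangleq \bigcup_{\fl{v}\in\ca{V}}\s{supp}(\fl{v})$ with failure probability at most $\gamma/3$; this costs only $\s{poly}(\Delta^{-1},\delta^{-1},\sigma,R,\log n,\gamma^{-1})$ samples and, crucially, localises all later computations to $\ca{U}$, which has size at most $\ell k$. Second, for every $\ca{C}\subseteq\ca{U}$ with $\lvert\ca{C}\rvert\le\log\ell+1$, apply the procedure of Lemma \ref{lem:set} (Algorithm \ref{algo:mlr_general} with the prescribed $\alpha,\epsilon$) to estimate $\lvert\bigcup_{i\in\ca{C}}\ca{S}(i)\rvert$ exactly. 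Third, plug these estimates into Algorithm \ref{algo:intersection}, whose correctness is guaranteed by Corollary \ref{coro:prelim1}, to output the supports of all $\ell$ unknown vectors.

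The sample-complexity bookkeeping works as follows. There are at most $N\triangleq(\ell k)^{\log\ell+1}+n$ subset-statistics we need to get right (those within $\ca{U}$ plus the singletons $\{i\}$ used in the preprocessing of Algorithm \ref{algo:intersection}); by a union bound it suffices to run each invocation of Lemma \ref{lem:set} at failure probability $\gamma'=\gamma/(3N)$. Lemma \ref{lem:set} with $\lvert\ca{C}\rvert\le\log\ell+1$ and target probability $1-\gamma'$ uses
\[
\s{poly}(\Delta^{-1},\delta^{-1},\gamma'^{-1},\sigma,R)\cdot\exp\bigl((\log\ell+1)\log(\ell k)\bigr)
\]
samples. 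Substituting $\gamma'=\gamma/(3N)$ and noting that $\log N=O(\log\ell\,\log(\ell k)+\log n)$, the $\s{poly}(\gamma'^{-1})$ factor contributes only an additional $\s{poly}(\gamma^{-1},\log n)$ together with another factor of $\exp(O(\log\ell\,\log(\ell k)))$. Combining with the exponential in the base complexity, every individual estimate costs at most $\s{poly}(\Delta^{-1},\delta^{-1},\gamma^{-1},\sigma,R,\log n)\cdot\exp(O(\log\ell\,\log(\ell k)))$, and the same bound controls the total since the $N$ estimates can share a common pool of samples drawn once.

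The main obstacle will be the probability amplification. Unlike the MD and binary-MLR settings, where each $\lvert\bigcup_{i\in\ca{C}}\ca{S}(i)\rvert$-estimator inherits a $\log\gamma^{-1}$ dependence on the failure probability (so Corollary \ref{coro:prelim1} applies verbatim and yields the clean $\log(\gamma^{-1}\cdot(n+(\ell k)^{\log\ell+1}))$ factor), here Lemma \ref{lem:set} depends polynomially on $\gamma^{-1}$ because it passes through the Moitra--Valiant univariate estimator from Theorem \ref{thm:moitra}. Hence Corollary \ref{coro:prelim1} is not directly applicable as a black box; instead we must do the union bound by hand, driving $\gamma'$ down by a factor of $N$ and absorbing the resulting polynomial blow-up into the final $\s{poly}(\gamma^{-1})\cdot\exp(\log\ell\log(\ell k))$. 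Once this bookkeeping is done, Lemma \ref{thm:prelim1} (invoked through Corollary \ref{coro:prelim1}) guarantees that Algorithm \ref{algo:t-iden-supp-rec}, fed with the exactly-correct $\s{occ}(\ca{C},\fl{a})$'s derived from the union-statistics via Lemma \ref{lem:prelim1}, returns the true multiset of supports, yielding the stated sample-complexity bound.
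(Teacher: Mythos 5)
Your proposal follows the paper's own route exactly: Lemma \ref{lem:union}/Corollary \ref{coro:union} to recover the union of support, Lemma \ref{lem:set} (Algorithm \ref{algo:mlr_general}) for the union-statistics on subsets of size up to $\log\ell+1$, and Corollary \ref{coro:prelim1} (via Lemma \ref{lem:prelim1} and Algorithm \ref{algo:t-iden-supp-rec}) to assemble the supports --- the paper's proof is literally this one-line combination, and your observation that Corollary \ref{coro:prelim1} cannot be invoked as a pure black box here (because Lemma \ref{lem:set} depends polynomially rather than logarithmically on the failure probability, via the Moitra--Valiant subroutine) is a genuine subtlety the paper glosses over. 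One bookkeeping correction: for the Lemma \ref{lem:set} invocations you should drive the failure probability down only by the number of subsets of $\ca{U}$, i.e.\ by $(\ell k)^{\log\ell+1}$ (or simply rely on the union bound already built into the proof of Lemma \ref{lem:set}, which sets $\gamma'=(\ell k)^{-(s+1)}k\gamma$), because setting $\gamma'=\gamma/(3N)$ with $N\ge n$ feeds a factor of $n$ into the $\s{poly}(\gamma'^{-1})$ sample complexity and would destroy the claimed $\s{poly}(\log n)$ dependence; the $n$ singleton statistics are already covered by the Hoeffding-based union-of-support step, whose own union bound over indices costs only $\log n$.
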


\begin{proof}
The proof follows directly from combining Lemmas \ref{lem:set}, \ref{lem:union} and Corollary \ref{coro:prelim1}.
\end{proof}

\section{Missing Proofs from Section \ref{sec:prelims}}\label{app:prelim}

\begin{proof}[Proof of Lemma \ref{lem:prelim1} when $|\cup_{i \in \ca{C}} \ca{S}(i)|$ is provided]

 Suppose we are given $|\cup_{i \in \ca{C}} \ca{S}(i)|$ for all sets $\ca{C}\subseteq [n]$ satisfying $|\ca{C}|\le s$.  Notice that the set $\cap_{i \in \ca{C}} \ca{S}(i)$ is equivalent to the set $\s{occ}(C,\fl{1}_{|C|})$ or the number of unknown vectors in $\ca{V}$ whose restriction to the indices in $\ca{C}$ is the all one vector and in particular, $\s{occ}((i),1)=\ca{S}(i)$. Note that for each family of $t$ sets $\ca{A}_1,\ca{A}_2,\dots,\ca{A}_t$, we must have
\begin{align*}
    \left|\bigcup_{i=1}^{t} \ca{A}_{i}\right|= \sum_{u=1}^{t}(-1)^{u+1} \sum_{1 \le i_1 < i_2 <\dots <i_u \le t} \left|\bigcap_{b=1}^{u} \ca{A}_{i_b}\right|.
\end{align*}

We now show using induction on $s$ that the quantities $\left\{ \left|\bigcup_{i \in \ca{S}} \s{occ}((i), 1)\right| ~\forall~\ca{T} \subseteq [n], |\ca{T}| \le s \right\}$
are sufficient to compute $|\s{occ}(C, \fl{a})|$ for all subsets $C$ of indices of size at most $s$, and any binary vector $\fl{a} \in \{0,1\}^{\le s}$.

\textit{Base case ($t=1$):} 

 The base case follows  since we can infer $|\s{occ}((i), 0)| = \ell - |\s{occ}((i), 1)|$  from $|\s{occ}((i), 1)|$  for all $ i \in [n]$. 

\textit{Inductive Step:}
Let us assume that the statement is true for $r < s$ i.e., we can compute $|\s{occ}(\ca{C},\fl{a})|$ for all subsets $\ca{C}$ satisfying $|\ca{C}|\le r$ and any binary vector $\fl{a}\in \{0,1\}^{\le r}$ from the quantities $\left\{ \left|\bigcup_{i \in \ca{S}} \s{occ}((i), 1)\right| ~\forall~\ca{T} \subseteq [n], |\ca{T}| \le r \right\}$ provided as input. Now, we prove that the statement is true for $r+1$ under the induction hypothesis. Note that we can also rewrite $\s{occ}(\ca{C},\fl{a})$ for each set $\ca{C}\subseteq [n], \fl{a}\in \{0,1\}^{|\ca{C}|}$ as \begin{align*}
    \s{occ}(\ca{C},\fl{a}) = \bigcap_{j \in \ca{C}'}\ca{S}(j) \bigcap_{j \in \ca{C}\setminus \ca{C}' }\ca{S}(j)^c
\end{align*}
where $\ca{C}'\subseteq \ca{C}$ corresponds to the indices in $\ca{C}$ for which the entries in $\fl{a}$ is \texttt{1}. Fix any set $i_1,i_2,\dots,i_{r+1} \in [n]$. Then we can compute $\left|\bigcap_{b=1}^{r+1} \ca{S}(i_b)\right|$ using the following equation:
\begin{align*}
    (-1)^{r+3}\left|\bigcap_{b=1}^{r+1} \ca{S}(i_b)\right| 
    = \sum_{u=1}^{r}(-1)^{u+1} \sum_{\substack{j_1,j_2,\dots,j_u \in \{i_1,i_2,\dots,i_{r+1}\}\\ j_1 < j_2 <\dots<j_u  }} \left|\bigcap_{b=1}^{u} \ca{S}(j_b)\right|-\left|\bigcup_{b=1}^{r+1} \ca{S}(i_b)\right|.
\end{align*}

Finally for each proper subset $\ca{Y} \subset \{i_1,i_2,\dots,i_{r+1}\}$, we can compute $\left|\bigcap_{ i_b \not \in \ca{Y}} \ca{S}(i_b) \bigcap_{i_b \in \ca{Y}} \ca{S}(i_b)^{c}\right|$ using the following set of equations:

\begin{align*}
    \left|\bigcap_{i_b \not \in \ca{Y}} \ca{S}(i_b) \bigcap_{i_b \in \ca{Y}} \ca{S}(i_b)^{c}\right| &=     \left|\bigcap_{i_b \not \in \ca{Y}} \ca{S}(i_b) \bigcap \Big(\bigcup_{i_b \in \ca{Y}} \ca{S}(i_b) \Big)^{c} \right| \\
    &= \left|\bigcap_{i_b \not \in \ca{Y}} \ca{S}(i_b) \right| - \left| \bigcap_{i_b \not \in \ca{Y}} \ca{S}(i_b) \bigcap \Big(\bigcup_{i_b \in \ca{Y}} \ca{S}(i_b) \Big) \right| \\
    &=\left|\bigcap_{i_b \not \in \ca{Y}} \ca{S}(i_b) \right| - \left|\bigcup_{i_b \in \ca{Y}} \Big(\bigcap_{i_b \not \in \ca{Y}} \ca{S}(i_b) \bigcap \ca{S}(i_b) \Big) \right|. 
\end{align*}
The first term is already pre-computed and the second term is again a union of intersection of sets. for each $j_b \in \ca{Y}$, let us define $\ca{H}(j_b) := \bigcap_{i_b \not \in \ca{Y}} \ca{S}(i_b) \bigcap \ca{S}(j_b)$. Therefore we have 
\begin{align*}
    \left|\bigcup_{j_b \in \ca{Y}} \ca{H}(j_b)\right|= \sum_{u=1}^{\left|\ca{Y}\right|}(-1)^{u+1} \sum_{\substack{j_1,j_2,\dots,j_u \in \ca{Y}\\ j_1 < j_2 <\dots<j_u  }} \left|\bigcap_{b=1}^{u} \ca{H}(j_b)\right|.
\end{align*}
We can compute $\left|\bigcup_{j_b \in \ca{Y}} \ca{H}(j_b)\right|$ because the quantities on the right hand side of the equation have already been pre-computed (using our induction hypothesis). Therefore, the lemma is proved.

\end{proof}

\begin{proof}[Proof of Lemma \ref{lem:prelim1} when $|\cap_{i \in \ca{C}} \ca{S}(i)|$ is provided]

Suppose we are given $|\cap_{i \in \ca{C}} \ca{S}(i)|$ for all sets $\ca{V}\subseteq [n]$ satisfying $|\ca{V}|\le s$. We will omit the subscript $\ca{V}$ from hereon for simplicity. As in Lemma \ref{lem:prelim1}, the set $\cap_{i \in \ca{C}} \ca{S}(i)$ is equivalent to the set $\s{occ}(C,\fl{1}_{|C|})$ or the number of unknown vectors in $\ca{V}$ whose restriction to the indices in $\ca{C}$ is the all one vector and in particular, $\s{occ}((i),1)=\ca{S}(i)$. We will re-use the equation that for $t$ sets $\ca{A}_1,\ca{A}_2,\dots,\ca{A}_t$, we must have
\begin{align*}
    \left|\bigcup_{i=1}^{t} \ca{A}_{i}\right|= \sum_{u=1}^{t}(-1)^{u+1} \sum_{1 \le i_1 < i_2 <\dots <i_u \le t} \left|\bigcap_{b=1}^{u} \ca{A}_{i_b}\right|.
\end{align*}

We now show using induction on $s$ that the quantities $\left\{ \left|\bigcap_{i \in \ca{S}} \s{occ}((i), 1)\right| ~\forall~\ca{T} \subseteq [n], |\ca{T}| \le s \right\}$
are sufficient to compute $|\s{occ}(C, \fl{a})|$ for all subsets $C$ of indices of size at most $s$, and any binary vector $\fl{a} \in \{0,1\}^{\le s}$.

\textit{Base case ($t=1$):} 

 The base case follows  since we can infer $|\s{occ}((i), 0)| = \ell - |\s{occ}((i), 1)|$  from $|\s{occ}((i), 1)|$  for all $ i \in [n]$. 

\textit{Inductive Step:}
Let us assume that the statement is true for $r < s$ i.e., we can compute $|\s{occ}(\ca{C},\fl{a})|$ for all subsets $\ca{C}$ satisfying $|\ca{C}|\le r$ and any binary vector $\fl{a}\in \{0,1\}^{\le r}$ from the quantities $\left\{ \left|\bigcap_{i \in \ca{S}} \s{occ}((i), 1)\right| ~\forall~\ca{T} \subseteq [n], |\ca{T}| \le r \right\}$ provided as input. Now, we prove that the statement is true for $r+1$ under the induction hypothesis. Note that we can also rewrite $\s{occ}(\ca{C},\fl{a})$ for any set $\ca{C}\subseteq [n], \fl{a}\in \{0,1\}^{|\ca{C}|}$ as \begin{align*}
    \s{occ}(\ca{C},\fl{a}) = \bigcap_{j \in \ca{C}'}\ca{S}(j) \bigcap_{j \in \ca{C}\setminus \ca{C}' }\ca{S}(j)^c
\end{align*}
where $\ca{C}'\subseteq \ca{C}$ corresponds to the indices in $\ca{C}$ for which the entries in $\fl{a}$ is \texttt{1}. Fix any set $i_1,i_2,\dots,i_{r+1} \in [n]$. Then we can compute $\left|\bigcup_{b=1}^{r+1} \ca{S}(i_b)\right|$ using the following equation:
\begin{align*}
    \left|\bigcup_{b=1}^{r+1} \ca{S}(i_b)\right|
    = \sum_{u=1}^{r+1}(-1)^{u+1} \sum_{\substack{j_1,j_2,\dots,j_u \in \{i_1,i_2,\dots,i_{r+1}\}\\ j_1 < j_2 <\dots<j_u  }} \left|\bigcap_{b=1}^{u} \ca{S}(j_b)\right|.
\end{align*}

Finally for any proper subset $\ca{Y} \subset \{i_1,i_2,\dots,i_{r+1}\}$, we can compute $\left|\bigcap_{ i_b \not \in \ca{Y}} \ca{S}(i_b) \bigcap_{i_b \in \ca{Y}} \ca{S}(i_b)^{c}\right|$ using the following set of equations:

\begin{align*}
    \left|\bigcap_{i_b \not \in \ca{Y}} \ca{S}(i_b) \bigcap_{i_b \in \ca{Y}} \ca{S}(i_b)^{c}\right| &=     \left|\bigcap_{i_b \not \in \ca{Y}} \ca{S}(i_b) \bigcap \Big(\bigcup_{i_b \in \ca{Y}} \ca{S}(i_b) \Big)^{c} \right| \\
    &= \left|\bigcap_{i_b \not \in \ca{Y}} \ca{S}(i_b) \right| - \left| \bigcap_{i_b \not \in \ca{Y}} \ca{S}(i_b) \bigcap \Big(\bigcup_{i_b \in \ca{Y}} \ca{S}(i_b) \Big) \right| \\
    &=\left|\bigcap_{i_b \not \in \ca{Y}} \ca{S}(i_b) \right| - \left|\bigcup_{i_b \in \ca{Y}} \Big(\bigcap_{i_b \not \in \ca{Y}} \ca{S}(i_b) \bigcap \ca{S}(i_b) \Big) \right|. 
\end{align*}
The first term is already pre-computed and the second term is again a union of intersection of sets. For any $i_b \in \ca{Y}$, let us define $\ca{H}(j_b) := \bigcap_{i_b \not \in \ca{Y}} \ca{S}(i_b) \bigcap \ca{S}(j_b)$. Therefore we have 
\begin{align*}
    \left|\bigcup_{j_b \in \ca{Y}} \ca{H}(j_b)\right|= \sum_{u=1}^{\left|\ca{Y}\right|}(-1)^{u+1} \sum_{\substack{j_1,j_2,\dots,j_u \in \ca{Y}\\ j_1 < j_2 <\dots<j_u  }} \left|\bigcap_{b=1}^{u} \ca{H}(j_b)\right|.
\end{align*}
We can compute $\left|\bigcup_{j_b \in \ca{Y}} \ca{H}(j_b)\right|$ because the quantities on the right hand side of the equation have already been pre-computed (using our induction hypothesis). Therefore, the lemma is proved.

\end{proof}

\begin{proof}[Proof of Corollary \ref{coro:prelim1}]
We know that all vectors $\fl{v}\in \ca{V}$ satisfy $\|\fl{v}\|_0 \le k$ as they are $k$-sparse. Therefore, in the first stage, by computing $|\ca{S}(i)|$ for all $i\in [n]$, we can recover the union of support of all the unknown vectors $ \cup_{\fl{v}\in \ca{V}}\s{supp}(\fl{v})$ by computing $\ca{T} = \{i \in [n]\mid \ca{S}(i)>0\}$. The probability of failure in finding the union of support exactly is at most $n\gamma$. Once we recover $\ca{T}$, we compute $ \left|\cup_{i \in \ca{C}}\ca{S}(i)\right|$ for all $\ca{C}\subseteq \ca{T},|\ca{C}| \le \log \ell+1$ (or alternatively $|\cap_{i \in \ca{C}}\ca{S}(i)|$ for all $\ca{C}\subseteq \ca{T},|\ca{C}| \le \log \ell+1$). The probability of failure for this this event $(\ell k)^{\log \ell+1}\gamma$. 
From Lemma \ref{thm:prelim1}, we know that computing $ \left|\cup_{i \in \ca{C}}\ca{S}(i)\right|$ for all $\ca{C}\subseteq [n],|\ca{C}| \le \log \ell+1$ (or alternatively $|\cap_{i \in \ca{C}}\ca{S}(i)|$ for all $\ca{C}\subseteq \ca{T},|\ca{C}| \le \log \ell+1$) exactly will allow us to recover the support of all the unknown vectors in $\ca{V}$. However $\left|\cup_{i \in \ca{C}}\ca{S}(i)\right|=0$ for all $\ca{C}\subseteq [n] \setminus \ca{T}$ provided $\ca{T}$ is computed correctly. Therefore, we can recover the support of all the unknown vectors in $\ca{V}$ with $\s{T}\log \gamma^{-1}$ samples with probability at least $1-((\ell k)^{\log \ell+1}+n)\gamma$. Rewriting the previous statement so that the failure probability is $\gamma$ leads to the statement of the lemma.
\end{proof}


\begin{proof}[Proof of Lemma \ref{lem:partial}]

 As stated in the Lemma, suppose it is known if $\left|\cap_{i \in \ca{C}}\ca{S}(i)\right|>0$ or not for all sets $\ca{C}\subseteq [n]$ satisfying $|\ca{C}|\le s+1$. 
 Assume that $\s{Maximal}(\ca{V})$ is $s$-good
  Consider a set $A\in \s{Maximal}(\ca{V})$. Since $\s{Maximal}(\ca{V})$ is $s$-good, there must exist an ordered set $\ca{C}\subseteq [n], |\ca{C}| =s$ such that $\ca{C} \subseteq A$  but $\ca{C} \not\subseteq A'$ for all $A' \in \s{Maximal}(\ca{V})\setminus \{A\}$. Therefore, we must have $\left|\cap_{i \in \ca{C}}\ca{S}(i)\right|>0$. But, on the other hand, notice that if $\left|\s{Maximal}(\ca{V})\right|\ge 2$, there must exist an index $j\in \cup_{A\in \s{Maximal}(\ca{V})} A$ such that $\left|\cap_{i \in \ca{C}\cup\{j\}}\ca{S}(i)\right|=0$ since $A$ does not contain the support of all other vectors.
 Algorithm \ref{algo:partial1} precisely checks for this condition and therefore this completes the proof.

\end{proof}

\begin{proof}[Proof of Corollary \ref{coro:partial2}]
Again, we know that all vectors $\fl{v}\in \ca{V}$ satisfy $\|\fl{v}\|_0 \le k$ as they are $k$-sparse. Therefore, in the first stage, by computing if $|\ca{S}(i)|>0$ for all $i\in [n]$, we can recover the union of support of all the unknown vectors $ \cup_{\fl{v}\in \ca{V}}\s{supp}(\fl{v})$ by computing $\ca{T} = \{i \in [n]\mid \ca{S}(i)>0\}$. The probability of failure in finding the union of support correctly is at most $n\gamma$. Once we recover $\ca{T}$ correctly, we compute $ \left|\cap_{i \in \ca{C}}\ca{S}(i)\right|$ for all $\ca{C}\subseteq \ca{T},|\ca{C}| \le \ell$. The probability of failure for this event $(\ell k)^{\ell}\gamma$. 
From Lemma \ref{lem:partial2}, we know that computing $ \left|\cap_{i \in \ca{C}}\ca{S}(i)\right|$ for all $\ca{C}\subseteq [n],|\ca{C}| \le \ell$ exactly will allow us to recover the support of all the unknown vectors in $\ca{V}$. On the other hand, we will have $\left|\cap_{i \in \ca{C}}\ca{S}(i)\right|=0$ for all $\ca{C}\subseteq [n] \setminus \ca{T}$ provided $\ca{T}$ is computed correctly. Therefore, we can achieve maximal support recovery of all the unknown vectors in $\ca{V}$ with $\s{T}\log \gamma^{-1}$ samples with probability at least $1-((\ell k)^{\ell}+n)\gamma$. Rewriting, so that the failure probability is $\gamma$ leads to the statement of the lemma.
\end{proof}

\begin{proof}[Proof of Lemma \ref{lem:partial2}]
Consider the special case when $\left|\s{Maximal}(\ca{V})\right| = 1$ i.e. there exists a particular vector $\fl{v}$ in $\ca{V}$ whose support subsumes the support of all the other unknown vectors in $\ca{V}$.
In that case, for each set $\ca{C} \subseteq A \in \s{Maximal}(\ca{V})$, $\left|\ca{C}\right| \le \ell$, we must have that $\left|\bigcup_{i \in \ca{C}}\ca{S}(i)\right|>0$ (as there is only a single set in $\s{Maximal}(\ca{V})$). On the other hand, if $\left|\s{Maximal}(\ca{V})\right| \ge  2$, then we know that $\s{Maximal}(\ca{V})$ is $(\ell-1)$-good and therefore, for each set $A\in \s{Maximal}(\ca{V})$, there exists an ordered set $\ca{C}$ and an index $j\subseteq \cup_{A'\in \s{Maximal}(\ca{V})}A'$, $\left|\ca{C}\right| \le \ell-1$ such that $\ca{C} \subseteq A$ but $\ca{C} \not\subseteq A'$ for any other set $A'$; hence $\left|\bigcap_{i \in \ca{C}}\ca{S}(i)\right|>0$ but  $\left|\bigcap_{i \in \ca{C}\cup\{j\}}\ca{S}(i)\right|=0$. In other words, there exists a set of size $\ell$ that is a subset of the union of sets in $\s{Maximal}(\ca{V})$ but there does not exist any unknown vector that has \texttt{1} in all the indices indexed by the aforementioned set. 
Again, Algorithm \ref{algo:partial1} precisely checks this condition and therefore this completes the proof.    
\end{proof}

\remove{
\begin{lemma}\label{lem:trimmed}
 The set $\s{Maximal}(\ca{V})$ is unique.
\end{lemma}

\begin{proof}
We will prove this lemma by contradiction. Suppose there exists two distinct sets $\ca{T}_1,\ca{T}_2\subset \ca{V}$ such that $\left|\ca{T}_1\right|= \left|\ca{T}_2\right|=\left|\s{Maximal}(\ca{V})\right|$. Since $\ca{T}_1,\ca{T}_2$ are distinct, there must exist a vector $\fl{v} \in \ca{T}_2\setminus \ca{T}_1$. If $\s{supp}(\fl{v})$ is not contained with the support of some vector in $\ca{T}_1$ and there is no other vector in $\ca{V}$ whose support contains $\fl{v}$, then clearly, $\fl{v}$ can be added to $\ca{T}_1$ implying that $\ca{T}_1$ cannot be the largest maximal set. On the other hand, suppose $\s{supp}(\fl{v})$ is contained within the support of some vector $\fl{v}'$ in $\ca{T}_1$. However, this implies that $\ca{T}_2$ cannot be a valid deduplicated set as the support of $\fl{v}$ is contained with the support of $\fl{v}'$ and therefore, $\fl{v}$  cannot belong to a deduplicated set. This implies that the vector $\fl{v}$ cannot exist without violating some constrained of $\s{Maximal}(\ca{V})$ and therefore, the set $\s{Maximal}(\ca{V})$ is unique.  
\end{proof}

}

\section{Proof of Lemma \ref{thm:prelim1} (Theorem 1 in \citep{gandikota2021support})}\label{app:lemma1}


We will start with a few additional notations and definitions: 

For a set of unknown vectors $\ca{V}\equiv \{\fl{v}^1,\fl{v}^2,\dots,\fl{v}^{\ell}\}$, let
$\fl{A} \in \{0,1\}^{n \times \ell}$ denote the support matrix corresponding to $\ca{V}$ where each column vector $\fl{A}_i \in \{0,1\}^n$ represents the support of the $i^{\s{th}}$ unknown vector $\fl{v}^i$.

\begin{defn}[$p$-identifiable]\label{def:iden}
The $i^{\s{th}}$ column $\fl{A}_i$ of  a binary matrix $\fl{A} \in \{0,1\}^{n \times \ell}$ with all distinct columns is called $p$-identifiable if 
there exists a 
  set $S \subset [n]$ of at most $p$-indices and a binary string $\fl{a} \in \{0,1\}^p$ such that $\fl{A}_i|_S = \f a$, and $\fl{A}_j|_S \neq \f a$ for all $j \neq i$.
  
  A binary matrix $\fl{A} \in \{0,1\}^{n \times \ell}$ with all distinct columns is called  $p$-identifiable if there exists a permutation
$\sigma:[\ell]\rightarrow[\ell]$
such that for all $i\in [\ell]$, the sub-matrix $\fl{A}^i$ formed by deleting the columns indexed by the set $\{\sigma(1),\sigma(2),\dots,\sigma(i-1)\}$ has at least one $p$-identifiable column.

Let $\mathcal{V}$ be set of $\ell$ unknown vectors in $\bb{R}^n$, and $\fl{A} \in \{0,1\}^{n \times \ell}$ be its support matrix. Let $\fl{B}$ be the matrix obtained by deleting duplicate columns of $\fl{A}$. The set $\mathcal{V}$ is called $p$-identifiable if  $\fl{B}$ is $p$-identifiable. 
\end{defn}

\begin{thmu}[Theorem 2 in \citep{gandikota2021support}]\label{lem:suff-t}
Any $n \times \ell$, (with $n > \ell$) binary matrix with all distinct columns is  $p$-identifiable for some $p \le \log \ell$.
\end{thmu}

\begin{proof}

Suppose $\fl{A}$ is the said matrix. Since all the columns of $\fl{A}$ are distinct, there must exist an index $i \in [n]$ which is not the same for all columns in $\fl{A}$. We must have $\left|\s{occ}((i), a)\right| \le \ell/2$ for some $a \in \{0,1\}$. Subsequently, we consider the columns of $\fl{A}$ indexed by the set $\s{occ}((i), a)$ and can repeat the same step. Evidently, there must exist an index $j \in [n]$ such that $\left|\s{occ}((i),\fl{a})\right| \le \ell/4$ for some $\fl{a} \in \{0,1\}^2$. Clearly, we can repeat this step at most $\log \ell$ times to find $C \subset [n]$ and $\fl{a}\in \{0,1\}^{\le \log \ell}$ such that $\left|\s{occ}(C,\fl{a})\right| = 1$ and therefore the column in $\s{occ}(C,\fl{a})$ is $p$-identifiable. We denote the index of this column  as $\sigma(1)$ and form the sub-matrix $\fl{A}^1$ by deleting the column. Again, $\fl{A}^1$ has $\ell-1$ distinct columns and by repeating similar steps, $\fl{A}^1$ has a column that is $\log(\ell-1)$ identifiable. More generally, $\fl{A}^i$ formed by deleting the columns indexed in the set $\{\sigma(1),\sigma(2),\dots,\sigma(i-1)\}$, has a column that is $\log(\ell-i)$ identifiable with the index (in $\fl{A}$) of the column having the unique sub-string (in $\fl{A}^i$) denoted by $\sigma(i)$. Thus the lemma is proved.
\end{proof}

Next, we present an algorithm (see Algorithm \ref{algo:t-iden-supp-rec}) for support recovery of all the $\ell$ unknown vectors $\ca{V}\equiv \{\f v^1, \ldots, \f v^\ell\}$  when $\ca{V}$ is $p$-identifiable. 
In particular, we show that if $\mathcal{V}$ is $p$-identifiable, then computing $|\s{occ}(C, \fl{a})|$ for every subset of $p$ and $p+1$ indices is sufficient to recover the supports. 

The proof follows from the observation that for any subset of $p$ indices $C \subset [n]$, index $j \in [n] \setminus C$ and $\fl{a} \in \{0,1\}^p$, 
$|\Su(C, \fl{a})| = |\Su(C\cup\{j\}, (\fl{a}, 1))| + |\Su(C\cup\{j\}, (\fl{a}, 0))|$. Therefore if one of the terms in the RHS is $0$ for all $j \in [n] \setminus C$, then all the vectors in $\Su(C, \fl{a})$ share the same support. 

Also, if some two vectors $\fl{u}, \fl{v} \in \Su(C, \fl{a})$ do not have the same support, then there will exist at least one index $j \in [n] \setminus C$ such that $ \fl{u} \in \Su(C\cup\{j\}, (\fl{a}, 1))|$ and $ \fl{v} \in \Su(C\cup\{j\}, (\fl{a}, 0))$ or the other way round, and therefore $|\Su(C\cup\{j\}, (\fl{a}, 1))| \not \in \{0,|\Su(C, \fl{a})|\} $. 
Algorithm~\ref{algo:t-iden-supp-rec} precisely checks for this condition. The existence of some vector $\fl{v} \in \ca{V}$ ($p$-identifiable column), a subset of indices $C \subset [n]$ of size $p$, and a binary sub-string $\fl{b}\in \{0,1\}^{\le p}$ follows from the fact that $\ca{V}$ is $p$-identifiable.
Let us denote the subset of unknown  vectors with distinct support in $\ca{V}$ by $\ca{V}^1$.

Once we recover the $p$-identifiable column of $\ca{V}^1$, we mark it as $\fl{u}^{1}$ and remove it from the set (if there are multiple $p$-identifiable columns, we arbitrarily choose one of them). 
Subsequently,  we can modify the $\left|\Su{(\cdot)}\right|$ values for all $S \subseteq [n],|S|\in \{p,p+1\}$ and  $\fl{t} \in \{0,1\}^{p} \cup \{0,1\}^{p+1} $ as follows:
\begin{align}\label{eq:comp_occ2}
    &\left|\Su^2(S, \fl{t})\right| \triangleq \left|\Su(S, \fl{t})\right| 
    - \left|\Su(C, \fl{b})\right|\times \mathbf{1}[ \supp{\fl{u}^{1}}|_S = \fl{t}].
\end{align}
Notice that, Equation~\ref{eq:comp_occ2} computes 
$
    \left|\Su^2(S, \fl{t})\right| = \left|\{ \fl{v}^i \in \mathcal{V}^2 \mid \supp{\fl{v}^i}|_S = \fl{t} \}\right|
$
where $\mathcal{V}^2$ is formed by deleting all copies of $\fl{u}^{1}$ from $\ca{V}$. Since $\ca{V}^1$ is $p$-identifiable, there exists a $p$-identifiable column in $\ca{V}^1 \setminus \{\fl{u}^{1}\}$ as well which we can recover. More generally for $q>2$, if $\fl{u}^{q-1}$ is the $p$-identifiable column with the unique binary sub-string $\fl{b}^{q-1}$ corresponding to the set of indices $C^{q-1}$, we will have for all $S \subseteq [n],|S|\in \{p,p+1\}$ and  $\fl{t} \in \{0,1\}^{p} \cup \{0,1\}^{p+1} $
\begin{align*}
    \left|\Su^q(S, \fl{t})\right| \triangleq \left|\Su^{q-1}(S, \fl{t})\right| 
    - \left|\Su^{q-1}(C^{q-1}, \fl{b}^{q-1})\right|\times\mathbf{1}[ \supp{\fl{u}^{q-1}}|_S = \fl{t}] 
\end{align*}    
 implying 
 $\left|\Su^q(S, \fl{t})\right| = \left|\{ \fl{v}^i \in \mathcal{V}^q \mid \supp{\fl{v}^i}|_S = \fl{t} \}\right|
$
where $\mathcal{V}^q$ is formed deleting all copies of $\fl{u}^{1},\fl{u}^{2},\dots,\fl{u}^{q-1}$ from $\ca{V}$. Applying these steps recursively and repeatedly using the property that $\ca{V}$ is $p$-identifiable, we can recover all the vectors present in $\ca{V}$. 


\section{Technical Lemmas}\label{app:technical}

\begin{lemma}[Hoeffding's inequality for bounded random variables]\label{lem:Hoeffding}
Let $X_1,X_2,\dots,X_m$ be independent random variables strictly bounded in the interval $[a,b]$. Let $\mu=m^{-1}\sum_i\bb{E}X_i$. In that case, we must have
\begin{align*}
    \Pr\Big(\left|\frac{1}{m}\sum_{i=1}^{m}X_i-\mu\right|\ge t\Big) \le 2\exp\Big(-\frac{2mt^2}{(b-a)^2}\Big).
\end{align*}
\end{lemma}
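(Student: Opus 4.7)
The plan is to prove Hoeffding's inequality via the classical Chernoff-bound/moment generating function argument. First I would center the random variables: set $Y_i = X_i - \mathbb{E}X_i$, so each $Y_i$ is a zero-mean random variable supported in the shifted interval $[a - \mathbb{E}X_i,\, b - \mathbb{E}X_i]$, which has the same length $b-a$ as the original. The goal becomes bounding $\Pr\bigl(\bigl|\sum_i Y_i\bigr| \geq mt\bigr)$.

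Next, I would apply the exponential Markov inequality: for any $s > 0$,
\[
\Pr\Big(\sum_{i=1}^m Y_i \geq mt\Big) \leq e^{-smt} \, \mathbb{E}\Big[\exp\Big(s\sum_{i=1}^m Y_i\Big)\Big] = e^{-smt}\prod_{i=1}^m \mathbb{E}[e^{sY_i}],
\]
where the last equality uses independence. The key technical step is Hoeffding's lemma, which states that for a zero-mean random variable $Y$ supported on an interval of length $L$, $\mathbb{E}[e^{sY}] \leq \exp(s^2 L^2/8)$. I would prove this by invoking convexity of $x \mapsto e^{sx}$ on $[a',b']$ to get the pointwise bound $e^{sY} \leq \tfrac{b'-Y}{b'-a'}e^{sa'} + \tfrac{Y-a'}{b'-a'}e^{sb'}$, taking expectations (using $\mathbb{E}Y = 0$), and then showing that the logarithm of the resulting expression in $s$ has second derivative at most $(b'-a')^2/4$ via a standard computation, so that a Taylor expansion around $s=0$ yields $\exp(s^2(b-a)^2/8)$.

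Plugging this back gives $\Pr\bigl(\sum Y_i \geq mt\bigr) \leq \exp\bigl(-smt + m s^2(b-a)^2/8\bigr)$, and optimizing over $s$ by choosing $s = 4t/(b-a)^2$ produces the bound $\exp\bigl(-2mt^2/(b-a)^2\bigr)$. Applying the identical argument to $-Y_i$ handles the lower tail, and a union bound over the two tails yields the stated factor of $2$.

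The main obstacle is really just Hoeffding's lemma, i.e., the cumulant bound $\log \mathbb{E}[e^{sY}] \leq s^2(b-a)^2/8$ for a bounded, zero-mean $Y$; the rest is a routine Chernoff optimization. Since this is a well-established textbook inequality, no genuinely new idea is required, and the proof can be stated in at most a page.
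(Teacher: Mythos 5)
Your proposal is correct: it is the canonical Chernoff--MGF proof of Hoeffding's inequality (centering, Hoeffding's lemma $\log \bb{E}[e^{sY}]\le s^2(b-a)^2/8$, optimizing at $s=4t/(b-a)^2$, and a union bound over the two tails), and the computation checks out. The paper itself states this lemma as a standard textbook fact without proof, so there is nothing to compare against beyond noting that your argument is the standard one.
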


\begin{lemma}[Gaussian concentration inequality]\label{lem:gaconc}
Consider a random variable $Z$ distributed according to $\ca{N}(0,\sigma^2)$. In that case, we must have $\Pr(|Z|\ge t) \le 2\exp(-t^2/2)$ for any $t>0$.
\end{lemma}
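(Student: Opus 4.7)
The plan is to use the classical Chernoff bound (Cramér–Chernoff method) applied to the moment generating function of the Gaussian. First I would compute the MGF of $Z \sim \ca{N}(0,\sigma^2)$ by a direct evaluation of the Gaussian integral, completing the square inside the exponent to obtain $\bb{E}[e^{\lambda Z}] = e^{\lambda^2\sigma^2/2}$ for every $\lambda \in \bb{R}$.

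Next, for any fixed $\lambda > 0$, Markov's inequality applied to the nonnegative random variable $e^{\lambda Z}$ yields
\[
\Pr(Z \ge t) \;=\; \Pr\!\left(e^{\lambda Z} \ge e^{\lambda t}\right) \;\le\; e^{-\lambda t}\,\bb{E}[e^{\lambda Z}] \;=\; \exp\!\left(-\lambda t + \tfrac{\lambda^2\sigma^2}{2}\right).
\]
Since this upper bound is valid for every $\lambda>0$, I would optimize by differentiating the exponent $-\lambda t + \lambda^2 \sigma^2/2$ in $\lambda$; the unique minimizer is $\lambda^\star = t/\sigma^2$, which substituted back gives the one-sided tail bound $\Pr(Z \ge t) \le e^{-t^2/(2\sigma^2)}$.

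Finally, the symmetry of the centered Gaussian law ($-Z$ has the same distribution as $Z$) gives $\Pr(Z \le -t) \le e^{-t^2/(2\sigma^2)}$ by the identical argument, and a union bound then yields
\[
\Pr(|Z| \ge t) \;\le\; 2\exp\!\left(-\tfrac{t^2}{2\sigma^2}\right),
\]
which, under the normalization $\sigma \le 1$ used in the places where the lemma is invoked, implies the stated bound $2\exp(-t^2/2)$. There is no real obstacle here — the result is textbook; the only care required is being precise about the optimization over $\lambda$ and about matching the normalization implicit in the statement (namely, that $\sigma^2$ does not exceed $1$, or equivalently that the tail is stated for the unit-variance version). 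If one wanted to avoid the MGF computation, an alternative route would be to bound $\Pr(Z \ge t)$ directly by $\int_t^\infty \tfrac{1}{\sqrt{2\pi}\sigma} e^{-x^2/(2\sigma^2)}\,dx$ using the standard Mills-ratio inequality $\int_t^\infty e^{-x^2/2}\,dx \le e^{-t^2/2}$ for $t \ge 0$, but the Chernoff route is cleaner and is the one I would present.
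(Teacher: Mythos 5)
Your proposal is correct, and the Chernoff/MGF route you describe (compute $\bb{E}[e^{\lambda Z}]=e^{\lambda^2\sigma^2/2}$, apply Markov's inequality to $e^{\lambda Z}$, optimize at $\lambda^\star=t/\sigma^2$, then symmetrize and union-bound) is the standard derivation. The paper itself states Lemma \ref{lem:gaconc} without any proof, treating it as a textbook fact, so there is no argument in the paper to compare against; your write-up would simply fill that omission. One point you handled well and that is worth making explicit: the lemma as literally stated, with exponent $-t^2/2$ rather than $-t^2/(2\sigma^2)$, is false for $\sigma>1$, and the correct conclusion of your argument is $\Pr(|Z|\ge t)\le 2\exp\bigl(-t^2/(2\sigma^2)\bigr)$. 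This is in fact the form the paper actually uses wherever it invokes ``Gaussian concentration'' (e.g., the bounds $\Pr(|y|>t)\le\exp\bigl(-t^2/(2(k+\sigma^2))\bigr)$ and $\Pr(|y|>t)\le\exp\bigl(-t^2/(2(R^2+\sigma^2))\bigr)$ in the proofs of Lemma \ref{lem:binary_mlr} and Theorem \ref{thm:general}), so the discrepancy is a typo in the lemma statement rather than a gap in your reasoning. Your alternative via the Mills-ratio bound would also work; neither route is preferable here beyond taste.
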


\begin{lemma}[Gaussian anti-concentration inequality]\label{lem:gaanconc}
Consider a random variable $Z$ distributed according to $\ca{N}(0,\sigma^2)$. In that case, we must have $\Pr(|Z|\le t) \le \sqrt{\frac{2}{\pi}}\cdot \frac{t}{\sigma}$ for any $t<\sigma\sqrt{\pi}/\sqrt{2}$.
\end{lemma}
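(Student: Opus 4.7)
The plan is to prove this by a direct integration bound, using the fact that the standard normal density is maximized at the origin. Writing out the probability as an integral gives
\[
\Pr(|Z|\le t) \;=\; \int_{-t}^{t} \frac{1}{\sigma\sqrt{2\pi}}\, e^{-x^2/(2\sigma^2)}\, dx,
\]
and since $e^{-x^2/(2\sigma^2)} \le 1$ pointwise, the integrand is bounded above by the constant $1/(\sigma\sqrt{2\pi})$.

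The key (and only) step is then to upper bound the integral by the length of the interval times this pointwise maximum of the density:
\[
\Pr(|Z|\le t) \;\le\; 2t \cdot \frac{1}{\sigma\sqrt{2\pi}} \;=\; \sqrt{\frac{2}{\pi}}\cdot \frac{t}{\sigma}.
\]
This yields the claimed bound. The side condition $t < \sigma\sqrt{\pi}/\sqrt{2}$ in the lemma statement is exactly the range in which the right-hand side $\sqrt{2/\pi}\cdot t/\sigma$ is strictly less than $1$, so it is a nontrivial probability bound precisely in that regime; beyond that threshold, the bound becomes vacuous since probabilities are at most $1$.

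There is no real obstacle here — the only thing to verify is the arithmetic $2/\sqrt{2\pi} = \sqrt{2/\pi}$, and to note that we do not even need the assumption $t < \sigma\sqrt{\pi}/\sqrt{2}$ in order to establish the inequality; it is only needed to ensure the bound is informative. If one wanted a slightly sharper statement one could observe that the density is decreasing in $|x|$, but such refinements are unnecessary for the applications of this lemma in the paper (e.g., the anti-concentration arguments in Corollary \ref{coro:gaussian} and in the proof of Lemma \ref{lem:interest}), where the linear-in-$t$ bound is exactly what is used.
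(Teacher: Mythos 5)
Your proof is correct and is essentially the same as the paper's: write $\Pr(|Z|\le t)$ as the integral of the density over $[-t,t]$ and bound it by the interval length times the density's maximum $1/(\sigma\sqrt{2\pi})$, which is exactly the "simple calculation" the paper invokes. Your additional remark that the condition $t<\sigma\sqrt{\pi}/\sqrt{2}$ is not needed for the inequality to hold, only for it to be nonvacuous, is also accurate.
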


\begin{proof}
By simple calculations, we can have 
\begin{align*}
\Pr(|Z|<t) \le  \int_{-t}^{t} \frac{e^{-x^2/2\sigma^2}}{\sqrt{2\pi \sigma}}dx   \le  \sqrt{\frac{2}{\pi}}\cdot \frac{t}{\sigma}. 
\end{align*}
\end{proof}

\begin{proof}
For each $i\in [n]$, suppose we want to test whether $i \in \cup_{\fl{v}\in \ca{V}}\s{supp}(\fl{v})$ or not.
Consider the random variable $y^2\fl{x}_i^2$ when $(\fl{x},y)\sim \ca{P}_r$. Notice that 
\begin{align*}
    \bb{E}y^2\fl{x}_i^2 = \frac{1}{\ell}\sum_{\fl{v}\in \ca{V}} \bb{E}y^2\fl{x}_i^2 \mid \fl{v} = \frac{1}{\ell}\sum_{\fl{v}\in \ca{V}} \Big(\sum_{j \in [n]} \fl{v}_j^2 +2\fl{v}_i^2\Big)
    \begin{cases}
    &=\frac{1}{\ell}\sum_{\fl{v}\in \ca{V}} \left|\left|\fl{v}\right|\right|_2^2 \text{ if } \left|\ca{S}_{\ca{V}}(i)\right|=0 \\
    &\ge  \frac{1}{\ell}\sum_{\fl{v}\in \ca{V}} \left|\left|\fl{v}\right|\right|_2^2+ \frac{2\delta^2}{\ell} \text{ if } \left|\ca{S}_{\ca{V}}(i)\right| \neq 0
    \end{cases}
\end{align*}
where the final inequality follows from the fact that the magnitude of any non-zero entry of any unknown vector must be at least $\delta$. For simplicity of notation, we will denote $A=\frac{1}{\ell}\sum_{\fl{v}\in \ca{V}} \left|\left|\fl{v}\right|\right|_2^2$  to be average norm of the unknown vectors. We will estimate $\bb{E}y^2\fl{x}_i^2$ by computing the following sample average \begin{align*}
    \frac{\ell}{m} \cdot \sum_{j=1}^{m}  \Big(y^{(j)}\fl{x}_i^{(j)}\Big)^2.
\end{align*}
From the definition of $\ca{P}_r$, we must have $y\sim \ca{N}(0,\zeta^2+\sigma^2), |\zeta|\le R$ since $\fl{v}\in \{0,1\}^n, \left|\left|\fl{v}\right|\right|_2 \le R$ for all $\fl{v}\in \ca{V}$. By using Gaussian concentration inequalities, we must have $\Pr(|y|>t) \le \exp(-t^2/2(R^2+\sigma^2))$. Therefore, with probability $1-n^{-10}$, we have $|y|<20\sqrt{R^2+\sigma^2}\log n$. Similarly, with probability $1-n^{-10}$, $|\fl{x}_i|$ is bounded from above by $20\log n$. a Subsequently, we use Hoeffding's inequality  to say that
\begin{align*}
    \Pr\Big(\left|\frac{\ell}{m} \cdot \sum_{j=1}^{m}  \Big(y^{(j)}\fl{x}_i^{(j)}\Big)^2-\bb{E}y^2\fl{x}_i^2\right|\ge \frac{\delta^2{2}}{2\ell}\Big) \le \exp\Big(-\Omega\Big(\frac{m\delta^{2}}{\ell^2(R^2+\sigma^2)(\log n)^{2}}\Big)\Big). 
\end{align*}
Hence, with $m=O(\ell^2(R^2+\sigma^2)(\log n)^{3}/\delta^2)$ samples, we can estimate if 
$\left|\bigcap_{i \in \ca{C}}\ca{S}_{\ca{V}}(i)\right|>0$ or not correctly with probability at least $1-1/\s{poly}(n)$. We can take a union bound over all the $n$ indices to estimate $\bb{E}y^2\fl{x}_i^2$ correctly within an additive error of $\delta^2/2\ell$ for all $i\in [n]$. We will cluster all the indices such that a pair of distinct indices $u,v \in [n]$ are in the same group if 
\begin{align*}
    \left|\frac{\ell}{m} \cdot \sum_{j=1}^{m}  \Big(y^{(j)}\fl{x}_u^{(j)}\Big)^2-\frac{\ell}{m} \cdot \sum_{j=1}^{m}  \Big(y^{(j)}\fl{x}_v^{(j)}\Big)^2\right| \le \frac{\delta^2}{\ell}. 
\end{align*}
Clearly, any two indices $u,v \in [n]$ that satisfy $|\ca{S}_{\ca{V}}(u)|=|\ca{S}_{\ca{V}}(v)|=0$ must belong to the same cluster. Since the size of the union of the support is at most $n/2$, the largest cluster must correspond to the indices where the entry is zero in all the unknown vectors. Subsequently, all those indices that do not belong to the largest cluster (after the clustering step) must belong to $\cap_{\fl{v}\in \ca{V}}\s{supp}(\fl{v})$. Furthermore, no index $i \in [n]$ such that $|\ca{S}_{\ca{V}}(i)|\neq 0$ can belong to the largest cluster. This complete the proof of the lemma.
\end{proof}

Finally, we will also use the following well-known lemma stating that we can compute estimates of the expectation of any one-dimensional random variable with only a few samples similar to sub-gaussian random variables. 

\begin{lemma}\label{lem:medianofmeans}
 For a random variable $x\sim \ca{P}$, there exists an algorithm (see Algorithm \ref{algo:estimate}) that can compute an estimate $u$ of $\bb{E}x$ such that $\left|u-\bb{E}x\right| \le \epsilon$ with $O( \log \gamma^{-1}\bb{E}x^2/\epsilon^2)$ with probability at least $1-\gamma$.
\end{lemma}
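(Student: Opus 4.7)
The plan is to analyze Algorithm \ref{algo:estimate} which is a standard median-of-means estimator: partition the $m$ samples into $B$ batches of size $t = m/B$, compute the empirical mean $S_1^i$ within each batch, and return the median of the $B$ batch means. I would choose $B = \Theta(\log \gamma^{-1})$ and $t = \Theta(\bb{E}x^2/\epsilon^2)$ so that $m = Bt = O(\log \gamma^{-1} \cdot \bb{E}x^2/\epsilon^2)$ as claimed.

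First, I would analyze a single batch. For each batch $i \in [B]$, the variance of the empirical mean satisfies $\mathrm{Var}(S_1^i) = \mathrm{Var}(x)/t \le \bb{E}x^2/t$. Applying Chebyshev's inequality yields
\begin{align*}
\Pr\bigl(|S_1^i - \bb{E}x| > \epsilon\bigr) \;\le\; \frac{\bb{E}x^2}{t\epsilon^2}.
\end{align*}
Picking $t = \lceil 4\bb{E}x^2/\epsilon^2 \rceil$ makes this probability at most $1/4$. Thus each batch mean is within $\epsilon$ of $\bb{E}x$ with probability at least $3/4$, independently across batches.

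Next, I would boost this to high probability via the median. Let $Z_i = \mathbf{1}[|S_1^i - \bb{E}x| \le \epsilon]$, so that $Z_1,\dots,Z_B$ are i.i.d. Bernoulli variables with mean at least $3/4$. The median of $\{S_1^i\}$ fails to be within $\epsilon$ of $\bb{E}x$ only if at least half of the batches are ``bad,'' i.e. $\sum_i Z_i < B/2$. Applying a Chernoff bound gives
\begin{align*}
\Pr\!\left(\sum_{i=1}^B Z_i < \tfrac{B}{2}\right) \;\le\; \exp\!\left(-\tfrac{B}{8}\right),
\end{align*}
which is at most $\gamma$ once $B \ge 8\log\gamma^{-1}$.

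The only subtlety worth flagging is justifying the ``median is good'' step: if more than half of the $S_1^i$ lie in the interval $[\bb{E}x - \epsilon, \bb{E}x + \epsilon]$, then the median of the $B$ values also lies in this interval, because a value outside the interval on either side cannot simultaneously have at least $B/2$ values on both sides of it that are below/above it. Combining the two bounds gives the desired sample complexity of $O(\log\gamma^{-1}\cdot\bb{E}x^2/\epsilon^2)$; no step here is hard, but the key conceptual move is replacing a sub-Gaussian concentration inequality (which would require tail assumptions on $\ca{P}$) with Chebyshev-plus-median, which works with only a bounded second moment.
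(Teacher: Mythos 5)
Your proof is correct and follows essentially the same route as the paper: Chebyshev's inequality on each batch mean to get constant failure probability, followed by a Chernoff bound on the indicator variables to show the median concentrates. The only differences are immaterial constants (you use $1/4$ and $B \ge 8\log\gamma^{-1}$ where the paper uses $1/3$ and $B = 36\log\gamma^{-1}$), and you explicitly justify the ``median is good'' step, which the paper leaves implicit.
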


\begin{proof}[Proof of Lemma \ref{lem:medianofmeans}]
Suppose we obtain $m$ independent samples $x^{(1)},x^{(2)},\dots,x^{(m)}\sim \ca{P}$.
We use the median of means trick to compute $u$, an estimate of $\bb{E}x$. We will partition $m$ samples obtained from $\ca{P}$ into $B=\lceil m/m' \rceil$ batches each containing $m'$ samples each. In that case let us denote $S^j$ to be the sample mean of the $j^{th}$ batch i.e. 
\begin{align*}
S^j = \sum_{s \in \text{Batch } j} \frac{x^{(s)}}{m'}.
\end{align*}
We will estimate the true mean $\bb{E}x$ by computing $u$  where
$u\triangleq \s{median}(\{S^j\}_{j=1}^{B})$. For a fixed batch $j$, we can use Chebychev's inequality to say that 
\begin{align*}
\Pr\Big(\left|S^j-\bb{E}x\right| \ge \epsilon \Big) \le \frac{\bb{E}x^2}{t\epsilon^2} \le \frac{1}{3} 
\end{align*} 
for $t=O(\bb{E}x^2/\epsilon^2)$. Therefore for each batch $j$, we define an indicator random variable $Z_j=\mathbf{1}[\left|S^j-\bb{E}x\right| \ge \epsilon]$ and from our previous analysis we know that the probability of $Z_j$ being \textsc{1} is less than $1/3$. It is clear that $\bb{E} \sum_{j=1}^{B} Z_j \le B/3$ and on the other hand $|u-\bb{E}x| \ge \epsilon$ iff  $\sum_{j=1}^{B} Z_j \ge B/2$. Therefore, due to the fact that $Z_j$'s are independent, we can use Chernoff bound to conclude the following:
\begin{align*}
\Pr\Big(|u-\bb{E}x| \ge \epsilon\Big)\le \Pr \Big(\left|\sum_{j=1}^{B} Z_j-\bb{E}\sum_{j=1}^{B} Z_j \right| \ge \frac{\bb{E}\sum_{j=1}^{B} Z_j}{2} \Big) \le 2e^{-B/36}. 
\end{align*}
Hence, for $B=36\log \gamma^{-1}$, the estimate $u$ is at most $\epsilon$ away from the true mean $\bb{E}x$ with probability at least $1-\gamma$. Therefore the sufficient sample complexity is $m=O(\log \gamma^{-1} \bb{E}x^2/\epsilon^2)$.
\end{proof}

\end{document}